\documentclass[accepted]{style}
\usepackage[numbers]{natbib}

\newcommand{\score}{\textsc{SCORE}}

\usepackage[utf8]{inputenc}
\usepackage[T1]{fontenc}
\usepackage{url}
\usepackage{booktabs}
\usepackage{amsfonts}
\usepackage{nicefrac}
\usepackage{microtype}
\usepackage{lipsum}
\usepackage{graphicx}
\usepackage{float}
\usepackage{float}
\usepackage{dsfont}
\usepackage{doi}
\usepackage{subcaption}
\usepackage{xcolor}
\usepackage{array}
\usepackage{makecell}
\usepackage{longtable}
\usepackage{enumitem}
\usepackage{wrapfig}
\usepackage{multirow}
\usepackage{amsmath}
\usepackage{amssymb, amsthm}
\usepackage{mathtools}
\usepackage{bm}
\usepackage{natbib}
\usepackage{siunitx}
\usepackage{placeins}
\usepackage{adjustbox}

\theoremstyle{plain}
\newtheorem{theorem}{Theorem}[section]
\newtheorem{proposition}[theorem]{Proposition}
\newtheorem{lemma}[theorem]{Lemma}

\newtheorem{remark}{Remark}[section]

\theoremstyle{definition}

\newtheorem{assumption}{Assumption}[section]

\newcommand{\R}{\mathbb{R}}
\newcommand{\C}{\mathbb{C}}
\newcommand{\Gauss}{\mathcal{N}}
\newcommand{\E}{\mathbb{E}}
\newcommand{\bmu}{\bm{\mu}}
\newcommand{\bSigma}{\bm{\Sigma}}
\newcommand{\bx}{\bm{x}}

\newcommand{\bt}{\bm{t}}
\newcommand{\be}{\bm{e}}

\newcommand{\br}{\bm{r}}
\newcommand{\bw}{\bm{w}}

\newcommand{\bv}{\bm{v}}
\newcommand{\bB}{\bm{B}}
\newcommand{\bX}{\bm{X}}
\newcommand{\by}{\bm{y}}
\newcommand{\bz}{\bm{z}}
\newcommand{\bU}{\bm{U}}
\newcommand{\bM}{\bm{M}}
\newcommand{\bI}{\bm{I}}
\newcommand{\bL}{\bm{L}}
\newcommand{\bD}{\bm{D}}
\newcommand{\bF}{\bm{F}}

\newcommand{\bA}{\bm{A}}
\newcommand{\bW}{\bm{W}}
\newcommand{\bY}{\bm{Y}}
\newcommand{\bT}{\bm{T}}
\newcommand{\bS}{\bm{S}}

\newcommand{\tr}{\operatorname{tr}}
\newcommand{\diag}{\operatorname{diag}}

\newcommand{\bE}{\mathbb{E}}

\newcommand{\bP}{\mathbb{P}}

\newcommand{\cN}{\mathcal{N}}
\newcommand{\cP}{\mathcal{P}}

\newcommand{\cF}{\mathcal{F}}
\newcommand{\cY}{\mathcal{Y}}
\newcommand{\cX}{\mathcal{X}}
\newcommand{\btheta}{\bm{\theta}}
\newcommand{\bDelta}{\bm{\Delta}}
\newcommand{\bLambda}{\bm{\Lambda}}
\newcommand{\bone}{\bm{1}}  \newcommand{\Unif}{\mathcal{U}}

\newcommand{\prerank}{\rho}

\usepackage{etoc}

\usepackage{tikz}

\title{SCORE: Spectral Correlation Estimation for Multivariate Gaussians}

\author[1,2]{Christopher Bülte\thanks{Equal contribution}}
\author[1,2]{Emil Partow$^*$}
\author[1]{Astha Gupta}
\author[1,2]{Pascal Esser}
\author[1,2,3,4]{Gitta Kutyniok}

\affil[1]{
LMU Munich
}
\affil[2]{%
Munich Center for Machine Learning (MCML)
}
\affil[3]{%
DLR-German Aerospace Center
}
\affil[4]{%
University of Troms\o
}

\begin{document}
\maketitle
\etocdepthtag.toc{main}

\begin{abstract}
Neural network-based predictive modeling with high-dimensional structured Gaussian targets requires an efficient and numerically stable, yet expressive approximation of the covariance matrix. We propose SCORE: a scalable framework, combining scoring rule training with an expressive covariance approximation learned in spectral space.
For $d$-dimensional data, the learning task is decomposed into learning the marginal distributions and learning a structured correlation matrix, which enables dense dependencies with linear storage and $\mathcal{O}(d\log d)$ cost.
We utilize the closed form Gaussian kernel score for training, which remains defined even for degenerate covariances and admits bounded gradients during optimization. We characterize kernel scores under invertible transforms and prove exact invariance under unitary transforms. At population level, our two-level objective recovers the true marginals and projects the target correlation onto the representable class; finite-sample PAC bounds show that the errors of the two stages enter additively. We evaluate our model on a variety of tasks with a commonly assumed Gaussian domain: Time-series forecasting, monocular depth estimation, and spatial weather prediction, showing improved performance at lower computational cost.
\end{abstract}

\section{INTRODUCTION}
Probabilistic predictions are essential when models inform decisions under uncertainty and have become increasingly important in machine learning \citep{gneiting2014, 3295222.3295309}. This is particularly important for high-dimensional structured outputs such as spatial fields or multivariate time series, whose uncertainties are correlated across coordinates \citep{dorta2018structured,salinas2019high,liu2023single}.
\begin{figure*}[t!]
    \centering
    \includegraphics[width=\linewidth]{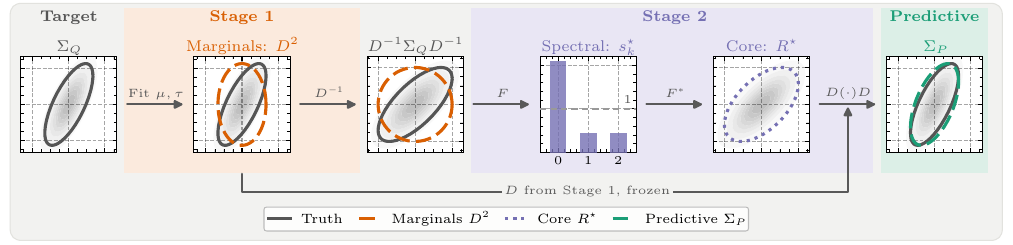}
\caption{Schematic overview of \score{}. Stage~1 fits the mean $\bmu$ and the marginal variances $\bm\tau$ (orange) and standardizes the target, leaving only its correlation structure. Stage~2 models this correlation by a normalized spectrum $s_k^\star$ in Fourier coordinates, whose back-transform $F^*$ is the correlation core $R^\star$ (purple). Rescaling by the frozen marginals yields the predictive covariance $\bSigma_P$ (green).}
    \label{fig:method_overview}
\end{figure*}
This setting arises naturally from measurement noise, aggregation effects, or from inherently stochastic systems. Therefore, it is widely used in scientific domains, including atmospheric forecasting \citep{gneiting2005calibrated}, oceanographic fields \citep{kuusela2018locally}, biological measurements \citep{schafer2005empirical}, and astronomical time series \citep{foremanmackey2017fast}. 

In practice, one often assumes a Gaussian predictive, which presents two major challenges in high dimensions.
First, a dense covariance matrix requires $\mathcal{O}(d^2)$ parameters and generic matrix operations up to $\mathcal{O}(d^3)$ time. Diagonal models scale linearly but discard any dependencies, while low-rank and structured alternatives restrict the representable correlations. Second, the commonly used negative log-likelihood (NLL) \citep{374138, lakshminarayananSimpleScalablePredictive2017} is undefined for singular covariances and ill-conditioned near the boundary of the positive-definite cone, where its gradients may become arbitrarily large \citep{seitzer2022pitfalls}.

We address these complexities by combining a robust proper scoring rule with a covariance approximation that separates marginal uncertainty from dependence structure. As the multivariate objective, we employ the Gaussian kernel score, which is directly related to the maximum mean discrepancy \citep{GneitingRaftery2007,steinwart2021strictly,gretton2012kernel}, and available in closed form for Gaussian predictions. Unlike the NLL, it remains well-defined for degenerate distributions and admits bounded gradients. At the same time, it is invariant under unitary transformations of the domain, allowing us to learn the dependence structure in a spectral basis, e.g., the Fourier basis, leading to a more computationally efficient representation without altering the optimization objective. 
Further, we employ a staged estimation approach, which first learns the marginals and recovers the optimal marginal distributions at a population level. 
In the second stage, we employ the spectral transformation to efficiently approximate the remaining correlation structure, for which we provide an explicit error bound.

Our staged procedure is summarized in \autoref{fig:method_overview}. First, the marginal distributions are estimated and used to normalize the target. In this transformed domain, we fit a circulant correlation matrix, using an efficient representation in the Fourier domain. Finally, rescaling the learned correlation by the marginal scales yields the approximate predictive covariance matrix.

For the discrete Fourier transform (DFT) and a diagonal covariance in the spectral domain, referred to as \emph{core}, the model learns a spectrum whose inverse transform is a dense circulant correlation matrix. Combining the core with the learned marginals admits an expressive dependence structure with linear covariance storage and FFT-based $\mathcal{O}(d\log d)$ operations. More generally, this applies to any unitary transform and admissible core class. We show that the population objective recovers the true marginals and selects the best representable correlation, characterize its approximation error, and derive finite-sample PAC bounds in which the errors from the two stages enter additively.

\textbf{Contributions.} Our main contributions are:

\emph{1.} We propose to use the Gaussian kernel score as a closed form training objective and establish corresponding boundedness and Lipschitz properties, contrasting its regularity with the singular behavior of the NLL.

\emph{2.} We introduce \score{} (Spectral CORrelation Estimation), a covariance parameterization combining learned marginal scales with a structured correlation core in a spectral domain, characterize its expressivity and population optimum, and provide conditions for exact recovery of the true distribution.

\emph{3.} We evaluate the framework on real-world high-dimensional spatiotemporal prediction tasks, examining predictive performance, calibration, numerical stability, and computational efficiency.


\section{BACKGROUND}
Throughout the paper, let $\mathcal{X}\subseteq\R^{k}$ and $\mathcal{Y}\subseteq\R^{d}$ denote the input and output spaces, respectively, and let $\mathcal{P}(\mathcal{Y})$ be a convex set of probability measures on $\mathcal{Y}$. 
We consider a training dataset $\mathcal{D}=\{(\bx_n,\by_n)\}_{n=1}^{N}\subset(\mathcal{X}\times\mathcal{Y})^N$ consisting of i.i.d.\ samples from an unknown joint distribution $\bP=\bP_{\bX,\bY}$ with marginal $\bP_{\bX}$ and conditional law $\bP_{\bY|\bX}(\cdot\mid\bx)$. 
Expectations with respect to $\bP$ are written as $\bE_{\bX,\bY\sim\bP}[\cdot]$, while $P,Q\in\mathcal{P}(\mathcal{Y})$ denote generic probability distributions. Our predictive model is a neural network $f_\phi:\mathcal{X}\rightarrow\mathcal{P}(\mathcal{Y})$ with parameters $\phi\in\mathcal{W}\subseteq\R^q$, which maps an input $\bx$ to a probability distribution $f_\phi(\bx)$.
Unless stated otherwise, we restrict our attention to Gaussian predictive distributions, $f_\phi(\bx)=\mathcal{N}\!\left(\bmu_\phi(\bx),\,\bSigma_\phi(\bx)\right)$, where $\bmu_\phi(\bx)\in\R^d$ and $\bSigma_\phi(\bx)\succ0$ denote the predicted mean vector and positive definite covariance matrix, respectively. 
A full  overview is given in Appendix~\ref{app:notation}.

\subsection{Scoring rule minimization}
Predictive models that output distributional parameters are typically fitted by minimizing a proper scoring rule \citep{https://doi.org/10.1111/sjos.12168}, most commonly the log score (NLL) \citep{374138, lakshminarayananSimpleScalablePredictive2017}. 
A scoring rule is a function $S: \cP(\cY) \times \cY \to \overline{\R}$ that assigns a numerical score $S(P, \by)$ to a probabilistic prediction
$P \in \cP(\cY)$ when $\by$ is observed. Following
\citet{GneitingRaftery2007}, $S$ is \emph{proper} relative to a class
$\cF \subseteq \cP(\cY)$ if 
\begin{equation}
\E_{\bY \sim Q}[S(Q, \bY)] \le \E_{\bY \sim Q}[S(P, \bY)],\ \forall P, Q \in \cF,
\end{equation}
and \emph{strictly proper} if equality holds only when $P = Q$. Defining $S(P,Q) := \bE_{\bY \sim Q}[S(P,\bY)]$, any scoring rule $S$ gives rise to a corresponding entropy $H(P) := S(P, P)$ and divergence $D(P,Q) := S(P, Q) - H(Q)$.

Utilizing scoring rules, the training objective of a neural network can be expressed as
\begin{align}
\label{eq:sr_objective}
&\mathrm{argmin}_\phi \bE_{\bX,\bY \sim \bP}\left[S(f_\phi(\bX), \bY) \right]\nonumber \\
=& \mathrm{argmin}_\phi \bE_{\bX \sim \bP_{\bX}} \left[ D(f_\phi(\bX), \bP_{\bY|\bX}) \right].
\end{align}
The above states that optimizing a scoring rule is equivalent to minimizing the score divergence between the predictive model and the true conditional distribution. As mentioned before, this setting includes the commonly used strictly proper log score ${S_\mathrm{log}(P, \by) \coloneq - \log p(\by)}$, or NLL. In this case, the corresponding divergence is the KL divergence, recovering the cross-entropy minimization objective.

\subsection{Parameterizing a Multivariate Gaussian}
\label{sec:param_multi_gauss}
Several works focus on ways to approximate a covariance matrix such that it can be parameterized and optimized using a neural network. We revisit three important and commonly used parameterizations here:

The simplest choice is just a diagonal covariance $\bSigma$ (Diag), which disregards any correlation structure, with ${\bSigma = \diag(\sigma_1^2, \ldots, \sigma_d^2)}$, and $\sigma_k^2 > 0$. Since this corresponds to $d$ independent univariate Gaussians, one per dimension, the approach is equivalent to the mean-variance networks proposed by \citet{374138}.

On the other hand, one can consider the Cholesky decomposition (Chol), which decomposes any $\bSigma\succ0$ into $\bSigma=\bL\bL^\top$, where $\bL\in\R^{d\times d}$ is lower triangular with a positive diagonal. While this has been successfully combined with neural networks \citep{10.1162/neco.1996.8.4.843, 9457247} and offers full expressivity of $\bSigma$, it comes at a very high computational cost and is infeasible in high dimensions.

A middle ground is given by the low-rank-plus-diagonal approximation (LorD), given as $\bSigma
    \;=\;
    \bD + \bU\bU^\top$, where $\bD = \diag(\sigma_1^2, \ldots, \sigma_d^2) \in \R_{>0}^{d \times d}$
and $\bU \in \R^{d \times r}$ with $1 \leq r \ll d$.
While often used in combination with variational inference \citep{Ong03072018, NEURIPS2020_310cc7ca}, it has also been used in conditional prediction settings \citep{salinas2019high, 3495724.3496794}.

\begin{table}[ht]
\centering\scriptsize
\caption{Cost of covariance parameterizations: number of covariance parameters, one log-score evaluation, one sample, and estimation of one covariance entry $\hat{\Sigma}_{ij}$.}
\label{tab:complexity}
\smallskip
\begin{tabular}{@{}lcccc@{}}
\toprule
    & \textbf{Params}
    & \textbf{Score eval.}
    & \textbf{Sampling}
    & $\hat{\Sigma}_{ij}$\\
\midrule
Chol
    & $d(d+1)/2$
    & $\mathcal{O}(d^{2})$
    & $\mathcal{O}(d^{2})$
    & $\mathcal{O}(d)$\\
Diag
    & $d$
    & $\mathcal{O}(d)$
    & $\mathcal{O}(d)$
    & $\mathcal{O}(1)$\\
LorD
    & $d + dr$
    & $\mathcal{O}(dr^{2} + r^{3})$
    & $\mathcal{O}(dr)$
    & $\mathcal{O}(r)$\\
\midrule
SCORE (Ours)
    & $2d$
    & $\mathcal{O}(d\log d)$
    & $\mathcal{O}(d\log d)$
    & $\mathcal{O}(d)$\\
\bottomrule
\end{tabular}
\end{table}

Table~\ref{tab:complexity} summarizes the computational cost of these methods and of our proposed method \score{}, introduced in Section~\ref{sec:methodology}. Further approximations exist, which we do not consider here. A noteworthy one is the Kronecker product, common in curvature approximation \citep{pmlr-v37-martens15, ritter2018a}. It factorizes a structured domain, for example, spatiotemporal, into its coordinates, but each factor again requires one of the above approximations, which is the reason we do not consider it here.

\subsection{Training with Negative Log-likelihood}
\label{sec:nll}
Regardless of the parameterization, previous work focuses almost exclusively on optimizing or adapting $S_\mathrm{log}$, which admits a closed form
\begin{align}
    \label{eq:log_score_gaussian}
S_\mathrm{log}(P, \by) =\frac{d}{2}\log(2\pi)+\frac{1}{2}\,\log\det\bSigma_P
+ \frac{1}{2}\bM, 
\end{align}
with $\bM \coloneq(\by - \bmu_P)^\top \bSigma_P^{-1}\,(\by - \bmu_P)$. However, already in the univariate case, the corresponding optimization problem is ill-conditioned and leads to diverging gradients \citep{seitzer2022pitfalls, pmlr-v206-stirn23a}. In the multivariate case, the log score becomes even more unstable.

First, note that for existence $S_\mathrm{log}$ requires ${\mathrm{rank}(\bSigma) = d}$. However, in high dimensions, we can expect $\mathrm{rank}(\bSigma)<d$, since the true distribution might live on a proper subspace. In that case, while the Gaussian distribution can still be defined meaningfully, \eqref{eq:log_score_gaussian} is undefined. This can occur along any rank-deficient direction, not only in the marginals.
Furthermore, $S_\mathrm{log}$ suffers from the additional drawback that it admits unbounded gradients and therefore is not Lipschitz continuous, which leads to unstable training with neural networks.
\begin{lemma}[Unbounded gradients of $S_\mathrm{log}$]
\label{lemma:log_score_gradients}
    Consider a predictive Gaussian $\cN(\bmu, \bSigma)$, with Cholesky decomposition $\bSigma = \bL\bL^\top$, and fixed $\by \in \R^d$. 
    Then $\nabla_{\bmu} S_\mathrm{log}$ and $\nabla_{\bL} S_\mathrm{log}$ are
    unbounded over $(\bmu, \bL)$ as $\lambda_\mathrm{min}(\bSigma) \to 0$, and
    $S_\mathrm{log}$ is not (globally) Lipschitz continuous in $(\bmu,\bm{L})$.
\end{lemma}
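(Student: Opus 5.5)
We compute the two gradients explicitly from \eqref{eq:log_score_gaussian}, then exhibit explicit sequences of parameters along which they blow up. Write $\br=\by-\bmu$, $\bz=\bL^{-1}\br$, so that $\bM=\|\bz\|^2$ and $\log\det\bSigma=2\sum_i\log L_{ii}$. Standard matrix calculus gives
\begin{align*}
\nabla_{\bmu}S_\mathrm{log}&=-\bSigma^{-1}\br=-\bL^{-\top}\bz,\\
\nabla_{\bL}S_\mathrm{log}&=\operatorname{tril}\bigl(\bL^{-\top}-\bL^{-\top}\bz\bz^\top\bigr),
\end{align*}
where the second identity uses $\partial_{\bL}\log\det(\bL\bL^\top)=2\bL^{-\top}$ and $\partial_{\bL}\,\br^\top(\bL\bL^\top)^{-1}\br=-2\bSigma^{-1}\br\br^\top\bSigma^{-1}\bL$, together with the factor $\tfrac12$. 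The projection $\operatorname{tril}$ onto lower-triangular matrices is needed since $\bL$ is constrained.

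For unboundedness it suffices to use the simplest family $\bL_\varepsilon=\diag(\varepsilon,1,\dots,1)$, so that $\lambda_{\min}(\bSigma)=\varepsilon^2\to0$. For $\nabla_{\bmu}$, pick $\bmu$ with $r_1\neq0$. Then $\bSigma^{-1}\br$ has first entry $r_1/\varepsilon^2$, which diverges. For $\nabla_{\bL}$, look at the $(1,1)$ entry, $\varepsilon^{-1}-r_1^2\varepsilon^{-3}$. This diverges if $r_1\neq0$ (like $\varepsilon^{-3}$). It also diverges if $r_1=0$ (like $\varepsilon^{-1}$). So $\nabla_{\bL}$ is unbounded whenever $\bmu=\by$ as well, which shows that the blow-up is not just an artifact of the residual.

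Non-Lipschitzness then follows from a short argument. $S_\mathrm{log}$ is $C^1$ on the open set of $\bL$ with positive diagonal, and the parameter domain $\{(\bmu,\bL)\}$ is convex. A function that is globally $K$-Lipschitz on a convex open set has gradient norm at most $K$ everywhere, which contradicts the previous step. To be fully concrete, I would also give a direct witness that avoids the differentiability argument. Take $\bmu=\by$ and compare $\bL_\varepsilon$ with $\bL_{2\varepsilon}$. The parameter distance is $\varepsilon$, while the score difference is $|\log(2\varepsilon)-\log\varepsilon|=\log 2$. The ratio $\log2/\varepsilon$ is therefore unbounded.

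The main obstacle is purely bookkeeping: getting the gradient with respect to $\bL$ right under the lower-triangular constraint. The fix is to differentiate only the diagonal entry $L_{11}$ as a scalar along the chosen diagonal family. That computation is elementary, because the score restricted to this family is $\log\varepsilon+r_1^2/(2\varepsilon^2)+\text{const}$, so the full matrix formula is not really needed.
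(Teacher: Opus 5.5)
Your proof is correct and follows essentially the same route as the paper: the same gradient formulas, the same family $\bL_\varepsilon=\diag(\varepsilon,1,\dots,1)$, and the same $(1,1)$ entry $\varepsilon^{-1}-r_1^2\varepsilon^{-3}$, which diverges even at $r_1=0$. The differences are minor. For $\nabla_{\bmu}S_\mathrm{log}$ the paper uses the eigendecomposition bound $\|\nabla_{\bmu}S_\mathrm{log}\|\ge\lambda_{\min}^{-1}|\be_d^\top\br|$, which describes the divergent sequences more generally, whereas you reuse the diagonal family; and your finite-difference witness with ratio $\log 2/\varepsilon$ makes explicit the non-Lipschitz conclusion that the paper only asserts from the unbounded gradients.
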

The proof is given in Appendix~\ref{app:loss_gradients}. In general, even for $\bSigma = \sigma^2 \bI$, $S_\mathrm{log}$ is not Lipschitz, since the univariate log score is not Lipschitz in $\sigma$.
While one can circumvent these issues, for example, by adding $\varepsilon \bI$ with $\varepsilon>0$, this still leads to numerical issues for small $\varepsilon$ and to an overestimation of the aleatoric uncertainty in the predictive distribution for large $\varepsilon$, as highlighted in \autoref{fig:toy_example1_full} and Appendix~\ref{app:toy_example}.
%
%
\section{METHODOLOGY}
\label{sec:methodology}

\subsection{Robust Loss Functions}
To circumvent the caveats of the log score, we propose to instead use kernel scores as loss functions, which offer a more robust alternative. They have been extensively used in forecast evaluation \citep{GneitingRaftery2007} and more recently as neural network loss functions~\citep{pacchiardiProbabilisticForecastingGenerative2021, kelenDistributionFreeDataUncertainty2024}. Let $k \colon \R^d \times \R^d \to \R$ be a positive definite kernel. The \emph{kernel score} of a distribution $P \in\cP(\cY)$ evaluated at observation
    $\by \in \cY$
    \begin{align*}
        S_k(P,\by)= \tfrac12\left(\E_{\bz, \bz' \sim P}\bigl[k(\bz, \bz')\bigr] 
        + c\right) -\E_{\bz \sim P}\bigl[k(\by, \bz)\bigr],
    \end{align*}
    with $c=k(\by, \by)$, is strictly proper when $k$ is characteristic \citep{steinwart2021strictly}. Kernel scores are directly related to maximum mean discrepancies (MMDs) \citep{gretton2012kernel}, in fact, the divergence in \eqref{eq:sr_objective} becomes $\frac{1}{2}\mathrm{MMD}^2$. Here, we focus on the Gaussian kernel $k(\by, \by') = \exp\!\bigl(-\tfrac{1}{\gamma^2}\|\by - \by'\|_2^2\bigr)$ with bandwidth $\gamma > 0$. For a Gaussian predictive, it admits a closed form expression, as stated in Appendix~\ref{app:ks_closed_form}, and it is bounded, which is the key property behind the following result.
\begin{theorem}[Lipschitz continuity]
\label{thm:ks-lipschitz}
Under the setting of Lemma~\ref{lemma:log_score_gradients}, the gradients of $S_k$ are bounded and the score is globally Lipschitz continuous in $(\bmu, \bm{L})$.
\end{theorem}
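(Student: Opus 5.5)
The approach is to write $S_k$ in closed form for $P=\cN(\bmu,\bSigma)$, $\bSigma=\bL\bL^\top$, and bound its derivatives uniformly over all $(\bmu,\bL)$, including singular $\bL$. Gaussian integration of the kernel $k(\by,\by')=\exp(-\|\by-\by'\|^2/\gamma^2)$ gives, with $\bA:=\bI+\tfrac{2}{\gamma^2}\bSigma$ and $\bB:=\bI+\tfrac{4}{\gamma^2}\bSigma$,
\begin{align*}
S_k(P,\by)=\tfrac12\det(\bB)^{-1/2}+\tfrac12-\det(\bA)^{-1/2}\exp\!\Bigl(-\tfrac{1}{\gamma^2}(\by-\bmu)^\top\bA^{-1}(\by-\bmu)\Bigr).
\end{align*}
The first term comes from $\bz-\bz'\sim\cN(0,2\bSigma)$, the second from $\bz-\by\sim\cN(\bmu-\by,\bSigma)$. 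Since $\bA,\bB\succeq\bI$ for every $\bL$, including rank-deficient ones, the expression is smooth on all of $\R^d\times\R^{d\times d}$. Its values lie in $[-\tfrac12,1]$. This already shows that $S_k$ is bounded and has no singularity at $\lambda_{\min}(\bSigma)=0$, which is the contrast with Lemma~\ref{lemma:log_score_gradients}.

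\textbf{Step 1: gradient in $\bmu$.} Write $\br=\by-\bmu$. Then
\begin{align*}
\nabla_{\bmu}S_k=-\tfrac{2}{\gamma^2}\det(\bA)^{-1/2}e^{-\br^\top\bA^{-1}\br/\gamma^2}\bA^{-1}\br .
\end{align*}
We have $\det(\bA)^{-1/2}\le1$ and $\|\bA^{-1}\br\|\le\|\bA^{-1/2}\|\,\|\bA^{-1/2}\br\|\le\|\bA^{-1/2}\br\|$. Putting $t=\|\bA^{-1/2}\br\|$, the norm is at most $\tfrac{2}{\gamma^2}\sup_t te^{-t^2/\gamma^2}=\tfrac{\sqrt2}{\gamma}e^{-1/2}$.

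\textbf{Step 2: gradient in $\bL$ (main obstacle).} The bound must be uniform both as $\bL\to$ singular and as $\|\bL\|\to\infty$. Differentiating gives $d\bSigma=d\bL\,\bL^\top+\bL\,d\bL^\top$. Jacobi's formula gives $\nabla_{\bL}\det(\bB)^{-1/2}=-\tfrac{4}{\gamma^2}\det(\bB)^{-1/2}\bB^{-1}\bL$. The key spectral estimate is
\begin{align*}
\|\bB^{-1}\bL\|_2=\max_i\frac{s_i}{1+4s_i^2/\gamma^2}\le\frac{\gamma}{4},
\end{align*}
where $s_i$ are the singular values of $\bL$, so this term is at most $\sqrt d$ in Frobenius norm. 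The same argument handles the $\det(\bA)^{-1/2}$ factor. For the exponent, $\nabla_{\bL}(\br^\top\bA^{-1}\br)=-\tfrac{4}{\gamma^2}\bA^{-1}\br\br^\top\bA^{-1}\bL$. Its norm is bounded by
\begin{align*}
\tfrac{4}{\gamma^2}\|\bA^{-1/2}\br\|^2\,\|\bA^{-1/2}\bL\|,\qquad \|\bA^{-1/2}\bL\|\le\gamma/\sqrt2 .
\end{align*}
Multiplying by $e^{-t^2/\gamma^2}$ with $t=\|\bA^{-1/2}\br\|$ and taking $\sup_t t^2e^{-t^2/\gamma^2}<\infty$ gives a constant depending only on $\gamma$ and $d$. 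The main care lies in pairing the factors of $\bA^{-1}$ and $\bL$ correctly, so that each stays bounded.

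\textbf{Step 3: conclusion.} The parameter space $\R^d\times\R^{d\times d}$ is convex and $S_k$ is $C^1$ there, with the gradient bounded uniformly by some $C(\gamma,d)$. The mean value theorem then gives $|S_k(\bmu,\bL)-S_k(\bmu',\bL')|\le C\|(\bmu-\bmu',\bL-\bL')\|$, which is global Lipschitz continuity. Restricting to lower-triangular $\bL$ with positive diagonal preserves this, since the restriction only removes points and Step 2 never used invertibility of $\bL$.
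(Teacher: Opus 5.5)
Your proposal is correct and follows essentially the same route as the paper. Both arguments use the closed form, the bound $\tfrac{2}{\gamma^2}\sup_t te^{-t^2/\gamma^2}=\tfrac1\gamma\sqrt{2/e}$ for $\nabla_{\bmu}S_k$, a product-rule split of $\nabla_{\bL}S_k$ into two determinant terms and one exponent term bounded via singular values and $\sup_t t^2e^{-t^2/\gamma^2}=\gamma^2/e$ (giving the paper's constants), and the mean value theorem on a convex domain. One harmless arithmetic slip: with your $\bB=\bI+\tfrac{4}{\gamma^2}\bSigma$, the bound is $\tfrac{4}{\gamma^2}\det(\bB)^{-1/2}\|\bB^{-1}\bL\|_F\le\sqrt d/\gamma$, not $\sqrt d$, which does not affect the conclusion.
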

The theorem is proved in Appendix~\ref{app:loss_gradients}. It highlights that the kernel score is more stable for gradient-based optimization. The toy example in \autoref{fig:gaussian_sampling_example} illustrates this further on a rank-deficient target.
\begin{figure}[t]
    \centering
    \includegraphics[width=\linewidth]{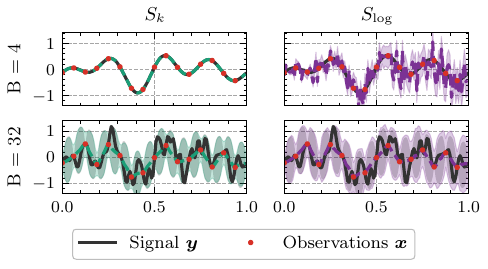}
    \caption{Toy example. We draw a signal $\by\sim\cN(\bm0,\bSigma_\mathrm{pr})$ with circulant $\bSigma_\mathrm{pr}$ and impose a cut-off on the eigenvalues, i.e., $\lambda_i>0$ iff $|i|\le B$. Now, we observe a noisy subsample $\bx=\bS\by+\bm\eta$ with observational noise $\bm\eta\sim\cN(\bm0,\sigma^2\bI_n)$, where $\bS\in\R^{n\times d}$ selects every $(d/n)$-th coordinate, and train a neural network to reconstruct $\by$ given $\bx$. The posterior has a closed form with $\operatorname{rank}(\bSigma_\mathrm{post})=\min(2B{+}1,d)$. By varying the band limit $B$, we interpolate between a low-rank and a higher-rank target covariance matrix.
The figure shows the estimated mean (colored dashed line) and standard deviation (shaded colored area), as well as the true posterior standard deviation (shaded grey area). In the low-rank case, $S_\mathrm{log}$ fails to recover the posterior, while $S_k$ does. More details are given in Appendix~\ref{app:toy_example}.}
    \label{fig:gaussian_sampling_example}
\end{figure}

\subsection{Spectral Parameterization}
Apart from the loss instability, we also address the problem of approximating the high-dimensional predictive covariance matrix. Here, we propose to model the marginals and correlation structure separately, optimizing the latter in spectral space. This is motivated by the fact that a Gaussian is closed under linear transformations. In particular, let $\bY \sim \cN(\bmu, \bSigma)$ and $\bB \in \C^{d \times d}$. Then, ${\tilde{\bY} = \bB \bY \sim \cN(\bB\bmu, \bB \bSigma \bB^*)}$.
Further, the kernel score transforms consistently under such maps and is invariant for unitary $\bB$.
\begin{proposition}[Kernel score invariance]
\label{prop:score-invariance}
Let $k(\by,\by')=\kappa(\|\by-\by'\|_2^2)$ be isotropic, with $\|\cdot\|_2$ the Hermitian norm on $\C^d$, and let $\bB_\# P$ denote the pushforward of $P=\cN(\bmu,\bSigma)$ under $\bB\in\C^{d\times d}$.
\begin{enumerate}
\item[(i)] If $\bB$ is invertible, then $S_k(\bB_{\#}P,\bB\by)=S_{k_{\bB}}(P,\by)$ with $k_{\bB}(\by,\by'):=\kappa\big(\|\by-\by'\|_{\bB^*\bB}\big)$, where $\|\bm v\|_{\bB^*\bB}^2=\bm v^*\bB^*\bB\bm v$. Hence $S_{k_{\bB}}$ is strictly proper whenever $S_k$ is.
\item[(ii)] If $\bB$ is unitary, then $k_{\bB}=k$ and $S_k(\bB_{\#}P,\bB\by)=S_k(P,\by)$.
\end{enumerate}
\end{proposition}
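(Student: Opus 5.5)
The plan is to prove (i) by a change of variables inside the expectations that define $S_k$, and then read off (ii) as the special case $\bB^*\bB=\bI$. Write $Q:=\bB_\#P$. By definition of the pushforward, if $\bz,\bz'\sim P$ are independent, then $\bB\bz,\bB\bz'\sim Q$ are independent. This holds even when $\bSigma$ is singular, since the pushforward is defined for any Gaussian and never needs a density. Substituting into each of the three terms of the kernel score gives
\begin{align*}
\E_{\bw,\bw'\sim Q}\bigl[k(\bw,\bw')\bigr] &= \E_{\bz,\bz'\sim P}\bigl[\kappa(\|\bB(\bz-\bz')\|_2^2)\bigr],\\
\E_{\bw\sim Q}\bigl[k(\bB\by,\bw)\bigr] &= \E_{\bz\sim P}\bigl[\kappa(\|\bB(\by-\bz)\|_2^2)\bigr],\\
k(\bB\by,\bB\by) &= \kappa(0).
\end{align*}
Next I use $\|\bB\bv\|_2^2=\bv^*\bB^*\bB\bv=\|\bv\|_{\bB^*\bB}^2$. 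I read the definition of $k_{\bB}$ in the statement as $\kappa$ applied to this squared norm, consistent with $k=\kappa(\|\cdot\|_2^2)$. With that, every term coincides with the corresponding term of $S_{k_{\bB}}(P,\by)$, and $k_{\bB}(\by,\by)=\kappa(0)$ as well. This gives $S_k(\bB_\#P,\bB\by)=S_{k_{\bB}}(P,\by)$. All the integrands are bounded, because the kernel is bounded, as in the Gaussian case discussed before Theorem~\ref{thm:ks-lipschitz}; so every expectation is finite and the substitutions are legitimate.

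For the strict propriety claim, take $P,P'$ in the class and integrate the identity over $\bY\sim P'$. Because $\bB\bY\sim\bB_\#P'$, this gives $S_{k_{\bB}}(P,P')=S_k(\bB_\#P,\bB_\#P')$. Propriety of $S_k$ then transfers directly: $S_{k_{\bB}}(P',P')=S_k(\bB_\#P',\bB_\#P')\le S_k(\bB_\#P,\bB_\#P')=S_{k_{\bB}}(P,P')$. For strictness, suppose equality holds. Strict propriety of $S_k$ forces $\bB_\#P=\bB_\#P'$. Since $\bB$ is invertible, applying $(\bB^{-1})_\#$ to both sides yields $P=P'$. Concretely, the Gaussian parameters are recovered as $\bmu=\bB^{-1}(\bB\bmu)$ and $\bSigma=\bB^{-1}(\bB\bSigma\bB^*)\bB^{-*}$.

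For (ii), unitarity gives $\bB^*\bB=\bI$, hence $\|\cdot\|_{\bB^*\bB}=\|\cdot\|_2$ and $k_{\bB}=k$. The invariance $S_k(\bB_\#P,\bB\by)=S_k(P,\by)$ then follows immediately from (i). As a cross-check, one could also verify this against the closed form in Appendix~\ref{app:ks_closed_form}: the determinants and quadratic forms it contains, in terms like $\bI+\tfrac{4}{\gamma^2}\bSigma$, are invariant under the conjugation $\bSigma\mapsto\bB\bSigma\bB^*$ with $\by-\bmu\mapsto\bB(\by-\bmu)$.

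The main obstacle is bookkeeping over the complex field rather than any hard estimate. Once $\bB$ is complex, $\bB\by$ lives in $\C^d$, so the kernel has to be understood on $\C^d$ through the Hermitian norm, and the "Gaussian" $\bB_\#P$ must be treated as a pushforward measure rather than through a real density. The argument above only uses pushforwards and the Hermitian identity $\|\bB\bv\|^2=\bv^*\bB^*\bB\bv$, so it goes through. For strict propriety, I would additionally note that the transferred strict propriety is taken relative to the class of pushforwards $\{\bB_\#P\}$, that is, for kernels characteristic on $\C^d\cong\R^{2d}$.
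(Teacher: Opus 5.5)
Your proof is correct and follows essentially the same route as the paper. Both arguments substitute $k(\bB\bz,\bB\bz')=k_{\bB}(\bz,\bz')$ into the three terms of the score, obtain strictness from the injectivity of $P\mapsto\bB_\#P$ for invertible $\bB$, and read off (ii) from $\bB^*\bB=\bI$. The paper phrases the strictness step as $\mathrm{MMD}^2_{k_{\bB}}(P,Q)=\mathrm{MMD}^2_{k}(\bB_\#P,\bB_\#Q)$ together with characteristicness of $k$; this is the same as your expected-score identity, because the kernel-score divergence equals $\tfrac12\mathrm{MMD}^2$. Your choice to apply $\kappa$ to the squared norm $\|\cdot\|^2_{\bB^*\bB}$ is the right reading: the statement and the paper's proof drop that square, and without it $k_{\bB}=k$ in (ii) would fail.
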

We provide a proof in Appendix~\ref{app:score_proofs}. 
%
Part (ii) is what allows us to fit the correlation structure in the spectral domain without changing the objective.

In this work, we mainly consider the DFT, for which we write $\bB=\bF$ in the following, since it enables fast computation.
Let ${\bF \in \C^{d \times d}}$ denote the unitary DFT matrix with entries $F_{jk} = d^{-1/2}\,e^{-2\pi i\, jk/d}$, so that $\bF^{-1}=\bF^{*}$. Throughout, a hat, as in $\hat\bSigma$, marks quantities in the transformed coordinates $\hat{\bz}=\bB\bz$. 
Consider a diagonal covariance matrix in Fourier space, $\hat{\bSigma} = \diag(s_1, \ldots, s_d)$. Its counterpart in physical space $\bSigma = \bF^{*} \hat{\bSigma}\, \bF$
is circulant, hence the covariance matrix of a stationary process on the periodic grid, and $\{s_k\}$ is its power spectral density by the Wiener--Khinchin theorem. It is dense in physical space, where relevant operations would cost $\mathcal O(d^3)$. Yet they reduce to $\mathcal O(d\log d)$ via the FFT.


\subsection{Spectral Correlation Estimation}
\label{sec:score}
So far we have shown how unitary transforms can be used to approximate a dense covariance matrix with few parameters. However, this trades off dependence structure against a correct marginal fit, since, for example, a circulant matrix implies equal marginal variances in every dimension.
To circumvent this issue, and to further enhance expressivity, we propose \score{}, a parameterization that separates the marginals from the correlation structure.
Throughout, we suppress the dependence on $\bx$ in the notation.

Denote $\mathrm{corr}(\bm \Sigma)
  :=
  \diag(\bm \Sigma)^{-1/2}\,\bm \Sigma\,\diag(\bm \Sigma)^{-1/2}$ for the correlation matrix of $\bm \Sigma$.
Consider the Gaussian target $Q=\cN(\bmu_Q,\bSigma_Q)$ and write $R_Q:=\mathrm{corr}(\bSigma_Q)$ and $\bD_Q:=\diag(\bSigma_Q)^{1/2}$. Then its covariance factors into marginal scales and correlation as
\begin{equation}\label{eq:corr-fact}
  \bSigma_Q=\bD_Q\,R_Q\,\bD_Q .
\end{equation}
The factors are complementary: $\bD_Q$ holds the marginal scales and $R_Q$ the correlation between coordinates. We mirror this factorization in the model.

Fix a unitary matrix
$\bB\in\C^{d\times d}$ associated with the chosen transform and refer
to the coordinates $\bB\by$, $\by\in\R^{d}$, as the
\emph{$\bB$-domain}. A prediction for a given input is specified by a triple
$(\bmu,\bm\tau,\hat\bSigma)$, with mean vector $\bmu\in\R^{d}$, marginal variances
$\bm\tau\in\R_{>0}^{d}$ collected in
the diagonal scale matrix $\bD_{\bm \tau}:=\diag(\sqrt{\bm \tau})$, and a Hermitian positive
definite matrix $\hat\bSigma$ in the $\bB$-domain, which we call the \emph{core} of
the parameterization. The predictive distribution is
$\cN(\bmu,\bD_{\bm\tau}\,\bB^{*}\hat\bSigma\,\bB\,\bD_{\bm\tau})$ with $\bD_{\bm\tau}$ intended to carry the marginals and the transformed core $\bB^* \hat \bSigma \bB$ the
correlation of \eqref{eq:corr-fact}. Restricting the core is
what keeps the parameterization cheap, and we write
$\mathcal S\subseteq\{\hat\bSigma\in\C^{d\times d}:
\hat\bSigma=\hat\bSigma^{*}\succ0\}$ for the admissible class, e.g. the diagonal or low-rank-plus-diagonal matrices. Our quantitative results mostly concern the
case where $\bB = \bF$ is the DFT and $\mathcal S$
the class of diagonal cores.

Although \eqref{eq:corr-fact} suggests splitting the learning of $\bSigma_Q$ into marginals and correlation, the above parameterization does not enforce this. The transformed core is in general not a correlation. Its diagonal entries $\Delta_i(\hat\bSigma):=\big(\bB^{*}\hat\bSigma\bB\big)_{ii}$, the \emph{implied marginal variances} $\bm\Delta(\hat\bSigma)\in\R^{d}_{>0}$, need not equal one, so that $(\bSigma_P)_{ii}=\tau_i\,\Delta_i(\hat\bSigma)$ instead of $\tau_i$.

\textbf{Taking the correlation.}
The remedy is to pass the transformed core through the same operator that defines the target correlation in \eqref{eq:corr-fact}, so that the parameterization turns into
\begin{equation}\label{eq:core_param_corr}
  \bSigma_P \;:=\; \bD_{\bm\tau}\;\mathrm{corr}\!\big(\bB^{*}\hat\bSigma\bB\big)\;\bD_{\bm\tau} .
\end{equation}
By definition, this only requires computing $\bm\Delta(\hat\bSigma)$, which for a diagonal core $\hat\bSigma=\diag(\bm s)$ with $\bm s\in\R_{>0}^{d}$ is the sum $\Delta_i=\sum_{k}s_k\lvert B_{ki}\rvert^{2}$. Replacing the transformed core with its correlation pins the marginals exactly.

\begin{lemma}[Correlation core]\label{lem:corr-core}
Let $\bB\in\C^{d\times d}$ be unitary, $\hat\bSigma\succ0$ Hermitian and $\bm\tau\in\R_{>0}^{d}$. Then the covariance \eqref{eq:core_param_corr} satisfies $(\bSigma_P)_{ii}=\tau_i$ for every $i=1,\dots,d$.
\end{lemma}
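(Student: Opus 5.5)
The argument is a direct computation from the definitions: we show that the transformed core $\bm C:=\bB^{*}\hat\bSigma\bB$ has strictly positive real diagonal, so $\mathrm{corr}(\bm C)$ is well defined with unit diagonal, and then the outer scaling by $\bD_{\bm\tau}$ puts $\tau_i$ on the diagonal.

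\textbf{Step 1: $\bm C$ is Hermitian positive definite.} Since $\hat\bSigma=\hat\bSigma^{*}$, we have $\bm C^{*}=\bB^{*}\hat\bSigma^{*}\bB=\bm C$. For $\bv\neq\bm 0$,
\[
\bv^{*}\bm C\bv=(\bB\bv)^{*}\hat\bSigma(\bB\bv)>0,
\]
because $\bB$ is unitary and hence invertible, so $\bB\bv\neq\bm 0$. Taking $\bv=\be_i$ gives
\[
\Delta_i(\hat\bSigma)=C_{ii}=\be_i^{*}\bm C\be_i>0,
\]
a real positive number. Therefore $\diag(\bm C)^{-1/2}=\diag(\Delta_1^{-1/2},\dots,\Delta_d^{-1/2})$ is a well-defined real positive diagonal matrix. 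This is the only point where the hypotheses ($\bB$ unitary or merely invertible, $\hat\bSigma\succ0$) are used.

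\textbf{Step 2: $\mathrm{corr}(\bm C)$ has unit diagonal.} Left and right multiplication by diagonal matrices acts entrywise, so
\[
\mathrm{corr}(\bm C)_{ij}=\Delta_i^{-1/2}\,C_{ij}\,\Delta_j^{-1/2}.
\]
On the diagonal this is $\Delta_i^{-1}\Delta_i=1$.

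\textbf{Step 3: the outer scaling gives $\tau_i$.} In the same way,
\[
(\bSigma_P)_{ij}=\sqrt{\tau_i}\,\mathrm{corr}(\bm C)_{ij}\,\sqrt{\tau_j},
\]
so $(\bSigma_P)_{ii}=\tau_i\cdot 1=\tau_i$ for every $i=1,\dots,d$.

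The only step requiring care is Step 1, which guarantees that no $\Delta_i$ vanishes (or is non-real) so that $\mathrm{corr}$ is defined. The rest is bookkeeping of diagonal scalings.
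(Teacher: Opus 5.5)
Your proof is correct and follows the paper's own route. A congruence argument shows that $\bm C=\bB^{*}\hat\bSigma\bB$ is Hermitian positive definite, so its diagonal $\bm\Delta$ is real and strictly positive; then $\mathrm{corr}(\bm C)$ is well defined with unit diagonal, and the outer scaling by $\bD_{\bm\tau}$ puts $\tau_i$ on the diagonal. The paper additionally notes that $\mathrm{corr}(\bm C)$ is again Hermitian positive definite, but the diagonal claim does not need this.
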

The proof is in Appendix~\ref{app:score_proofs}. For the DFT and diagonal cores, the parameterization simplifies.

\textbf{Fourier transform and diagonal core.}
\label{par:dft-diagonal}
Let $\bB=\bF$ be the DFT and $\hat\bSigma=\diag(\bm s)$ with $\bm s\in\R_{>0}^{d}$ and $s_k=s_{d-k}$, which makes $\bF^{*}\diag(\bm s)\bF$ real; cf. Appendix~\ref{app:nn_predictions}. This matrix is circulant, so its diagonal is the constant $\bar s=\tfrac1d\sum_k s_k$. Normalizing therefore only rescales the spectrum, and the prediction becomes
\[
  \bSigma_P=\bD_{\bm\tau}\,\bF^{*}\diag(\bm s/\bar s)\,\bF\,\bD_{\bm\tau}.
\]
We therefore let the network emit logits $\bm\ell\in\R^{d}$ and set $\bm s=d\,\mathrm{softmax}(\bm\ell)$, so that $\bar s=1$ and the transformed core is a correlation matrix already, i.e.,
\[
  \mathrm{corr}\big(\bF^{*}\diag(\bm s)\bF\big)\;=\;\bF^{*}\diag(\bm s)\bF .
\]
This normalization also removes a redundancy. Since $\operatorname{corr}$ is invariant under $\hat{\boldsymbol\Sigma} \mapsto c\hat{\boldsymbol\Sigma}$ for every $c > 0$, the core is identified only up to a positive factor, the \emph{scale gauge}. For the DFT and a diagonal core, $\bar s = 1$ fixes it, so that each correlation matrix corresponds to a single spectrum; compare Appendix~\ref{app:nn_predictions}.

By Lemma~\ref{lem:corr-core}, the marginal variances equal $\bm\tau$ for every correlation core. \score{} exploits this with a staged objective that fits the marginals first and then the correlation with the marginals frozen. Both stages range over maps $\bx\mapsto(\bmu,\bm\tau,\hat\bSigma)$ that a network realizes.

\textbf{\score{}.}
Let $S_1$ be a univariate and $S_2$ a multivariate scoring rule. The first stage is
\begin{equation}\label{eq:stage1}
  (\bmu^{\star},\bm\tau^{\star})
  =\operatorname*{arg\,min}_{\bmu,\,\bm\tau}
   \E\sum\nolimits_{i=1}^{d} S_1\big(\cN(\mu_i,\tau_i),\,Y_i\big),
\end{equation}
fitting the marginal distributions only. Write $\bD^{\star}:=\bD_{\bm\tau^{\star}}$, $\bm m:=(\bD^{\star})^{-1}\bmu^{\star}$ and $\bm Z:=(\bD^{\star})^{-1}\bY$ for the standardized target. The second stage holds the marginals fixed and fits the core over the admissible class $\mathcal S$,
\begin{equation}\label{eq:stage2}
  \hat\bSigma^{\star}
  =\operatorname*{arg\,min}_{\hat\bSigma\in\mathcal S}
   \E\,S_2\Big(\cN\big(\bm m,\;
     \mathrm{corr}\!\big(\bB^{*}\hat\bSigma\bB\big)\big),\,\bm Z\Big) .
\end{equation}
Dividing out the marginal scales removes the per-coordinate scale learned in the first stage, so with $\mathcal M:=\big\{\mathrm{corr}\big(\bB^{*}\hat\bSigma\bB\big)\;:\;\hat\bSigma\in\mathcal S\big\}$ the prediction ranges exactly over $\{\cN(\bm m,R):R\in\mathcal M\}$. Writing $R^{\star}:=\mathrm{corr}(\bB^{*}\hat\bSigma^{\star}\bB)\in\mathcal M$ for the deployed correlation, the prediction is $\cN\big(\bmu^{\star},\bD^{\star}R^{\star}\bD^{\star}\big)$, whose marginal variances are $\bm\tau^{\star}$ by Lemma~\ref{lem:corr-core}. The overall workflow of \score{} is illustrated in \autoref{fig:method_overview}.
In practice, we take $S_1$ to be the continuous ranked probability score (CRPS) \citep{GneitingRaftery2007}, a parameter-free kernel score, and $S_2$ the Gaussian kernel score instead of the log score, motivated by Lemma~\ref{lemma:log_score_gradients} and Theorem~\ref{thm:ks-lipschitz}. After a single FFT, it evaluates in closed form at $\mathcal O(d)$ (Appendix~\ref{app:score_closed_forms}). For its bandwidth, the median heuristic \citep{gretton2012kernel}, resolves to $\gamma^{2}=4d$ (Lemma~\ref{lem:bandwidth}), which we average across different scalings. See Appendix~\ref{app:nn_predictions} for implementation details.

\begin{remark}[Why a staged fit]
\score{} is a deliberate design choice, for several reasons. Freezing the marginals keeps them exact even when the correlation
class is misspecified (Proposition~\ref{prop:two-stage-opt}(i)). A joint fit would achieve a better joint score only by distorting the marginals. For the kernel score, a joint fit is also not available
at $\mathcal O(d)$. The closed form (Appendix~\ref{app:score_closed_forms})
requires a circulant, which $\boldsymbol\Sigma_P =
\mathbf D_{\boldsymbol\tau} \mathbf F^* \operatorname{diag}(\mathbf s) \mathbf F \mathbf D_{\boldsymbol\tau}$ is not unless $\boldsymbol\tau$ is constant. Nor
can \eqref{eq:stage2} be optimized jointly. The marginals enter only through the residual $\mathbf D_{\boldsymbol\tau}^{-1}(\mathbf y - \boldsymbol\mu)$, so it degenerates as $\boldsymbol\tau \to \infty$.
\end{remark}

A sufficiently rich model attains the minimum of \eqref{eq:stage1} and \eqref{eq:stage2} at every input, as the following result describes.
\begin{proposition}[Population optimum]\label{prop:two-stage-opt}
At a fixed input $\bx$, let the target be $Q=\cN(\bmu_Q,\bSigma_Q)$ with $\bSigma_Q\succ0$, factored as $\bSigma_Q=\bD_QR_Q\bD_Q$. Let $S_1$ be strictly proper on $\cP(\R)$, let $S_2$ be strictly proper and translation invariant (e.g., a kernel score with isotropic kernel), and write $D$ for the divergence associated with $S_2$. Then
\begin{enumerate}[leftmargin=25pt]
\item[(i)] The first stage fits $\bmu^{\star}=\bmu_Q$ and $\tau_i^{\star}=(\bSigma_Q)_{ii}$ for $i=1,\dots,d$, hence $\bD^{\star}=\bD_Q$.
\item[(ii)] The second stage fits the correlation alone, $R^{\star}\in\arg\min_{R\in\mathcal M}D\big(\cN(\bm0,R),\cN(\bm0,R_Q)\big)$, and the deployed prediction is $\cN(\bmu_Q,\bD_QR^{\star}\bD_Q)$.
\end{enumerate}
\end{proposition}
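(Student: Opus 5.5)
Part (i) follows from separability of the Stage~1 objective together with strict propriety of $S_1$. Since \eqref{eq:stage1} is minimized pointwise at the fixed input $\bx$ over unrestricted $(\bmu,\bm\tau)$, and the sum decouples across coordinates, it suffices to minimize each term $\E\, S_1(\cN(\mu_i,\tau_i),Y_i)$ separately in $(\mu_i,\tau_i)$. Under $Q$, the marginal of $Y_i$ is $\cN((\bmu_Q)_i,(\bSigma_Q)_{ii})$, and this law belongs to the Gaussian family over which we optimize. Strict propriety of $S_1$ on $\cP(\R)$ then gives $\E S_1(\cN(\mu_i,\tau_i),Y_i)\ge \E S_1(\cN((\bmu_Q)_i,(\bSigma_Q)_{ii}),Y_i)$, with equality only when the two Gaussians coincide. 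Since a univariate Gaussian is identified by its mean and variance, the minimizer is unique: $\mu_i^\star=(\bmu_Q)_i$ and $\tau_i^\star=(\bSigma_Q)_{ii}$. In particular $\bD^\star=\bD_Q$.

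For part (ii), I substitute the Stage~1 output. The standardized target is $\bm Z=\bD_Q^{-1}\bY\sim\cN(\bD_Q^{-1}\bmu_Q,\,\bD_Q^{-1}\bSigma_Q\bD_Q^{-1})=\cN(\bm m,R_Q)$, because $\bm m=\bD_Q^{-1}\bmu_Q$ and $\bSigma_Q=\bD_QR_Q\bD_Q$. The Stage~2 objective therefore equals $S_2(\cN(\bm m,R),\cN(\bm m,R_Q))$, with $R=\mathrm{corr}(\bB^*\hat\bSigma\bB)$ ranging over $\mathcal M$. Subtracting the entropy $H(\cN(\bm m,R_Q))$, which does not depend on $\hat\bSigma$, shows that minimizing over $\mathcal S$ is the same as minimizing $D(\cN(\bm m,R),\cN(\bm m,R_Q))$ over $R\in\mathcal M$.

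Next I remove the shared mean using translation invariance, meaning $S_2(P(\cdot+\bm a),\by-\bm a)=S_2(P,\by)$. For an isotropic kernel this holds because $k(\bz-\bm a,\bz'-\bm a)=k(\bz,\bz')$. Shifting both distributions by $-\bm m$ preserves $S_2(P,Q)$ and $H(Q)$, and hence $D$. This gives $D(\cN(\bm0,R),\cN(\bm0,R_Q))$, so $R^\star$ lies in the stated argmin. Because the map $\hat\bSigma\mapsto R$ is onto $\mathcal M$ by definition, the two argmin sets correspond.

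Finally, the deployed prediction is $\cN(\bmu^\star,\bD^\star R^\star\bD^\star)=\cN(\bmu_Q,\bD_QR^\star\bD_Q)$ by part (i).

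I expect the main obstacle to be stating precisely what is assumed. Strict propriety of $S_1$ must hold on a class containing all univariate Gaussians, and the minimum in (ii) must exist; if it does not, the statement should be read as ``$R^\star$ is any minimizer''. Existence follows in the kernel case if $\mathcal M$ is compact (correlation matrices form a bounded set) and $D$ is continuous, though the Stage~2 definition already presupposes an argmin. I will also check that the expectations in \eqref{eq:stage1} and \eqref{eq:stage2} are finite, so that subtracting the entropy is legitimate; this is immediate for bounded kernels and for the CRPS under Gaussian targets.
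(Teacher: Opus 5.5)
Your proposal is correct and follows essentially the same route as the paper. Part (i) uses separability of the first-stage objective together with strict propriety of $S_1$ on each Gaussian marginal. Part (ii) substitutes $\bD^{\star}=\bD_Q$, then uses translation invariance to remove the common mean $\bm m$ and subtracts the $R$-free entropy term to reach the divergence; the only difference is that you subtract the entropy before shifting the mean while the paper shifts first, which is immaterial.
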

The proof is given in Appendix~\ref{app:score_proofs}. Let $\varepsilon_{\mathrm{app}}(\bx):=D\big(\cN(\bm0,R^{\star}(\bx)),\cN(\bm0,R_Q(\bx))\big)$ denote the approximation error at input $\bx$ and $\bar\varepsilon_{\mathrm{app}}:=\E_{\bX}[\varepsilon_{\mathrm{app}}(\bX)]$ its average. We provide explicit bounds in Appendix~\ref{app:expressivity}.

\textbf{Finite-sample error.}
Proposition~\ref{prop:two-stage-opt} is a population statement. At finite sample size, both stages are trained on empirical counterparts of \eqref{eq:stage1} and \eqref{eq:stage2}, the second on residuals standardized by the \emph{estimated} marginals, and one might worry that this amplifies their error. The following shows that it does not.

\begin{theorem}[\score{} is PAC, informal]\label{thm:pac-main}
Train \score{} with the CRPS in the first stage and the Gaussian kernel score in the second, on $N$ and $M$ i.i.d.\ samples from $\bP$, within the network class of Assumption~\ref{ass:class}, and let $P_{N,M}$ be the deployed prediction. Under Assumption~\ref{ass:target}, for every $\varepsilon>0$ and $\delta\in(0,1)$, if $N\in\tilde{\mathcal O}(\varepsilon^{-4}\log\tfrac1\delta)$ and $M\in\tilde{\mathcal O}(\varepsilon^{-2}\log\tfrac1\delta)$, then with probability at least $1-\delta$
\[
  \E_{\bX}\Big[\tfrac{1}{2}\mathrm{MMD}^{2}\big(Q(\bX),\,P_{N,M}(\bX)\big)\Big]
  \le c\,\bar\varepsilon_{\mathrm{app}}+\varepsilon
\]
for a constant $c > 0$, where $\tilde{\mathcal O}$ hides logarithmic factors.
\end{theorem}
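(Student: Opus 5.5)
The plan is to split the excess MMD of the deployed prediction into a stage-1 term, a stage-2 estimation term and the approximation term, using the fact that MMD with a Gaussian kernel is a metric on embeddings. Write $P_{N,M}(\bx)=\cN(\hat\bmu,\hat\bD\hat R\hat\bD)$, where $(\hat\bmu,\hat\bD)$ are the empirical stage-1 fits on $N$ samples and $\hat R$ is the stage-2 fit on $M$ samples standardized by $\hat\bD$. Introduce the intermediate distributions
\[
P^{(1)}=\cN(\bmu_Q,\bD_Q R^{\star}\bD_Q),\qquad P^{(2)}=\cN(\hat\bmu,\hat\bD R^{\star}\hat\bD).
\]
By the triangle inequality for MMD and $(a+b+c)^2\le 3(a^2+b^2+c^2)$,
\[
\tfrac12\mathrm{MMD}^2(Q,P_{N,M})\lesssim \mathrm{MMD}^2(Q,P^{(1)})+\mathrm{MMD}^2(P^{(1)},P^{(2)})+\mathrm{MMD}^2(P^{(2)},P_{N,M}).
\]
This produces the constant $c$ in front of $\bar\varepsilon_{\mathrm{app}}$, and it is why the errors of the two stages enter additively.

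\emph{First term.} Since $\bD_Q$ is invertible, Proposition~\ref{prop:score-invariance}(i) lets us push both distributions through $\bD_Q^{-1}$. The kernel changes to $k_{\bD_Q^{-1}}$, whose bandwidths lie between those set by the extreme scales in Assumption~\ref{ass:target}. By Proposition~\ref{prop:two-stage-opt}(ii) the resulting divergence is comparable to $\varepsilon_{\mathrm{app}}(\bx)$ up to a constant, and averaging over $\bX$ gives $c\,\bar\varepsilon_{\mathrm{app}}$.

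\emph{Second term (stage 1).} I will use the closed form of the Gaussian-kernel MMD between Gaussians (Appendix~\ref{app:ks_closed_form}) together with the Lipschitz bound of Theorem~\ref{thm:ks-lipschitz}. These show that this term is Lipschitz in $(\bmu,\bD)$ at fixed correlation, so it is bounded by $L(\|\hat\bmu-\bmu_Q\|+\|\hat\bD-\bD_Q\|)$. A standard uniform-convergence argument for the CRPS over the class of Assumption~\ref{ass:class}, using Rademacher complexity and bounded differences, gives an excess CRPS risk of order $N^{-1/2}$. I then need to turn excess CRPS into parameter error. For univariate Gaussians the CRPS divergence is locally quadratic in $(\mu,\sigma)$, which holds when $\tau$ is bounded away from $0$ and $\infty$. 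Hence the parameter error is of order $N^{-1/4}$. This square-root loss is what forces $N\in\tilde{\mathcal O}(\varepsilon^{-4})$.

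\emph{Third term (stage 2).} Both distributions share $(\hat\bmu,\hat\bD)$, so Proposition~\ref{prop:score-invariance}(i) again reduces the term to a divergence between $\cN(\bm0,R^{\star})$ and $\cN(\bm0,\hat R)$ under a rescaled kernel. I would bound it by an oracle inequality. The empirical stage-2 minimizer is compared with $R^{\star}$ on the standardized samples, and the kernel score is bounded by $1$ and Lipschitz, so the uniform deviation is $\tilde{\mathcal O}(M^{-1/2})$. The gap from standardizing with $\hat\bD$ instead of $\bD_Q$ is absorbed by the Lipschitz bound of Theorem~\ref{thm:ks-lipschitz}: it feeds back into the stage-1 error already controlled above, plus a second copy of the approximation error. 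Taking $M\in\tilde{\mathcal O}(\varepsilon^{-2}\log\frac1\delta)$ makes this term at most $\varepsilon/3$.

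A union bound over the two high-probability events finishes the proof. I expect the main obstacle to be the inversion from excess CRPS risk to parameter error in stage 1. It needs a uniform local strong-convexity (identifiability) constant, which relies on Assumption~\ref{ass:target} keeping variances and means in a compact set. It also needs realizability or near-realizability of the first stage by the network class. Without these, the $N^{-1/4}$ rate, and therefore the additive structure, would fail.
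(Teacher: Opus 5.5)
Your skeleton matches the paper's: a metric split of the error, CRPS curvature turning an $N^{-1/2}$ excess risk into an $N^{-1/4}$ parameter error, a bounded Lipschitz stage-2 loss giving $M^{-1/2}$, and a union bound. \textbf{The gap} is that the step carrying the stage-2 guarantee over to the deployed prediction is missing.

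After pushing through $\bD_Q^{-1}$ (resp.\ $\hat\bD^{-1}$), your first and third terms are MMDs for the anisotropic kernel $\exp(-\lVert\bD(\bm u-\bm v)\rVert_2^2/\gamma^2)$. By contrast, $\varepsilon_{\mathrm{app}}$ and any stage-2 oracle inequality live in standardized coordinates with the isotropic kernel at $\tilde\gamma^2=4d$. Unless $\bm\tau$ is constant, no choice of $\gamma$ makes these kernels agree. Proposition~\ref{prop:score-invariance} only changes coordinates, and Proposition~\ref{prop:two-stage-opt}(ii) only characterizes the minimizer; neither compares MMDs under different kernels. For general laws no such comparison holds: a discrepancy near frequency $\bm\omega$ is weighted by $e^{-\gamma^2\lVert\bm\omega\rVert^2/4}$, so ratios across bandwidths are unbounded.

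The missing idea is a two-sided Frobenius comparison for Gaussians with equal means and equal diagonals. First, $\mathrm{MMD}^2_\gamma\big(\cN(\bmu,\bSigma_1),\cN(\bmu,\bSigma_2)\big)\le 2\lVert\bSigma_1-\bSigma_2\rVert_F^2/\gamma^4$ for every $\gamma$ (equal traces suffice). Second, for correlation matrices at $\tilde\gamma^2=4d$, the reverse bound holds with constant $2e^{-10}/\tilde\gamma^4$ (Lemma~\ref{lem:pac-frobenius}, via Jensen, H\"older and Gaussian hypercontractivity). The cruder pointwise bound in the proof of Lemma~\ref{lem:mmd-frobenius}(i) would also do for an unspecified $c$, at a cost exponential in $d$. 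The chain
$\mathrm{MMD}^2_\gamma\big(Q,\cN(\bmu_Q,\bD_QR\bD_Q)\big)\le\tfrac{2\tau_{\max}^2}{\gamma^4}\lVert R_Q-R\rVert_F^2\le K\cdot\tfrac12\mathrm{MMD}^2_{\tilde\gamma}\big(\cN(\bm0,R_Q),\cN(\bm0,R)\big)$
is where the constant $K$ of Theorem~\ref{thm:pac} actually comes from, not the elementary inequality for squared sums.

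\textbf{Second, your third term is not what the oracle inequality controls.} ERM bounds $\E_{\bX}\tfrac12\mathrm{MMD}^2_{\tilde\gamma}\big(\tilde Q(\bX),\cN(\bm0,\hat R(\bX))\big)$, the divergence to the standardized \emph{target}, by $\bar\varepsilon_{\mathrm{app}}$ plus first- and second-stage error terms. The class is misspecified, and the objective has no usable curvature in $R$ (its minimizer need not be unique). So this says nothing about $\mathrm{MMD}\big(\cN(\bm0,R^\star),\cN(\bm0,\hat R)\big)$, and that term cannot be made $\le\varepsilon/3$. Routing through $R_Q$ by the triangle inequality fixes this at the price of another copy of $\bar\varepsilon_{\mathrm{app}}$, but then $R^\star$ is a superfluous intermediate.

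The paper splits once, through $Q''=\cN(\bmu_Q,\bD_Q\hat R\bD_Q)$:
\begin{itemize}
\item The marginal part is bounded by coupling both laws through a single $\bm V\sim\cN(\bm0,\hat R)$. Your Lipschitz route via Theorem~\ref{thm:ks-lipschitz} and $\tfrac12\mathrm{MMD}^2(P_\theta,Q)=\E_{Q}S_k(P_\theta,\cdot)-H(Q)$ is a valid alternative here.
\item The correlation part is bounded by the Frobenius chain plus Proposition~\ref{prop:pac-rate}. There, the mismatch between $\tilde Q$ and $\tilde Q^\star=\cN(\bm0,R_Q)$ is paid on both sides of the ERM comparison by the coupling bound of Lemma~\ref{lem:pac-stage1}, which is linear in $e_\mu+e_\tau$.
\end{itemize}

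A smaller point concerns stage 1. The CRPS is unbounded in $\by$ under Gaussian targets, so bounded differences do not apply directly. The paper instead uses sub-Gaussian concentration: Gaussian Lipschitz concentration given $\bX$, plus Hoeffding on the conditional risk. Also, the curvature must be a uniform quadratic lower bound on the whole compact range (Lemma~\ref{lem:crps-curvature}), not only a local one.
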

Formal statement and proof are given in Appendix~\ref{app:pac}.


\textbf{Beyond Fourier and diagonal cores.}
The parameterization \eqref{eq:core_param_corr}, Lemma~\ref{lem:corr-core} and Proposition~\ref{prop:two-stage-opt} hold for any unitary $\bB$ and admissible class $\mathcal S$. In particular, the first-stage marginals stay exact regardless of the chosen core and second-stage objective. For the log score, the closed forms even extend to arbitrary $\bB$ and low-rank cores, which is not the case for the Gaussian kernel score, as shown in Appendix~\ref{app:score_closed_forms}. 

\section{EXPERIMENTS}
\label{sec:experiments}

%
%
\begin{figure*}[t!]
\centering
\includegraphics[width=\linewidth]{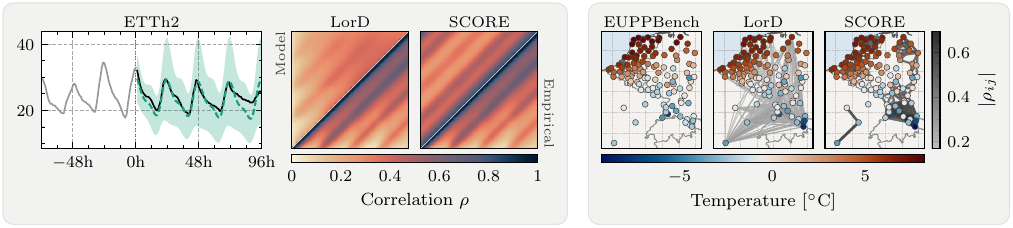}
    \caption{SCORE predictions. Left: Mean prediction and standard deviation of \score{} on the univariate ETTh2 dataset and a comparison with LorD regarding similarity to the empirical correlation structure. Right: observed temperature across stations and predictions and correlations using \score{} and LorD.}
    \label{fig:main_prediction_example}
\end{figure*}

\paragraph{Datasets \& baselines.}


\begin{table}[ht]
\centering
\small
\caption{Overview of tasks.}
\label{tab:datasets}
\adjustbox{max width=\linewidth}{%
\begin{tabular}{@{}llll}
\toprule
Task & Dataset & Backbone & Prior work \\
\midrule
Time series & \citep{wu2021autoformer} & PatchTST \citep{nie2023a} & \citep{salinas2019high, zheng2024multivariate, choi2025} \\
Depth est. & NYU \citep{Silberman:ECCV12} & DepthAnything \citep{depth_anything_v2} & \citep{liu2023single} \\
Gridded temp. & ERA5 \citep{hersbachERA5GlobalReanalysis2020} & U-Cast \citep{cachay2026ucast}& \citep{raspNeuralNetworksPostprocessing2018, muschinskiCholeskybasedMultivariateGaussian2024}  \\
Station temp. & EUPPBench \citep{essd-15-2635-2023} & GNN \citep{feik2024graphneuralnetworksspatial} & \citep{feik2024graphneuralnetworksspatial}  \\
\bottomrule
\end{tabular}
}
\end{table}
We consider four tasks where a Gaussian predictive is a reasonable assumption and has been used in prior work, summarized in \autoref{tab:datasets}. For each task, we fix a backbone and only replace the output layer with the respective covariance parameterization (Appendix~\ref{app:nn_predictions}), such that the comparison isolates the Gaussian representation and all methods are comparable. In all tasks \score{} uses the Fourier transform, except in the station-based task, which uses a graph neural network and models the covariance via the graph Fourier transform, illustrating how \score{} extends beyond the DFT.
Across all tasks, we compare \score{} against the same baselines: a deterministic model (Det), a diagonal Gaussian (Diag), and the low-rank-plus-diagonal parameterization (LorD) \citep{salinas2019high, zheng2024multivariate}, the most common scalable alternative. We set $r=1$, which matches the output-layer parameter count of \score{} (larger $r$ in Appendix~\ref{app:ablations}).
A sample-based generative model (SB) with noise injection in the hidden layers and trained with the Gaussian kernel score (KS) \citep{pacchiardiProbabilisticForecastingGenerative2021, kelenDistributionFreeDataUncertainty2024} serves as a nonparametric, non-Gaussian baseline.
Det is trained with the mean squared error (MSE), SB with KS, and \score{} with the CRPS in the first and the KS in the second
stage. Diag and LorD are trained twice, with the NLL as
the standard choice and with the KS as alternative, which separates the effect of the loss
from that of the covariance parameterization in \autoref{tab:main-rank}.
In Appendix~\ref{app:results} we also provide results for a model with a full Cholesky factor (Chol), as a fully expressive baseline. However, since this is only feasible for a few tasks, it is omitted in the main paper.

\paragraph{Evaluation protocol.}
\label{sec:eval}
All models are compared on a common set of metrics, mostly proper scoring
rules. Scores are test-set and five-seed averages.
The MSE and the CRPS
measure mean and marginal fit, respectively. The NLL, the energy
score (ES) and the KS are strictly proper for the joint
distribution and the variogram score (VS) is proper, shift-invariant,
and targets the dependence structure. Note that since the NLL, CRPS, and KS serve as training objectives, one would expect models using these as a loss to perform better in terms of the corresponding metric. Therefore, the performance should be considered with respect to all selected metrics to obtain an overall picture.
Sample-based metrics use an ensemble of size $M=100$; NLL and MSE are evaluated in closed form (cf. Appendix~\ref{app:eval}).
All implementations and reproducible experiments are available in a public repository at \url{https://github.com/cbuelt/score}

\paragraph{Results.}
\begin{table}[t]
\centering
\caption{Average rank of the predictive methods. Ranks are computed per task and seed, and then averaged over the eleven datasets. Best rank in bold.}
\label{tab:main-rank}
\small
\adjustbox{max width=\linewidth}{%
\begin{tabular}{llrrrrrrr}
\toprule
Core & Objective & MSE & CRPS & ES & VS & KS & NLL/$d$ & Avg. \\
\midrule
\multicolumn{9}{l}{\emph{Physical space}}\\
\quad Det & MSE & $4.0$ & $6.5$ & $6.6$ & $5.1$ & $5.7$ & --- & $5.6$ \\
\quad Diag & NLL & $4.1$ & $2.8$ & $3.6$ & $4.3$ & $4.2$ & $2.9$ & $3.7$ \\
\quad Diag & KS & $3.2$ & $3.5$ & $3.0$ & $4.1$ & $3.2$ & $4.0$ & $3.5$ \\
\quad LorD $r{=}1$ & NLL & $6.1$ & $5.6$ & $5.5$ & $5.0$ & $5.8$ & $\mathbf{2.1}$ & $5.0$ \\
\quad LorD $r{=}1$ & KS & $3.4$ & $3.4$ & $2.6$ & $3.4$ & $\mathbf{2.6}$ & $3.2$ & $3.1$ \\
\quad SB & KS & $4.2$ & $4.2$ & $4.3$ & $4.3$ & $3.7$ & --- & $4.1$ \\
\cmidrule(l){1-9}
\multicolumn{9}{l}{\emph{Spectral space}}\\
\quad SCORE & CRPS+KS & $\mathbf{3.0}$ & $\mathbf{2.1}$ & $\mathbf{2.4}$ & $\mathbf{1.8}$ & $2.8$ & $2.8$ & $\mathbf{2.5}$ \\
\bottomrule
\end{tabular}
}
\end{table}

\begin{table}[ht]
\centering
\caption{Number of wins when training the same predictive model with the Gaussian kernel score instead of the NLL. \textbf{Bold}: $p<0.05$ under the clustered sign-flip test, Holm-Bonferroni corrected over the four metrics.}

\label{tab:performance_significance}
\small
\begin{tabular}{rrrrr}
\toprule
  $n$ & MSE & CRPS & ES & VS \\
\midrule
167 & $\mathbf{150}$ & $\mathbf{140}$ & $\mathbf{156}$ & $\mathbf{139}$ \\
\bottomrule
\end{tabular}
\end{table}

\autoref{tab:main-rank} shows the average rank of each method on each metric. \score{} attains the best average rank ($2.5$) and the best rank on MSE, CRPS, ES and VS. The strongest baseline, LorD trained with the KS ($3.1$), is outranked by \score{} on these metrics at equal parameter count. The margin is largest on the VS, which targets the dependence structure, and on the CRPS, which \score{} inherits from its first stage. 
Unsurprisingly, the NLL is won by the model trained on it: LorD-NLL. The same holds for KS, but here \score{} ranks similarly quite ($\pm 0.2$).

We next separate the two contributions. 
\autoref{tab:performance_significance}
counts, over all data sets and parameterizations, how often training with the
kernel score beats the NLL on the evaluation metrics that do not serve as an objective. The kernel score wins the large majority, significantly under a data-set-clustered
sign-flip test.
Among the KS-trained models, \score{} attains the best average rank on every metric, most clearly on the VS, which no method trains on. This gap is a contribution of the parameterization (Appendix~\ref{app:ablations}).
\begin{figure}[ht]
    \centering
    \includegraphics[width=\linewidth]{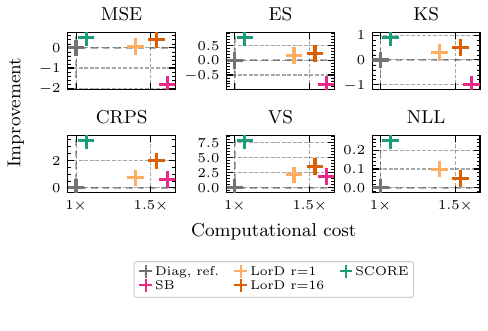}
    \caption{Performance improvement (difference in nats per coordinate for NLL, percentage improvement otherwise) vs.\ increase in compute, relative to the
Diag (median over seeds and data sets).}
    \label{fig:performance_runtime}
\end{figure}
This is further highlighted in \autoref{fig:performance_runtime}, which analyzes all methods trained with the KS with respect to relative improvement in performance and computational cost against Diag. As expected, \score{} is slower than Diag, since the latter is linear in computational complexity. However, it is the fastest among the baselines that model dependence and admits the best performance, with clear margins in all metrics except the MSE, where performance is similar to LorD.

\begin{figure}[ht]
    \centering
    \includegraphics[width=\linewidth]{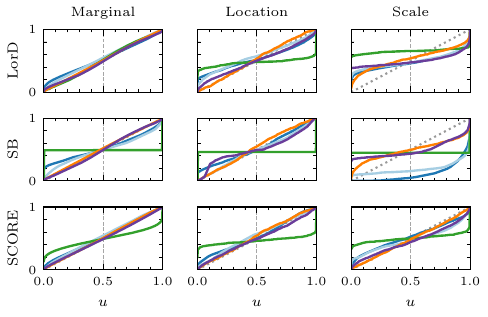}
    \caption{Different calibration measures for LorD, SB and \score{}, using the kernel score across all datasets. The dashed diagonal line indicates optimal calibration.}
    \label{fig:calibration_main}
\end{figure}
Further, we analyze the probabilistic calibration of the methods with respect to univariate marginals, as well as multivariate location and scale, as shown in \autoref{fig:calibration_main}. \score{} exhibits the best calibrated predictions overall, except on NYU (green), where the depth maps violate the stationarity assumption and \score{} falls behind (Appendix~\ref{app:results}). LorD shows good marginal calibration, while SB fails especially on the scale calibration.

Detailed results for each task, as well as regarding the runtime, calibration, and additional ablations, are available in Appendix~\ref{app:results}.
Overall, \score{} is the best-ranked method on every metric, with a large margin on the VS, indicating a superior approximation of the dependence structure. Both our design choices contribute to the performance: the KS improves upon the
NLL significantly, while with the same loss, \score{} beats established approaches. \score{} achieves this with near-linear cost, due to the FFT, making it a good choice for a predictive Gaussian model in high dimensions. We provide further ablations of the rank $r$ for the LorD method and the impact of the two-stage training process in Appendix~\ref{app:ablations}.

\section{RELATED WORK}
Neural networks predicting Gaussian distributions date back to mean-variance networks \citep{374138}, which output a univariate Gaussian trained with the negative log-likelihood (NLL); for multiple variables, this amounts to the diagonal covariance parameterization. This setup is known to train poorly because the NLL gradient of the mean is scaled by the predicted precision, motivating remedies such as reweighted losses, decoupled mean and variance heads, or natural parameterizations \citep{seitzer2022pitfalls, pmlr-v206-stirn23a, immer2023effective, vemuganti2026fisher}. These difficulties persist and are amplified for more expressive covariance structures. Full covariances were already learned by \citet{10.1162/neco.1996.8.4.843}, who train a network to output the Cholesky factor by likelihood maximization. \citet{9457247} compare this parameterization to the diagonal one across deep learning benchmarks, and \citet{muschinskiCholeskybasedMultivariateGaussian2024} use it for distributional regression in weather forecasting. The quadratic cost can be reduced via banded Cholesky factors \citep{dorta2018structured} or by training a Taylor expansion of the covariance \citep{pmlr-v235-shukla24a}.
Low-rank-plus-diagonal covariances originate in variational inference \citep{Ong03072018, NEURIPS2020_310cc7ca} but also appear in predictive settings such as uncertainty prediction in images \citep{3495724.3496794} and time series modeling with copulas \citep{salinas2019high} or Kronecker factorizations \citep{zheng2024better}. Kronecker-factored matrices themselves stem from curvature estimation \citep{pmlr-v37-martens15, ritter2018a} and variational inference \citep{pmlr-v48-louizos16, pmlr-v80-zhang18l} and have only recently been used as direct network outputs \citep{zheng2024multivariate}.

Gaussian processes (GPs) offer a conceptually different route to input-dependent multivariate Gaussian predictions: mean and covariance are induced by a kernel whose hyperparameters are fit by marginal likelihood optimization \citep{bonilla2008multitask, goldberg1998heteroscedastic, lazarogredilla2011variational}, and deep kernel learning parameterizes the kernel map by a neural network \citep{pmlr-v51-wilson16}. Exact GP inference costs $\mathcal{O}(d^2)$ memory and $\mathcal{O}(d^3)$ time, which is prohibitive for gridded targets and is typically addressed by inducing points or low-rank approximations \citep{titsias2009variational, lazarogredilla2010sparse, pmlr-v37-wilson15}. We instead follow the distributional regression paradigm \citep{gneiting2014, klein2015}, where the network parameterizes the conditional predictive law directly and describes aleatoric dependence in the target rather than posterior uncertainty about a latent function.

\section{CONCLUSION}
We introduced \score{}, a method for learning high-dimensional structured
Gaussian distributions with neural networks. Training with kernel scores
instead of the negative log-likelihood circumvents the instability of
likelihood-based optimization. In addition, we proposed to parameterize the
covariance matrix in a spectral unitary basis, which, owing to proven
properties of the kernel score, yields a dense covariance in physical space at
low computational cost. A staged objective that separates learning the
marginals from learning the correlation structure keeps training efficient,
while recovering the true marginals and an approximation of the true
correlations. In our experiments, \score{} outperforms established baselines
across a range of metrics, is the best calibrated among the compared methods,
and offers the best trade-off between compute and performance.

\paragraph{Limitations and future work.}
The main limitation of \score{} is the circulant covariance structure it implies, which may be unreasonable in some applications. While we quantify the resulting approximation error, and the method is deliberately designed as a fast yet simple approximation, certain predictive tasks will require moving beyond this structure.
This points to several directions for future work. On the modeling side, richer parameterizations in the transformed domain, such as low-rank or tridiagonal instead of diagonal, could increase expressivity while keeping computational cost low, and other unitary transforms, such as wavelets, could be an interesting object of study. Moreover, \score{} extends naturally to Gaussian mixtures and other elliptical distributions, such as the multivariate $t$-distribution, which would broaden its applicability to non-Gaussian and heavy-tailed data. On the theoretical side, sharpening the PAC bounds and deriving an optimal split between the two stages of the objective would further improve the usability of our method.

\begin{acknowledgements}
C. Bülte and G. Kutyniok acknowledge support by the DAAD programme Konrad Zuse Schools of Excellence in Artificial Intelligence, sponsored by the Federal Ministry of Research, Technology and Space and the German Research Foundation under the grant DFG-SPP-2298.

E. Partow and G. Kutyniok acknowledge support by the German Research Foundation (Deutsche Forschungsgemeinschaft,
DFG) – GRK 3081 – Project number 534429653.

G. Kutyniok also acknowledges support by the gAIn project, which is funded by the Bavarian Ministry of Science and the Arts (StMWK Bayern) and the Saxon Ministry for Science, Culture and Tourism (SMWK Sachsen). Furthermore, G. Kutyniok is supported by LMUexcellent, funded by the Federal Ministry of Education and Research (BMBF) and the Free State of Bavaria under the Excellence Strategy of the Federal Government and the Länder as well as by the Hightech Agenda Bavaria.

The authors acknowledge the computational and data resources
provided by the Leibniz Supercomputing Centre (\href{https://www.lrz.de}{www.lrz.de}).
\end{acknowledgements}
 
%
%
\bibliographystyle{unsrtnat}
\bibliography{references}

\clearpage
\onecolumn

\appendix
\etocdepthtag.toc{appendix}

\section*{Appendix}

\etocsettagdepth{main}{none}
\etocsettagdepth{appendix}{section}
\tableofcontents

\section{Notation}\label{app:notation}
Throughout the paper, let $\mathcal{X}\subseteq\R^{k}$ and $\mathcal{Y}\subseteq\R^{d}$ denote the input and output spaces, respectively, and let $\mathcal{P}(\mathcal{Y})$ be a convex set of probability measures on $\mathcal{Y}$.
We consider a training dataset $\mathcal{D}=\{(\bx_n,\by_n)\}_{n=1}^{N}\subset(\mathcal{X}\times\mathcal{Y})^N$ consisting of i.i.d.\ samples from an unknown joint distribution $\bP=\bP_{\bX,\bY}$ with marginal $\bP_{\bX}$ and conditional law $\bP_{\bY|\bX}(\cdot\mid\bx)$.
Expectations with respect to $\bP$ are written as $\bE_{\bX,\bY\sim\bP}[\cdot]$, while $P,Q\in\mathcal{P}(\mathcal{Y})$ denote generic probability distributions. Our predictive model is a neural network $f_\phi:\mathcal{X}\rightarrow\mathcal{P}(\mathcal{Y})$ with parameters $\phi\in\mathcal{W}\subseteq\R^q$, which maps an input $\bx$ to a probability distribution $f_\phi(\bx)$.
Unless stated otherwise, we restrict attention to Gaussian predictive distributions, $f_\phi(\bx)=\mathcal{N}\!\left(\bmu_\phi(\bx),\,\bSigma_\phi(\bx)\right)$, where $\bmu_\phi(\bx)\in\R^d$ and $\bSigma_\phi(\bx)\succ0$ denote the predicted mean vector and positive definite covariance matrix, respectively.
We write $\diag(\bv)$ for the diagonal matrix with diagonal entries $\bv$ and, for a matrix $\bM$, $\diag(\bM)$ for the diagonal matrix with the same diagonal as $\bM$. Moreover, $\bI$ denotes the identity matrix, $\bm1$ the vector of ones, and $\odot$ the entrywise product. We use bold lower-case letters for vectors and bold upper-case letters for matrices, with the exception of correlation matrices, which we write $R$ throughout to distinguish them from covariances. For a matrix $\bA$, the transpose and Hermitian transpose are denoted by $\bA^\top$ and $\bA^*$, respectively, $\lVert\bA\rVert_F$ is the Frobenius norm, $\lambda_{\min}$ and $\lambda_{\max}$ are the extreme eigenvalues, and $\bA\succ0$ ($\bA\succeq0$) means that a symmetric or Hermitian $\bA$ is positive (semi)definite. A hat, as in $\hat\bSigma$, marks quantities in the transformed coordinates $\hat{\bz}=\bB\bz$ of a unitary $\bB$.

\section{Loss gradients}
\label{app:loss_gradients}
This section derives the gradients of $S_\mathrm{log}$ and $S_k$ to prove  Lemma~\ref{lemma:log_score_gradients} and Theorem~\ref{thm:ks-lipschitz}.
Throughout, we consider a multivariate normal distribution $P=\cN(\bmu,\bSigma)$ with $\bSigma\succ0$, decomposed as $\bSigma=\bL\bL^\top$, where $\bL$ is a lower triangular matrix with positive diagonal. We analyze Lipschitz continuity of the scores in $(\bmu,\bL)$ through boundedness of their gradients. Let $\by\in\R^d$ denote the observation and $\br:=\bmu-\by$ the residual. Finally, let $\lambda_1\ge\cdots\ge\lambda_d$ denote the ordered eigenvalues of $\bSigma$ and $\be_k$, $k=1,\ldots,d$, the corresponding eigenvectors. 

\subsection{Log score}
Recall that the log score of a multivariate Gaussian is given by
\[
S_\mathrm{log}(P,\by)=\frac{1}{2}\log\det\bSigma+\frac{1}{2}(\by-\bmu)^\top\bSigma^{-1}(\by-\bmu)+\frac{d}{2}\log(2\pi).
\]
Let $\bw\coloneq\bSigma^{-1}\br$ and $\bz\coloneq\bL^{-1}\br$, so that $\bL^\top\bw=\bz$. The gradients of $S_\mathrm{log}$ take the form
\begin{align*}
    \nabla_{\bmu}S_\mathrm{log}=\bw,\qquad
    \big[\nabla_{\bL}S_\mathrm{log}\big]_{ii}=\frac{1}{L_{ii}}-w_i z_i,\qquad
    \big[\nabla_{\bL}S_\mathrm{log}\big]_{ij}=-w_i z_j,
\end{align*}
for all $i=1,\ldots,d$ and $j<i$.

\subsubsection{Proof of Lemma~\ref{lemma:log_score_gradients}}
\begin{proof}
Representing $\bSigma$ via its eigendecomposition leads to
\begin{align*}
    \bSigma = \sum_{k=1}^d \lambda_k \be_k \be_k^\top, \quad \bSigma^{-2} = \sum_{k=1}^d \lambda_k^{-2} \be_k \be_k^\top.
\end{align*}
Further, we rewrite $\br= \sum_{k = 1}^d  \Tilde{r}_k \be_k$ with $ \Tilde{r}_k = \be_k^\top \br$. Since $\bSigma^{-1}$ is symmetric, we obtain
\begin{align*}    
\| \nabla_{\bm \mu} S_\mathrm{log}\|^2 = \| \bSigma^{-1}\br\|^2 = \br^\top \bSigma^{-2}\br = \sum_{k=1}^d \lambda_k^{-2} \Tilde{r}_k^2,
\end{align*}
from which we obtain the lower bound
\begin{align*}    
\sum_{k=1}^d \lambda_k^{-2}\Tilde{r}_k^2 \geq \lambda_{\min}^{-2}\Tilde{r}_d^2 \implies \| \nabla_{\bm \mu} S_\mathrm{log}\| \geq \lambda_{\min}^{-1}|\Tilde{r}_d|. 
\end{align*}
Therefore, along any sequence of covariance matrices $\bSigma^{(n)}$ with $\lambda_\mathrm{min}(\bSigma^{(n)}) \to 0$ whose smallest eigenvector is not asymptotically orthogonal to the residual, i.e.\ $\liminf_n |\be_d^{(n)\top} \br| > 0$, we have $\| \nabla_{\bm \mu} S_\mathrm{log}\| \to \infty$.

For the second gradient, we have
\[
\|\nabla_{\bL}S_\mathrm{log}\|_F^2=\sum_i\Big(\frac{1}{L_{ii}}-w_iz_i\Big)^2+\sum_{i>j}w_i^2z_j^2 .
\]
Note that a vanishing diagonal entry $L_{ii} \to 0$ alone is not sufficient for divergence, since the two terms $1/L_{ii}$ and $w_i z_i$ may cancel. It suffices to exhibit a divergent sequence. Consider $\bL^{(n)} = \operatorname{diag}(\sigma_n, 1, \ldots, 1)$ with $\sigma_n \to 0$, so that $\lambda_{\min}^{(n)} = \sigma_n^2 \to 0$, which gives
\begin{align*}
    \left[\nabla_{\bL} S_\mathrm{log} \right]_{11} = \frac{1}{\sigma_n} - \frac{r_1^2}{\sigma_n^3},
\end{align*}
which diverges for every fixed $r_1$, including $r_1 = 0$.
Consequently, $\sup \|\nabla S_\mathrm{log}\|$ is unbounded in every neighborhood of $\lambda_{\min} = 0$, and the scoring rule is not (globally) Lipschitz continuous in $(\bmu,\bL)$.
\end{proof}
\subsection{Kernel score}
\label{app:ks_closed_form}
The closed form of the Gaussian kernel score \citep{kneissl2025improvedprobabilisticregressionusing} with bandwidth $\gamma>0$ is given by
\begin{align*}
S_k(P,\by)=\frac{1}{2}\underbrace{\det(\bA)^{-1/2}}_{c_1}-\underbrace{\det(\bm{G})^{-1/2}}_{c_2}\underbrace{\exp\Big(-\frac{1}{\gamma^2}\br^\top\bm{G}^{-1}\br\Big)}_{E}+\frac{1}{2},
\end{align*}
for $\br\coloneq\bmu-\by$, $\bA\coloneq\bI+4\bSigma/\gamma^2$ and $\bm{G}\coloneq\bI+2\bSigma/\gamma^2$. Further, define $\bw\coloneq\bm{G}^{-1}\br$ and note that $\nabla_{\bL}E=\frac{4E}{\gamma^4}\bw\bw^\top\bL$. Then, the gradients of the Gaussian kernel score are given by
\begin{align}
    \nabla_{\bmu}S_k=\frac{2c_2E}{\gamma^2}\bm{G}^{-1}\br,\qquad
    \nabla_{\bL}S_k=-\frac{2c_1}{\gamma^2}\bA^{-1}\bL+\frac{2c_2E}{\gamma^2}\bm{G}^{-1}\bL-\frac{4c_2E}{\gamma^4}\bw\bw^\top\bL.
\end{align}
Here $\nabla_{\bL}$ denotes the unconstrained derivative with respect to a full matrix argument; the gradient with respect to a lower-triangular $\bL$ is its lower-triangular part, whose Frobenius norm is bounded by that of the full matrix.

\subsubsection{Proof of Theorem~\ref{thm:ks-lipschitz}}
\begin{proof}
    Throughout, let $\bSigma = \sum_i \lambda_i \be_i \be_i^\top$ be the eigendecomposition of $\bSigma$ with the eigenvectors $\be_i$ introduced above, and let
    \begin{align*}
        \bA = \sum_i \alpha_i \be_i \be_i^\top, \ \alpha_i = 1 + \frac{4}{\gamma^2} \lambda_i \geq 1,\qquad
        \bm{G} = \sum_i \beta_i \be_i \be_i^\top, \ \beta_i = 1 + \frac{2}{\gamma^2} \lambda_i \geq 1,
    \end{align*}
    which are the eigendecompositions of $\bA$ and $\bm{G}$, since both are functions of $\bSigma$. Since all eigenvalues of $\bA$ and $\bm{G}$ are at least one, we have $\det(\bA), \det(\bm{G}) \geq 1$ and therefore
    \begin{align*}
        c_1 = \det(\bA)^{-1/2} \leq 1, \qquad c_2 = \det(\bm{G})^{-1/2} \leq 1.
    \end{align*}
    Write $\rho_i \coloneq \be_i^\top \br$ and define the normalized coefficients
    \begin{align*}
        t_i \coloneq \frac{\rho_i}{\gamma \sqrt{\beta_i}}, \qquad \text{such that} \qquad \rho_i^2 = \gamma^2 \beta_i t_i^2 .
    \end{align*}
    The exponent of $E$ is exactly $-\|\bt\|^2$, since
    \begin{align*}
    \frac{1}{\gamma^2} \br^\top \bm{G}^{-1}\br = \frac{1}{\gamma^2} \br^\top \Big( \sum_i \beta_i^{-1} \be_i \be_i^\top\Big) \br = \sum_i \frac{\rho_i^2}{\gamma^2 \beta_i} = \sum_i t_i^2 = \|\bt\|^2 ,
    \end{align*}
    hence $E = \exp(-\|\bt\|^2)$.

    \subparagraph{Gradient in $\bmu$.} With $\beta_i \geq 1$, we get
    \begin{align*}
        \|\bw\|^2 = \|\bm{G}^{-1}\br\|^2 = \sum_i \frac{\rho_i^2}{\beta_i^2} = \gamma^2 \sum_i \frac{t_i^2}{\beta_i} \leq \gamma^2 \|\bt\|^2 .
    \end{align*}
    Using the fact that $|x| \exp(-x^2) \leq 1/\sqrt{2e}$ for all $x \in \R$, we obtain
    \begin{align*}
        \|\bw\| E \leq \gamma \|\bt\| \exp(-\|\bt\|^2) \leq \frac{\gamma}{\sqrt{2e}} .
    \end{align*}
    Together with $c_2 \leq 1$ this yields
    \begin{align*}
        \|\nabla_{\bmu} S_k \| = \frac{2 c_2 E}{\gamma^2}\|\bw\| \leq \frac{2}{\gamma^2}\cdot \frac{\gamma}{\sqrt{2e}} = \frac{1}{\gamma}\sqrt{\frac{2}{e}} .
    \end{align*}

    \subparagraph{Gradient in $\bL$.} Since $\bL$ need not be symmetric, we use its singular value decomposition $\bL = \sum_i \sigma_i \be_i \bv_i^\top$; the left singular vectors of $\bL$ coincide with the eigenvectors of $\bSigma = \bL\bL^\top = \sum_i \sigma_i^2 \be_i \be_i^\top$, so that $\sigma_i^2 = \lambda_i$ is the $i$th eigenvalue of $\bSigma$. Consequently
    \begin{align*}
        \bA^{-1}\bL = \sum_i \frac{\sigma_i}{\alpha_i} \be_i \bv_i^{\top}, \qquad \bm{G}^{-1}\bL = \sum_i \frac{\sigma_i}{\beta_i} \be_i \bv_i^{\top},
    \end{align*}
    and, since $\be_i$ and $\bv_i$ are orthonormal, $\big\|\sum_i s_i \be_i \bv_i^\top\big\|_F^2 = \sum_i s_i^2$.

    For the first term $\bT_1 \coloneq -\frac{2c_1}{\gamma^2} \bA^{-1}\bL$, we obtain
    \begin{align*}
        \left\|\bT_1 \right\|_F^2 = \frac{4c_1^2}{\gamma^4} \sum_i \frac{\lambda_i}{\alpha_i^2} = \frac{4c_1^2}{\gamma^4} \sum_i \frac{\lambda_i}{(1+4\lambda_i / \gamma^2)^2}.
    \end{align*}
        Using $c_1 \leq 1$ and that differentiating $x \mapsto x/(1+cx)^2$ for $c>0$ shows that it is maximized at $x = 1/c$ with value $1/(4c)$, which gives $\frac{x}{(1+4x/\gamma^2)^2} \leq \gamma^2 / 16$ for all $x \geq 0$, we obtain
    \begin{align*}
        \left\|\bT_1 \right\|_F^2 \leq \frac{4}{\gamma^4}\, d\, \frac{\gamma^2}{16} = \frac{d}{4\gamma^2}\implies  \left\|\bT_1 \right\|_F \leq \frac{\sqrt{d}}{2\gamma}.
    \end{align*}

    For the second term, $\bT_2 \coloneq \frac{2c_2E}{\gamma^2}\bm{G}^{-1}\bL$, we similarly obtain
    \begin{align*}
        \|\bT_2\|_F^2 = \frac{4 c_2^2 E^2}{\gamma^4} \sum_i \frac{\lambda_i}{\beta_i^2}.
    \end{align*}
    Using $c_2E \leq 1$ and, by the same maximization with $c = 2/\gamma^2$, $\frac{\lambda_i}{(1+2\lambda_i/\gamma^2)^2}\leq \gamma^2/8$, we obtain
    \begin{align*}
        \left\|\bT_2 \right\|_F^2 \leq \frac{4}{\gamma^4}\,d\,\frac{\gamma^2}{8} = \frac{d}{2\gamma^2} \implies  \left\|\bT_2 \right\|_F \leq \frac{\sqrt{d}}{\sqrt{2}\,\gamma}.
    \end{align*}

    Finally, consider the third term $\bT_3 \coloneq - \frac{4c_2 E}{\gamma^4} \bw \bw^\top \bL$. Since $\bw\bw^\top \bL = \bw (\bL^\top \bw)^\top$ has rank one,
    \begin{align*}
        \|\bT_3\|_F &= \frac{4c_2E}{\gamma^4} \|\bw\| \, \|\bL^\top \bw\|, \quad
        \|\bw\|^2  = \br^\top \bm{G}^{-2}\br = \sum_i \frac{\rho_i^2}{\beta_i^2}, \quad
        \|\bL^\top \bw\|^2 = \bw^\top \bL \bL^\top \bw = \bw^\top \bSigma \bw = \sum_i \frac{\lambda_i \rho_i^2}{\beta_i^2}.
    \end{align*}
    Substituting $\rho_i^2 = \gamma^2 \beta_i t_i^2$ and using $\beta_i \geq 1$ for the first identity gives
    \begin{align*}
        \|\bw\|^2 = \gamma^2 \sum_i \frac{t_i^2}{\beta_i} \leq \gamma^2 \|\bt\|^2, \quad
        \|\bL^\top \bw\|^2 = \gamma^2 \sum_i \frac{\lambda_i}{\beta_i}\,t_i^2 \leq \frac{\gamma^4}{2} \|\bt\|^2 ,
    \end{align*}
    where the second bound uses
    \begin{align*}
        \frac{\lambda_i}{\beta_i} = \frac{\lambda_i}{1 + 2\lambda_i/\gamma^2} \leq \frac{\gamma^2}{2} \qquad \text{for all } \lambda_i \geq 0.
    \end{align*}
    Combining yields
    \begin{align*}
        \|\bw\| \, \|\bL^\top \bw\| \, E \leq \gamma \|\bt\| \cdot \frac{\gamma^2}{\sqrt{2}} \|\bt\| \exp(-\|\bt\|^2) = \frac{\gamma^3}{\sqrt{2}} \|\bt\|^2 \exp(-\|\bt\|^2) \leq \frac{\gamma^3}{\sqrt{2}\,e},
    \end{align*}
    since $\max_{s\geq 0} s \exp(-s) = e^{-1}$. In total, using $c_2 \leq 1$,
    \begin{align*}
        \|\bT_3\|_F \leq \frac{4}{\gamma^4} \frac{\gamma^3}{\sqrt{2}\,e} = \frac{2 \sqrt{2}}{e \gamma}.
    \end{align*}

    Combining the three terms with the triangle inequality, we obtain
    \begin{align*}
        \|\nabla_{\bL} S_k\|_F \leq \left(\frac{1}{2}+\frac{1}{\sqrt{2}} \right) \frac{\sqrt{d}}{\gamma} + \frac{2\sqrt{2}}{e \gamma}.
    \end{align*}
    All bounds hold for every $\bmu, \by \in \R^d$ and every lower-triangular $\bL$, including singular $\bL$, since no step requires invertibility. As the domain of $(\bmu,\bL)$ is convex, the bounded gradients imply that $(\bmu, \bL) \mapsto S_k(P,\by)$ is globally Lipschitz continuous.
\end{proof}

\section{Spectral Correlation Estimation}
\label{app:score}
This appendix collects the material behind Section~\ref{sec:score}.
\subsection{Proofs}
\label{app:score_proofs}

We provide the proofs of Proposition~\ref{prop:score-invariance}, Lemma~\ref{lem:corr-core} and Proposition~\ref{prop:two-stage-opt}.

\subsubsection{Proof of Proposition~\ref{prop:score-invariance}}
\begin{proof}
Let $\bm{Z},\bm{Z}'\sim P$ be independent. Then $\bB\bm{Z},\bB\bm{Z}'\sim\bB_\#P$ are independent, and for all $\bz,\bz'\in\R^d$,
\[
k(\bB\bz,\bB\bz')=\kappa\big(\|\bB(\bz-\bz')\|_2\big)=\kappa\big(\|\bz-\bz'\|_{\bB^*\bB}\big)=k_{\bB}(\bz,\bz'),
\]
since $\|\bB\bm v\|_2^2=\bm v^*\bB^*\bB\bm v$. By definition of the kernel score, for the prediction $\bB_\#P$ and the observation $\bB\by$,
\begin{align*}
S_k(\bB_\#P,\bB\by)
&=\tfrac12\,\E\big[k(\bB\bm{Z},\bB\bm{Z}')\big]+\tfrac12\,k(\bB\by,\bB\by)-\E\big[k(\bB\by,\bB\bm{Z})\big] \\
&=\tfrac12\,\E\big[k_{\bB}(\bm{Z},\bm{Z}')\big]+\tfrac12\,k_{\bB}(\by,\by)-\E\big[k_{\bB}(\by,\bm{Z})\big]
=S_{k_{\bB}}(P,\by),
\end{align*}
which is the identity in (i). For the second claim in (i), assume that $S_k$ is strictly proper, i.e., that $k$ is characteristic. Let $Q$ be a further distribution on $\R^d$ and $\bW,\bW'\sim Q$ independent. By the same computation,
\[
\mathrm{MMD}_{k_{\bB}}^2(P,Q)
=\E\big[k_{\bB}(\bm{Z},\bm{Z}')\big]+\E\big[k_{\bB}(\bW,\bW')\big]-2\,\E\big[k_{\bB}(\bm{Z},\bW)\big]
=\mathrm{MMD}_{k}^2(\bB_\#P,\bB_\#Q).
\]
If $P\neq Q$, then $\bB_\#P\neq\bB_\#Q$ because $\bB$ is invertible, so the right-hand side is positive as $k$ is characteristic. Hence $k_{\bB}$ is characteristic, and since the divergence of $S_{k_{\bB}}$ equals $\tfrac12\mathrm{MMD}^2_{k_{\bB}}$, the score $S_{k_{\bB}}$ is strictly proper.

For (ii), $\bB$ unitary means $\bB^*\bB=\bI$, so $\|\cdot\|_{\bB^*\bB}=\|\cdot\|_2$ and $k_{\bB}=k$. The claim follows from (i).
\end{proof}

\subsubsection{Proof of Lemma~\ref{lem:corr-core}}
\begin{proof}
Write $\bm C=\bB^{*}\hat\bSigma\bB$ and $\bm\Delta\in\R^{d}$ with $\Delta_i:=C_{ii}$.
Since $\bB$ is invertible and $\hat\bSigma$ is Hermitian positive definite,
$\bm C$ is Hermitian positive definite as well. Indeed, $\bm C^{*}=\bm C$, and
$\bm v^{*}\bm C\bm v=(\bB\bm v)^{*}\hat\bSigma(\bB\bm v)>0$ for
$\bm v\neq\bm0$. Its diagonal entries $\Delta_i$ are therefore real and strictly positive, so
$\mathrm{corr}(\bm C)=\diag(\bm\Delta)^{-1/2}\,\bm C\,\diag(\bm\Delta)^{-1/2}$
is well defined. By the same congruence argument, now with the invertible
diagonal matrix $\diag(\bm\Delta)^{-1/2}$, $\mathrm{corr}(\bm C)$ is again
Hermitian positive definite, and its diagonal entries are
$\Delta_i^{-1}C_{ii}=1$. With $\bD_{\bm\tau}=\diag(\sqrt{\bm\tau})$ for
$\bm\tau\in\R^{d}_{>0}$, it follows that
$\big(\bD_{\bm\tau}\,\mathrm{corr}(\bm C)\,\bD_{\bm\tau}\big)_{ii}=\tau_i$
for every $i$.
\end{proof}

\subsubsection{Proof of Proposition~\ref{prop:two-stage-opt}}
\begin{proof}
For (i), the objective of the first stage is separable and its $i$-th summand is the expected $S_1$-score of $\cN(\mu_i,\tau_i)$ under the $i$-th marginal of $Q$, which is $\cN(\mu_{Q,i},(\bSigma_Q)_{ii})$ and thus lies in the candidate family. Since $S_1$ is strictly proper, the summand is minimized by the true marginal and by no other, so $\mu^{\star}_i=\mu_{Q,i}$ and $\tau^{\star}_i=(\bSigma_Q)_{ii}$, hence $\bD^{\star}=\bD_Q$.

For (ii), insert $\bmu^{\star}=\bmu_Q$ and $\bD^{\star}=\bD_Q$, and write
\[
  \bm m:=\bD_Q^{-1}\bmu_Q,\qquad
  \bm{Z}:=(\bD^{\star})^{-1}\bY\sim\cN(\bm m,R_Q),\qquad
  \bW:=\bm{Z}-\bm m\sim\cN(\bm0,R_Q),
\]
the law of $\bm{Z}$ being given by \eqref{eq:corr-fact}. The candidates of the second stage are $\cN(\bm m,R)$ with $R\in\mathcal M=\{\mathrm{corr}(\bB^{*}\hat\bSigma\bB)\colon\hat\bSigma\in\mathcal S\}$. Then
\begin{align*}
  R^{\star}
  &\in\operatorname*{arg\,min}_{R\in\mathcal M}\;
      \E\,S_2\big(\cN(\bm m,R),\,\bm{Z}\big)\\
  &=\operatorname*{arg\,min}_{R\in\mathcal M}\;
      \E\,S_2\big(\cN(\bm0,R),\,\bW\big)
   &&\text{target and candidates share $\bm m$}\\
  &=\operatorname*{arg\,min}_{R\in\mathcal M}\;
      \Big[\E\,S_2\big(\cN(\bm0,R),\,\bW\big)
          -\E\,S_2\big(\cN(\bm0,R_Q),\,\bW\big)\Big]
   &&\text{subtracted term free of $R$}\\
  &=\operatorname*{arg\,min}_{R\in\mathcal M}\;
      D\big(\cN(\bm0,R),\,\cN(\bm0,R_Q)\big),
\end{align*}
where the second step is the translation invariance of $S_2$ and the last step is the definition of the divergence, the subtracted term being the score entropy. Hence $R^{\star}$ minimizes $D(\cN(\bm0,R),\cN(\bm0,R_Q))$ over $\mathcal M$. The deployed prediction is $\cN(\bmu_Q,\bD_QR^{\star}\bD_Q)$, whose marginal variances are $(\bSigma_Q)_{ii}$ by Lemma~\ref{lem:corr-core}.
\end{proof}

\subsection{Implementation}
\label{app:implementation}
We first describe how the network outputs are turned into a prediction, then
the closed forms for each stage.

\subsubsection{From network outputs to a prediction}
\label{app:nn_predictions}
All predictions are conditional on the input. A single network maps the input to all parameters at once,
\[
  \bx\;\longmapsto\;\big(\bmu(\bx),\,\bm\tau(\bx),\,\hat\bSigma(\bx)\big),
\]
and the predictive law at $\bx$ is $\cN\big(\bmu(\bx),\,\bSigma_P(\bx)\big)$
with $\bSigma_P$ assembled from these outputs by \eqref{eq:core_param_corr}.
Note that the staged objective of Section~\ref{sec:score} places no
constraint on how the network produces them. We specify the architecture
per benchmark in Appendix~\ref{app:experiments}. However, common to all of
them is the output head. For a diagonal core it emits three groups of $d$ values,
$(\bm h_\mu,\bm h_\tau,\bm h_s)$, mapped to the parameters by
\begin{equation}\label{eq:head-map}
  \bmu=\bm h_\mu,
  \qquad
  \bm\tau=\mathrm{softplus}(\bm h_\tau)+\epsilon,
  \qquad
  \bm s=d\,\mathrm{softmax}(\bm h_s),
\end{equation}
where $\mathrm{softplus}(x)=\log(1+e^{x})$ acts entrywise with a floor $\epsilon>0$ for numerical safety and the softmax runs over the coefficient index, $\mathrm{softmax}(\bm h)_k=e^{h_k}/\sum_l e^{h_l}$. The softplus makes the variances positive and the softmax makes the spectrum positive with $\sum_k s_k=d$, hence $\bar s=1$. For the DFT, the spectrum is subsequently symmetrized, $s_k\leftarrow\tfrac12(s_k+s_{d-k})$, which preserves $\sum_k s_k=d$ and makes $\bF^{*}\diag(\bm s)\bF$ real. The floor $\bm s\leftarrow(\bm s+\epsilon)/(1+\epsilon)$ keeps the core positive definite. The prediction is then assembled as in \eqref{eq:core_param_corr}, which only requires the implied marginal variances $\Delta_i=(\bB^{*}\hat\bSigma\bB)_{ii}$,
\begin{equation}\label{eq:head-assembly}
  \bm\Delta=\bm H^{\top}\bm s,
  \qquad
  \bm H:=\big(\lvert B_{ki}\rvert^{2}\big)_{k,i},
  \qquad
  \tilde\bD=\diag\big(\sqrt{\bm\tau/\bm\Delta}\big),
  \qquad
  \bSigma_P=\tilde\bD\,\bB^{*}\diag(\bm s)\bB\,\tilde\bD .
\end{equation}
For the DFT $\lvert F_{ki}\rvert^{2}=1/d$, so $\bm\Delta=\bar s\,\bm1=\bm1$, $\tilde\bD=\diag(\sqrt{\bm\tau})$, and the renormalization is the identity. When the target is a $d_1\times d_2$ grid and the transform is separable, $\bB=\bB_1\otimes\bB_2$ acting along its axes, then $\bm H=\bm H_1\otimes\bm H_2$ with $\bm H_j=(\lvert(B_j)_{ki}\rvert^{2})_{k,i}$, so $\bm\Delta$ reshaped to the grid equals $\bm H_1^{\top}\bm S\,\bm H_2$ with $\bm S$ the spectrum reshaped likewise. This is one small matrix product per axis, and $\bB$ itself is never materialized.

\paragraph{Low-rank correction.}
The diagonal core is our default case, although the construction of
\score{}, and in particular Lemma~\ref{lem:corr-core}, is more general.
A straightforward extension is to add a low-rank term. A correction of
rank $r\ll d$ buys $r$ directions beyond what $\bB$ diagonalizes. For the
DFT these are $r$ non-stationary directions on a stationary background.
The head emits $r$ further groups $\bm u_j=\bm h_{u,j}\in\R^{d}$,
$j=1,\dots,r$, collected in $\bU=(\bm u_1,\dots,\bm u_r)$, without any
constraint, and the transformed core becomes
\begin{equation}\label{eq:lora-core}
  \bB^{*}\hat\bSigma\bB
  \;=\;
  \bB^{*}\diag(\bm s)\bB+\sum_{j=1}^{r}\bm u_j\bm u_j^{\top},
\end{equation}
which is real by construction, the factors being real fields in the physical
domain. In particular, no conjugate symmetry constraint is needed here. The
assembly gains one term per factor, that is
\begin{equation}\label{eq:lora-assembly}
  \bm\Delta=\bm H^{\top}\bm s+\sum_{j=1}^{r}\bm u_j\odot\bm u_j,
  \qquad
  \bSigma_P=\tilde\bD\,\bB^{*}\hat\bSigma\bB\,\tilde\bD,
  \qquad
  \hat\bSigma=\diag(\bm s)+\hat\bU\hat\bU^{*},
  \quad
  \hat\bU:=\bB\bU .
\end{equation}
Note that the scale gauge now acts jointly,
$(\bm s,\bU)\mapsto(c\,\bm s,\sqrt c\,\bU)$, and is again fixed by $\bar s=1$.

\subsubsection{Scores and closed forms}
\label{app:score_closed_forms}

The first stage scores the $d$ marginals, the second
a $d$-dimensional Gaussian assembled from the core, which calls for different closed forms. Throughout, $\bD:=\diag(\sqrt{\bm\tau})$
denotes the marginal scales, and $\bm m=\bD^{-1}\bmu$; $\bm Z=\bD^{-1}\bY$ are
the standardized mean and target of \eqref{eq:stage2}.

\paragraph{First stage.}
As first stage default score we use the CRPS, which for a univariate Gaussian is available
in closed form. With $z_i:=(y_i-\mu_i)/\sqrt{\tau_i}$ and $\varphi,\Phi$ the
standard normal density and distribution function,
\begin{equation}\label{eq:crps-closed}
  \mathrm{CRPS}\big(\cN(\mu_i,\tau_i),y_i\big)
  \;=\;\sqrt{\tau_i}\,
  \Big[\,z_i\big(2\Phi(z_i)-1\big)+2\varphi(z_i)-\tfrac{1}{\sqrt\pi}\,\Big],
\end{equation}
summed over the $d$ coordinates at $\mathcal O(d)$. 

\paragraph{Second stage.}
Both the log score and the kernel score have closed forms for a $d$-dimensional Gaussian predictive. For $\cN(\bmu,\bSigma)$ observed at $\by$, and the log score,
\begin{equation}\label{eq:log-closed}
  2\,S_{\log}\big(\cN(\bmu,\bSigma),\by\big)
  \;=\;
  d\log2\pi+\log\det\bSigma
  +(\by-\bmu)^{\top}\bSigma^{-1}(\by-\bmu),
\end{equation}
and, for the Gaussian kernel $k(\bm u,\bm v)=\exp\!\big(-\lVert\bm u-\bm v\rVert_2^{2}/\gamma^{2}\big)$ with bandwidth $\gamma>0$ (cf.\ Appendix~\ref{app:ks_closed_form}),
\begin{equation}\label{eq:ks-closed}
  S_{k}\big(\cN(\bmu,\bSigma),\by\big)
  \;=\;
  \tfrac12\Big(1+\det\big(\bI+\tfrac{4}{\gamma^{2}}\bSigma\big)^{-1/2}\Big)
  \;-\;
  \det\big(\bI+\tfrac{2}{\gamma^{2}}\bSigma\big)^{-1/2}
  \exp\Big(\!-(\by-\bmu)^{\top}
    \big(\gamma^{2}\bI+2\bSigma\big)^{-1}(\by-\bmu)\Big).
\end{equation}
Each needs one or two determinants and one quadratic form. We now show that these pass to the core. Recall that the deployed covariance \eqref{eq:core_param_corr} reads
\begin{equation*}
\bSigma_P=\tilde\bD\,\bB^{*}\hat\bSigma\,\bB\,\tilde\bD,\qquad
  \tilde\bD=\diag\big(\sqrt{\bm\tau/\bm\Delta}\big),\qquad
  \hat\bSigma=\diag(\bm s)+\hat\bU\hat\bU^{*},\quad
  \hat\bU:=\bB\bU,
\end{equation*}
with the implied marginal variances $\Delta_i=(\bB^{*}\hat\bSigma\bB)_{ii}$ of \eqref{eq:head-assembly}, the physical factors $\bU=(\bm u_1,\dots,\bm u_r)$ of \eqref{eq:lora-core}, and $r=0$ for a diagonal core. For the DFT with a diagonal core, $\bm\Delta=\bar s\,\bm1=\bm1$ by the softmax normalization \eqref{eq:head-map}, so $\tilde\bD=\bD$, the core is $\bm C=\bF^{*}\diag(\bm s)\bF$, and the second stage scores $\cN(\bm m,\bm C)$ at $\bm Z$. Both quantities that \eqref{eq:log-closed} have closed forms, such that
\begin{equation}\label{eq:log-reduction}
  \log\det\bm C=\sum_k\log s_k,
  \qquad
  (\bm Z-\bm m)^{\top}\bm C^{-1}(\bm Z-\bm m)
  =\sum_k\frac{w_k}{s_k},
  \qquad
  w_k:=\big\lvert\big(\bF(\bm Z-\bm m)\big)_k\big\rvert^{2},
\end{equation}
with $w_k$ the periodogram of the standardized residual, so that
\begin{equation}\label{eq:log-decoupled}
  2\,S_{\log}\big(\cN(\bm m,\bm C),\bm Z\big)
  \;=\;d\log2\pi+\sum_k\Big(\log s_k+\frac{w_k}{s_k}\Big).
\end{equation}
Note that \eqref{eq:log-decoupled} is also known as the Whittle likelihood~\citep{Whittle1953}, which approximates the likelihood of a stationary Gaussian time series by replacing its Toeplitz covariance with a circulant one. In our setting it is exact, since the core is diagonal by construction, i.e., the back-transformed covariance is already circulant.

Since the standardization is a diagonal change of variables, scoring the deployed prediction instead of the core only adds the constant $\sum_i\log\tau_i$ fixed by the first stage, $2\,S_{\log}\big(\cN(\bmu,\bD\bm C\bD),\by\big)=2\,S_{\log}\big(\cN(\bm m,\bm C),\bm Z\big)+\sum_i\log\tau_i$. For the log score, the two choices are therefore equivalent, at $\mathcal O(d)$ beyond computing the DFT.

For the kernel score the situation is different. By Proposition~\ref{prop:score-invariance}\,(i), the two choices correspond to kernel scores with different metrics, since $\bD$ is not unitary. The second stage therefore scores the core at the standardized residual. Still, it does reduce, as the shift passes through the unitary $\bF$, that is
\begin{equation}\label{eq:ks-core}
  \gamma^{2}\bI+c\,\bm C
  =\bF^{*}\big(\gamma^{2}\bI+c\,\diag(\bm s)\big)\bF ,
  \qquad c\in\{2,4\}.
\end{equation}
Both determinants of \eqref{eq:ks-closed} become products and the linear solve becomes a division. Combining yields
\begin{equation}\label{eq:ks-decoupled}
  S_{k}\big(\cN(\bm m,\bm C),\bm Z\big)
  =\tfrac12\Big(1+\prod_k\big(1+\tfrac{4s_k}{\gamma^{2}}\big)^{-1/2}\Big)
  -\prod_k\big(1+\tfrac{2s_k}{\gamma^{2}}\big)^{-1/2}
   \exp\Big(\!-\sum_k\frac{w_k}{\gamma^{2}+2s_k}\Big),
\end{equation}
again $\mathcal O(d)$ beyond computing the DFT. Including the transform, one evaluation of either objective costs $\mathcal O(d\log d)$ via the FFT, with $\mathcal O(d)$ memory for the spectrum and the marginals, cf. \score{} in Table~\ref{tab:complexity}.

\paragraph{General transforms and low-rank cores.}
The log score keeps its closed form for every unitary $\bB$ and any rank $r$.
Again, the renormalization in \eqref{eq:lora-assembly} is a diagonal change of
variables, so it contributes only the additive term $2\sum_i\log\tilde D_{ii}$,
and unitarity of $\bB$ gives
\begin{equation}\label{eq:log-reduction-general}
  \log\det\bSigma_P=2\sum_i\log\tilde D_{ii}+\log\det\hat\bSigma,
  \qquad
\bSigma_P^{-1}=\tilde\bD^{-1}\bB^{*}\hat\bSigma^{-1}\bB\,\tilde\bD^{-1}.
\end{equation}
A low-rank core keeps that structure through the matrix determinant lemma and
the Woodbury identity,
\begin{align}
  \det\hat\bSigma
  &=\Big(\prod_{k}s_k\Big)\,
    \det\!\big(\bI_r+\hat\bU^{*}\diag(\bm s)^{-1}\hat\bU\big),
  \label{eq:lora-det}\\
  \hat\bSigma^{-1}
  &=\diag(\bm s)^{-1}-\diag(\bm s)^{-1}\hat\bU\big(\bI_r
    +\hat\bU^{*}\diag(\bm s)^{-1}\hat\bU\big)^{-1}
    \hat\bU^{*}\diag(\bm s)^{-1},
  \label{eq:lora-woodbury}
\end{align}
at the price of $r$ further transforms for $\hat\bU$, $\mathcal O(r^{2}d)$ to
assemble the $r\times r$ matrix, and $\mathcal O(r^{3})$ to factorize it. The kernel score does not extend likewise. 

\paragraph{Sampling.}
In the DFT case with diagonal core, i.e. $\bB = \bF$ and $r = 0$, a draw is formed by
\[
  \by=\bmu+\bD\,\bF^{*}\diag(\sqrt{\bm s})\bF\,\bm\xi,
  \qquad
  \bm\xi\sim\cN(\bm0,\bI_d),
\]
at $\mathcal O(d\log d)$ with the FFT. The symmetrization in Appendix~\ref{app:nn_predictions} enforces $s_{d-k}=s_k$, which makes the square root, and hence $\by$, real. In general, draws come from \eqref{eq:lora-assembly} the same way,
\[
  \by=\bmu+\tilde\bD\big(\bB^{*}\diag(\sqrt{\bm s})\bB\,\bm\xi_1
  +\bU\bm\xi_2\big),
  \qquad
  \bm\xi_1\sim\cN(\bm0,\bI_d),\quad
  \bm\xi_2\sim\cN(\bm0,\bI_r)\ \text{independent},
\]
at $\mathcal O(d\log d+dr)$ each for the DFT.

\paragraph{Bandwidth selection.}
The kernel score leaves the choice of the bandwidth $\gamma>0$ open. We set it by (the mean version of) the median heuristic \citep{gretton2012kernel,garreau2018largesampleanalysismedian}, which matches $\gamma^{2}/2$ to the expected squared distance between two independent draws from the predictive distribution at a fixed input. In the second stage the scored quantity is the standardized residual, whose covariance is a correlation matrix, and the heuristic then resolves without data dependence.

\begin{lemma}[Bandwidth in standardized coordinates]\label{lem:bandwidth}
Let $R$ be a correlation matrix on $\R^{d}$ and let $\bm Z,\bm Z'$ be independent draws from $\cN(\bm m,R)$. Then, with the expectation taken over $(\bm Z,\bm Z')$,
\[
  \E\lVert\bm Z-\bm Z'\rVert^{2}=2\tr R=2d ,
\]
so matching $\gamma^{2}/2$ to this expectation gives $\gamma^{2}=4d$, independently of $R$, of the target, and of the input.
\end{lemma}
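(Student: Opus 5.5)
The plan is a direct second-moment computation that never uses the structure of $R$ beyond its diagonal. First I reduce to centered variables. Set $\bW:=\bm Z-\bm m$ and $\bW':=\bm Z'-\bm m$. These are independent, both have law $\cN(\bm0,R)$, and $\bm Z-\bm Z'=\bW-\bW'$. The mean $\bm m$ therefore drops out at once, which is why the statement does not depend on it.

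Next I expand the squared norm:
\[
\E\lVert\bW-\bW'\rVert^{2}=\E\lVert\bW\rVert^{2}+\E\lVert\bW'\rVert^{2}-2\,\E\big[\bW^{\top}\bW'\big].
\]
By independence and zero mean, the cross term is $\E[\bW]^{\top}\E[\bW']=0$. For the remaining terms I use the trace identity
\[
\E\lVert\bW\rVert^{2}=\E\,\tr(\bW\bW^{\top})=\tr\,\E[\bW\bW^{\top}]=\tr R .
\]
This gives $\E\lVert\bm Z-\bm Z'\rVert^{2}=2\tr R$.

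Since $R$ is a correlation matrix, its diagonal entries are all equal to one (compare Lemma~\ref{lem:corr-core}, or the definition of $\mathrm{corr}$). Hence $\tr R=d$ and the expectation equals $2d$. Matching $\gamma^{2}/2=2d$ then yields $\gamma^{2}=4d$. The only quantity that entered is $\tr R=d$, so this is independent of $R$, of $\bm m$, and hence of the target and the input.

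I expect no real obstacle. The only point to flag is that Gaussianity is not needed: the computation holds for any law with covariance $R$. Also, if complex coordinates were considered, one would use $\bW^{*}\bW'$ instead of $\bW^{\top}\bW'$. Here the standardized residual is real, so this does not arise.
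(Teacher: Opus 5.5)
Your proposal is correct and is essentially the paper's argument. The paper notes $\bm Z-\bm Z'\sim\cN(\bm 0,2R)$ and takes $\tr(2R)=2d$ using the unit diagonal. You reach the same covariance $2R$ by expanding the square, which also shows, as you point out, that only the second moments (not Gaussianity) are needed.
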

\begin{proof}
$\bm Z-\bm Z'\sim\cN(\bm0,2R)$, hence $\E\lVert\bm Z-\bm Z'\rVert^{2}=\tr(2R)=2d$ since $R_{ii}=1$, as it is a correlation matrix.
\end{proof}
The median heuristic only sets an order of magnitude. In practice, we therefore average the second-stage loss over a ladder $\gamma^{2}=f\,d$, $f\in\{0.25,0.5,1,2,5\}$, around this value, which keeps the score proper. We always evaluate at $\gamma^{2}=d$ in Appendix~\ref{app:experiments}.

\section{Expressivity of \score{}}
\label{app:expressivity}

Proposition~\ref{prop:two-stage-opt} states that under exact marginals \score{} fits the correlation alone, as a projection of $R_Q$ onto the representable class in the respective divergence. Here, we quantify the misspecification that remains within the Gaussian class. Everything is stated at a fixed input $\bx\in\mathcal X$, as elsewhere, since the representable class is the same at every input, and we take $\bB=\bF$ throughout. Recall that $Q=\cN(\bmu_Q,\bSigma_Q)$ with $\bSigma_Q\succ0$ denotes the target, factored as $\bSigma_Q=\bD_Q R_Q\bD_Q$ as in \eqref{eq:corr-fact}.

Besides our default diagonal core, we also consider the low-rank extension
of \eqref{eq:lora-core}. Diagonal and diagonal-plus-low-rank cores of rank
$r$ give the correlation classes
\[
  \mathcal M^{(r)}
  \;\coloneq\;
  \big\{\mathrm{corr}\big(\bF^{*}\hat\bSigma\bF\big)
  \;:\;\hat\bSigma\in\mathcal S^{(r)}\big\},
  \qquad
  \mathcal S^{(r)}\;\coloneq\;\big\{\diag(\bm s)+\bU\bU^{*}:
  \bm s\in\R_{>0}^{d},\ \bU\in\C^{d\times r}\big\},
\]
restricted to their real members. We write
$\mathcal M:=\mathcal M^{(0)}$ for the diagonal case.

We call the quantity of interest the \emph{approximation error},
\begin{equation}\label{eq:model-error}
  \varepsilon^{(r)}_{D}(Q)
  \;:=\;
  \inf_{R\in\mathcal M^{(r)}}
  D\big(\cN(\bm0,R),\,\cN(\bm0,R_Q)\big),
\end{equation}
which is the error $\varepsilon_{\mathrm{app}}$ of Section~\ref{sec:score} at the input where the target is $Q$, with the divergence $D$ of the respective score and the rank $r$ of the core made explicit. We drop the superscript $(r)$ whenever $r=0$. As before, we treat the Gaussian kernel score and the log score, whose divergences are $\tfrac12\mathrm{MMD}^{2}$ for the Gaussian kernel $k(\bm u,\bm v)=\exp\!\big(-\lVert\bm u-\bm v\rVert^{2}/\gamma^{2}\big)$ and the Kullback--Leibler divergence $\mathrm{KL}\big(\cN(\bm0,R_Q)\,\Vert\,\cN(\bm0,R)\big)$, respectively.

\paragraph{Gaussian kernel score.}
Consider the Gaussian kernel with bandwidth $\gamma>0$. It turns out that its approximation error is controlled by how well the class approximates the true correlation $R_Q$ in Frobenius norm,
\[
  \Psi^{(r)}(Q)
  \;:=\;
  \inf_{R\in\mathcal M^{(r)}}
  \big\lVert R-R_Q\big\rVert_{F}^{2},
\]
and in the diagonal case this infimum is the off-diagonal energy of the target in the Fourier domain.

\begin{lemma}[Expressivity under the kernel score]
\label{lem:mmd-frobenius}
Let $0\le r<d$.
\begin{enumerate}[leftmargin=25pt]
\item[(i)] For every $\gamma>0$, the approximation error is equivalent, up to constants, to the Frobenius projection error $\Psi^{(r)}(Q)$,
      \begin{equation}\label{eq:mmd-sandwich}
        \big(1+\tfrac{8d}{\gamma^{2}}\big)^{-\frac{d+4}{2}}
        \,\frac{\Psi^{(r)}(Q)}{\gamma^{4}}
        \;\le\;
        \varepsilon^{(r)}_{\mathrm{MMD}}(Q)
        \;\le\;
        \frac{\Psi^{(r)}(Q)}{\gamma^{4}},
      \end{equation}
      so it vanishes if and only if the projection error does, and $\gamma^{4}\varepsilon^{(r)}_{\mathrm{MMD}}(Q)\to\Psi^{(r)}(Q)$ as $\gamma\to\infty$.
\item[(ii)] For $r=0$ the projection is attained at $R^{\star}_{F}=\bF^{*}\diag(\bm t^{\star})\bF$ with $t^{\star}_k=(\bF R_Q\bF^{*})_{kk}$, and
      \[
        \Psi(Q)=\sum_{k\neq l}\big\lvert(\bF R_Q\bF^{*})_{kl}\big\rvert^{2}.
      \]
\end{enumerate}
\end{lemma}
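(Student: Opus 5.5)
The plan is to write the MMD between the two centered Gaussians in Fourier form, so that it becomes an average of squared differences of characteristic functions. A mean-value bound on the exponential then reduces both inequalities in (i) to Gaussian fourth-moment computations. Part (ii) is a separate Frobenius-norm projection argument in Fourier coordinates.

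\textbf{Fourier representation.} By Bochner's theorem the Gaussian kernel satisfies $k(\bm u,\bm v)=\E_{\bm\omega}\big[e^{i\bm\omega^\top(\bm u-\bm v)}\big]$ with $\bm\omega\sim\cN(\bm0,\sigma^2\bI)$ and $\sigma^2=2/\gamma^2$. Hence, for $P=\cN(\bm0,R)$ and $Q'=\cN(\bm0,R_Q)$,
\[
\tfrac12\mathrm{MMD}^2(P,Q')=\tfrac12\,\E_{\bm\omega}\big[(e^{-a}-e^{-b})^2\big],\qquad a=\tfrac12\bm\omega^\top R\bm\omega,\quad b=\tfrac12\bm\omega^\top R_Q\bm\omega .
\]
The first step is to fix a single $R\in\mathcal M^{(r)}$ and prove the sandwich with $\lVert R-R_Q\rVert_F^2$ in place of $\Psi^{(r)}(Q)$. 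Both bounds are increasing functions of $\lVert R-R_Q\rVert_F^2$ times constants that do not depend on $R$, so taking the infimum over $\mathcal M^{(r)}$ then yields \eqref{eq:mmd-sandwich}.

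\textbf{Upper bound.} Since $a,b\ge0$, we have $|e^{-a}-e^{-b}|\le|a-b|$. With $\bM:=R-R_Q$ this gives $\tfrac12\mathrm{MMD}^2\le\tfrac18\E[(\bm\omega^\top\bM\bm\omega)^2]$. The Gaussian quadratic-form identity gives
\[
\E[(\bm\omega^\top\bM\bm\omega)^2]=\sigma^4\big((\tr\bM)^2+2\lVert\bM\rVert_F^2\big).
\]
Both matrices have unit diagonal, so $\tr\bM=0$. Therefore $\tfrac12\mathrm{MMD}^2\le\tfrac18\cdot 2\cdot\tfrac{4}{\gamma^4}\lVert\bM\rVert_F^2=\lVert\bM\rVert_F^2/\gamma^4$, which is the constant claimed.

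\textbf{Lower bound.} By the mean value theorem, $|e^{-a}-e^{-b}|\ge e^{-\max(a,b)}|a-b|\ge e^{-(a+b)}|a-b|$. This gives
\[
\tfrac12\mathrm{MMD}^2\ \ge\ \tfrac18\,\E\big[(\bm\omega^\top\bM\bm\omega)^2e^{-\bm\omega^\top S\bm\omega}\big],\qquad S=R+R_Q .
\]
Absorbing the exponential weight into the Gaussian turns this into $\det(\bI+2\sigma^2S)^{-1/2}\,\E_{\tilde{\bm\omega}}[(\tilde{\bm\omega}^\top\bM\tilde{\bm\omega})^2]$, where $\tilde{\bm\omega}\sim\cN(\bm0,\sigma^2(\bI+2\sigma^2S)^{-1})$. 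The fourth moment under the tilted covariance $\tilde C$ equals $(\tr\tilde C\bM)^2+2\tr(\tilde C\bM\tilde C\bM)\ge2\lambda_{\min}(\tilde C)^2\lVert\bM\rVert_F^2$. For the constants, note $\lambda_{\max}(S)\le\tr S=2d$. Consequently $\det(\bI+2\sigma^2S)^{-1/2}\ge(1+8d/\gamma^2)^{-d/2}$ and $\lambda_{\min}(\tilde C)\ge\sigma^2(1+8d/\gamma^2)^{-1}$. Squaring the latter supplies the extra exponent $-2=-4/2$, which matches $(1+8d/\gamma^2)^{-(d+4)/2}$. The main obstacle is this step: the tilted-measure bookkeeping has to be done carefully so the constants match exactly. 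In particular, the cross term $(\tr\tilde C\bM)^2$ no longer vanishes, but it is nonnegative and can simply be dropped.

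\textbf{Consequences of (i).} The ``vanishes iff'' claim is immediate from the sandwich. For the limit $\gamma\to\infty$, I would use dominated convergence on the integrand $\gamma^4(e^{-a}-e^{-b})^2$, rescaling $\bm\omega=\sigma\bm\xi$ with $\bm\xi\sim\cN(\bm0,\bI)$. The integrand tends to $\tfrac14\cdot4(\bm\xi^\top\bM\bm\xi)^2$ and is dominated by the upper-bound integrand. This gives the limit for each fixed $R$. For the infimum, the upper bound shows $\limsup_{\gamma\to\infty}\gamma^4\varepsilon^{(r)}_{\mathrm{MMD}}(Q)\le\Psi^{(r)}(Q)$, and the lower bound together with $(1+8d/\gamma^2)^{-(d+4)/2}\to1$ gives the matching $\liminf$.

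\textbf{Part (ii).} The Frobenius norm is unitarily invariant, so $\lVert\bF^*\diag(\bm t)\bF-R_Q\rVert_F^2=\lVert\diag(\bm t)-\bF R_Q\bF^*\rVert_F^2$. Over all $\bm t$, this is minimized by matching the diagonal, $t_k=(\bF R_Q\bF^*)_{kk}$, and the residual is exactly the off-diagonal energy $\sum_{k\neq l}|(\bF R_Q\bF^*)_{kl}|^2$. It remains to show that this unconstrained minimizer is admissible in $\mathcal M$:
\begin{itemize}
\item $t^\star_k=\bm f_k^*R_Q\bm f_k>0$ because $R_Q\succ0$, where $\bm f_k$ denotes the $k$-th column of $\bF^*$;
\item $t^\star_{d-k}=t^\star_k$ because $R_Q$ is real, so $\bF^*\diag(\bm t^\star)\bF$ is real;
\item $\bar t^\star=\tfrac1d\tr R_Q=1$, so the circulant $\bF^*\diag(\bm t^\star)\bF$ has unit diagonal and is its own correlation, hence lies in $\mathcal M$.
\end{itemize}
Because $\mathcal M$ is contained in the set of all $\bF^*\diag(\bm t)\bF$, the unconstrained minimum is a lower bound over $\mathcal M$; since it is attained inside $\mathcal M$, it equals $\Psi(Q)$.
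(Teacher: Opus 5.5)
Your proposal is correct and follows the paper's proof essentially step for step: the Bochner representation of $\tfrac12\mathrm{MMD}^2$ through the characteristic functions, the mean-value sandwich between $e^{-2(a+b)}(a-b)^2$ and $(a-b)^2$, the Gaussian fourth-moment identity with $\tr\bM=0$, absorbing the exponential weight into the Gaussian density, and the same unitary-invariance projection and admissibility check for (ii). The only cosmetic difference is in the lower bound. The paper first dominates $a+b$ by the isotropic bound $2d\lVert\bm z\rVert^2/\gamma^2$, so the tilted Gaussian stays isotropic and the trace term still vanishes. You instead tilt by the exact weight and then bound $\det$ and $\lambda_{\min}$ of the anisotropic covariance via $\lambda_{\max}(S)\le 2d$, which gives the same constant. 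Your dominated-convergence argument for the $\gamma\to\infty$ limit is harmless but unnecessary, since the sandwich alone already yields the limit.
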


\begin{proof}
\emph{(i)} By Bochner's theorem, $k(\bx,\by)=\E\,e^{i\bm\omega^{\top}(\bx-\by)}$ with
$\bm\omega\sim\cN(\bm0,\tfrac{2}{\gamma^{2}}\bI)$, so
$\mathrm{MMD}_k^{2}=\E\,\lvert\phi_R(\bm\omega)-\phi_{R_Q}(\bm\omega)\rvert^{2}$
for the characteristic functions $\phi_R(\bm\omega)=e^{-\bm\omega^{\top}R\bm\omega/2}$
of the two centered Gaussians. Substituting $\bm\omega=\sqrt2\,\bm z/\gamma$,
\[
  \mathrm{MMD}_k^{2}
  =\E_{\bm z\sim\cN(\bm0,\bI)}\big(e^{-a}-e^{-b}\big)^{2},
  \qquad
  a=\frac{\bm z^{\top}R\bm z}{\gamma^{2}},\quad
  b=\frac{\bm z^{\top}R_Q\bm z}{\gamma^{2}} .
\]
Both $R$ and $R_Q$ are real correlation matrices, so $\bm N:=R-R_Q$ is
symmetric and
$\E(\bm z^{\top}\bm N\bm z)^{2}=2\lVert\bm N\rVert_F^{2}+(\tr\bm N)^{2}
=2\lVert\bm N\rVert_F^{2}$. Since $a,b\ge0$, the mean value theorem gives
$e^{-a}-e^{-b}=-(a-b)e^{-\xi}$ with $\xi$ between $a$ and $b$, hence
\[
  e^{-2(a+b)}(a-b)^{2}\;\le\;\big(e^{-a}-e^{-b}\big)^{2}\;\le\;(a-b)^{2}
  =\frac{\big(\bm z^{\top}\bm N\bm z\big)^{2}}{\gamma^{4}}.
\]
Taking expectations, the right inequality yields
$\mathrm{MMD}_k^{2}\le2\lVert\bm N\rVert_F^{2}/\gamma^{4}$. For the left one,
the largest eigenvalue of a correlation matrix is at most its trace $d$, so
$a+b\le2d\lVert\bm z\rVert^{2}/\gamma^{2}$ and, with $\beta:=1+8d/\gamma^{2}$,
\begin{align*}
  \E\big[e^{-2(a+b)}(a-b)^{2}\big]
  &\;\ge\;\frac{1}{\gamma^{4}}\int
    \frac{e^{-\lVert\bm z\rVert^{2}/2}}{(2\pi)^{d/2}}\,
    e^{-4d\lVert\bm z\rVert^{2}/\gamma^{2}}
    \big(\bm z^{\top}\bm N\bm z\big)^{2}\,\mathrm d\bm z \\
  &\;=\;\frac{\beta^{-d/2}}{\gamma^{4}}\,
    \E_{\bm z\sim\cN(\bm0,\beta^{-1}\bI)}\big(\bm z^{\top}\bm N\bm z\big)^{2}
  \;=\;\frac{2\,\beta^{-\frac{d+4}{2}}\lVert\bm N\rVert_F^{2}}{\gamma^{4}},
\end{align*}
since $e^{-\lVert\bm z\rVert^{2}/2}e^{-4d\lVert\bm z\rVert^{2}/\gamma^{2}}
=e^{-\beta\lVert\bm z\rVert^{2}/2}$ is $\beta^{-d/2}$ times the density of
$\cN(\bm0,\beta^{-1}\bI)$, and under that law $\bm z=\beta^{-1/2}\bm z'$ with
$\bm z'$ standard normal, so the second moment of the quadratic form is
$\beta^{-2}\cdot2\lVert\bm N\rVert_F^{2}$.

Hence
$\mathrm{MMD}_k^{2}\ge2\big(1+\tfrac{8d}{\gamma^{2}}\big)^{-\frac{d+4}{2}}\lVert\bm N\rVert_F^{2}/\gamma^{4}$.
Both bounds are monotone in $\lVert\bm N\rVert_F^{2}$, so taking the infimum
over $\mathcal M^{(r)}$ and the factor $\tfrac12$ of
$D=\tfrac12\mathrm{MMD}_k^{2}$ proves \eqref{eq:mmd-sandwich}. Finally, with
$c(\gamma):=\beta^{-\frac{d+4}{2}}$, multiplying \eqref{eq:mmd-sandwich} by
$\gamma^{4}$ gives
\[
  c(\gamma)\,\Psi^{(r)}(Q)\;\le\;\gamma^{4}\varepsilon^{(r)}_{\mathrm{MMD}}(Q)
  \;\le\;\Psi^{(r)}(Q),
\]
and $c(\gamma)\to1$ as $\gamma\to\infty$, which proves the limit.

\emph{(ii)} Since $\lvert F_{ki}\rvert^{2}=1/d$, the deployed core is
$\bF^{*}\diag(\bm t)\bF$ with $\bm t=\bm s/\bar s$, and by unitary invariance
\[
  \lVert R-R_Q\rVert_{F}^{2}
  =\lVert\diag(\bm t)-\bF R_Q\bF^{*}\rVert_{F}^{2}
  =\sum_k\big(t_k-(\bF R_Q\bF^{*})_{kk}\big)^{2}
  +\sum_{k\neq l}\lvert(\bF R_Q\bF^{*})_{kl}\rvert^{2},
\]
whose first sum vanishes at $t_k=(\bF R_Q\bF^{*})_{kk}$. Note that this choice is
admissible. Indeed, $t^{\star}_k>0$ since $R_Q\succ0$,
$\bar t^{\star}=\tfrac1d\tr(\bF R_Q\bF^{*})=\tfrac1d\tr R_Q=1$, and
$t^{\star}_{d-k}=t^{\star}_k$ because $R_Q$ is real. What remains is the
off-diagonal energy.
\end{proof}

\noindent Part (i) uses nothing about $\bF$ and holds for every unitary $\bB$. Its lower constant decays exponentially in $d$. At the training bandwidth $\gamma^{2}=4d$ it equals $3^{-(d+4)/2}$, already of order $10^{-24}$ at $d=96$. Although it can be refined to a dimension-independent constant at this bandwidth (Lemma~\ref{lem:pac-frobenius}\,(ii)), the sandwich should be read as a qualitative statement, establishing that approximation error and Frobenius projection vanish together.

\paragraph{Logarithmic score.}
For the log score, it turns out that the approximation error comes down to the spread of the
eigenvalues of the target correlation $R_Q$ after transforming and renormalizing it to
$\mathrm{corr}(\bF R_Q\bF^{*})$.
\begin{lemma}[Diagonal core under the log score]
\label{lem:diag-kl}
Denote by $\lambda_1,\dots,\lambda_d>0$ the eigenvalues of
$\mathrm{corr}(\bF R_Q\bF^{*})$ and by $\mathrm{GM}:=(\prod_k\lambda_k)^{1/d}$
their geometric mean. Then
\begin{equation}\label{eq:kl-gap-flatness}
  \frac1d\,\varepsilon_{\mathrm{KL}}(Q)
  \;=\;-\frac{1}{2d}\log\det\mathrm{corr}(\bF R_Q\bF^{*})
  \;=\;-\frac12\log\mathrm{GM}.
\end{equation}
The infimum is attained at $R^{\star}=R^{\star}_{F}=\bF^{*}\diag(\bm t^{\star})\bF$ with $t^{\star}_k=(\bF R_Q\bF^{*})_{kk}$, the Frobenius projection of Lemma~\ref{lem:mmd-frobenius}\,(ii), and this minimizer is unique. In particular, the error vanishes if and only if $\bF$ diagonalizes $R_Q$.
\end{lemma}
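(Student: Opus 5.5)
The plan is to reduce the KL divergence to a separable scalar problem in the spectrum, using unitarity of $\bF$ as in the proof of Lemma~\ref{lem:mmd-frobenius}\,(ii). Since $\lvert F_{ki}\rvert^{2}=1/d$, every member of $\mathcal M$ has the form $R=\bF^{*}\diag(\bm t)\bF$ with $\bm t\in\R^{d}_{>0}$, $\bar t=1$ and $t_{d-k}=t_k$ (the real members). Write $\bm A:=\bF R_Q\bF^{*}$, which is Hermitian positive definite, and let $a_k:=A_{kk}>0$.

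\emph{Step 1: reduce the divergence.} We use the Gaussian KL formula
\[
2\,\mathrm{KL}\big(\cN(\bm0,R_Q)\,\Vert\,\cN(\bm0,R)\big)=\tr(R^{-1}R_Q)-d+\log\det R-\log\det R_Q .
\]
Unitary invariance gives $\tr(R^{-1}R_Q)=\tr(\diag(\bm t)^{-1}\bm A)=\sum_k a_k/t_k$, $\log\det R=\sum_k\log t_k$ and $\log\det R_Q=\log\det\bm A$. The objective therefore becomes
\[
\sum_k\Big(\frac{a_k}{t_k}+\log t_k\Big)-d-\log\det\bm A .
\]

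\emph{Step 2: minimize coordinatewise.} Each map $t\mapsto a/t+\log t$ on $(0,\infty)$ has a unique critical point at $t=a$, and that point is the global minimizer, since the derivative $-a/t^{2}+1/t$ changes sign there. So the unconstrained minimizer is $t^{\star}_k=a_k$, and it is unique.

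We then check that this minimizer is feasible. We have $a_k>0$. The mean satisfies $\bar a=\tfrac1d\tr\bm A=\tfrac1d\tr R_Q=1$. Symmetry $a_{d-k}=a_k$ holds because $R_Q$ is real, exactly as in Lemma~\ref{lem:mmd-frobenius}\,(ii). Because the unconstrained optimum is feasible, the constraints never bind. The minimizer is thus $R^{\star}_F$, which is the Frobenius projection, and uniqueness follows from uniqueness of $t^{\star}$ together with injectivity of $\bm t\mapsto\bF^{*}\diag(\bm t)\bF$.

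\emph{Step 3: evaluate the minimum.} Plugging in $t^{\star}$ gives
\[
2\varepsilon_{\mathrm{KL}}=\sum_k\log a_k-\log\det\bm A .
\]
Since $\mathrm{corr}(\bm A)=\diag(\bm a)^{-1/2}\bm A\,\diag(\bm a)^{-1/2}$, we have $\log\det\mathrm{corr}(\bm A)=\log\det\bm A-\sum_k\log a_k$. This yields $\varepsilon_{\mathrm{KL}}=-\tfrac12\log\det\mathrm{corr}(\bm A)$. Writing the determinant as $\prod_k\lambda_k=\mathrm{GM}^{d}$ and dividing by $d$ gives \eqref{eq:kl-gap-flatness}.

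\emph{Step 4: characterize when the error vanishes.} By Hadamard's inequality for Hermitian positive definite matrices, $\det\mathrm{corr}(\bm A)\le\prod_k 1=1$, with equality if and only if $\mathrm{corr}(\bm A)$ is diagonal. That holds if and only if $\bm A$ is diagonal, i.e.\ if and only if $\bF$ diagonalizes $R_Q$.

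The step that needs the most care is Step 2. The admissible class carries the scale gauge $\bar t=1$ and the conjugate-symmetry constraint, so one must confirm that the unconstrained optimum already lies in the class; the trace identity and the reality of $R_Q$ settle this. All remaining steps are routine linear algebra.
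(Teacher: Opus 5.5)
Your proposal is correct and follows essentially the same route as the paper's proof: unitary invariance reduces the KL divergence to the separable objective $\sum_k (a_k/t_k+\log t_k)$, the coordinatewise minimizer $t_k^{\star}=a_k$ is checked to be admissible (positivity, unit mean via the trace, reflection symmetry from the reality of $R_Q$), and the minimum is rewritten as $-\tfrac12\log\det\mathrm{corr}(\bF R_Q\bF^{*})$. The differences are cosmetic. You get uniqueness from the sign change of the derivative, where the paper uses strict convexity in $\log t_k$. You also close with Hadamard's inequality, where the paper applies AM--GM to the eigenvalues of the unit-diagonal matrix $\mathrm{corr}(\bF R_Q\bF^{*})$; both give $\det\le 1$, with equality if and only if that matrix is the identity.
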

\begin{proof}
The correlation normalization maps a spectrum $\bm s$ to $\bm t=\bm s/\bar s$,
so up to the scale gauge $\bm s\mapsto c\,\bm s$ the class is
$\mathcal M=\{\bF^{*}\diag(\bm t)\bF:\bm t>0,\ \bar t=1\}$. The divergence of
the log score is
$\mathrm{KL}\big(\cN(\bm0,R_Q)\Vert\cN(\bm0,R)\big)
=\tfrac12\big[\tr(R^{-1}R_Q)-d+\log\det R-\log\det R_Q\big]$, and trace and
determinant are invariant under conjugation by $\bF$, so for
$R=\bF^{*}\diag(\bm t)\bF$ this equals
\[
  \tfrac12\sum_k\left[\frac{(\bF R_Q\bF^{*})_{kk}}{t_k}+\log t_k\right]
  -\tfrac12\big[d+\log\det R_Q\big].
\]
The sum is separable and strictly convex in $\log t_k$, with minimizer
$t^{\star}_k=(\bF R_Q\bF^{*})_{kk}$. It is positive since $R_Q\succ0$,
satisfies $\bar t^{\star}=\tfrac1d\tr(\bF R_Q\bF^{*})=\tfrac1d\tr R_Q=1$, and
obeys $t^{\star}_{d-k}=t^{\star}_k$ since $R_Q$ is real. Hence $R^{\star}$ is
admissible, and by strict convexity it is the unique minimizer.

At $t^{\star}$ each term $(\bF R_Q\bF^{*})_{kk}/t^{\star}_k$ equals one, so the
first sum contributes $d+\sum_k\log(\bF R_Q\bF^{*})_{kk}$ and the value is
$\tfrac12\big[\sum_k\log(\bF R_Q\bF^{*})_{kk}-\log\det R_Q\big]
=-\tfrac12\log\det\mathrm{corr}(\bF R_Q\bF^{*})=-\tfrac12\sum_k\log\lambda_k
=-\tfrac d2\log\mathrm{GM}$, which is \eqref{eq:kl-gap-flatness}. Since
$\mathrm{corr}(\bF R_Q\bF^{*})$ has unit diagonal, $\tfrac1d\sum_k\lambda_k=1$,
so $\mathrm{GM}\le1$ with equality if and only if all $\lambda_k=1$, that is,
if and only if $\bF R_Q\bF^{*}$ is diagonal.
\end{proof}
Summarizing, in the diagonal case both approximation errors penalize the off-diagonal energy of $\bF R_Q\bF^{*}$, the kernel score directly through \eqref{eq:mmd-sandwich} and the log score through the spectral flatness $\mathrm{GM}$ of $\mathrm{corr}(\bF R_Q\bF^{*})$. They agree on when the error vanishes, namely when $\bF$ diagonalizes $R_Q$, but differ in how they scale what remains.

\section{PAC Learnability}
\label{app:pac}
Proposition~\ref{prop:two-stage-opt} describes the population limit of \score{} under an exact first stage. At finite sample size the first stage
is not exact, and stage 2 is fitted to residuals standardized by the
learned marginals. One might worry that the second stage amplifies the
first-stage error. This section shows that it does not. Instead, the error
enters additively, through terms that are linear in the marginal errors and
independent of the second-stage sample and of the core class.
We restrict ourselves to the
case the paper focuses on, which is the DFT with a diagonal core
($\bB=\bF$, $r=0$), the CRPS in the first stage, and the Gaussian kernel score
in the second. As is usual in PAC analyses, we work under idealized conditions. In particular, we assume exact empirical risk minimization and disjoint
samples of sizes $N$ and $M$ for the two stages. The latter is a standard device for
two-step estimators whose second step use the output of a learned first
step~\citep{klaassen1987consistent,chernozhukov2018double}. 

\subsection{Setting and Assumptions}
Let $N,M\in\mathbb N$ be the sample budgets of the first and second stage, respectively, and let $(\bX_n,\bY_n)$, $n\le N+M$, be independent draws from $\bP=\bP_{(\bX,\bY)}$. We write $\mathcal T_N:=(\bX_n,\bY_n)_{n=1}^{N}$ and $\mathcal T_M:=(\bX_n,\bY_n)_{n=N+1}^{N+M}$ for the samples used in the two stages. We assume that the conditional law is Gaussian,
\[
  Q(\bx):=\bP_{\bY\mid\bX=\bx}=\cN\big(\bmu_Q(\bx),\bSigma_Q(\bx)\big),
  \qquad \bSigma_Q(\bx)\succeq0,\quad \bx\in\cX ,
\]
with marginal variances $\bm\tau_Q(\bx):=\big((\bSigma_Q(\bx))_{ii}\big)_{i=1}^{d}$. As in \eqref{eq:corr-fact}, we write $\bSigma_Q(\bx)=\bD_Q(\bx)R_Q(\bx)\bD_Q(\bx)$ with marginal scales $\bD_Q(\bx)=\diag\big(\sqrt{\bm\tau_Q(\bx)}\big)$ and correlation $R_Q(\bx)$, and omit $\bx$ from the notation whenever a statement is pointwise in the input.
Fitted quantities carry the size of the sample they were fitted on as a
subscript, so $(\bmu_N,\bm\tau_N)$ come from the first stage and $R_M$ from
the second.
For $\gamma>0$ let $k_\gamma$ be the Gaussian kernel
$k_\gamma(\bm u,\bm v):=\exp\big(-\lVert\bm u-\bm v\rVert_2^{2}/\gamma^{2}\big)$,
and for probability measures $P,P'$ on $\R^{d}$ write
\[
  \mathrm{MMD}^{2}_{\gamma}(P,P')
  :=\E\,k_\gamma(\bm U,\bm U')+\E\,k_\gamma(\bm V,\bm V')
    -2\,\E\,k_\gamma(\bm U,\bm V),
\]
with $\bm U,\bm U'\sim P$ and $\bm V,\bm V'\sim P'$ all independent, so that
the divergence of the kernel score $S_{k_\gamma}$ is
$D_\gamma=\tfrac12\mathrm{MMD}^{2}_{\gamma}$. In our analysis we use the bandwidth $\tilde\gamma^{2}=4d$, as motivated by
Lemma~\ref{lem:bandwidth}, to fit the core to the standardized residual in the
second stage. We restrict the target to a
bounded range.
\begin{assumption}[Target]\label{ass:target}
There are $B>0$ and $0<\tau_{\min}\le\tau_{\max}<\infty$ such that
$\bmu_Q(\bx)\in[-B,B]^{d}$ and
$\bm\tau_Q(\bx)\in[\tau_{\min},\tau_{\max}]^{d}$ for every $\bx\in\cX$.
\end{assumption}
\paragraph{Admissible cores.} For the DFT and a diagonal core, the second stage optimizes over spectra $\bm s\in\Theta$. Indexing the DFT frequencies by $k=0,\dots,d-1$, we define the set of admissible spectra by
\[
  \Theta:=\Big\{\bm s\in\R^{d}\ :\ s_k\ge 0,\quad
  \tfrac1d\textstyle\sum_{k}s_k=1,\quad
  s_{d-k}=s_k\ \text{ for } k=1,\dots,d-1\Big\}.
\]
Note that $\Theta$ is the intersection of the scaled simplex $\{\bm s\ge\bm0,\ \sum_k s_k=d\}$ with the linear subspace of reflection-symmetric vectors, hence \emph{convex and compact}. With $\bF$ the unitary DFT on the $d$ grid points, the core with spectrum $\bm s$ is
\begin{equation}\label{eq:pac-core}
  R_{\bm s}:=\bF^{*}\diag(\bm s)\bF,\qquad \bm s\in\Theta,
\end{equation}
and the reflection condition makes $R_{\bm s}$ real. Moreover, the condition $\tfrac1d\sum_k s_k=1$ together with $\lvert F_{ki}\rvert^{2}=1/d$ gives it unit diagonal, so $R_{\bm s}$ is a correlation matrix.
\paragraph{The network.} Both stages optimize over maps realized by a single network, $\phi\mapsto(\bmu_\phi,\bm\tau_\phi,\bm s_\phi)$ with $\phi\in\mathcal W$ the parameters. We write
\[
  \mathcal G_1=\big\{(\bmu_\phi,\bm\tau_\phi):\phi\in\mathcal W\big\},
  \qquad
  \mathcal G_2=\big\{\bm s_\phi:\phi\in\mathcal W\big\},
\]
for the two classes, with $(\bmu_\phi,\bm\tau_\phi):\cX\to\R^{d}\times\R^{d}_{>0}$ and $\bm s_\phi:\cX\to\Theta$. The restriction of the image of $\bm s_\phi$ to $\Theta$ can be implemented by a softmax normalization, and the reflection condition by emitting a reflection-symmetric spectrum, as discussed in Appendix~\ref{app:nn_predictions}. The analysis treats $\mathcal G_1$ and $\mathcal G_2$ as separate classes, which corresponds to separate heads for $(\bmu,\bm\tau)$ and $\bm s$. We assume the following regularity conditions on the network.

\begin{assumption}[Network class]\label{ass:class}
Let the parameter space be $\mathcal W:=[-W,W]^{q}$ for some $W>0$ and
$q\in\mathbb N$, and let every $\phi\in\mathcal W$ give measurable maps
$\bmu_\phi,\bm\tau_\phi,\bm s_\phi$ on $\cX$. We assume:
\begin{enumerate}[leftmargin=25pt]
\item[(i)] the network stays within the range of the target,
\[
  \bmu_\phi(\cX)\subset[-B,B]^{d},
  \qquad
  \bm\tau_\phi(\cX)\subset[\tau_{\min},\tau_{\max}]^{d}
  \quad\text{for every }\phi\in\mathcal W,
\]
with $B,\tau_{\min},\tau_{\max}$ as in Assumption~\ref{ass:target};
\item[(ii)] the class $\mathcal G_1$ contains the first-stage optimum,
\[
  (\bmu_Q,\bm\tau_Q)\in\mathcal G_1;
\]
\item[(iii)] the marginals are Lipschitz in the weights, that is, for
some $L_1>0$ and all $\phi,\phi'\in\mathcal W$,
\[
  \max\Big\{\sup_{\bx}\big\lVert\bmu_\phi(\bx)-\bmu_{\phi'}(\bx)\big\rVert_\infty,\
  \sup_{\bx}\big\lVert\bm\tau_\phi(\bx)-\bm\tau_{\phi'}(\bx)\big\rVert_\infty\Big\}
  \ \le\ L_1\lVert\phi-\phi'\rVert_\infty ;
\]
\item[(iv)] the spectrum is Lipschitz in the weights, that is, for
some $L_2>0$ and all $\phi,\phi'\in\mathcal W$,
\[
  \sup_{\bx}\big\lVert\bm s_\phi(\bx)-\bm s_{\phi'}(\bx)\big\rVert_\infty
  \ \le\ L_2\lVert\phi-\phi'\rVert_\infty ;
\]
\item[(v)] the class contains a second-stage optimum, that is, at the
bandwidth $\tilde\gamma^{2}=4d$ a measurable selection
\[
  \bm s^{\star}(\bx)\;\in\;\operatorname*{arg\,min}_{\bm s\in\Theta}
  \tfrac12\mathrm{MMD}^{2}_{\tilde\gamma}
  \big(\cN(\bm0,R_Q(\bx)),\,\cN(\bm0,R_{\bm s})\big),
  \qquad \bx\in\cX ,
\]
lies in $\mathcal G_2$.
\end{enumerate}
\end{assumption}
For a class $\mathcal G$ of maps on $\cX$ and $\eta>0$, we call a finite set
$\mathcal C\subset\mathcal G$ an $\eta$\emph{-net} if for every
$g\in\mathcal G$ there is $g'\in\mathcal C$ with
$\sup_{\bx}\lVert g(\bx)-g'(\bx)\rVert_\infty\le\eta$, where
$\lVert\cdot\rVert_\infty$ is the maximum over all output coordinates. The
\emph{covering number} $\mathfrak N(\mathcal G,\eta)$ is defined as the
smallest cardinality of such a net. We abbreviate
$\mathfrak N_j(\eta):=\mathfrak N(\mathcal G_j,\eta)$ for stages $j=1,2$.
\begin{lemma}[Covering numbers of the two classes]\label{lem:covering}
Under Assumption~\ref{ass:class}, for every $\eta>0$,
\[
  \log\mathfrak N_1(\eta)\le q\,\log\Big(1+\frac{2WL_1}{\eta}\Big),
  \qquad
  \log\mathfrak N_2(\eta)\le q\,\log\Big(1+\frac{2WL_2}{\eta}\Big).
\]
\end{lemma}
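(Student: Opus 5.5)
The plan is to transfer a covering of the parameter cube $\mathcal W=[-W,W]^{q}$ to the function classes through the Lipschitz conditions of Assumption~\ref{ass:class}(iii) and (iv). Fix $\eta>0$ and set $\varepsilon:=\eta/L_1$ for the first class (and $\varepsilon:=\eta/L_2$ for the second). We first build an $\varepsilon$-net of $\mathcal W$ in $\lVert\cdot\rVert_\infty$. Along each coordinate we cover $[-W,W]$ by intervals of half-width $\varepsilon$, i.e. with centers spaced $2\varepsilon$ apart, which needs at most $\lceil W/\varepsilon\rceil\le 1+W/\varepsilon\le 1+2W/\varepsilon$ points. The product grid $\mathcal C_{\mathcal W}$ then has cardinality at most $(1+2W/\varepsilon)^{q}$, and every $\phi\in\mathcal W$ has some $\phi'\in\mathcal C_{\mathcal W}$ with $\lVert\phi-\phi'\rVert_\infty\le\varepsilon$. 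The generous constant $2$ only makes the bound looser, so we do not need to optimize the grid.

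Next we push the net forward. Define $\mathcal C_1:=\{(\bmu_{\phi'},\bm\tau_{\phi'}):\phi'\in\mathcal C_{\mathcal W}\}\subset\mathcal G_1$. Any $g=(\bmu_\phi,\bm\tau_\phi)\in\mathcal G_1$ comes from some $\phi\in\mathcal W$. Choosing $\phi'$ as above, Assumption~\ref{ass:class}(iii) gives
\[
\sup_{\bx}\lVert g(\bx)-g'(\bx)\rVert_\infty=\max\Big\{\sup_{\bx}\lVert\bmu_\phi-\bmu_{\phi'}\rVert_\infty,\sup_{\bx}\lVert\bm\tau_\phi-\bm\tau_{\phi'}\rVert_\infty\Big\}\le L_1\varepsilon=\eta,
\]
since the maximum over all output coordinates of the pair is exactly this maximum of the two sup-norms. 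Hence $\mathcal C_1$ is an $\eta$-net, and $\mathfrak N_1(\eta)\le\lvert\mathcal C_1\rvert\le(1+2WL_1/\eta)^{q}$. Taking logarithms yields the first bound.

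The second bound follows identically with $\mathcal G_2$, Assumption~\ref{ass:class}(iv) and $L_2$. The net $\{\bm s_{\phi'}\}$ lies in $\mathcal G_2$ because each $\bm s_{\phi'}$ is a realized map, which is what the definition of the net as a subset of $\mathcal G$ requires.

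There is no real obstacle here. The only points to check are that the net consists of elements of the class itself, which holds because we take images of grid parameters, and that the grid count $\lceil W/\varepsilon\rceil$ is dominated by $1+2W/\varepsilon$, including in the case $\varepsilon\ge W$, where a single point suffices.
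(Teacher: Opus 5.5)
Your proposal is correct and is the paper's argument. The paper also places a grid of spacing $2\varepsilon$ on $[-W,W]^q$ with at most $\lceil W/\varepsilon\rceil\le 1+2W/\varepsilon$ points per coordinate, then pushes it through the parameter-Lipschitz maps with $\varepsilon=\eta/L_j$. The one detail the paper states explicitly and you leave implicit is that the grid centres must lie in $\mathcal W$; it clamps the last point to $W$. This matters because both the membership $g_{\phi'}\in\mathcal G_j$ and the Lipschitz bounds of Assumption~\ref{ass:class} are only available for $\phi'\in\mathcal W$.
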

\begin{proof}
The cube $\mathcal W=[-W,W]^{q}$ admits an $\varepsilon$-net in $\lVert\cdot\rVert_\infty$ of cardinality at most $(1+2W/\varepsilon)^{q}$ with centres in $\mathcal W$. Indeed, the grid $\{-W+\varepsilon,-W+3\varepsilon,\dots\}$ of spacing $2\varepsilon$ has at most $\lceil W/\varepsilon\rceil\le1+2W/\varepsilon$ points per coordinate, and every point of $[-W,W]$ lies within $\varepsilon$ of one of them. If the last grid point exceeds $W$, we replace it by $W$, which preserves this property. Take $\varepsilon=\eta/L_j$ and map the grid through $\phi\mapsto g_\phi$. By the Lipschitz property of Assumption~\ref{ass:class}\,(iii)--(iv), the images form an $\eta$-net of $\mathcal G_j$, whose cardinality gives the stated bound on $\mathfrak N_j(\eta)$.
\end{proof}
\paragraph{Stage 1.} For parameter values $(\bmu,\bm\tau)\in[-B,B]^{d}\times[\tau_{\min},\tau_{\max}]^{d}$, the range of Assumptions~\ref{ass:target} and~\ref{ass:class}\,(i), and an observation $\by\in\R^{d}$ let
\[
  \ell_1(\bmu,\bm\tau;\by):=\tfrac1d\sum_{i=1}^{d}
  \mathrm{CRPS}\big(\cN(\mu_i,\tau_i),\,y_i\big)
\]
be the stage-1 loss. For $g\in\mathcal G_1$ we evaluate it at $g(\bx)=(\bmu(\bx),\bm\tau(\bx))$ and write $\ell_1(g(\bx);\by)$. Risk and empirical risk over $\mathcal T_N$ are
\[
  \mathcal R_1(g):=\E_{(\bX,\bY)}\,\ell_1\big(g(\bX);\bY\big),
  \qquad
  \mathcal R_{1,N}(g):=\frac1N\sum_{n=1}^{N}\ell_1\big(g(\bX_n);\bY_n\big),
\]
and, under the idealization of exact empirical risk minimization, stage 1 returns a minimizer
\[
  (\bmu_N,\bm\tau_N)\in\operatorname*{arg\,min}_{g\in\mathcal G_1}\mathcal R_{1,N}(g),
\]
which exists because $\mathcal G_1$ is compact, being the image of the compact $\mathcal W$ under the map $\phi\mapsto(\bmu_\phi,\bm\tau_\phi)$, which is continuous by Assumption~\ref{ass:class}\,(iii).
Over all measurable maps the population minimizer of $\mathcal R_1$ is $(\bmu_Q,\bm\tau_Q)$, by Proposition~\ref{prop:two-stage-opt}\,(i) applied at each input, and by Assumption~\ref{ass:class}\,(ii) it lies in $\mathcal G_1$ and is therefore the minimizer over the class as well.
At an input $\bx$ we measure the distance of the fit $(\bmu_N,\bm\tau_N)$ in the original coordinates by
\begin{equation}\label{eq:pac-stage1-orig}
  \bar e(\bx)^{2}:=\tfrac1d\Big(
    \big\lVert\bmu_N(\bx)-\bmu_Q(\bx)\big\rVert_2^{2}
    +\big\lVert\sqrt{\bm\tau_N(\bx)}-\sqrt{\bm\tau_Q(\bx)}\big\rVert_2^{2}\Big).
\end{equation}
In the standardized coordinates, with $\bD_N(\bx):=\diag\big(\sqrt{\bm\tau_N(\bx)}\big)$ and $\bm\lambda_N(\bx):=\sqrt{\bm\tau_Q(\bx)/\bm\tau_N(\bx)}$, we use the standardized mean error and the relative scale error
\begin{equation}\label{eq:pac-stage1}
  e_\mu(\bx)^{2}:=\tfrac1d
    \big\lVert\bD_N(\bx)^{-1}\big(\bmu_N(\bx)-\bmu_Q(\bx)\big)\big\rVert_2^{2},
  \qquad
  e_\tau(\bx)^{2}:=\tfrac1d\big\lVert\bm\lambda_N(\bx)-\bm1\big\rVert_2^{2} .
\end{equation}
Both standardize by the \emph{fitted} scales $\bD_N$, since $\bD_N$ is the standardization the second stage actually applies. Their population versions drop the argument and average over $\bX\sim\bP_{\bX}$ with the fitted maps held fixed,
\[
  \bar e_N^{2}:=\E_{\bX}\big[\bar e(\bX)^{2}\big],
  \qquad
  e_{\mu,N}^{2}:=\E_{\bX}\big[e_\mu(\bX)^{2}\big],
  \qquad
  e_{\tau,N}^{2}:=\E_{\bX}\big[e_\tau(\bX)^{2}\big].
\]
They are functions of the fitted maps $\bmu_N$, $\bm\tau_N$, hence random through $\mathcal T_N$. Conditionally on $\mathcal T_N$, as in the second stage, they are fixed numbers.

\paragraph{Stage 2.} Stage 2 freezes $(\bmu_N,\bm\tau_N)$ and works in standardized coordinates. At input $\bx$ and observation $\by$ the standardized residual is $\tilde\by:=\bD_N(\bx)^{-1}\big(\by-\bmu_N(\bx)\big)$, and $\tilde\bY:=\bD_N(\bX)^{-1}\big(\bY-\bmu_N(\bX)\big)$ is the standardized target. Its conditional law given $\bX=\bx$ is
\begin{equation}\label{eq:pac-tilde}
  \tilde Q(\bx)=\cN\big(\tilde\bmu(\bx),\ \bm\Lambda(\bx)R_Q(\bx)\bm\Lambda(\bx)\big),
  \qquad
  \tilde\bmu:=\bD_N^{-1}(\bmu_Q-\bmu_N),\quad
  \bm\Lambda:=\bD_N^{-1}\bD_Q=\diag(\bm\lambda_N),
\end{equation}
so $e_\mu^{2}=\tfrac1d\lVert\tilde\bmu\rVert_2^{2}$ and $e_\tau^{2}=\tfrac1d\lVert\bm\lambda_N-\bm1\rVert_2^{2}$ measure its deviation from the law under an exact first stage, $\tilde Q^{\star}(\bx):=\cN\big(\bm0,R_Q(\bx)\big)$.

The second stage fits $\cN(\bm0,R_{\bm s})$ to $\tilde\bY$ with the Gaussian
kernel score at the bandwidth $\tilde\gamma^{2}=4d$ fixed above, so its loss is
\[
  \ell_2(\bm s;\tilde\by)
  :=S_{k_{\tilde\gamma}}\big(\cN(\bm0,R_{\bm s}),\tilde\by\big),
\]
available in closed form by \eqref{eq:ks-decoupled} with $\bm m=\bm0$. Risk and
empirical risk of the second stage are
\[
  \mathcal R_2(\bm s):=\E_{(\bX,\bY)}\,\ell_2\big(\bm s(\bX);\tilde\bY\big),
  \qquad
  \mathcal R_{2,M}(\bm s):=\frac1M\sum_{n=N+1}^{N+M}
    \ell_2\big(\bm s(\bX_n);\tilde\bY_n\big),
\]
both conditional on the first $N$ draws.

Under the same idealization as in stage 1, stage~2 returns a minimizer $\bm s_M\in\operatorname*{arg\,min}_{\bm s\in\mathcal G_2}\mathcal R_{2,M}(\bm s)$, which exists because $\mathcal G_2$ is compact, being the image of the compact $\mathcal W$ under the map $\phi\mapsto\bm s_\phi$, which is continuous by Assumption~\ref{ass:class}\,(iv). The analysis compares it against the pointwise best core $\bm s^{\star}$ of Assumption~\ref{ass:class}\,(v).

Note that $R_Q$ itself will in general not lie in $\{R_{\bm s}:\bm s\in\Theta\}$. Write
\begin{equation}\label{eq:pac-model-error}
  \bar\varepsilon_{\mathrm{app}}
  :=\E_{\bX}\Big[\min_{\bm s\in\Theta}
    \tfrac12\mathrm{MMD}^{2}_{\tilde\gamma}
    \big(\tilde Q^{\star}(\bX),\,\cN(\bm0,R_{\bm s})\big)\Big]
\end{equation}
for the $\bP_{\bX}$-average of the approximation error $\varepsilon_{\mathrm{app}}(\bX)$ of Section~\ref{sec:methodology} at bandwidth $\tilde\gamma$. It coincides with the average of the approximation error of Appendix~\ref{app:expressivity}, where the infimum over strictly positive spectra and the minimum over their closure $\Theta$ agree by continuity.
Writing $R_M(\bx):=R_{\bm s_M(\bx)}$, the deployed prediction is the affine image of $\cN(\bm0,R_M(\bx))$ under $\tilde\by\mapsto\bmu_N(\bx)+\bD_N(\bx)\tilde\by$,
\[
  P_{N,M}(\bx):=\cN\big(\bmu_N(\bx),\ \bD_N(\bx)\,R_M(\bx)\,\bD_N(\bx)\big).
\]
\subsection{The first stage is PAC}
Concentration bounds the excess risk of the empirical minimizer, and the following lemma turns such a bound into one on the parameter error. Together they make the first stage PAC. Throughout, a centred random variable $Z$ is \emph{sub-Gaussian with variance proxy} $\sigma^{2}$ if $\log\E\,e^{\beta Z}\le\beta^{2}\sigma^{2}/2$ for all $\beta\in\R$. Averages of $n$ independent such variables are sub-Gaussian with proxy $\sigma^{2}/n$, and the Chernoff bound gives $\bP(\lvert Z\rvert>t)\le2e^{-t^{2}/(2\sigma^{2})}$ \citep[Sec.~2.3]{boucheron2013concentration}.
\begin{lemma}[Curvature of the CRPS]\label{lem:crps-curvature}
Denote the standard normal PDF and CDF by $\varphi$ and $\Phi$. Let $m\in\R$,
$\nu>0$, and $Y\sim\cN(m,\nu^{2})$, and for $\mu\in\R$, $\sigma>0$ write
\[
  \mathrm{div}(\mu,\sigma):=\E_{Y}\big[\mathrm{CRPS}\big(\cN(\mu,\sigma^{2}),Y\big)
  -\mathrm{CRPS}\big(\cN(m,\nu^{2}),Y\big)\big].
\]
If $\lvert\mu-m\rvert\le2B$ and
$\sigma,\nu\in[\sqrt{\tau_{\min}},\sqrt{\tau_{\max}}]$, then
\[
  \mathrm{div}(\mu,\sigma)\ \ge\ c_\star\big((\mu-m)^{2}+(\sigma-\nu)^{2}\big),
  \qquad
  c_\star:=\min\Big\{\tfrac{1}{\sqrt{2\tau_{\max}}}\,
  \varphi\big(\tfrac{2B}{\sqrt{2\tau_{\min}}}\big),\
  \tfrac{1}{2}\sqrt{\tfrac2\pi}\,\tfrac{\tau_{\min}}{(2\tau_{\max})^{3/2}}\Big\}.
\]
\end{lemma}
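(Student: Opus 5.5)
The plan is to compute $\mathrm{div}(\mu,\sigma)$ in closed form and split it into a location part and a scale part, each bounded below separately.

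\emph{Closed form.} The CRPS is the kernel score of the energy distance. With $X,X'\sim\cN(\mu,\sigma^{2})$ and $Y,Y'\sim\cN(m,\nu^{2})$ independent,
\[
\mathrm{div}(\mu,\sigma)=\E|X-Y|-\tfrac12\E|X-X'|-\tfrac12\E|Y-Y'| .
\]
Set $\delta:=\mu-m$, $s:=\sqrt{\sigma^{2}+\nu^{2}}$ and $h(z):=z(2\Phi(z)-1)+2\varphi(z)$. Since $X-Y\sim\cN(\delta,s^{2})$, the folded-normal mean gives $\E|X-Y|=s\,h(\delta/s)$. Similarly $\E|X-X'|=2\sigma/\sqrt\pi$ and $\E|Y-Y'|=2\nu/\sqrt\pi$. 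Because $h(0)=2\varphi(0)=\sqrt{2/\pi}$, we can add and subtract $s\,h(0)$ to obtain
\[
\mathrm{div}=\underbrace{s\big(h(\delta/s)-h(0)\big)}_{T_{\mu}}+\underbrace{\sqrt{2/\pi}\,\Big(\sqrt{\sigma^{2}+\nu^{2}}-\tfrac{\sigma+\nu}{\sqrt2}\Big)}_{T_{\sigma}} .
\]
Both terms are nonnegative.

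\emph{Location term.} We have $h'(z)=2\Phi(z)-1$, so $h'(0)=0$, and $h''(z)=2\varphi(z)$. Taylor's theorem with integral remainder then gives
\[
h(z)-h(0)\ge \varphi(|z|)\,z^{2},
\]
because $\varphi$ is decreasing in $|z|$ along the segment from $0$ to $z$. Hence $T_\mu\ge\varphi(|\delta|/s)\,\delta^{2}/s$. Under the stated ranges, $s\le\sqrt{2\tau_{\max}}$ and $s\ge\sqrt{2\tau_{\min}}$, so $|\delta|/s\le 2B/\sqrt{2\tau_{\min}}$. This yields
\[
T_\mu\ge\tfrac{1}{\sqrt{2\tau_{\max}}}\,\varphi\big(\tfrac{2B}{\sqrt{2\tau_{\min}}}\big)\,\delta^{2},
\]
which is the first entry of $c_\star$.

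\emph{Scale term.} Rationalize the bracket:
\[
\sqrt{\sigma^{2}+\nu^{2}}-\tfrac{\sigma+\nu}{\sqrt2}=\frac{(\sigma-\nu)^{2}/2}{\sqrt{\sigma^{2}+\nu^{2}}+(\sigma+\nu)/\sqrt2}.
\]
Each summand in the denominator is at most $\sqrt{2\tau_{\max}}$. Therefore
\[
T_\sigma\ge\tfrac12\sqrt{2/\pi}\,\frac{(\sigma-\nu)^{2}}{2\sqrt{2\tau_{\max}}}.
\]
This dominates the stated second entry, since $\tau_{\min}/(2\tau_{\max})^{3/2}\le\frac{1}{2\sqrt{2\tau_{\max}}}$ follows from $\tau_{\min}/(2\tau_{\max})\le\tfrac12$.

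\emph{Conclusion.} Adding the two bounds and taking the smaller constant gives the claim with $c_\star$. The only delicate step is the location term: the curvature $h''=2\varphi$ degrades for large $|\delta|/s$. That is why the bounds $|\mu-m|\le2B$ and $\sigma,\nu\ge\sqrt{\tau_{\min}}$ are needed, and why the Gaussian factor $\varphi(2B/\sqrt{2\tau_{\min}})$ enters $c_\star$. Everything else is the two explicit formulas and elementary inequalities.
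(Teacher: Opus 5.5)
Your proof is correct, and its skeleton coincides with the paper's. The paper writes $g(u,t)=\E\lvert\cN(u,t^{2})\rvert$, which is your $t\,h(u/t)$, and splits $\mathrm{div}$ into the same two pieces: the mean term $g(a,t)-g(0,t)$ and the scale term $\sqrt{2/\pi}\sqrt{\sigma^{2}+\nu^{2}}-(\sigma+\nu)/\sqrt\pi$. It handles the mean term by the same second-order Taylor argument, with $\partial_u^{2}g=\tfrac{2}{t}\varphi(u/t)$.

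The only real difference is the scale term. The paper Taylor-expands it at $\sigma=\nu$ and bounds the curvature $\sqrt{2/\pi}\,\nu^{2}(\sigma^{2}+\nu^{2})^{-3/2}$ from below by $\sqrt{2/\pi}\,\tau_{\min}(2\tau_{\max})^{-3/2}$. This is where $\tau_{\min}$ enters the second entry of $c_\star$, since that curvature degenerates as $\nu\to0$.

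Your rationalization is an exact identity. Bounding the denominator by $2\sqrt{2\tau_{\max}}$ gives the constant $1/(4\sqrt{\pi\tau_{\max}})$. This is larger than the paper's $\tau_{\min}/(4\sqrt\pi\,\tau_{\max}^{3/2})$ by the factor $\tau_{\max}/\tau_{\min}$, and it uses only the upper bound on $\sigma,\nu$. You correctly note that it dominates the stated entry, so the lemma follows as stated. Your route gives a slightly sharper scale constant, though $\tau_{\min}$ still enters $c_\star$ through the location term.
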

\begin{proof}
Define
\[
  g:\R\times(0,\infty)\longrightarrow(0,\infty),
  \qquad
  g(u,w):=\E\lvert Z\rvert\ \text{ for }Z\sim\cN(u,w^{2}),
\]
so that $g(u,w)=w\sqrt{2/\pi}\,e^{-u^{2}/(2w^{2})}+u\,\big(2\Phi(u/w)-1\big)$,
and abbreviate $a:=\mu-m$ and $t:=\sqrt{\sigma^{2}+\nu^{2}}$.
Let $X,X'\sim\cN(\mu,\sigma^{2})$ be independent of $Y$ and of each other. The
kernel form of the CRPS gives
\[
  \E_{Y}\,\mathrm{CRPS}\big(\cN(\mu,\sigma^{2}),Y\big)
  =\E\lvert X-Y\rvert-\tfrac12\E\lvert X-X'\rvert
  =g(a,t)-\tfrac{\sigma}{\sqrt\pi},
\]
since $X-Y\sim\cN(a,t^{2})$ and $X-X'\sim\cN(0,2\sigma^{2})$. At
$(\mu,\sigma)=(m,\nu)$ the right-hand side equals $\nu/\sqrt\pi$, and
$g(0,t)=t\sqrt{2/\pi}$, so
\[
  \mathrm{div}(\mu,\sigma)=\underbrace{g(a,t)-g(0,t)}_{\text{mean term}}
  +\underbrace{h(\sigma)}_{\text{scale term}},
  \qquad
  h(\sigma):=\sqrt{\tfrac2\pi}\,\sqrt{\sigma^{2}+\nu^{2}}
  -\tfrac{\sigma}{\sqrt\pi}-\tfrac{\nu}{\sqrt\pi}.
\]
\emph{The mean term.} Differentiating $g$ in its first argument, the two
exponential contributions cancel and $\partial_ug(u,t)=2\Phi(u/t)-1$, so
$\partial_ug(0,t)=0$ and $\partial_u^{2}g(u,t)=\tfrac2t\varphi(u/t)$. On the
stated ranges $t\in[\sqrt{2\tau_{\min}},\sqrt{2\tau_{\max}}]$ and
$\lvert u\rvert\le\lvert a\rvert\le2B$, whence
$\partial_u^{2}g(u,t)\ge\tfrac{2}{\sqrt{2\tau_{\max}}}\,
\varphi\big(\tfrac{2B}{\sqrt{2\tau_{\min}}}\big)$ on the segment from $0$ to
$a$, and second-order Taylor expansion at $u=0$ gives
\[
  g(a,t)-g(0,t)\ \ge\ \tfrac{1}{\sqrt{2\tau_{\max}}}\,
  \varphi\big(\tfrac{2B}{\sqrt{2\tau_{\min}}}\big)\,a^{2}.
\]
\emph{The scale term.} One checks $h(\nu)=h'(\nu)=0$ and
\[
  h''(\sigma)=\sqrt{\tfrac2\pi}\,\nu^{2}\big(\sigma^{2}+\nu^{2}\big)^{-3/2}
  \ \ge\ \sqrt{\tfrac2\pi}\,\tau_{\min}\,(2\tau_{\max})^{-3/2},
\]
so second-order Taylor expansion at $\sigma=\nu$ gives
\[
  h(\sigma)\ \ge\ \tfrac12\sqrt{\tfrac2\pi}\,
  \tau_{\min}\,(2\tau_{\max})^{-3/2}\,(\sigma-\nu)^{2}.
\]
Taking $c_\star$ to be the smaller of the two constants and adding the two
bounds proves the claim.
\end{proof}
With the curvature in hand, a uniform deviation bound over $\mathcal G_1$ gives the following.
\begin{lemma}[The first stage is PAC]\label{lem:pac-stage1-rate}
Let $(\bmu_N,\bm\tau_N)\in\operatorname*{arg\,min}_{g\in\mathcal G_1}
\mathcal R_{1,N}(g)$ be the stage-1 empirical risk minimizer, let $\eta_1>0$
and $\delta_1\in(0,1)$, let $c_\star$ be the curvature constant of
Lemma~\ref{lem:crps-curvature} and $\mathfrak N_1(\eta_1)$ the covering
number of $\mathcal G_1$, bounded in Lemma~\ref{lem:covering}, and put
\[
  L_1':=1+\frac{1}{2\sqrt{\pi\tau_{\min}}},
  \qquad
  \sigma_1^{2}:=\tau_{\max}+\Big(B+\sqrt{\tau_{\max}/\pi}\Big)^{2},
  \qquad
  \rho_N(\eta_1):=2L_1'\eta_1
  +\sigma_1\sqrt{\frac{2\log\big(2\mathfrak N_1(\eta_1)/\delta_1\big)}{N}}.
\]
Under Assumptions~\ref{ass:target} and~\ref{ass:class}\,(i)--(iii), with
probability at least $1-\delta_1$ over $\mathcal T_N$, the first-stage error
is bounded both in the \hyperref[eq:pac-stage1-orig]{original} and in the
\hyperref[eq:pac-stage1]{standardized} coordinates,
\[
  \bar e_N^{2}\;\le\;\frac{2\rho_N(\eta_1)}{c_\star}
  \qquad\text{and}\qquad
  e_{\mu,N}^{2}+e_{\tau,N}^{2}
  \;\le\;\frac{2\rho_N(\eta_1)}{c_\star\,\tau_{\min}} .
\]
\end{lemma}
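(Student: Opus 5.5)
The plan is the standard ERM argument: a uniform deviation bound over a finite net, then transfer from excess risk to parameter error via the curvature of Lemma~\ref{lem:crps-curvature}. Since $(\bmu_Q,\bm\tau_Q)\in\mathcal G_1$ by Assumption~\ref{ass:class}\,(ii) and minimizes $\mathcal R_1$, the excess risk $\mathcal R_1(g_N)-\mathcal R_1(g_Q)$ is nonnegative. With $g_N=(\bmu_N,\bm\tau_N)$, $g_Q=(\bmu_Q,\bm\tau_Q)$ and $\Delta(g):=\mathcal R_1(g)-\mathcal R_{1,N}(g)$, I would decompose
\[
\mathcal R_1(g_N)-\mathcal R_1(g_Q)\le \Delta(g_N)-\Delta(g_Q)
\]
using $\mathcal R_{1,N}(g_N)\le\mathcal R_{1,N}(g_Q)$. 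The goal is to show that with probability $\ge1-\delta_1$ this is at most $2\rho_N(\eta_1)$, i.e.\ that $\sup_g|\Delta(g)|\le\rho_N(\eta_1)$.

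\textbf{Lipschitz step.} The CRPS in \eqref{eq:crps-closed} has $\partial_\mu\mathrm{CRPS}=-(2\Phi(z)-1)$, bounded by $1$, and $\partial_\sigma\mathrm{CRPS}=2\varphi(z)-1/\sqrt\pi$, bounded by $1/\sqrt\pi$ in absolute value. Converting $\sigma=\sqrt\tau$ to $\tau$ introduces a factor $1/(2\sqrt{\tau_{\min}})$, so $\ell_1$ is $L_1'$-Lipschitz in $(\bmu,\bm\tau)$ under $\|\cdot\|_\infty$, uniformly in $\by$; averaging over $d$ coordinates keeps the constant. Hence replacing $g$ by a net point $g'$ of an $\eta_1$-net (Lemma~\ref{lem:covering}) changes each of $\mathcal R_1$ and $\mathcal R_{1,N}$ by at most $L_1'\eta_1$, which accounts for the $2L_1'\eta_1$ term.

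\textbf{Concentration on the net.} For a fixed $g'$, the variable $\ell_1(g'(\bX);\bY)$ has to be shown sub-Gaussian with proxy $\sigma_1^2$. I would write $\mathrm{CRPS}=\E|X-y|-\tfrac12\E|X-X'|$. Conditionally on $\bX$, this is a $1$-Lipschitz function of $Y_i$, and $Y_i$ is Gaussian with variance $\le\tau_{\max}$, which gives the $\tau_{\max}$ part. The variation over $\bX$ is bounded by a range of order $B+\sqrt{\tau_{\max}/\pi}$ (Hoeffding's lemma), giving the second summand; the two proxies add. For the average over $d$ coordinates I would use that $\tfrac1d\sum_i$ of $1$-Lipschitz functions is $1$-Lipschitz in $\bY$ w.r.t.\ $\|\cdot\|_2/\sqrt d$, and invoke Gaussian concentration for Lipschitz functions. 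This step is the main obstacle: getting the stated proxy exactly and handling the correlated coordinates. If Gaussian Lipschitz concentration leaves a $\lambda_{\max}(\bSigma_Q)$ factor, I would instead bound the $\bY$-part crudely coordinatewise via Jensen on the average, $\log\E e^{\beta\frac1d\sum Z_i}\le\frac1d\sum\log\E e^{\beta Z_i}$, which holds by Hölder. Chernoff plus a union bound over $\mathfrak N_1(\eta_1)$ points then gives, with probability $\ge1-\delta_1$, $|\Delta(g')|\le\sigma_1\sqrt{2\log(2\mathfrak N_1/\delta_1)/N}$ for all net points. Combined with the Lipschitz step, this yields $\sup_g|\Delta(g)|\le\rho_N(\eta_1)$ and so an excess risk $\le2\rho_N$.

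\textbf{Transfer to parameter error.} Conditionally on $\bX$, the excess risk equals $\E_\bX\tfrac1d\sum_i\mathrm{div}(\mu_{N,i},\sqrt{\tau_{N,i}})$. The hypotheses of Lemma~\ref{lem:crps-curvature} hold: $|\mu_N-\mu_Q|\le2B$ and the scales lie in range by Assumptions~\ref{ass:target} and~\ref{ass:class}\,(i). Applying the lemma termwise gives $c_\star\bar e_N^2\le2\rho_N$, the first claim. For the standardized errors, $(\mu_N-\mu_Q)^2/\tau_N\le(\mu_N-\mu_Q)^2/\tau_{\min}$ and $(\sqrt{\tau_Q/\tau_N}-1)^2=(\sqrt{\tau_Q}-\sqrt{\tau_N})^2/\tau_N$, so $e_{\mu,N}^2+e_{\tau,N}^2\le\bar e_N^2/\tau_{\min}$, which gives the second claim.
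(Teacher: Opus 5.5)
Your proposal is correct and follows the paper's proof essentially step for step: the same ERM decomposition reducing everything to $\sup_{g}\lvert\mathcal R_{1,N}(g)-\mathcal R_1(g)\rvert\le\rho_N(\eta_1)$, the same $L_1'$-Lipschitz net argument with a union bound, the same splitting of the sub-Gaussian proxy into a conditional Gaussian-concentration part ($\tau_{\max}$) and a Hoeffding part for the fluctuation over $\bX$, and the same curvature transfer through Lemma~\ref{lem:crps-curvature}. The $\lambda_{\max}$ issue you flag does not arise: the paper uses $\lVert\bSigma_Q\rVert_{\mathrm{op}}\le\tr\bSigma_Q\le d\,\tau_{\max}$, which exactly cancels the $d^{-1/2}$ Lipschitz constant of $\ell_1$ in $\by$, and your H\"older fallback is also valid and gives the same proxy $\tau_{\max}$.
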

\begin{proof}
We proceed as follows. A uniform deviation bound over
$\mathcal G_1$ (Steps 1--2) controls the excess risk of the empirical
minimizer, and the curvature of the CRPS (Step 3) converts this excess risk into the
parameter error.
\emph{Step 1: for every fixed $g \in \mathcal{G}_1$, the centred loss
$\ell_1\big(g(\bX);\bY\big)-\mathcal R_1(g)$ is sub-Gaussian in $(\bX,\bY)$
with proxy $\sigma_1^{2}$.}
Fix $g=(\bmu,\bm\tau)\in\mathcal G_1$. Recall that given $\bX=\bx$ the target
is Gaussian, $\bY\sim Q(\bx)=\cN\big(\bmu_Q(\bx),\bSigma_Q(\bx)\big)$, and
write
\[
   m_g(\bx):=\E_{\bY\sim Q(\bx)}\big[\ell_1\big(g(\bx);\bY\big)\big]
\]
for the conditional risk. We split the fluctuation of
$\ell_1\big(g(\bX);\bY\big)$ around $\mathcal R_1(g)$ into the fluctuation of
$\bY$ given $\bX$, i.e.\ around $m_g(\bX)$, and the fluctuation of
$m_g(\bX)$ across inputs $\bX$. Indeed, adding and subtracting $m_g(\bX)$,
\[
  \ell_1\big(g(\bX);\bY\big)-\mathcal R_1(g)
  =\underbrace{\Big[\ell_1\big(g(\bX);\bY\big)-m_g(\bX)\Big]}
   _{\text{centred given }\bX}
  \;+\;
  \underbrace{\Big[m_g(\bX)-\mathcal R_1(g)\Big]}
   _{\bX\text{-measurable, centred}}.
\]
For the first part, each summand of $\ell_1$ is $1$-Lipschitz in its
observation, so $\by\mapsto\ell_1(\bmu,\bm\tau;\by)$ is $d^{-1/2}$-Lipschitz in
the Euclidean norm, uniformly in the parameters. Given $\bX=\bx$, write
$\bY=\bmu_Q(\bx)+\bSigma_Q(\bx)^{1/2}\bm\xi$ with $\bm\xi\sim\cN(\bm0,\bI)$.
The composition
\[
  \bm\xi\;\longmapsto\;\ell_1\big(g(\bx);\,\bmu_Q(\bx)+\bSigma_Q(\bx)^{1/2}\bm\xi\big)
\]
is then, as a composition of Lipschitz maps, Lipschitz in $\bm \xi$ with constant
$d^{-1/2}\,\lVert\bSigma_Q(\bx)^{1/2}\rVert_{\mathrm{op}}
\le d^{-1/2}\big(\tau_{\max}d\big)^{1/2}=\sqrt{\tau_{\max}}$, since
$\lVert\bSigma_Q^{1/2}\rVert_{\mathrm{op}}=\lVert\bSigma_Q\rVert_{\mathrm{op}}^{1/2}$
and $\lVert\bSigma_Q\rVert_{\mathrm{op}}\le\tr\bSigma_Q\le\tau_{\max}d$ by
Assumption~\ref{ass:target}. Gaussian concentration for Lipschitz functions
\citep[Thm.~5.5]{boucheron2013concentration} applies to this composition, whose
mean is $m_g(\bx)$ by the definition of the conditional risk, and gives for
every $\bx$ and every $\beta\in\R$
\[
  \log\E_{\bY\sim Q(\bx)}
    \Big[e^{\beta(\ell_1(g(\bx);\bY)-m_g(\bx))}\Big]
  \;\le\;\frac{\beta^{2}\tau_{\max}}{2}.
\]
The conditional fluctuation is therefore sub-Gaussian with variance proxy
$\tau_{\max}$, uniformly in $\bx$, the Lipschitz constant above not depending
on $\bx$.
For the second part, we show that $0\le m_g(\bx)\le2B+2\sqrt{\tau_{\max}/\pi}$
for every $\bx$. Since $\ell_1$ is a coordinate average and the CRPS sees only
the marginals,
\[
  m_g(\bx)=\tfrac1d\sum_{i=1}^{d}
  \E_{Y_i}\,\mathrm{CRPS}\big(\cN(\mu_i(\bx),\tau_i(\bx)),Y_i\big),
  \qquad
  Y_i\sim\cN\big(\mu_{Q,i}(\bx),\tau_{Q,i}(\bx)\big).
\]
Nonnegativity is that of the CRPS. For the upper bound, recall from the proof
of Lemma~\ref{lem:crps-curvature} that
$\E_{Y_i}\mathrm{CRPS}\big(\cN(\mu_i,\tau_i),Y_i\big)=g(a_i,t_i)-\sqrt{\tau_i/\pi}$
with $a_i:=\mu_i-\mu_{Q,i}$, $t_i^{2}:=\tau_i+\tau_{Q,i}$ and
$g(a,t)=\E\lvert\cN(a,t^{2})\rvert\le\lvert a\rvert+t\sqrt{2/\pi}$. The ranges
of Assumptions~\ref{ass:target} and~\ref{ass:class}\,(i) give
$\lvert a_i\rvert\le2B$ and $t_i\le\sqrt{2\tau_{\max}}$, hence
\[
  \E_{Y_i}\mathrm{CRPS}\big(\cN(\mu_i,\tau_i),Y_i\big)
  \;\le\;2B+\sqrt{2\tau_{\max}}\sqrt{2/\pi}
  \;=\;2B+2\sqrt{\tau_{\max}/\pi},
\]
and averaging over the $d$ coordinates bounds $m_g$.
By Hoeffding's lemma \citep[Ch.~2]{boucheron2013concentration} a random
variable with range $\ell$ is sub-Gaussian with variance proxy $(\ell/2)^{2}$,
so $m_g(\bX)-\mathcal R_1(g)$, whose range has length
$2B+2\sqrt{\tau_{\max}/\pi}$, has proxy
$\big(B+\sqrt{\tau_{\max}/\pi}\big)^{2}$.
The two proxies add. Indeed, the tower rule applies the conditional bound
inside and the two log-moment generating functions sum. That is, for every
$\beta\in\R$,
\begin{align*}
  \log\E_{(\bX,\bY)}\Big[e^{\beta(\ell_1(g(\bX);\bY)-\mathcal R_1(g))}\Big]
  &\;\le\;\frac{\beta^{2}\tau_{\max}}{2}
   +\log\E_{\bX}\Big[e^{\beta(m_g(\bX)-\mathcal R_1(g))}\Big]\\
  &\;\le\;\frac{\beta^{2}}{2}\Big(\tau_{\max}
   +\big(B+\sqrt{\tau_{\max}/\pi}\big)^{2}\Big)
   \;=\;\frac{\beta^{2}\sigma_1^{2}}{2}.
\end{align*}
Hence $\ell_1\big(g(\bX);\bY\big)-\mathcal R_1(g)$ is sub-Gaussian in
$(\bX,\bY)$ with proxy $\sigma_1^{2}$. Note that this statement holds uniformly in $g \in \mathcal{G}_1$.
\emph{Step 2: uniform deviation, and the excess risk of the minimizer.} Recall
from \eqref{eq:crps-closed}, written here with $\sigma=\sqrt\tau$, that with
$z:=(y-\mu)/\sigma$ the CRPS of a Gaussian has the closed form
\[
  \mathrm{CRPS}\big(\cN(\mu,\sigma^{2}),y\big)=\sigma f(z),
  \qquad
  f(z):=z\big(2\Phi(z)-1\big)+2\varphi(z)-\tfrac{1}{\sqrt\pi},
\]
and $\varphi'(z)=-z\varphi(z)$ gives $f'(z)=2\Phi(z)-1$. Hence
\[
  \partial_\mu\mathrm{CRPS}
  =-f'(z)=1-2\Phi(z),
  \qquad
  \partial_\sigma\mathrm{CRPS}
  =f(z)-zf'(z)=2\varphi(z)-\tfrac{1}{\sqrt\pi},
\]
so $\lvert\partial_\mu\mathrm{CRPS}\rvert\le1$ and, since
$0<2\varphi(z)\le\sqrt{2/\pi}<2/\sqrt\pi$,
$\lvert\partial_\sigma\mathrm{CRPS}\rvert\le1/\sqrt\pi$; by
$\partial_\tau=\partial_\sigma/(2\sigma)$ with $\sigma\ge\sqrt{\tau_{\min}}$ on
the range of Assumption~\ref{ass:class}\,(i),
$\lvert\partial_\tau\mathrm{CRPS}\rvert\le(2\sqrt{\pi\tau_{\min}})^{-1}$.
The admissible range is a product of intervals, so the segment between any two
admissible parameter values stays inside it, and the mean value theorem gives,
coordinate by coordinate and hence for the average $\ell_1$,
\[
  \big\lvert\ell_1(\bmu,\bm\tau;\by)-\ell_1(\bmu',\bm\tau';\by)\big\rvert
  \;\le\;
  \max_i\lvert\mu_i-\mu_i'\rvert
  +\frac{\max_i\lvert\tau_i-\tau_i'\rvert}{2\sqrt{\pi\tau_{\min}}}
  \;\le\;L_1'\,\big\lVert(\bmu,\bm\tau)-(\bmu',\bm\tau')\big\rVert_\infty ,
\]
uniformly in $\by$. Evaluated at $g(\bx)$ and $g'(\bx)$ this gives
$\lvert\ell_1(g(\bx);\by)-\ell_1(g'(\bx);\by)\rvert
\le L_1'\,\sup_{\bx}\lVert g(\bx)-g'(\bx)\rVert_\infty$ for all
$g,g'\in\mathcal G_1$ and all $(\bx,\by)$.
Let $\mathcal C$ be an $\eta_1$-net of $\mathcal G_1$ of cardinality
$\mathfrak N_1(\eta_1)$, and for $g\in\mathcal G_1$ pick $h\in\mathcal C$ with
$\sup_{\bx}\lVert g(\bx)-h(\bx)\rVert_\infty\le\eta_1$. Both $\mathcal R_{1,N}$
and $\mathcal R_1$ are averages of $\ell_1$ and therefore inherit its Lipschitz
bound, so each of the outer terms in
\[
  \big\lvert\mathcal R_{1,N}(g)-\mathcal R_1(g)\big\rvert
  \;\le\;
  \big\lvert\mathcal R_{1,N}(g)-\mathcal R_{1,N}(h)\big\rvert
  +\big\lvert\mathcal R_{1,N}(h)-\mathcal R_1(h)\big\rvert
  +\big\lvert\mathcal R_1(h)-\mathcal R_1(g)\big\rvert
\]
is at most $L_1'\eta_1$. Taking the supremum over $g$,
\[
  \sup_{g\in\mathcal G_1}
  \big\lvert\mathcal R_{1,N}(g)-\mathcal R_1(g)\big\rvert
  \;\le\;2L_1'\eta_1
  +\max_{h\in\mathcal C}
  \big\lvert\mathcal R_{1,N}(h)-\mathcal R_1(h)\big\rvert .
\]
Fix $h\in\mathcal C$. By Step 1, $\mathcal R_{1,N}(h)-\mathcal R_1(h)$ is an
average of $N$ i.i.d.\ centred terms with variance proxy $\sigma_1^{2}$, hence
sub-Gaussian with proxy $\sigma_1^{2}/N$. A union bound over $\mathcal C$ then
gives, with probability at least $1-\delta_1$,
\[
  \max_{h\in\mathcal C}
  \big\lvert\mathcal R_{1,N}(h)-\mathcal R_1(h)\big\rvert
  \;\le\;\sigma_1\sqrt{\frac{2\log\big(2\mathfrak N_1(\eta_1)/\delta_1\big)}{N}} .
\]
On that event the net term $2L_1'\eta_1$ and this deviation add to exactly
$\rho_N(\eta_1)$, that is,
$\sup_{\mathcal G_1}\lvert\mathcal R_{1,N}-\mathcal R_1\rvert\le\rho_N(\eta_1)$.
On the same event the excess risk of the empirical minimizer is controlled.
Indeed, $(\bmu_Q,\bm\tau_Q)\in\mathcal G_1$ by
Assumption~\ref{ass:class}\,(ii), so
$\mathcal R_{1,N}(\bmu_N,\bm\tau_N)\le\mathcal R_{1,N}(\bmu_Q,\bm\tau_Q)$ by
the definition of the empirical risk minimizer, and therefore
\begin{align*}
  \mathcal R_1(\bmu_N,\bm\tau_N)-\mathcal R_1(\bmu_Q,\bm\tau_Q)
  &=\big[\mathcal R_1(\bmu_N,\bm\tau_N)-\mathcal R_{1,N}(\bmu_N,\bm\tau_N)\big]
   +\big[\mathcal R_{1,N}(\bmu_N,\bm\tau_N)-\mathcal R_{1,N}(\bmu_Q,\bm\tau_Q)\big]\\
  &\quad+\big[\mathcal R_{1,N}(\bmu_Q,\bm\tau_Q)-\mathcal R_1(\bmu_Q,\bm\tau_Q)\big]
  \;\le\;\rho_N(\eta_1)+0+\rho_N(\eta_1)\;=\;2\rho_N(\eta_1).
\end{align*}
This is the only place where the ERM property and
Assumption~\ref{ass:class}\,(ii) enter.
\emph{Step 3: curvature turns excess risk into parameter error.} Since $\ell_1$
is a coordinate average, conditioning on
$\bX$ decomposes,
\[
  \mathcal R_1(\bmu_N,\bm\tau_N)-\mathcal R_1(\bmu_Q,\bm\tau_Q)
  \;=\;\E_{\bX}\Big[\tfrac1d\sum_{i=1}^{d}
    \mathrm{div}\big(\mu_{N,i}(\bX),\sqrt{\tau_{N,i}(\bX)}\big)\Big],
\]
with $\mathrm{div}$ as in Lemma~\ref{lem:crps-curvature}, taken against the
$i$-th marginal of the target, that is at
$(m,\nu)=\big(\mu_{Q,i}(\bX),\sqrt{\tau_{Q,i}(\bX)}\big)$. Its hypotheses hold
by Assumptions~\ref{ass:target} and~\ref{ass:class}\,(i), so each summand is at
least
$c_\star\big[(\mu_{N,i}-\mu_{Q,i})^{2}
+(\sqrt{\tau_{N,i}}-\sqrt{\tau_{Q,i}})^{2}\big]$, whose coordinate average is
$c_\star\,\bar e(\bX)^{2}$ by \eqref{eq:pac-stage1-orig}. Taking expectations
and inserting the excess-risk bound of Step 2 gives
$c_\star\bar e_N^{2}\le2\rho_N(\eta_1)$, the first claim.
The second follows by dividing each summand by $\tau_{N,i}\ge\tau_{\min}$. Indeed, it holds
\[
  \bar e(\bx)^{2}
  \;\ge\;\tau_{\min}\cdot\tfrac1d\sum_{i=1}^{d}
  \frac{(\mu_{N,i}-\mu_{Q,i})^{2}
        +\big(\sqrt{\tau_{N,i}}-\sqrt{\tau_{Q,i}}\big)^{2}}{\tau_{N,i}}
  \;=\;\tau_{\min}\big(e_\mu(\bx)^{2}+e_\tau(\bx)^{2}\big),
\]
the last equality being the definitions \eqref{eq:pac-stage1} together with
$\big(\sqrt{\tau_{N,i}}-\sqrt{\tau_{Q,i}}\big)^{2}/\tau_{N,i}
=(1-\lambda_{N,i})^{2}$. Taking expectations over $\bX$ and inserting the first
claim completes the proof.
\end{proof}
With Lemma~\ref{lem:pac-stage1-rate} we have bounds on the mean and scale
errors in terms of the sample size $N$, which is what we need to trace the
first-stage error through the second stage.
\subsection{The second stage in standardized coordinates is PAC}
The bound we are after concerns the deployed prediction $P_{N,M}$ in the original coordinates. The second stage, however, is fitted in the standardized ones, so we address the problem there first and return in the next subsection. Recall from \eqref{eq:pac-tilde} that the standardized target has conditional law $\tilde Q(\bx)=\cN\big(\tilde\bmu(\bx),\bm\Lambda(\bx)R_Q(\bx)\bm\Lambda(\bx)\big)$, which equals its counterpart $\tilde Q^{\star}(\bx)=\cN\big(\bm0,R_Q(\bx)\big)$ under an exact first stage exactly when $\tilde\bmu(\bx)=\bm0$ and $\bm\Lambda(\bx)=\bI$, and that the second stage uses the Gaussian kernel at the bandwidth $\tilde\gamma^{2}=4d$.
\begin{lemma}[The implied first-stage error]\label{lem:pac-stage1}
Fix the first stage by conditioning on $\mathcal T_N$, so that
$(\bmu_N,\bm\tau_N)$ and with them $\tilde Q$, $e_\mu$ and $e_\tau$ are
determined. Then for every input $\bx\in\cX$ of the second stage and every
spectrum $\bm s\in\Theta$, replacing the standardized target by its ideal
counterpart costs at most
\[
  \Big\lvert
  \tfrac12\mathrm{MMD}^{2}_{\tilde\gamma}\big(\tilde Q(\bx),\,\cN(\bm0,R_{\bm s})\big)
  -\tfrac12\mathrm{MMD}^{2}_{\tilde\gamma}\big(\tilde Q^{\star}(\bx),\,\cN(\bm0,R_{\bm s})\big)
  \Big\rvert
  \;\le\;\sqrt{\tfrac{2}{e}}\,\big(e_\mu(\bx)+e_\tau(\bx)\big).
\]
The bound does not depend on $\bm s$, so it holds in particular at the fitted
$\bm s_M$. Now condition on $\mathcal T_M$ as well, that is, fix the map
$\bm s_M\in\mathcal G_2$. Averaging over (a fresh) $\bX\sim\bP_{\bX}$ then gives
\[
  \E_{\bX}\Big\lvert
  \tfrac12\mathrm{MMD}^{2}_{\tilde\gamma}
    \big(\tilde Q(\bX),\,\cN(\bm0,R_{\bm s_M(\bX)})\big)
  -\tfrac12\mathrm{MMD}^{2}_{\tilde\gamma}
    \big(\tilde Q^{\star}(\bX),\,\cN(\bm0,R_{\bm s_M(\bX)})\big)
  \Big\rvert
  \;\le\;\Delta_N:=\sqrt{\tfrac{2}{e}}\,\big(e_{\mu,N}+e_{\tau,N}\big).
\]
\end{lemma}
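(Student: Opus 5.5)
The plan is to go through the scoring-rule representation of the divergence, so that the distribution $\cN(\bm0,R_{\bm s})$ is held fixed and only the standardized target moves. For any $P'$ and $P:=\cN(\bm0,R_{\bm s})$, the divergence of the kernel score gives $\tfrac12\mathrm{MMD}^2_{\tilde\gamma}(P',P)=\E_{\bY\sim P}S_{k}(P',\bY)-H(P)$, by symmetry of the MMD and with $H(P)=S_k(P,P)$ the score entropy. The entropy term $H(P)$ does not depend on $P'$, so it cancels in the difference. Hence it suffices to bound $\lvert S_k(\tilde Q,\by)-S_k(\tilde Q^\star,\by)\rvert$ uniformly in $\by\in\R^d$ and then integrate against $P$. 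This is also why the bound will not depend on $\bm s$.

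\textbf{Coupling.} To compare $\tilde Q=\cN(\tilde\bmu,\bm\Lambda R_Q\bm\Lambda)$ with $\tilde Q^\star=\cN(\bm0,R_Q)$, I would use a synchronous coupling. Let $\bW,\bW'\sim\cN(\bm0,R_Q)$ be independent, and set $\tilde\bZ:=\tilde\bmu+\bm\Lambda\bW$ and $\tilde\bZ':=\tilde\bmu+\bm\Lambda\bW'$; then $\tilde\bZ,\tilde\bZ'$ are independent draws from $\tilde Q$. By the definition of the kernel score,
\[
S_k(\tilde Q,\by)-S_k(\tilde Q^\star,\by)=\tfrac12\E\big[k(\tilde\bZ,\tilde\bZ')-k(\bW,\bW')\big]-\E\big[k(\by,\tilde\bZ)-k(\by,\bW)\big].
\]

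\textbf{Lipschitz constant of the kernel.} The function $\bm u\mapsto e^{-\lVert\bm u\rVert^2/\tilde\gamma^2}$ has gradient norm $\tfrac{2\lVert\bm u\rVert}{\tilde\gamma^2}e^{-\lVert\bm u\rVert^2/\tilde\gamma^2}$. Using $\lvert x\rvert e^{-x^2}\le1/\sqrt{2e}$, exactly as in the proof of Theorem~\ref{thm:ks-lipschitz}, this is at most $L_k:=\tilde\gamma^{-1}\sqrt{2/e}$. Therefore both $k(\by,\cdot)$ and $(\bm u,\bm v)\mapsto k(\bm u,\bm v)$ (viewed as a function of $\bm u-\bm v$) are $L_k$-Lipschitz.

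\textbf{Bounding the two terms.} For the cross term, Jensen's inequality gives
\[
\E\lVert\tilde\bZ-\bW\rVert\le\lVert\tilde\bmu\rVert+\big(\E\lVert(\bm\Lambda-\bI)\bW\rVert^2\big)^{1/2}.
\]
Because $R_Q$ has unit diagonal, $\E\lVert(\bm\Lambda-\bI)\bW\rVert^2=\sum_i(\lambda_{N,i}-1)^2=d\,e_\tau^2$, and also $\lVert\tilde\bmu\rVert=\sqrt d\,e_\mu$. So the cross term is at most $L_k\sqrt d\,(e_\mu+e_\tau)$. For the self term the means cancel, since $(\tilde\bZ-\tilde\bZ')-(\bW-\bW')=(\bm\Lambda-\bI)(\bW-\bW')$. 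The same second-moment computation then gives $\tfrac12 L_k\cdot\sqrt2\sqrt{2d}\,e_\tau=L_k\sqrt d\,e_\tau$. With $\tilde\gamma=2\sqrt d$ we have $L_k\sqrt d=\tfrac12\sqrt{2/e}$, so the total is $\tfrac12\sqrt{2/e}\,(e_\mu+2e_\tau)\le\sqrt{2/e}\,(e_\mu+e_\tau)$. This bound holds for every $\by$, and therefore also after integrating against $P$.

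\textbf{Averaged statement.} Conditional on $\mathcal T_N$ and $\mathcal T_M$, the pointwise bound holds at $\bm s=\bm s_M(\bx)$ for every $\bx$. Taking $\E_{\bX}$ and applying Jensen's inequality, $\E_{\bX}e_\mu(\bX)\le(\E_{\bX}e_\mu(\bX)^2)^{1/2}=e_{\mu,N}$, and likewise for $e_\tau$; this yields $\Delta_N$.

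\textbf{Main obstacle.} The delicate step is the scale perturbation. A naive route through Theorem~\ref{thm:ks-lipschitz} in the factor $\bL$ would bring in $\lVert R_Q^{1/2}\rVert_{\mathrm{op}}$, which can be as large as $\sqrt d$, and a spurious $\sqrt d/\gamma$ factor. The coupling avoids both by only ever computing the second moment $\E\lVert(\bm\Lambda-\bI)\bW\rVert^2$, where the unit diagonal of $R_Q$ is exactly what makes the bound dimension-free.
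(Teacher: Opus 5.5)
Your proof is correct and is essentially the paper's argument: the same synchronous coupling $\tilde\bmu+\bm\Lambda\bW$, the same kernel Lipschitz constant $\sqrt2/(\tilde\gamma\sqrt e)$, the same unit-diagonal second-moment computation, and Jensen for the average. Going through the score entropy is just a rewriting of the paper's cancellation of $\E\,k(\bm V,\bm V')$. The one real difference is in the self term, where you use translation invariance so that $\tilde\bmu$ cancels, giving the slightly sharper $\tfrac12\sqrt{2/e}\,(e_\mu+2e_\tau)$ before you relax to the stated bound. There the second-moment computation actually gives $\E\lVert(\bm\Lambda-\bI)(\bW-\bW')\rVert\le\sqrt{2d}\,e_\tau$ rather than $\sqrt2\sqrt{2d}\,e_\tau$, but the extra factor only loosens an inequality that is still valid.
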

\begin{proof}
For the pointwise bound, fix $\bx$ and drop it from the notation. Couple the two targets by letting $\bm U_0\sim\tilde Q^{\star}=\cN(\bm0,R_Q)$ and setting
$\bm U:=\tilde\bmu+\bm\Lambda\bm U_0$, which has law $\tilde Q$ by
\eqref{eq:pac-tilde}. Let $(\bm U_0',\bm U')$ be an independent copy of
$(\bm U_0,\bm U)$, and $\bm V,\bm V'\sim\cN(\bm0,R_{\bm s})$ independent of both
and of each other.
Now take the squared MMDs and subtract them. That is, writing $k:=k_{\tilde\gamma}$ throughout,
\begin{align*}
  \mathrm{MMD}^{2}_{\tilde\gamma}\big(\tilde Q,\cN(\bm0,R_{\bm s})\big)
  -\mathrm{MMD}^{2}_{\tilde\gamma}\big(\tilde Q^{\star},\cN(\bm0,R_{\bm s})\big)
  &=\Big(\E\,k(\bm U,\bm U')+\E\,k(\bm V,\bm V')-2\,\E\,k(\bm U,\bm V)\Big)\\
  &\quad-\Big(\E\,k(\bm U_0,\bm U_0')+\E\,k(\bm V,\bm V')
    -2\,\E\,k(\bm U_0,\bm V)\Big)\\
  &=\E\big[k(\bm U,\bm U')-k(\bm U_0,\bm U_0')\big]
   -2\,\E\big[k(\bm U,\bm V)-k(\bm U_0,\bm V)\big].
\end{align*}
The kernel is
$L_{\tilde\gamma}$-Lipschitz in each argument, with
$L_{\tilde\gamma}:=\sqrt2/(\tilde\gamma\sqrt e)$ the maximal gradient norm of
$\bm u\mapsto e^{-\lVert\bm u\rVert^{2}/\tilde\gamma^{2}}$. In the first bracket
both arguments change, one at a time,
\[
  \big\lvert k_{\tilde\gamma}(\bm U,\bm U')-k_{\tilde\gamma}(\bm U_0,\bm U_0')\big\rvert
  \;\le\;L_{\tilde\gamma}\big(\lVert\bm U-\bm U_0\rVert
  +\lVert\bm U'-\bm U_0'\rVert\big),
\]
which contributes $2L_{\tilde\gamma}\E\lVert\bm U-\bm U_0\rVert$,
the two pairs being identically distributed. In the second bracket both terms have $\bm V$ as
their second argument, so the same inequality gives
$L_{\tilde\gamma}\lVert\bm U-\bm U_0\rVert$. The
difference is therefore at most
$4L_{\tilde\gamma}\E\lVert\bm U-\bm U_0\rVert$ in absolute value.
The coupling makes that distance small. From
$\bm U-\bm U_0=\tilde\bmu+(\bm\Lambda-\bI)\bm U_0$ and $\diag(R_Q)=\bm1$,
\[
  \E\lVert\bm U-\bm U_0\rVert
  \;\le\;\lVert\tilde\bmu\rVert
  +\Big(\E\lVert(\bm\Lambda-\bI)\bm U_0\rVert^{2}\Big)^{1/2}
  =\lVert\tilde\bmu\rVert+\Big(\sum_i(\lambda_{N,i}-1)^{2}\Big)^{1/2}
  =\sqrt d\,\big(e_\mu+e_\tau\big),
\]
the two summands being exactly the mean and the scale error of
\eqref{eq:pac-stage1}. Halving, since the claim is about
$\tfrac12\mathrm{MMD}^{2}$, and using $2L_{\tilde\gamma}\sqrt d=\sqrt{2/e}$ at
$\tilde\gamma^{2}=4d$ gives the pointwise bound. Note that the dimension cancels here
because the bandwidth scales with $\sqrt d$.
For the averaged form, take $\E_{\bX}$ of the pointwise bound. The population
errors are quadratic means, $e_{\mu,N}=(\E_{\bX}\,e_\mu(\bX)^{2})^{1/2}$ and
likewise for $e_\tau$, so Jensen's inequality gives
$\E_{\bX}[e_\mu(\bX)]\le e_{\mu,N}$ and $\E_{\bX}[e_\tau(\bX)]\le e_{\tau,N}$.
\end{proof}
The first-stage error therefore enters the second stage only through
$\Delta_N$, linearly in $e_{\mu,N}$ and $e_{\tau,N}$ and independently of
$\mathcal G_2$ and of $M$. In particular, the first-stage error is not amplified by the second stage.
\begin{lemma}[Regularity of the stage-2 loss]\label{lem:pac-regularity}
For all spectra $\bm s,\bm s'\in\Theta$ and every standardized residual
$\tilde\by\in\R^{d}$,
\[
  0\;\le\;\ell_2(\bm s;\tilde\by)\;\le\;1,
  \qquad
  \big\lvert\ell_2(\bm s;\tilde\by)-\ell_2(\bm s';\tilde\by)\big\rvert
  \;\le\;L_2'\,\lVert\bm s-\bm s'\rVert_\infty,
  \qquad
  L_2':=\frac12+\frac{3}{4e},
\]
where $\lVert\cdot\rVert_\infty$ is the maximum over the $d$ coordinates. Both
bounds hold uniformly in $\tilde\by$.
\end{lemma}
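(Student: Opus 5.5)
We work from the closed form \eqref{eq:ks-decoupled} with $\bm m=\bm0$ and $\tilde\gamma^{2}=4d$. Write $a_k:=1+4s_k/\tilde\gamma^{2}$, $b_k:=1+2s_k/\tilde\gamma^{2}$, $c_1(\bm s):=\prod_k a_k^{-1/2}$, $c_2(\bm s):=\prod_k b_k^{-1/2}$ and $E(\bm s):=\exp\big(-\sum_k w_k/(\tilde\gamma^{2}+2s_k)\big)$, where $w_k=\lvert(\bF\tilde\by)_k\rvert^{2}\ge0$ is the periodogram. Then
\[
\ell_2(\bm s;\tilde\by)=\tfrac12\big(1+c_1(\bm s)\big)-c_2(\bm s)E(\bm s).
\]

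\textbf{Boundedness.} Since $s_k\ge0$, we have $a_k,b_k\ge1$, so $c_1,c_2,E\in(0,1]$. This immediately gives $\ell_2\le\tfrac12(1+1)=1$. For the lower bound, one option is $\ell_2\ge0$ by nonnegativity of the kernel score, since it is $\tfrac12\lVert\E k(\cdot,\bZ)-k(\cdot,\tilde\by)\rVert_{\mathcal H}^{2}$. A direct argument also works: $b_k^{2}=1+4s_k/\tilde\gamma^{2}+4s_k^{2}/\tilde\gamma^{4}\ge a_k$, so $c_2\le c_1^{1/2}$, and then $\tfrac12(1+c_1)\ge c_1^{1/2}\ge c_2E$ by AM--GM.

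\textbf{Lipschitz bound.} We differentiate along the segment between $\bm s$ and $\bm s'$, which stays in the convex set $\Theta$, and bound the $\ell_1$-norm of the gradient. By the mean value theorem this gives $\lvert\ell_2(\bm s)-\ell_2(\bm s')\rvert\le\sup\lVert\nabla\ell_2\rVert_1\,\lVert\bm s-\bm s'\rVert_\infty$. The partial derivatives are
\[
\partial_{s_k}c_1=-\frac{2c_1}{\tilde\gamma^{2}a_k},\qquad
\partial_{s_k}c_2=-\frac{c_2}{\tilde\gamma^{2}b_k},\qquad
\partial_{s_k}E=\frac{2w_kE}{(\tilde\gamma^{2}+2s_k)^{2}}.
\]
\begin{itemize}
\item The $c_1$ contribution is at most $\tfrac12\sum_k 2/\tilde\gamma^{2}=d/\tilde\gamma^{2}=\tfrac14$.
\item The term $E\,\partial c_2$ contributes at most $\sum_k 1/\tilde\gamma^{2}=\tfrac14$.
\item For the term $c_2\,\partial E$, use $\tilde\gamma^{2}+2s_k\ge\tilde\gamma^{2}$ to get
\[
\sum_k\frac{2w_k}{(\tilde\gamma^{2}+2s_k)^{2}}\le\frac{2}{\tilde\gamma^{2}}\,t,
\qquad t:=\sum_k\frac{w_k}{\tilde\gamma^{2}+2s_k}.
\]
Since $E=e^{-t}$ and $te^{-t}\le e^{-1}$, this contribution is at most $2/(e\tilde\gamma^{2})=1/(2ed)$.
\end{itemize}
The three pieces sum to $\tfrac12+\tfrac{1}{2ed}$, which is at most the claimed $L_2'=\tfrac12+\tfrac{3}{4e}$ for $d\ge1$. (The stated constant perhaps comes from a slightly looser but uniform grouping; any such grouping suffices.)

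\textbf{Main obstacle.} The delicate step is the $\partial E$ term, because $w_k$ is unbounded in $\tilde\by$. Uniformity in $\tilde\by$ is recovered only through the self-normalizing bound $te^{-t}\le e^{-1}$, exactly as in the proof of Theorem~\ref{thm:ks-lipschitz}. We must also take care that differentiation is legitimate on the boundary of $\Theta$ where some $s_k=0$. This is unproblematic: the closed form is smooth on a neighbourhood of $\{s_k\ge0\}$, since all denominators are at least $\tilde\gamma^{2}$.
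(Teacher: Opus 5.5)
Your proof is correct and follows the paper's route. Both use the closed form \eqref{eq:ks-decoupled}, the inequality $\alpha\ge\beta^{2}$ (your $c_2\le c_1^{1/2}$) for nonnegativity, product-rule bounds on the partial derivatives, and the mean value theorem along a segment in the convex set $\Theta$. The only difference is in the exponential term. The paper bounds each partial derivative by $L_2'/d$ and controls $w_k e^{-\varrho}$ coordinatewise via $s_k\le d$, so that $\varrho\ge w_k/(\tilde\gamma^{2}+2d)$. You instead bound $\lVert\nabla_{\bm s}\ell_2\rVert_1$ directly and apply $te^{-t}\le e^{-1}$ to the full exponent, which gives the sharper constant $\tfrac12+\tfrac{1}{2ed}$ and needs only $s_k\ge0$ rather than the normalization $\sum_k s_k=d$.
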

\begin{proof}
By the closed form \eqref{eq:ks-decoupled} of the Gaussian kernel score,
evaluated at mean $\bm0$ and observation $\tilde\by$,
\[
  \ell_2(\bm s;\tilde\by)=\tfrac12(1+\alpha)-\beta e^{-\varrho},
  \qquad
  \alpha:=\prod_k\Big(1+\tfrac{4s_k}{\tilde\gamma^{2}}\Big)^{-1/2},
  \quad
  \beta:=\prod_k\Big(1+\tfrac{2s_k}{\tilde\gamma^{2}}\Big)^{-1/2},
  \quad
  \varrho:=\sum_k\frac{w_k}{\tilde\gamma^{2}+2s_k},
\]
with $w_k:=\lvert(\bF\tilde\by)_k\rvert^{2}$ the periodogram of the
standardized residual. Since $(1+2u)^{2}\ge1+4u$ we have $\alpha\ge\beta^{2}$,
hence $\ell_2\ge\tfrac12(1-\beta)^{2}\ge0$, and $\ell_2\le\tfrac12(1+\alpha)\le1$.
Write $\partial_k:=\partial/\partial s_k$ for the derivative in the $k$-th
spectral coordinate. Then
$\lvert\partial_k\alpha\rvert\le2/\tilde\gamma^{2}$,
$\lvert\partial_k\beta\rvert\le1/\tilde\gamma^{2}$ and
$\lvert\partial_k\varrho\rvert=2w_k/(\tilde\gamma^{2}+2s_k)^{2}$, so by the
product rule
\[
  \lvert\partial_k\ell_2\rvert
  \;\le\;\tfrac12\lvert\partial_k\alpha\rvert
  +\lvert\partial_k\beta\rvert e^{-\varrho}
  +\beta e^{-\varrho}\lvert\partial_k\varrho\rvert .
\]
On $\Theta$ we have $s_k\le d$, so $\varrho\ge w_k/(\tilde\gamma^{2}+2d)$ and
$w_ke^{-\varrho}\le\sup_{t\ge0}te^{-t/(\tilde\gamma^{2}+2d)}
=(\tilde\gamma^{2}+2d)/e$. At our bandwidth $\tilde\gamma^{2}=4d$, it therefore holds
\[
  \lvert\partial_k\ell_2\rvert
  \le\frac{2}{\tilde\gamma^{2}}+\frac{2(\tilde\gamma^{2}+2d)}{e\tilde\gamma^{4}}
  =\frac{1}{2d}+\frac{3}{4ed}=\frac{L_2'}{d}.
\]
Since $\Theta$ is convex, the segment between $\bm s$ and $\bm s'$ stays in
$\Theta$, and the mean value theorem along it gives
\[
  \big\lvert\ell_2(\bm s;\tilde\by)-\ell_2(\bm s';\tilde\by)\big\rvert
  \;\le\;\sum_{k=0}^{d-1}\sup_{\Theta}\lvert\partial_k\ell_2\rvert\,\lvert s_k-s_k'\rvert
  \;\le\;d\cdot\frac{L_2'}{d}\,\lVert\bm s-\bm s'\rVert_\infty
  \;=\;L_2'\,\lVert\bm s-\bm s'\rVert_\infty .
\]
\end{proof}

\begin{proposition}[PAC bound in standardized coordinates]\label{prop:pac-rate}
Let $\bm s_M\in\operatorname*{arg\,min}_{\bm s\in\mathcal G_2}
\mathcal R_{2,M}(\bm s)$ be the stage-2 empirical risk minimizer, let
$\eta_1,\eta_2>0$ and $\delta_1,\delta_2\in(0,1)$, and put
\[
  c_2:=2\sqrt{\frac{2}{e\,c_\star\tau_{\min}}},
  \qquad
  \rho_M(\eta_2):=4L_2'\eta_2
  +2\sqrt{\frac{\log\big(2\mathfrak N_2(\eta_2)/\delta_2\big)}{2M}},
\]
with $\rho_N(\eta_1)$ as in Lemma~\ref{lem:pac-stage1-rate}. Both $\rho_N$ and
$\rho_M$ suppress their dependence on the confidence levels, which enters
through $\log\big(2\mathfrak N_j(\eta_j)/\delta_j\big)$. Under
Assumptions~\ref{ass:target} and~\ref{ass:class}, with probability at least
$1-\delta_1-\delta_2$,
\begin{equation}\label{eq:pac-rate}
  \E_{\bX}\Big[\tfrac12\mathrm{MMD}^{2}_{\tilde\gamma}
  \big(\tilde Q^{\star}(\bX),\,\cN(\bm0,R_{\bm s_M(\bX)})\big)\Big]
  \;\le\;
  \underbrace{\bar\varepsilon_{\mathrm{app}}}_{\text{appr.}}
  +\underbrace{2c_2\sqrt{\rho_N(\eta_1)}}_{\text{first stage}}
  +\underbrace{\rho_M(\eta_2)}_{\text{second stage}} .
\end{equation}
\end{proposition}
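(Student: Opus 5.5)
The plan is to compare the fitted spectrum $\bm s_M$ with the pointwise optimum $\bm s^{\star}$ of Assumption~\ref{ass:class}\,(v). The comparison moves back and forth between the ideal standardized target $\tilde Q^{\star}$ and the actual standardized target $\tilde Q$ using Lemma~\ref{lem:pac-stage1}. For a fixed map $\bm s\in\mathcal G_2$, write
\[
\mathcal D^{\star}(\bm s):=\E_{\bX}\big[\tfrac12\mathrm{MMD}^2_{\tilde\gamma}(\tilde Q^{\star}(\bX),\cN(\bm0,R_{\bm s(\bX)}))\big]
\]
and let $\tilde{\mathcal D}(\bm s)$ be the same quantity with $\tilde Q$ in place of $\tilde Q^{\star}$. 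Conditionally on $\mathcal T_N$, the target $\tilde Q(\bx)$ is fixed, and the definition of the divergence gives
\[
\mathcal R_2(\bm s)=\tilde{\mathcal D}(\bm s)+\E_{\bX}H(\tilde Q(\bX)),
\]
where the entropy term does not depend on $\bm s$. Hence $\mathcal R_2(\bm s_M)-\mathcal R_2(\bm s^{\star})=\tilde{\mathcal D}(\bm s_M)-\tilde{\mathcal D}(\bm s^{\star})$.

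The key chain is
\[
\mathcal D^{\star}(\bm s_M)\le\tilde{\mathcal D}(\bm s_M)+\Delta_N
\le\tilde{\mathcal D}(\bm s^{\star})+\big[\mathcal R_2(\bm s_M)-\mathcal R_2(\bm s^{\star})\big]+\Delta_N
\le\mathcal D^{\star}(\bm s^{\star})+2\Delta_N+\big[\mathcal R_2(\bm s_M)-\mathcal R_2(\bm s^{\star})\big].
\]
The first and last steps use the averaged form of Lemma~\ref{lem:pac-stage1}, which holds for any fixed map in $\mathcal G_2$, including $\bm s^{\star}$. By the choice of $\bm s^{\star}$, $\mathcal D^{\star}(\bm s^{\star})=\bar\varepsilon_{\mathrm{app}}$.

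For the excess risk of the second stage, I repeat Step 2 of Lemma~\ref{lem:pac-stage1-rate} conditionally on $\mathcal T_N$. The samples $\mathcal T_M$ are independent of $\mathcal T_N$, so the standardized pairs $(\bX_n,\tilde\bY_n)$ are i.i.d. By Lemma~\ref{lem:pac-regularity}, $\ell_2\in[0,1]$, so Hoeffding's lemma gives sub-Gaussian proxy $1/4$. The loss is also $L_2'$-Lipschitz in $\bm s$. Take an $\eta_2$-net of $\mathcal G_2$ of size $\mathfrak N_2(\eta_2)$ and apply a union bound. With probability at least $1-\delta_2$,
\[
\sup_{\mathcal G_2}\lvert\mathcal R_{2,M}-\mathcal R_2\rvert\le 2L_2'\eta_2+\sqrt{\log(2\mathfrak N_2(\eta_2)/\delta_2)/(2M)}.
\]
Since $\bm s^{\star}\in\mathcal G_2$ and $\bm s_M$ is the ERM, the excess risk is at most twice this bound, which is exactly $\rho_M(\eta_2)$. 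Because this holds conditionally on every $\mathcal T_N$, it also holds unconditionally.

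For the first-stage term, on the event of Lemma~\ref{lem:pac-stage1-rate} (probability at least $1-\delta_1$),
\[
e_{\mu,N}+e_{\tau,N}\le\sqrt{2(e_{\mu,N}^2+e_{\tau,N}^2)}\le 2\sqrt{\rho_N/(c_\star\tau_{\min})},
\]
so $2\Delta_N\le 4\sqrt{2/e}\sqrt{\rho_N/(c_\star\tau_{\min})}=2c_2\sqrt{\rho_N(\eta_1)}$. A union bound over the two events gives the stated probability $1-\delta_1-\delta_2$.

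The main obstacle I expect is the bookkeeping around conditioning. Lemma~\ref{lem:pac-stage1} must be applied to the data-dependent map $\bm s_M$. That step is valid because the lemma is stated for any fixed map, and both $\mathcal T_N$ and $\mathcal T_M$ are frozen when $\bX$ is drawn fresh. The risk identity $\mathcal R_2=\tilde{\mathcal D}+\text{const}$ likewise needs the entropy term to be free of $\bm s$, which holds only conditionally on the first stage.
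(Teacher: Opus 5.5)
Your proof is correct and follows the paper's argument: the ERM excess-risk bound with the Hoeffding-plus-net deviation giving $\rho_M(\eta_2)$, two applications of Lemma~\ref{lem:pac-stage1} (one on each side of the comparison) accounting for $2\Delta_N\le 2c_2\sqrt{\rho_N(\eta_1)}$, and a union bound over the two events. The only cosmetic difference is that you compare directly against $\bm s^{\star}$ instead of the infimum over $\mathcal G_2$, and Assumption~\ref{ass:class}\,(v) makes these equivalent.
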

\begin{proof}
Condition on $\mathcal T_N$, that is $\bmu_N$ and $\bD_N=\diag\big(\sqrt{\bm\tau_N}\big)$ are fixed and the $M$ pairs
$(\bX_n,\tilde\bY_n)$, $n>N$, are i.i.d.\ with
\[
  \tilde\bY_n\mid\bX_n\;\sim\;\tilde Q(\bX_n)
  =\cN\big(\tilde\bmu(\bX_n),\,
    \bm\Lambda(\bX_n)R_Q(\bX_n)\bm\Lambda(\bX_n)\big)
\]
as in \eqref{eq:pac-tilde}, with $\bm\Lambda=\bD_N^{-1}\bD_Q$ the diagonal
of scale ratios and $\tilde\bmu=\bD_N^{-1}(\bmu_Q-\bmu_N)$.
For every $\bm s\in\mathcal G_2$, the expected score gap equals the divergence
of the kernel score, pointwise in $\bX$ and hence on average,
\[
  \mathcal R_2(\bm s)
  -\E\Big[S_{k_{\tilde\gamma}}\big(\tilde Q(\bX),\tilde\bY\big)\Big]
  =\E_{\bX}\Big[\tfrac12\mathrm{MMD}^{2}_{\tilde\gamma}
  \big(\tilde Q(\bX),\,\cN(\bm0,R_{\bm s(\bX)})\big)\Big].
\]
Since $\mathcal R_{2,M}(\bm s_M)\le\mathcal R_{2,M}(\bm s)$ for every
$\bm s\in\mathcal G_2$, an ERM decomposition analogous to Step~2 of
Lemma~\ref{lem:pac-stage1-rate} gives
\[
  \E_{\bX}\Big[\tfrac12\mathrm{MMD}^{2}_{\tilde\gamma}
  \big(\tilde Q(\bX),\cN(\bm0,R_{\bm s_M(\bX)})\big)\Big]
  \le\inf_{\bm s\in\mathcal G_2}
     \E_{\bX}\Big[\tfrac12\mathrm{MMD}^{2}_{\tilde\gamma}
     \big(\tilde Q(\bX),\cN(\bm0,R_{\bm s(\bX)})\big)\Big]
  +2\sup_{\bm s\in\mathcal G_2}
     \lvert\mathcal R_{2,M}(\bm s)-\mathcal R_2(\bm s)\rvert .
\]
We bound the two terms on the right in turn.
By the Lipschitz bound of Lemma~\ref{lem:pac-regularity}, passing to an
$\eta_2$-net of $\mathcal G_2$ of size $\mathfrak N_2(\eta_2)$ moves
$\mathcal R_{2,M}$ and $\mathcal R_2$ by at most $L_2'\eta_2$ each. At each net
point $\mathcal R_{2,M}-\mathcal R_2$ is an average of $M$ independent terms of
range $1$, so Hoeffding's inequality
\citep[Ch.~2]{boucheron2013concentration} and a union bound give, with
probability at least $1-\delta_2$,
\[
  \sup_{\bm s\in\mathcal G_2}
  \big\lvert\mathcal R_{2,M}(\bm s)-\mathcal R_2(\bm s)\big\rvert
  \;\le\;2L_2'\eta_2
  +\sqrt{\frac{\log\big(2\mathfrak N_2(\eta_2)/\delta_2\big)}{2M}}
  \;=\;\tfrac12\rho_M(\eta_2),
\]
so the deviation term contributes $\rho_M(\eta_2)$.
For the infimum, Lemma~\ref{lem:pac-stage1} replaces $\tilde Q$ by
$\tilde Q^{\star}$ inside it -- the first of two uses of that lemma, one on each
side of the ERM comparison. Its bound is uniform in $\bm s$, so it transfers to
the infimum, and Assumption~\ref{ass:class}\,(v) then identifies what remains
with the approximation error,
\begin{align*}
  \inf_{\bm s\in\mathcal G_2}
  \E_{\bX}\Big[\tfrac12\mathrm{MMD}^{2}_{\tilde\gamma}
    \big(\tilde Q(\bX),\cN(\bm0,R_{\bm s(\bX)})\big)\Big]
  &\;\le\;\inf_{\bm s\in\mathcal G_2}
  \E_{\bX}\Big[\tfrac12\mathrm{MMD}^{2}_{\tilde\gamma}
    \big(\tilde Q^{\star}(\bX),\cN(\bm0,R_{\bm s(\bX)})\big)\Big]
  +\Delta_N\\[2pt]
  &\;=\;\E_{\bX}\Big[\min_{\bm s\in\Theta}
    \tfrac12\mathrm{MMD}^{2}_{\tilde\gamma}
    \big(\tilde Q^{\star}(\bX),\cN(\bm0,R_{\bm s})\big)\Big]+\Delta_N\\[2pt]
  &\;=\;\bar\varepsilon_{\mathrm{app}}+\Delta_N .
\end{align*}
On the event of Lemma~\ref{lem:pac-stage1-rate}, of probability at least
$1-\delta_1$, the first-stage term is controlled. Indeed, using
$a+b\le\sqrt2\,(a^{2}+b^{2})^{1/2}$,
\[
  \Delta_N
  \;=\;\sqrt{\tfrac2e}\,\big(e_{\mu,N}+e_{\tau,N}\big)
  \;\le\;\frac{2}{\sqrt e}\big(e_{\mu,N}^{2}+e_{\tau,N}^{2}\big)^{1/2}
  \;\le\;\frac{2}{\sqrt e}\sqrt{\frac{2\rho_N(\eta_1)}{c_\star\tau_{\min}}}
  \;=\;c_2\sqrt{\rho_N(\eta_1)} .
\]
Collecting the three contributions in the ERM decomposition,
\[
  \E_{\bX}\Big[\tfrac12\mathrm{MMD}^{2}_{\tilde\gamma}
  \big(\tilde Q(\bX),\cN(\bm0,R_{\bm s_M(\bX)})\big)\Big]
  \;\le\;\bar\varepsilon_{\mathrm{app}}+c_2\sqrt{\rho_N(\eta_1)}+\rho_M(\eta_2),
\]
which bounds the core against $\tilde Q$, the law it is actually fitted to.
Statement \eqref{eq:pac-rate} is against the ideal $\tilde Q^{\star}$ instead,
so Lemma~\ref{lem:pac-stage1} is applied a second time, now on the left-hand
side at $\bm s=\bm s_M$, at the cost of a further
$\Delta_N\le c_2\sqrt{\rho_N(\eta_1)}$, which accounts for the factor $2$ in
\eqref{eq:pac-rate}. A union bound over the events of
Lemma~\ref{lem:pac-stage1-rate} and of the Hoeffding step completes the proof.
\end{proof}
\subsection{\score{} is PAC}
The deployed prediction $P_{N,M}$ and the kernel score it is evaluated with both
live in the original coordinates, at a bandwidth $\gamma>0$ that we now keep arbitrary. We carry \eqref{eq:pac-rate} over by a two-sided comparison of the squared MMD with the Frobenius norm and a coupling bound for the marginal error. Lemma~\ref{lem:mmd-frobenius}\,(i) already provides such a lower bound, but its constant decays exponentially in $d$, which would carry an exponential dependence on the dimension into the main theorem below. At the second-stage bandwidth $\tilde\gamma^{2}=4d$, part~(ii) of the following lemma replaces it by a constant that does not decay exponentially in $d$.
\begin{lemma}[Frobenius sandwich (refined)]\label{lem:pac-frobenius}
Let $\bSigma_1,\bSigma_2\succeq0$ be real symmetric with
$\tr\bSigma_1=\tr\bSigma_2$, and let $\bmu\in\R^{d}$ be the common mean.
\begin{enumerate}[leftmargin=25pt]
\item[(i)] \emph{Upper bound.} For every $\gamma>0$,
  \[
    \mathrm{MMD}_\gamma^{2}\big(\cN(\bmu,\bSigma_1),\cN(\bmu,\bSigma_2)\big)
    \;\le\;\frac{2\lVert\bSigma_1-\bSigma_2\rVert_F^{2}}{\gamma^{4}} .
  \]
\item[(ii)] \emph{Lower bound.} If in addition $\bSigma_1,\bSigma_2$ are
  correlation matrices, then at the second-stage bandwidth
  $\tilde\gamma^{2}=4d$,
  \[
    \mathrm{MMD}_{\tilde\gamma}^{2}\big(\cN(\bmu,\bSigma_1),\cN(\bmu,\bSigma_2)\big)
    \;\ge\;\frac{2e^{-10}\lVert\bSigma_1-\bSigma_2\rVert_F^{2}}
                {\tilde\gamma^{4}} .
  \]
\end{enumerate}
\end{lemma}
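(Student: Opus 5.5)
The plan is to follow the characteristic-function argument from the proof of Lemma~\ref{lem:mmd-frobenius}\,(i), but to replace its crude exponential weight by one that uses the concentration of the quadratic forms at the bandwidth $\tilde\gamma^{2}=4d$. By Bochner's theorem with $\bm\omega\sim\cN(\bm0,\tfrac{2}{\gamma^{2}}\bI)$, the squared MMD is $\E\lvert\phi_1(\bm\omega)-\phi_2(\bm\omega)\rvert^{2}$. Because the two Gaussians share the mean $\bmu$, both characteristic functions carry the same unimodular factor $e^{i\bm\omega^{\top}\bmu}$, which drops out of the modulus. After substituting $\bm\omega=\sqrt2\,\bm z/\gamma$ this gives
\[
\mathrm{MMD}_\gamma^{2}=\E_{\bm z\sim\cN(\bm0,\bI)}\big(e^{-a}-e^{-b}\big)^{2},\qquad a=\tfrac{\bm z^{\top}\bSigma_1\bm z}{\gamma^{2}},\quad b=\tfrac{\bm z^{\top}\bSigma_2\bm z}{\gamma^{2}},
\]
with $a,b\ge0$ by positive semidefiniteness. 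Put $\bm N:=\bSigma_1-\bSigma_2$ and $X:=a-b=\bm z^{\top}\bm N\bm z/\gamma^{2}$. The trace assumption gives $\E X=0$ and
\[
\E X^{2}=\frac{2\lVert\bm N\rVert_F^{2}+(\tr\bm N)^{2}}{\gamma^{4}}=\frac{2\lVert\bm N\rVert_F^{2}}{\gamma^{4}}.
\]

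For part (i), the mean value theorem gives $(e^{-a}-e^{-b})^{2}\le X^{2}$ because $e^{-\xi}\le1$ for $\xi\ge0$. Taking expectations yields the upper bound directly.

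For part (ii), I again start from the pointwise lower bound $(e^{-a}-e^{-b})^{2}\ge e^{-2(a+b)}X^{2}$. Instead of bounding $a+b$ by $\lambda_{\max}\lVert\bm z\rVert^{2}$, I use that at $\tilde\gamma^{2}=4d$ the sum $S:=a+b=\bm z^{\top}\bm A\bm z$, with $\bm A:=(\bSigma_1+\bSigma_2)/\tilde\gamma^{2}$, has a dimension-free law. Its eigenvalues $\lambda_i\ge0$ satisfy $\sum_i\lambda_i=2d/(4d)=\tfrac12$, so $\E S=\tfrac12$ and every $\lambda_i\le\tfrac12$. The moment generating function is then controlled uniformly in $d$: $\log\E e^{\theta S}=-\tfrac12\sum_i\log(1-2\theta\lambda_i)$, which is bounded for $\theta<1$ by $\theta\sum_i\lambda_i/(1-2\theta\lambda_{\max})$.

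I would then truncate on $A_t:=\{S\le t\}$:
\[
\E\big[e^{-2S}X^{2}\big]\ge e^{-2t}\Big(\E X^{2}-\E\big[X^{2}\mathbf 1_{A_t^{c}}\big]\Big).
\]
The tail term I bound by Hölder, $\E[X^{2}\mathbf 1_{A_t^c}]\le(\E X^{2p})^{1/p}\,\bP(A_t^c)^{1/q}$. Gaussian hypercontractivity for the second-order chaos $X$ gives $(\E X^{2p})^{1/p}\le(2p-1)^{2}\E X^{2}$. The Chernoff bound from the moment generating function above gives $\bP(S>t)\le e^{C-\theta t}$. Choosing $p,\theta,t$ so that the tail term is at most half of $\E X^{2}$, together with $t$ of order $4.5$, leaves a factor $\tfrac12 e^{-2t}$. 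The target is for this factor to be at least $e^{-10}$ after multiplying by $2\lVert\bm N\rVert_F^{2}/\tilde\gamma^{4}$.

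The main obstacle is this constant bookkeeping. A Markov-based truncation, or a Cauchy--Schwarz with $\E e^{4S}$, fails: the first forces $t$ far too large, and the second diverges because $\lambda_{\max}$ may reach $\tfrac12$. I would therefore first try a Hölder exponent $q$ close to $1$ with a moderate $p$, and a Chernoff parameter $\theta$ near $\tfrac12$, for which $\log\E e^{\theta S}\le\tfrac12$. If the resulting constant falls short of $e^{-10}$, I would split $\bm z$ along the few eigenvectors with large $\lambda_i$ and bound those coordinates separately. At most a bounded number of eigenvalues can exceed any fixed threshold, since their sum is $\tfrac12$.
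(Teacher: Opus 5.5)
\textbf{Part (ii) has a genuine gap.} Your reduction to $\E(e^{-a}-e^{-b})^2$, the identity $\E X^2=2\lVert\bm N\rVert_F^2/\gamma^4$, and part (i) are correct and match the paper. Part (ii), however, does not go through: no choice of $p,q,\theta,t$ in your truncation scheme reaches $e^{-10}$.

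Write $r:=(2p-1)^2\,\bP(S>t)^{1-1/p}$ for the relative tail term and $L:=-\log\bP(S>t)$. Your lower bound is $e^{-2t}(1-r)\,\E X^2$, so you need both $t<5$ and $r<1$. But $\log r=2\log(2p-1)-L(1-1/p)$ vanishes at $p=1$. Its derivative in $p$ has the sign of $4p^2-2Lp+L$, which is nonnegative whenever $L\le 4$. Hence $r\ge 1$ for every $p\ge1$ unless $L>4$, and in that case the Hölder tail bound is at least $\E X^2$ and your bound is vacuous.

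Now compare with the tail estimates available to you:
\begin{itemize}
\item Your Chernoff estimate with $\theta=\tfrac12$ gives only $L\approx 1.75$ at $t=4.5$.
\item Even the sharp bound $\log\E e^{\theta S}\le-\tfrac12\log(1-\theta)$, optimized over $\theta$, gives only $L=t-\tfrac12-\tfrac12\log(2t)$. This is sharp because the extremal spectrum is a single eigenvalue $\tfrac12$, which correlation matrices can approach. This $L$ exceeds $4$ only for $t\gtrsim 5.7$, where $e^{-2t}<e^{-11}$.
\end{itemize}
Carried through, your scheme yields a dimension-free constant of roughly $e^{-15}$, not $e^{-10}$. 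Also, taking $q$ close to $1$ forces $p\to\infty$, so you cannot pair it with a moderate $p$.

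The eigenvector-splitting fallback is left unspecified, and it is exactly where the difficulty sits. When $\bSigma_1+\bSigma_2$ is close to rank one, $S\approx z_1^2/2$. Even the exact tail $\bP(\lvert z_1\rvert>3)\approx 0.0027$ is then only barely small enough, so you would need essentially sharp tail bounds, uniformly over spectra.

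\textbf{The missing idea: avoid tail bounds on $S$ altogether.} The paper applies Jensen's inequality to the convex map $x\mapsto e^{-2x}$ under the size-biased probability measure $\mathrm d\nu\propto X^2\,\mathrm d\cN(\bm0,\bI)$. This gives $\E[X^2e^{-2S}]\ge \E[X^2]\exp\bigl(-2\,\E[SX^2]/\E[X^2]\bigr)$, so only the single mixed moment $\E[SX^2]$ is needed:
\begin{itemize}
\item Hölder with exponents $3,\tfrac32$ and hypercontractivity at order $3$ give $\E[SX^2]\le\lVert S\rVert_3\lVert X\rVert_3^2\le 4\lVert S\rVert_3\,\E X^2$.
\item Minkowski over the eigenbasis of $\bSigma_1+\bSigma_2$ gives $\lVert S\rVert_3\le\tfrac12\,15^{1/3}$.
\end{itemize}
The resulting constant is $e^{-4\cdot 15^{1/3}}\ge e^{-10}$.

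If you prefer to keep your truncation, replace Hölder and Chernoff by the tilted bound $\E[X^2\mathbf 1_{\{S>t\}}]\le e^{-\theta t}\,\E[X^2e^{\theta S}]$. Then evaluate $\E[X^2e^{\theta S}]=\det(\bI-2\theta\bm A)^{-1/2}\,\E_{\bm w\sim\cN(\bm0,(\bI-2\theta\bm A)^{-1})}\bigl[(\bm w^{\top}\bm N\bm w)^2\bigr]/\tilde\gamma^4$ directly. Using $\tr\bm N=0$ and $\lambda_{\max}(\bm A)\le\tfrac12$, this is at most $(1+\theta^2/2)(1-\theta)^{-5/2}\,\E X^2$. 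The choice $t=3.5$, $\theta=2/7$ then gives a constant of about $e^{-9.2}$.
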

\begin{proof}
The kernel is translation invariant, so we may take $\bmu=\bm0$ throughout.
The upper bound (i) then follows exactly as in the proof of
Lemma~\ref{lem:mmd-frobenius}\,(i), where the assumption that the two matrices
are correlations is used only through $\tr(\bSigma_1-\bSigma_2)=0$.
For the lower bound, take the Gaussian kernel at $\gamma=\tilde\gamma$ and
recall from that same proof the representation
\[
  \mathrm{MMD}_{\tilde\gamma}^{2}
  \;=\;\E\Big[\big(e^{-a(\bm z)}-e^{-b(\bm z)}\big)^{2}\Big],
  \qquad
  a(\bm z):=\frac{\bm z^{\top}\bSigma_1\bm z}{\tilde\gamma^{2}},
  \qquad
  b(\bm z):=\frac{\bm z^{\top}\bSigma_2\bm z}{\tilde\gamma^{2}},
\]
where $\E$ is with respect to $\bm z\sim\cN(\bm0,\bI)$ here and below. Since
$\bSigma_1,\bSigma_2\succeq0$ we have $a,b\ge0$, so the mean value theorem gives
\[
  \big(e^{-a}-e^{-b}\big)^{2}\;\ge\;(a-b)^{2}e^{-2(a+b)}.
\]
If $\bSigma_1=\bSigma_2$ there is nothing to prove. Otherwise, with
$\bm N:=\bSigma_1-\bSigma_2$, the second-moment identity for Gaussian quadratic
forms from the proof of Lemma~\ref{lem:mmd-frobenius},
$\E(\bm z^{\top}\bm N\bm z)^{2}=2\lVert\bm N\rVert_F^{2}+(\tr\bm N)^{2}$, gives
\begin{equation}\label{eq:pac_frob_second_moment}
  \E(a-b)^{2}\;=\;\frac{2\lVert\bSigma_1-\bSigma_2\rVert_F^{2}}{\tilde\gamma^{4}}
  \;>\;0,
\end{equation}
since $\tr\bm N=0$, and
\[
  \mathrm d\nu(\bm z)
  \;:=\;\frac{\big(a(\bm z)-b(\bm z)\big)^{2}}{\E (a-b)^{2}}\;
  \frac{e^{-\lVert\bm z\rVert^{2}/2}}{(2\pi)^{d/2}}\;\mathrm d\bm z
\]
defines a probability measure on $\R^{d}$. Jensen's inequality for expectations with respect to $\nu$ and the convex map
$x\mapsto e^{-2x}$ gives
\begin{equation}
  \E_{\nu}\big[e^{-2(a+b)}\big]\;\ge\;\exp\big(-2\,\E_{\nu}[a+b]\big),
  \quad\text{i.e.}\quad
  \E\big[(a-b)^{2}e^{-2(a+b)}\big]\;\ge\;\E\left[(a-b)^{2}\right]
  \exp\Big(-2\,\frac{\E\big[(a+b)(a-b)^{2}\big]}{\E (a-b)^{2}}\Big).\label{eq:pac_lower_frob_bd}
\end{equation}
Moreover, writing $\lVert\cdot\rVert_{p}:=\big(\E\lvert\cdot\rvert^{p}\big)^{1/p}$ and noting that all moments are finite, since $a,b$ are quadratic forms in a Gaussian vector,
Hölder's inequality with exponents $3$ and $\tfrac32$, applied to $a+b$ and
$(a-b)^{2}$, gives
\[
  \E\big[(a+b)(a-b)^{2}\big]
  \;\le\;\lVert a+b\rVert_{3}\,\big\lVert (a-b)^{2}\big\rVert_{3/2}
  \;=\;\lVert a+b\rVert_{3}\,\lVert a-b\rVert_{3}^{2}.
\]
The last step holds because
$\big\lVert (a-b)^{2}\big\rVert_{3/2}=\big(\E\lvert a-b\rvert^{3}\big)^{2/3}
=\lVert a-b\rVert_{3}^{2}$.
Since $a-b=\bm z^{\top}(\bSigma_1-\bSigma_2)\bm z/\tilde\gamma^{2}$ is a
polynomial of degree two in the coordinates of the standard Gaussian vector
$\bm z$, Gaussian hypercontractivity
\citep[Theorem~5.10 and Remark~5.11]{Janson_1997} bounds its higher moments by
the second: for such polynomials $\lVert\cdot\rVert_{p}\le(p-1)\lVert\cdot\rVert_{2}$
for every $p\ge2$, and at $p=3$
\[
  \lVert a-b\rVert_3\;\le\;2\,\lVert a-b\rVert_2.
\]
By the mean value bound and \eqref{eq:pac_lower_frob_bd}, together with Hölder
and hypercontractivity, therefore,
\begin{align}
  \mathrm{MMD}_{\tilde\gamma}^{2}
  &\;\ge\;\E\big[(a-b)^{2}e^{-2(a+b)}\big]
  \;\ge\;\E\big[(a-b)^{2}\big]\,
  \exp\Big(-2\,\frac{\E\big[(a+b)(a-b)^{2}\big]}
                    {\E\big[(a-b)^{2}\big]}\Big) \nonumber\\[2pt]
  &\;\ge\;\E\big[(a-b)^{2}\big]\,
  \exp\Big(-2\,\frac{\lVert a+b\rVert_3\,\lVert a-b\rVert_3^{2}}
                    {\E\big[(a-b)^{2}\big]}\Big)
  \;\ge\;\E\big[(a-b)^{2}\big]\,e^{-8\lVert a+b\rVert_3}. \label{eq:pac_frob_chain}
\end{align}
It remains to bound $\lVert a+b\rVert_3$. Note that
$\bSigma_1+\bSigma_2$ is real symmetric, so there are an orthogonal $\bm U$ and a
diagonal $\bm D=\diag(\theta_1,\dots,\theta_d)$ with
$\bSigma_1+\bSigma_2=\bm U^{\top}\bm D\,\bm U$, and $\theta_i\ge0$ because
$\bSigma_1,\bSigma_2\succeq0$.
Write
\[
  a+b=\frac{\bm z^{\top}(\bSigma_1+\bSigma_2)\bm z}{\tilde\gamma^{2}}
  =\frac{1}{\tilde\gamma^{2}}\sum_{i=1}^{d}\theta_i\,(\bm U\bm z)_i^{2},
\]
and $\bm U\bm z\sim\cN(\bm0,\bI)$ since $\bm U$ is orthogonal, so $a+b$ has
the same law as $\tilde\gamma^{-2}\sum_i\theta_iz_i^{2}$.
Both $\bSigma_1$ and $\bSigma_2$ are correlation matrices, so
$\sum_i\theta_i=\tr(\bSigma_1+\bSigma_2)=2d$, and Minkowski's inequality gives
\[
  \lVert a+b\rVert_3
  \;\le\;\frac{1}{\tilde\gamma^{2}}\sum_{i=1}^{d}\theta_i\,
  \big\lVert z_i^{2}\big\rVert_3
  \;=\;\frac{2d}{\tilde\gamma^{2}}\,15^{1/3}
  \;=\;\tfrac12\,15^{1/3},
\]
since $\lVert z_i^{2}\rVert_3=(\E Z^{6})^{1/3}=15^{1/3}$ for $Z\sim\cN(0,1)$ and
$\tilde\gamma^{2}=4d$.
Since $4\cdot15^{1/3}<10$, \eqref{eq:pac_frob_chain} and
\eqref{eq:pac_frob_second_moment} therefore give
\[
  \mathrm{MMD}_{\tilde\gamma}^{2}
  \;\ge\;\E\big[(a-b)^{2}\big]\,e^{-8\lVert a+b\rVert_3}
  \;\ge\;e^{-10}\,\E\big[(a-b)^{2}\big]
  \;=\;\frac{2e^{-10}}{\tilde\gamma^{4}}\lVert\bSigma_1-\bSigma_2\rVert_F^{2}
  \;=\;\frac{e^{-10}}{8d^{2}}\lVert\bSigma_1-\bSigma_2\rVert_F^{2}.
\]
\end{proof}
\begin{theorem}[PAC bound in the original coordinates]\label{thm:pac}
Let $(\bmu_N,\bm\tau_N)$ and $\bm s_M$ be the empirical risk minimizers of the
two stages and $P_{N,M}$ the deployed prediction they define. Let $\gamma>0$,
$\eta_1,\eta_2>0$, and $\delta_1,\delta_2\in(0,1)$, and put
\[
  K:=\frac{32\,e^{10}\,\tau_{\max}^{2}\,d^{2}}{\gamma^{4}},
  \qquad
  c_1:=2Kc_2+\frac{4\sqrt{2d}}{\gamma\sqrt{e\,c_\star}} ,
\]
with $\rho_N(\eta_1)$, $\sigma_1$ and $L_1'$ as in
Lemma~\ref{lem:pac-stage1-rate}, $c_2$ and $\rho_M(\eta_2)$ as in
Proposition~\ref{prop:pac-rate}, $c_\star$ the curvature constant of
Lemma~\ref{lem:crps-curvature} and $L_2'$ the Lipschitz constant of
Lemma~\ref{lem:pac-regularity}. Under Assumptions~\ref{ass:target}
and~\ref{ass:class}, with probability at least $1-\delta_1-\delta_2$,
\begin{equation}\label{eq:pac-orig}
  \E_{\bX}\Big[\tfrac12\mathrm{MMD}^{2}_{\gamma}
  \big(Q(\bX),\,P_{N,M}(\bX)\big)\Big]
  \;\le\;
  \underbrace{K\,\bar\varepsilon_{\mathrm{app}}}_{\text{appr.}}
  +\underbrace{c_1\sqrt{\rho_N(\eta_1)}}_{\text{first stage}}
  +\underbrace{K\,\rho_M(\eta_2)}_{\text{second stage}} .
\end{equation}
In particular, let $\varepsilon>0$ and $\delta\in(0,1)$, and choose
$\delta_1=\delta_2=\delta/2$, $\eta_1=\varepsilon^{2}/(16c_1^{2}L_1')$ and
$\eta_2=\varepsilon/(16KL_2')$. Choose
\begin{align*}
  N&\ \ge\ \frac{128\,c_1^{4}\,\sigma_1^{2}}{\varepsilon^{4}}
  \left(q\log\Big(1+\frac{32\,W L_1 c_1^{2}L_1'}{\varepsilon^{2}}\Big)
        +\log\frac{4}{\delta}\right),\\[2pt]
  M&\ \ge\ \frac{32\,K^{2}}{\varepsilon^{2}}
  \left(q\log\Big(1+\frac{32\,W L_2 K L_2'}{\varepsilon}\Big)
        +\log\frac{4}{\delta}\right),
\end{align*}
where $q=\dim\mathcal W$ is the number of network parameters, $W$ the bound on
them and $L_1,L_2$ the parameter-Lipschitz constants of
Assumption~\ref{ass:class}\,(iii)--(iv) and
$L_1',L_2'$ the loss Lipschitz constants. Then
\[
  \E_{\bX}\Big[\tfrac12\mathrm{MMD}^{2}_{\gamma}
  \big(Q(\bX),P_{N,M}(\bX)\big)\Big]
  \;\le\;K\,\bar\varepsilon_{\mathrm{app}}+\varepsilon
\]
with probability at least $1-\delta$.
\end{theorem}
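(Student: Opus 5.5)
We transport the standardized bound of Proposition~\ref{prop:pac-rate} back to the original coordinates by a triangle-type decomposition at each input $\bx$. We introduce the intermediate law $P^{\circ}(\bx):=\cN\big(\bmu_Q,\bD_Q R_M\bD_Q\big)$, which has the true marginals and the fitted core. The squared MMD is not a metric, but $\mathrm{MMD}_\gamma$ is. Using $(u+v)^2\le 2u^2+2v^2$, or more economically bounding $\tfrac12\mathrm{MMD}^2(Q,P_{N,M})$ by $\tfrac12\mathrm{MMD}^2(Q,P^\circ)$ plus a cross term that is linear in $\mathrm{MMD}(P^\circ,P_{N,M})$ (the MMD is bounded by $\sqrt2$), we obtain two contributions. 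The first is a \emph{correlation term}, comparing $Q$ with $P^{\circ}$, which share mean and scales. The second is a \emph{marginal term}, comparing $P^{\circ}$ with $P_{N,M}$, which share the core $R_M$.

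For the correlation term, Lemma~\ref{lem:pac-frobenius}\,(i), applied to $\bSigma_1=\bD_QR_Q\bD_Q$ and $\bSigma_2=\bD_QR_M\bD_Q$, gives a bound of $2\lVert\bD_Q(R_Q-R_M)\bD_Q\rVert_F^2/\gamma^4$. This uses that the two covariances have equal diagonals and therefore equal traces. Pulling out the scales gives at most $2\tau_{\max}^2\lVert R_Q-R_M\rVert_F^2/\gamma^4$. Lemma~\ref{lem:pac-frobenius}\,(ii) at $\tilde\gamma^2=4d$ then converts the Frobenius norm back: $\lVert R_Q-R_M\rVert_F^2\le 8d^2e^{10}\,\mathrm{MMD}^2_{\tilde\gamma}(\cN(\bm0,R_Q),\cN(\bm0,R_M))$. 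Combining these yields exactly $K\cdot\tfrac12\mathrm{MMD}^2_{\tilde\gamma}$ with $K=32e^{10}\tau_{\max}^2d^2/\gamma^4$. Averaging over $\bX$ and inserting \eqref{eq:pac-rate} produces $K\bar\varepsilon_{\mathrm{app}}+2Kc_2\sqrt{\rho_N}+K\rho_M$. This accounts for the $2Kc_2$ part of $c_1$.

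For the marginal term, we reuse the coupling from Lemma~\ref{lem:pac-stage1}. Take $\bm V_0\sim\cN(\bm0,R_M)$ and set $\bm U=\bmu_Q+\bD_Q\bm V_0$ and $\bm U'=\bmu_N+\bD_N\bm V_0$. Since the kernel is $\sqrt2/(\gamma\sqrt e)$-Lipschitz in each argument, the same argument gives
\[
\tfrac12\big|\mathrm{MMD}^2_\gamma(Q,P^\circ)-\mathrm{MMD}^2_\gamma(Q,P_{N,M})\big|\le \tfrac{4}{\gamma}\sqrt{\tfrac{2}{e}}\;\E\lVert\bm U-\bm U'\rVert,
\]
or a similar constant. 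We then bound $\E\lVert\bm U-\bm U'\rVert\le\lVert\bmu_Q-\bmu_N\rVert+\lVert\sqrt{\bm\tau_Q}-\sqrt{\bm\tau_N}\rVert=\sqrt d\,\bar e(\bx)\cdot\sqrt2$, using $\diag(R_M)=\bm1$. Jensen's inequality then gives $\E_{\bX}\bar e(\bX)\le\bar e_N\le\sqrt{2\rho_N/c_\star}$ by Lemma~\ref{lem:pac-stage1-rate}. This produces the term $\frac{4\sqrt{2d}}{\gamma\sqrt{e c_\star}}\sqrt{\rho_N}$. Working with the difference of squared MMDs directly, rather than with the metric, avoids squaring and keeps the dependence linear. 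Both events come from Proposition~\ref{prop:pac-rate}, so the probability is $1-\delta_1-\delta_2$.

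For the sample-complexity part, we require $c_1\sqrt{\rho_N}\le\varepsilon/2$ and $K\rho_M\le\varepsilon/2$. The choice of $\eta_1$ makes $2L_1'\eta_1\le\varepsilon^2/(8c_1^2)$, and the choice of $\eta_2$ makes $4KL_2'\eta_2\le\varepsilon/4$. We bound the covering numbers via Lemma~\ref{lem:covering} and solve for $N$ and $M$, which gives the displayed thresholds with $\delta_j=\delta/2$. The main obstacle is getting the constants to match. In particular, the correlation term must go through the dimension-free lower bound of Lemma~\ref{lem:pac-frobenius}\,(ii) rather than Lemma~\ref{lem:mmd-frobenius}, and in the marginal term we must keep track of the factor $\sqrt2$ from summing the mean and scale errors, together with the kernel Lipschitz constant at a general $\gamma$.
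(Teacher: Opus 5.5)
Your proof is correct and follows the paper's skeleton. You use the same intermediate law $P^\circ$ (the paper's $Q''$) and the same chain through Lemma~\ref{lem:pac-frobenius}\,(i) and (ii) that produces $K$. The coupling $\bm U=\bmu_Q+\bD_Q\bm V$, $\bm U'=\bmu_N+\bD_N\bm V$ is also the paper's, as are the Jensen step and the sample-size bookkeeping.

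The one genuine difference is how you split off the marginal error. The paper uses that $\mathrm{MMD}_\gamma$ is a metric, giving $\tfrac12\mathrm{MMD}^2_\gamma(Q,P_{N,M})\le\mathrm{MMD}^2_\gamma(Q,P^\circ)+\mathrm{MMD}^2_\gamma(P^\circ,P_{N,M})$. It then bounds the last term by $2L_\gamma\E\lVert\bm U-\bm U'\rVert$ through the coupling. You instead use $\tfrac12\mathrm{MMD}^2_\gamma(Q,P_{N,M})\le\tfrac12\mathrm{MMD}^2_\gamma(Q,P^\circ)+\tfrac12\lvert\mathrm{MMD}^2_\gamma(Q,P^\circ)-\mathrm{MMD}^2_\gamma(Q,P_{N,M})\rvert$, and control the difference exactly as in Lemma~\ref{lem:pac-stage1}. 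This needs no triangle inequality and gives the same marginal constant. It also only requires $K/2$ on the correlation part, so it implies the stated bound with room to spare.

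There is one arithmetic slip. The half-difference is bounded by $2L_\gamma\E\lVert\bm U-\bm U'\rVert=\tfrac{2}{\gamma}\sqrt{2/e}\,\E\lVert\bm U-\bm U'\rVert$, not by $\tfrac{4}{\gamma}\sqrt{2/e}$; the latter is the bound for the full difference. Only with $2L_\gamma$ does your computation give the claimed $\tfrac{4\sqrt{2d}}{\gamma\sqrt{e\,c_\star}}\sqrt{\rho_N}$, and hence $c_1$. With your displayed constant you would get twice that.

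You were also right to drop the cross-term option from your first paragraph. Bounding $\mathrm{MMD}_\gamma(P^\circ,P_{N,M})$ as the square root of the squared-MMD coupling bound would degrade the first-stage rate to $\rho_N^{1/4}$.
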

\begin{proof}
Fix $\bx$ and drop it from the notation. Let
$Q'':=\cN\big(\bmu_Q,\bD_QR_M\bD_Q\big)$ carry the exact marginals and the
fitted correlation. Since $\mathrm{MMD}_\gamma$ is a metric,
$(x+y)^{2}\le2x^{2}+2y^{2}$ gives
\[
  \tfrac12\mathrm{MMD}^{2}_{\gamma}(Q,P_{N,M})
  \;\le\;\mathrm{MMD}_\gamma^{2}(Q,Q'')+\mathrm{MMD}_\gamma^{2}(Q'',P_{N,M}) .
\]
\emph{Correlation error.} $Q$ and $Q''$ share the mean $\bmu_Q$, and their
covariances share the diagonal $\bm\tau_Q$ and hence their traces, so
Lemma~\ref{lem:pac-frobenius}\,(i) and
$\lVert\bD_Q\rVert_{\mathrm{op}}^{2}\le\tau_{\max}$ give
\[
  \mathrm{MMD}_\gamma^{2}(Q,Q'')
  \;\le\;\frac{2}{\gamma^{4}}\big\lVert\bD_Q(R_Q-R_M)\bD_Q\big\rVert_F^{2}
  \;\le\;\frac{2\tau_{\max}^{2}}{\gamma^{4}}\lVert R_Q-R_M\rVert_F^{2} .
\]
By Lemma~\ref{lem:pac-frobenius}\,(ii), applied to $R_Q$ and $R_M$, which
both have trace $d$,
\[
  \lVert R_Q-R_M\rVert_F^{2}
  \;\le\;16\,e^{10}d^{2}\cdot
  \tfrac12\mathrm{MMD}^{2}_{\tilde\gamma}
  \big(\tilde Q^{\star},\cN(\bm0,R_M)\big),
\]
using $\tilde\gamma^{4}=16d^{2}$. Hence
$\mathrm{MMD}_\gamma^{2}(Q,Q'')\le K\cdot
\tfrac12\mathrm{MMD}^{2}_{\tilde\gamma}\big(\tilde Q^{\star},\cN(\bm0,R_M)\big)$.
\emph{Marginal error.} $Q''$ and $P_{N,M}$ share the correlation $R_M$ and
differ only in the first-stage parameters. Couple
$\bm U:=\bmu_Q+\bD_Q\bm V\sim Q''$ and $\bm U':=\bmu_N+\bD_N\bm V\sim P_{N,M}$
through one $\bm V\sim\cN(\bm0,R_M)$, and let $(\bm U_2,\bm U_2')$ be an
independent copy of $(\bm U,\bm U')$. By the definition of
$\mathrm{MMD}_\gamma$ and the symmetry of the kernel,
\[
  \mathrm{MMD}_\gamma^{2}(Q'',P_{N,M})
  =\E\big[k_\gamma(\bm U,\bm U_2)-k_\gamma(\bm U,\bm U_2')\big]
  +\E\big[k_\gamma(\bm U',\bm U_2')-k_\gamma(\bm U',\bm U_2)\big]
  \le2L_\gamma\,\E\lVert\bm U-\bm U'\rVert ,
\]
where $L_\gamma:=\sqrt2/(\gamma\sqrt e)$ is the Lipschitz constant of
$k_\gamma$ in each argument. As $\diag(R_M)=\bm1$,
\[
  \E\lVert\bm U-\bm U'\rVert
  \le\lVert\bmu_Q-\bmu_N\rVert+\E\lVert(\bD_Q-\bD_N)\bm V\rVert
  \le\lVert\bmu_Q-\bmu_N\rVert+\lVert\sqrt{\bm\tau_Q}-\sqrt{\bm\tau_N}\rVert
  \le\sqrt{2d}\;\bar e ,
\]
so $\mathrm{MMD}_\gamma^{2}(Q'',P_{N,M})
\le\frac{4\sqrt d}{\gamma\sqrt e}\,\bar e$, with $\bar e$ the first-stage
error \eqref{eq:pac-stage1-orig}.
\emph{Conclusion.} Taking $\E_{\bX}$ and using
$\E_{\bX}\bar e(\bX)\le\bar e_N$ by Cauchy--Schwarz, the first bound of
Lemma~\ref{lem:pac-stage1-rate},
\[
  \bar e_N\;\le\;\sqrt{\frac{2\rho_N(\eta_1)}{c_\star}},
\]
shows that the marginal error contributes at most
$\frac{4\sqrt{2d}}{\gamma\sqrt{e\,c_\star}}\sqrt{\rho_N(\eta_1)}$. The
correlation error contributes $K$ times the left-hand side of
\eqref{eq:pac-rate}, which by Proposition~\ref{prop:pac-rate} is at most
$K\bar\varepsilon_{\mathrm{app}}+2Kc_2\sqrt{\rho_N(\eta_1)}+K\rho_M(\eta_2)$.
Both statements hold on the event of Proposition~\ref{prop:pac-rate}, which
contains that of Lemma~\ref{lem:pac-stage1-rate} and has probability at least
$1-\delta_1-\delta_2$; collecting the two $\sqrt{\rho_N(\eta_1)}$ terms into
$c_1$ proves \eqref{eq:pac-orig}.
For the sample complexity, $\delta_1=\delta_2=\delta/2$ turns both logarithms
into $\log\big(4\mathfrak N_j(\eta_j)/\delta\big)$, and Lemma~\ref{lem:covering}
bounds these by $q\log\big(1+2WL_j/\eta_j\big)+\log(4/\delta)$; the stated
conditions on $N$ and $M$ use the latter and are therefore at least as strong
as what the two results require. The choice of $\eta_2$ makes
$4KL_2'\eta_2=\varepsilon/4$, and the bound on $M$ makes the Hoeffding part of
$K\rho_M(\eta_2)$ at most $\varepsilon/4$; the choice of $\eta_1$ makes
$2L_1'\eta_1=t/2$ with $t:=\varepsilon^{2}/(4c_1^{2})$, and the bound on $N$
makes the deviation part of $\rho_N(\eta_1)$ at most $t/2$, so that
$c_1\sqrt{\rho_N(\eta_1)}\le\varepsilon/2$. In total the right-hand side of
\eqref{eq:pac-orig} is at most $K\bar\varepsilon_{\mathrm{app}}+\varepsilon$.
\end{proof}

\section{Toy example}
\label{app:toy_example}
\begin{figure}[ht]
    \centering
    \includegraphics[width=\linewidth]{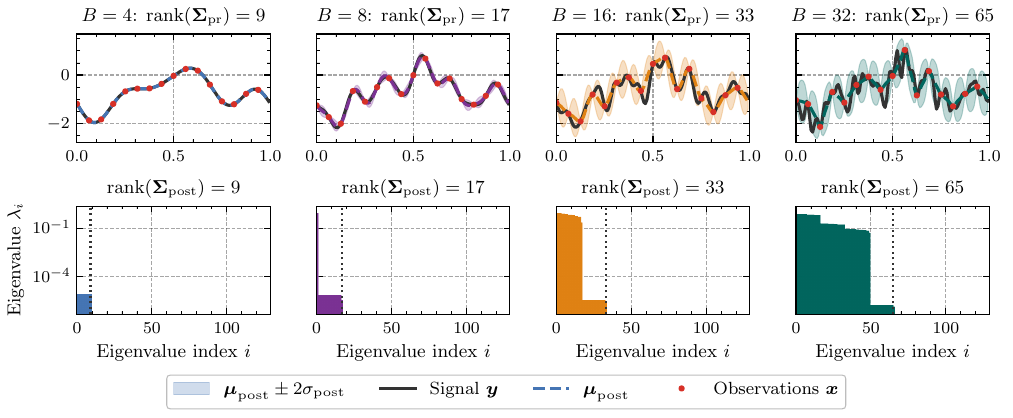}
    \caption{Realizations of the signal $\by$ for different band limits $B \in \{4,8,16,32\}$ (upper) and corresponding eigenvalues of the posterior covariance $\bSigma_\mathrm{post}$ (lower).}
    \label{fig:toy_example1_rank_visualization}
\end{figure}

The toy example considers a reconstruction problem on a periodic grid. We generate a discrete signal $\by \in \R^d$, which is distributed according to a Gaussian prior $\by \sim \cN(\bm 0, \bSigma_\mathrm{pr})$, where $\bSigma_\mathrm{pr}$ is chosen as circulant with a red power spectrum $\lambda_i \geq 0$. Then, we impose a band limit, i.e., we set $\lambda_i=0$ for all $|i| > B$ for some $B \in \mathbb{N}$, which implies $\mathrm{rank}(\bSigma_\mathrm{pr})=\min(2B{+}1,d)$.
Then, we observe a noisy subsample of the signal $\by$ as
\[
  \bx=\bS\by+\bm\eta\in\R^{n},
  \qquad \bm\eta\sim\cN(\bm0,\sigma^2\bI_n),
\]
where $\bS \in \R^{n \times d}$ selects every $m$-th coordinate with $n=d/m$ and $\sigma^2$ is the variance of the observation noise. Now the task is to reconstruct $\by$ given $\bx$ using a neural network.
Since prior and likelihood are Gaussian, the posterior of $\by$ given $\bx$ is also Gaussian. In particular, since $\by$ and $\bm\eta$ are independent, we have that
\begin{align*}
    \mathrm{Cov}(\by, \by) = \bSigma_\mathrm{pr}, \quad \mathrm{Cov}(\by, \bx) = \bE[\by (\bS \by + \bm\eta)^\top] = \bSigma_\mathrm{pr} \bS^\top, \quad \mathrm{Cov}(\bx, \bx) =  \bS \bSigma_\mathrm{pr} \bS^\top + \sigma^2 \bI_n.
\end{align*}
Then, via Gaussian conditioning, we obtain $\by\mid\bx\sim\cN(\bmu_\mathrm{post}(\bx),\bSigma_\mathrm{post})$, with
\begin{align*}
    \bmu_\mathrm{post}(\bx) &= \bSigma_\mathrm{pr} \bS^\top
        \bigl( \bS \bSigma_\mathrm{pr} \bS^\top + \sigma^2 \bI_n\bigr)^{-1}\bx, \\
    \bSigma_\mathrm{post} &= \bSigma_\mathrm{pr} - \bSigma_\mathrm{pr} \bS^\top
        \bigl( \bS \bSigma_\mathrm{pr} \bS^\top + \sigma^2 \bI_n\bigr)^{-1} \bS \bSigma_\mathrm{pr}.
\end{align*}
The posterior covariance $\bSigma_\mathrm{post}$ is independent of $\bx$, so the task is homoscedastic, and $\operatorname{rank}(\bSigma_\mathrm{post})=\min(2B{+}1,d)$. For this toy example, we use $d=128$, $m=8$, $\sigma^2=10^{-6}$ and varying $B$ to analyze the impact of the rank of the posterior on the estimation. \autoref{fig:toy_example1_rank_visualization} shows a realized signal for varying $B$, as well as the corresponding eigenvalues of the posterior covariance $\bSigma_\mathrm{post}$.
It can be seen that for $B=4$ the reconstruction task is deterministic up to the observation noise, which is not visible in the figure. For $B=8$ the uncertainty in the posterior increases, and for further increasing $B$, the posterior mean deviates and the standard deviation becomes visible. Thus, for large $B$, the signal cannot be exactly reconstructed based on the limited observations.

For the reconstruction, we train a linear map $\bx\mapsto\bmu_\phi$ and a learnable Cholesky factor $\bSigma_\phi=\bL\bL^\top$ with softplus-constrained diagonal. We train and evaluate on 4096 and 1024 samples, respectively, use a batch size of 256, and train for 5000 steps with Adam (learning rate $10^{-3}$ for $\bL$, $3\cdot10^{-2}$ for the mean, cosine annealing). We compare the Gaussian kernel score with $\gamma^2 = 2d\cdot\overline{\operatorname{diag}\bSigma_\mathrm{post}}$ against the log score, unconstrained and with an added floor $\varepsilon\bI$, $\varepsilon=10^{-6}$. Results are averaged over five seeds

\begin{table}[t]
\centering
\caption{Frobenius distance $\lVert\bSigma_\phi-\bSigma_\mathrm{post}\rVert_F$ between the learned covariance $\bSigma_\phi$ and the true posterior covariance $\bSigma_\mathrm{post}$ across different band limits $B$. The results are averaged across five seeds with the best model in bold.}
\label{tab:toy1_metrics}
\begin{tabular}{lcccc}
\toprule
 & $B = 4$ & $B = 8$ & $B = 16$ & $B = 32$ \\
\midrule
$S_k$ & $\mathbf{0.00357} \pm 3.7\cdot10^{-6}$ & $\mathbf{0.108} \pm 0.014$ & $\mathbf{0.618} \pm 0.011$ & $\mathbf{0.584} \pm 0.010$ \\
$S_\mathrm{log}$ ($\varepsilon = 10^{-6}$) & $1.15 \pm 0.26$ & $1.43 \pm 0.11$ & $2.07 \pm 0.22$ & $1.95 \pm 0.10$ \\
$S_\mathrm{log}$ & $0.953 \pm 0.13$ & $1.33 \pm 0.12$ & $2.08 \pm 0.22$ & $1.99 \pm 0.26$ \\
\bottomrule
\end{tabular}
\end{table}

\begin{figure}[ht]
    \centering
    \includegraphics[width=\linewidth]{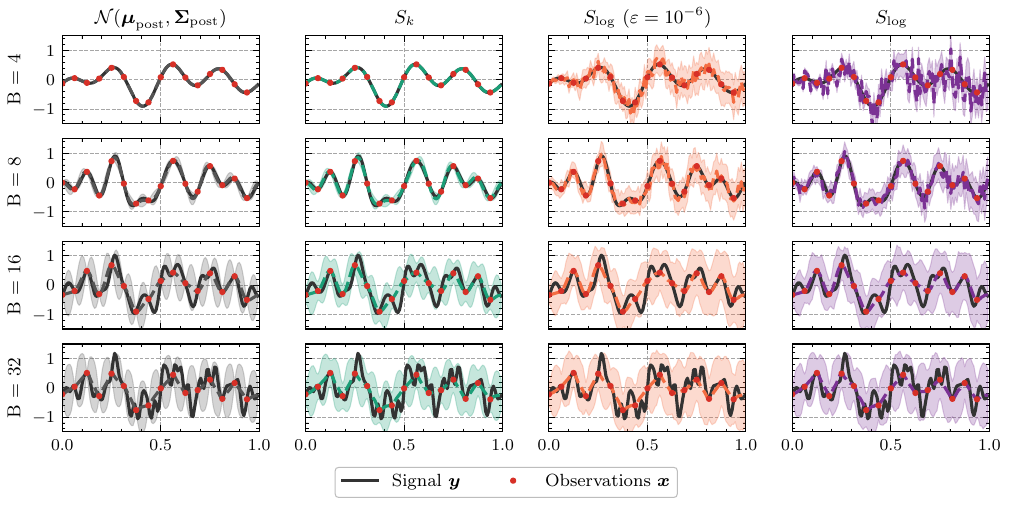}
    \caption{True posterior and different estimates for a selected realization and varying band limit $B\in\{4,8,16,32\}$. The ranks of the posterior covariance are identical to those in \autoref{fig:toy_example1_rank_visualization}.}
    \label{fig:toy_example1_full}
\end{figure}

\begin{figure}[ht]
    \centering
    \includegraphics[width=\linewidth]{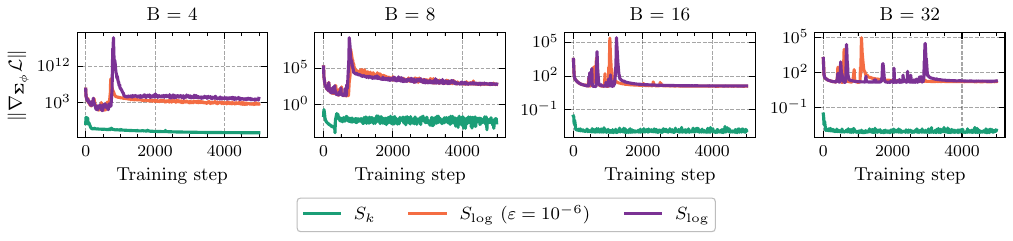}
    \caption{Gradient norm of the different loss functions with respect to the estimated covariance $\bSigma_\phi$, compared across different band limits $B$.}
    \label{fig:toy_example1_grad_norm}
\end{figure}

\autoref{tab:toy1_metrics} shows the Frobenius norm between the different estimates and the true posterior covariance, averaged across multiple seeds. Evidently, the kernel score obtains the best performance across all band limits.
\autoref{fig:toy_example1_full} shows visualizations of the true posterior and the corresponding estimates for varying band limit $B$. For large $B$, where $\bSigma_\mathrm{post}$ admits a larger rank, the estimates of the kernel score and the log score are visually indistinguishable, although a significant gap in the Frobenius norm remains. For small $B$, the log score cannot resolve the low-rank structure and leads to very noisy and incorrect estimates.
\autoref{fig:toy_example1_grad_norm} visualizes the gradient norms of the different loss functions with respect to the estimated $\bSigma_\phi$. Even with an added floor, the log score has a gradient norm several orders of magnitude larger than the kernel score, reaching values up to $10^{17}$. While this improves for increasing $B$, it remains much larger and only seems to stabilize later during training. This directly connects to Lemma~\ref{lemma:log_score_gradients} and highlights that the log score leads to poor results when parameterized with a neural network and optimized with stochastic gradient descent.

\section{Experiment details}
\label{app:experiments}
Every configuration is trained with $5$ independent random seeds, and we report
the averaged mean and standard deviation across test sets, so that
differences between methods can be judged against the run-to-run variability of the training procedure.
We use different backbone architectures and only vary the output head, depending on the predictive method. Thus, all methods share the same backbone (and generally, the number of parameters) and are directly comparable. We now further describe our evaluation protocol, as well as the individual datasets and backbones.

\subsection{Evaluation metrics}
\label{app:eval}

All methods are compared on a common set of evaluation metrics, mostly based on proper scoring rules, with lower values indicating better performance.
Below, we define each metric for a single prediction. Unless stated otherwise, reported values are averages over all cases of the respective test set.

Fix a test case $(\bx,\by) \in \cX \times \R^d$, where $\bx$ is the input and $\by$ the observed target, and let $P=P(\cdot\mid\bx)$ denote the predictive distribution returned by the model, with mean $\bmu$. To ensure a fair comparison against the sample-based method, metrics are evaluated from a simulated ensemble
\begin{equation}
\label{eq:eval:ensemble}
    \by^{(1)},\dots,\by^{(M)}\overset{\text{i.i.d.}}{\sim} P, \qquad M=100,
\end{equation}
using unbiased estimators, unless a closed form is available for all methods. For the deterministic baseline, $P$ is the point mass at $\bmu$; for the sample-based method, $\bmu$ and $\bSigma$ denote the ensemble mean and covariance of \eqref{eq:eval:ensemble}.

\paragraph{Mean squared error (MSE).}
The MSE evaluates the predictive mean $\bmu$,
\begin{equation}\label{eq:eval:mse}
  \mathrm{MSE}(P,\by) \;=\; \frac{1}{d}\,\lVert \bmu - \by \rVert_2^2 .
\end{equation}
It ignores the predicted uncertainty entirely and serves to confirm that differences in the probabilistic scores are not driven by differences in the accuracy of the location.

\paragraph{Continuous ranked probability score (CRPS).}
The CRPS \citep{GneitingRaftery2007} is a standard univariate proper scoring rule and is applied here to each marginal separately and averaged over the domain,
\begin{equation}\label{eq:eval:crps}
  \mathrm{CRPS}_M(P,\by)
  \;=\;
  \frac{1}{d}\sum_{i=1}^{d}
  \left(
    \frac{1}{M}\sum_{m=1}^{M} \big\lvert y^{(m)}_i - y_i \big\rvert
    \;-\;
    \frac{1}{2M(M-1)}\sum_{m=1}^{M}\sum_{m'\neq m}
      \big\lvert y^{(m)}_i - y^{(m')}_i \big\rvert
  \right) .
\end{equation}
It depends on $P$ only through its univariate margins and therefore measures the marginal fit. The first term rewards accuracy of the ensemble around the observation, the second term rewards spread and thereby prevents overconfident predictions, so that the score is minimized in expectation only when each marginal predictive distribution is calibrated.

\paragraph{Negative log-likelihood (NLL).}
For a Gaussian prediction $P=\cN(\bmu,\bSigma)$, the NLL per coordinate evaluated at the observation is
\begin{equation}\label{eq:eval:nll}
  \mathrm{NLL}(P,\by)
  \;=\; \frac{1}{2d}\Big(
        \log\det\!\big(2\pi\bSigma\big)
        + (\by-\bmu)^{\!\top}\bSigma^{-1}(\by-\bmu)
        \Big) .
\end{equation}
It is the only metric requiring an explicit density and is therefore not available for the deterministic and the sample-based baseline. It is strictly proper, rewards concentration of the predictive density at the observation, and is unbounded as $\bSigma$ approaches the boundary of the positive definite cone, which is precisely the behaviour analysed in Section~\ref{sec:nll}.

\paragraph{Energy score (ES).}
The energy score \citep{GneitingRaftery2007} is the multivariate generalisation
of the CRPS,
$\mathrm{ES}(P,\by) = \E\lVert \bY - \by\rVert_2
 - \tfrac{1}{2}\E\lVert \bY - \bY'\rVert_2$ with $\bY,\bY'\sim P$ independent,
estimated from \eqref{eq:eval:ensemble} by
\begin{equation}\label{eq:eval:es}
  \mathrm{ES}_M(P,\by)
  \;=\;
  \frac{1}{M}\sum_{m=1}^{M} \big\lVert \by^{(m)} - \by \big\rVert_2
  \;-\;
  \frac{1}{2M(M-1)}\sum_{m=1}^{M}\sum_{m'\neq m}
        \big\lVert \by^{(m)} - \by^{(m')} \big\rVert_2 .
\end{equation}
The pairwise normalisation $M(M-1)$ makes the estimator unbiased, so that its
expectation does not reward larger ensembles \citep{https://doi.org/10.1002/qj.2270}.
The energy score is strictly proper but is known to have limited sensitivity to
misspecification of the dependence structure.

\paragraph{Gaussian kernel score (KS).}
For the Gaussian kernel
$k(\bm{u},\bm{v}) = \exp\!\big(-\lVert \bm{u}-\bm{v}\rVert_2^2/\gamma^2\big)$
with bandwidth $\gamma > 0$, the associated kernel score
\citep{GneitingRaftery2007} is
$\mathrm{KS}(P,\by) = \tfrac{1}{2}\E\, k(\bY,\bY') - \E\, k(\bY,\by)
 + \tfrac{1}{2}k(\by,\by)$ with $\bY,\bY'\sim P$ independent,
estimated by
\begin{equation}\label{eq:eval:ks}
  \mathrm{KS}_M(P,\by) 
  \;=\;
  \frac{1}{2M(M-1)}\sum_{m=1}^{M}\sum_{m'\neq m} k\big(\by^{(m)},\by^{(m')}\big)
  \;-\;
  \frac{1}{M}\sum_{m=1}^{M} k\big(\by^{(m)},\by\big)
  \;+\; \frac{1}{2} ,
\end{equation}
using $k(\by,\by) = 1$. We evaluate at $\gamma^2=d$ (Appendix~\ref{app:score}).
It is the sample counterpart of one half of the squared MMD between $P$
and the point mass at $\by$, and is strictly proper.
In contrast to the NLL, it is bounded, which makes it comparable across all
training objectives even when a model has degenerate predicted covariance.

\paragraph{Variogram score (VS).}
The variogram score of order $\alpha$ \citep{VariogramBasedProperScoringRulesforProbabilisticForecastsofMultivariateQuantities} compares observed and predicted increments,
\begin{equation}\label{eq:eval:vs}
  \mathrm{VS}^{(\alpha)}(P,\by)
  \;=\;
  \frac{1}{\sum_{i\neq j} w_{ij}}\sum_{i\neq j} w_{ij}
  \Big(
    \lvert y_i - y_j \rvert^{\alpha}
    \;-\;
    \E\big\lvert Y_i - Y_j \big\rvert^{\alpha}
  \Big)^{2}, \qquad \bY\sim P.
\end{equation}
 It is proper but not strictly proper. Being a function of increments only, it is invariant to a common shift of the field and hence blind to a constant bias in the mean, which is what makes it a targeted measure of the predicted dependence structure.
We select $\alpha=1$, for which $Y_i-Y_j\sim\cN(\mu_i-\mu_j,\,\Sigma_{ii}+\Sigma_{jj}-2\Sigma_{ij})$ under a Gaussian $P=\cN(\bmu,\bSigma)$, so that $\E\lvert Y_i-Y_j\rvert$ is the mean of a folded normal distribution and available in closed form. Following \citet{VariogramBasedProperScoringRulesforProbabilisticForecastsofMultivariateQuantities}, the weights $w_{ij}$ are inversely proportional to the distance between coordinates where a physical distance is available and uniform otherwise. 

\subsection{Calibration}
\label{app:calibration}
While the previous metrics provide average performance scores, this does not reveal how a method fails, whether through a biased mean, a misspecified marginal scale, or a dependence structure that is too weak or too strong.
We therefore consider evaluation beyond pure score comparison by analyzing the calibration of the predictive distribution. In particular, we consider \emph{probabilistic calibration}: the observation should behave like a draw from its own predictive distribution \citep{gneiting2005calibrated, 10.1214/23-EJS2180}. This type of calibration can easily be analyzed by checking whether the probability integral transform (PIT) of the prediction is uniformly distributed.

Let $(\bx_t,\by_t)_{t=1}^{T}$ denote the evaluation set with $\by_t \in \R^d$,
and let $F_t = \Gauss(\bmu_t,\bSigma_t)$ be the associated predictive
distribution, with marginal variances
$\sigma_{t,i}^2 = (\bSigma_t)_{ii}$.
All diagnostics test the null hypothesis $\by_t \sim F_t$.
First, we consider marginal calibration, which isolates the pointwise
fit.  Since the margins of $F_t$ are Gaussian, the PIT is available in closed form,
\begin{equation}\label{eq:calib:pit}
  p_{t,i}
  \;=\; \Phi\!\left(\frac{y_{t,i}-\mu_{t,i}}{\sigma_{t,i}}\right),
  \qquad i = 1,\dots,d,
\end{equation}
and is uniform on $[0,1]$ under calibration.  We pool $\{p_{t,i}\}$ over cases
and components and inspect the resulting histogram.
This diagnostic depends on $\bSigma_t$ only through its diagonal and verifies that all marginals are calibrated. However, it cannot discriminate between different dependence structures, which requires incorporating the full covariance matrix.

To check multivariate calibration, we follow \cite{https://doi.org/10.1002/qj.4647} and use rank histograms, which can be thought of as a discrete version of the PIT histogram.
The rank histogram rests on the observation that if the observation and $M$
ensemble members are exchangeable, the rank of the observation within the pooled
sample is uniform on $\{1,\dots,M+1\}$. By analyzing the corresponding histogram, one can find systematic departures from uniformity, which indicate specific forms of miscalibration.
Since $F_t$ is available in closed form, we simulate
$\bz_t^{(1)},\dots,\bz_t^{(M)} \overset{\text{iid}}{\sim} F_t$ with $M = 1000$,
which are exchangeable with $\by_t$ under the null.
As there is no canonical order on $\R^d$, ranking requires a
\emph{pre-rank function} \citep{https://doi.org/10.1002/qj.4647} $\prerank_t : \R^d \to \R$, giving
\begin{equation}\label{eq:calib:rank}
  r_t
  \;=\;
  1 + \sum_{m=1}^{M}
  \mathbf{1}\!\left\{\prerank_t(\bz_t^{(m)}) < \prerank_t(\by_t)\right\}
  \;\sim\; \Unif\{1,\dots,M+1\} .
\end{equation}
Here, uniformity is necessary but not sufficient, since a pre-rank function
reduces the field to a scalar summary.  We therefore use three \emph{simple}
pre-rank functions, chosen so that the shape of the histogram maps onto an
identifiable deficiency, whose histograms confound location, scale and dependence. The three are stated for a generic $\bm{v} \in \R^d$ and applied to $\by_t$ and
to each $\bz_t^{(m)}$ alike.

\paragraph{Mean calibration.}
The spatial average
\begin{equation}\label{eq:calib:rho-mean}
  \prerank^{\mathrm{mean}}(\bm{v}) \;=\; \tfrac{1}{d}\,\bone^{\!\top}\bm{v}
\end{equation}
targets the aggregate level of the field. Here, we can utilize a closed form, since ${\prerank^{\mathrm{mean}}(\bm{v}) \sim
 \Gauss\big(d^{-1}\bone^{\!\top}\bmu_t,\;
           d^{-2}\bone^{\!\top}\bSigma_t\bone\big)}$ for $\bm{v}\sim F_t$,        
so \eqref{eq:calib:rank} may be replaced by the exact PIT.
The only quantity required is the scalar $\bone^{\!\top}\bSigma_t\bone$, which
is inexpensive for all covariance families considered.

\paragraph{Scale calibration.}
The empirical variance of a single realisation,
\begin{equation}\label{eq:calib:rho-var}
  \prerank^{\mathrm{var}}(\bm{v})
  \;=\; \tfrac{1}{d}\sum_{i=1}^{d}
        \big(v_i - \prerank^{\mathrm{mean}}(\bm{v})\big)^2 ,
\end{equation}
quantifies the amount of spatial contrast present in the field and is invariant
to its overall level, making it complementary to
\eqref{eq:calib:rho-mean}.  For fixed margins, stronger predicted correlation
reduces the expected contrast, so this pre-rank penalizes predictions whose
realisations are systematically too smooth or too rough.

\paragraph{Dependence calibration.}
A variogram-type statistic over a set $\mathcal{P}_h$ of index pairs at spatial
lag $h$,
\begin{equation}\label{eq:calib:rho-vg}
  \prerank^{\mathrm{vg}}_h(\bm{v})
  \;=\;
  \frac{1}{|\mathcal{P}_h|}
  \sum_{(i,j)\in\mathcal{P}_h}
  w_{ij}\,\big\lvert v_i - v_j \big\rvert^{\alpha},
  \qquad \alpha \in (0,2],
\end{equation}
measures the strength of dependence at a prescribed scale.  Being a function of
increments only, it is exactly invariant to constant shifts and hence
insensitive to bias in the mean, which separates it cleanly from
\eqref{eq:calib:rho-mean}.  We follow \cite{https://doi.org/10.1002/qj.4647} and use $\alpha = 1$ and set $w_{ij} = 1$, except for EUPPBench, where we weight each pair by the inverse distance $w_{ij} = \lVert \bm{s}_i - \bm{s}_j \rVert^{-1}$ between the two locations.

To make visualization more concise, we do not plot the rank histograms
directly, but their ECDF, i.e.,\ $F_u(u)$ against the
normalized rank $u$.  A perfect diagonal is consistent with calibration in terms
of the selected pre-rank function only.
A curve crossing the diagonal from above indicates that the observed summary
falls too often in the extreme ranks, i.e.,\ underdispersion in that aspect, while a crossing from below indicates overdispersion.
A curve staying on one side indicates that the observed summary is systematically smaller (above) or larger (below) than its forecast counterpart.  Concretely, a one-sided curve in $\prerank^{\mathrm{mean}}$ indicates a bias in the domain-averaged field and a crossing from above that $\bone^{\!\top}\bSigma_t\bone$ is too small, which for calibrated margins means correlations that are too weak. A curve below the diagonal in $\prerank^{\mathrm{var}}$ indicates that observed fields carry more spatial contrast than simulated ones, i.e.\ that the forecast is too smooth, and
a one-sided curve in $\prerank^{\mathrm{vg}}_h$ that dependence at lag $h$ is
over- or underestimated.
Since uniformity is necessary but not sufficient, compensating errors may cancel in a summary, and we thus report the three pre-rank diagnostics
jointly alongside the marginal one.

\subsection{Task descriptions}
\label{app:exp_description}

All four tasks follow the same experimental protocol; only the data, the backbone and the hyperparameters in Table~\ref{tab:hyperparams} differ. Every method shares a task-specific feature extractor (the \emph{trunk}) and differs only in its probabilistic head and loss: the single-stage baselines attach one head for the mean and marginal variances and, where applicable, a second head for the low-rank factors, and are trained with either the log score or the kernel score; \score{} attaches the same first head and a second head for the spectral core, and is trained in two stages, the marginals with the CRPS and the core with the kernel score, using the bandwidths of Appendix~\ref{app:score}. Each method receives the same total epoch budget; for \score{} it is split between the two stages, and during the second stage the trunk and the marginal head are frozen so that only the core head is trained. Unless stated otherwise, optimization uses AdamW \citep{loshchilov2018decoupled} with a cosine learning-rate schedule and three warmup epochs, all inputs and targets are standardized with statistics computed on the training portion only, and evaluation uses the best validation checkpoint. Sample-based scores are computed from $M=100$ predictive samples, and every configuration is run with five seeds(1$-5$).

\begin{table}[h]
\centering
\caption{Training configurations for the four tasks. ``Epochs'' gives the total budget and, in parentheses, the marginal/core split for \score{}.}
\label{tab:hyperparams}
\small
\adjustbox{max width=\linewidth}{%
\begin{tabular}{lllll}
\toprule
 & Time series & Surface temp.\ & Depth & Post-processing \\
\midrule
Trunk          & PatchTST \citep{nie2023a}          & U-Cast \citep{cachay2026ucast}    & DepthAnything v2 \citep{depth_anything_v2} & GNN \citep{feik2024graphneuralnetworksspatial} \\
Trunk params   & $81.8\,$k -- $0.9\,$M                & $65.9$\,M       & $22.1$\,M (frozen) $+\,2.7$\,M & $10.1\,$M \\
Epochs         & $100$ ($75/25$)   & $30$ ($20/10$)  & $75$ ($50/25$)  & $40$ ($30/10$) \\
Batch size     & $128$ ($32$ ILI, Electricity; $24$ Traffic) & $16$  & $8$             & $8$ \\
Learning rate  & $10^{-4}$ / $2.5\cdot10^{-3}$ (ILI) & $3\cdot10^{-4}$ ($7\cdot10^{-5}$ core) & $10^{-3}$ & $2\cdot10^{-4}$ \\
Weight decay   & $0.05$ ($0$ Weather, Electricity, Traffic) & $0.1$        & $0.05$          & $0.05$ \\
\bottomrule
\end{tabular}
}
\end{table}

\paragraph{Time series benchmark.}
We follow the long-term forecasting protocol of PatchTST \citep{nie2023a} on the eight data sets summarized in Table~\ref{tab:ts_datasets}. Each example consists of a contiguous look-back window of length $L$ followed immediately by the prediction horizon $T$. For the ETT data sets we study both multivariate forecasting of all variables and univariate forecasting of the target variable \texttt{OT}; ILI, Weather, Electricity and Traffic are evaluated in the multivariate setting only. Electricity and Traffic are the two largest data sets, recording the hourly consumption of 321 clients and the hourly road occupancy of 862 sensors, so that the output dimension $d=CT$ reaches $61{,}632$ and $165{,}504$ at $T=192$. The backbone is PatchTST, an encoder-only Transformer whose channel-independent design processes each channel as a separate univariate series while sharing all weights across channels; accordingly, all predictive heads model each channel independently. We use the per-data-set configurations of \citet{nie2023a} (patch length $P=16$ and stride $S=8$ for $L=336$, $P=24$ and $S=2$ for ILI, both yielding 42 patches; $(d_{\mathrm{model}},H,d_{\mathrm{ff}})=(16,4,128)$ with dropout $0.3$ for ETTh1/2 and ILI, and $(128,16,256)$ with dropout $0.2$ for ETTm1/2, Weather, Electricity and Traffic). For the ETT data sets and ILI, optimization follows the common protocol. For Weather we instead follow the PatchTST recipe for that data set: the learning rate is held constant for three epochs and then multiplied by $0.9$ after every epoch, without weight decay and with early stopping on the validation loss (patience 20); for \score{} this recipe is used for the marginal fit, while the core fit keeps the cosine schedule with warmup. For Electricity and Traffic we likewise follow the PatchTST recipes for these data sets, a one-cycle schedule with 20 warmup epochs over the 100-epoch budget, no weight decay, early stopping on the validation loss (patience 10) and batch sizes of 32 and 24 (Table~\ref{tab:hyperparams}); again, \score{} uses this recipe for the marginal fit and the cosine schedule with warmup for the core fit.

\begin{table}[h]
\centering
\caption{Time-series data sets. Splits are chronological. ETT uses 12/4/4 months, the other data sets use 70/10/20\,\%.}
\label{tab:ts_datasets}
\small
\begin{tabular}{lcccccc}
\toprule
Data set & Variables & Resolution & Observations & Split & $L$ & $T$ \\
\midrule
ETTh1, ETTh2 & 7  & hourly  & $17{,}420$ & 12/4/4 m  & 336 & $\{96,192\}$ \\
ETTm1, ETTm2 & 7  & 15 min  & $69{,}680$ & 12/4/4 m  & 336 & $\{96,192\}$ \\
Weather      & 21 & 10 min  & $52{,}696$ & 70/10/20  & 336 & $\{96,192\}$ \\
Electricity  & 321 & hourly & $26{,}304$ & 70/10/20  & 336 & $\{96,192\}$ \\
Traffic      & 862 & hourly & $17{,}544$ & 70/10/20  & 336 & $\{96,192\}$ \\
ILI          & 7  & weekly  & $966$      & 70/10/20  & 104 & $\{24,36\}$ \\
\bottomrule
\end{tabular}
\end{table}

\paragraph{Surface temperature prediction.}
We use ERA5 reanalysis data \citep{hersbachERA5GlobalReanalysis2020} as provided by WeatherBench2 \citep{rasp2024weatherbench2benchmarkgeneration}. Similar to \citet{Uncertaintyquantificationfordatadrivenweathermodels, pacchiardiProbabilisticForecastingGenerative2021}, we consider a European domain at $0.25^{\circ}$ resolution, $35^{\circ}$--$75^{\circ}$N and $12.5^{\circ}$W--$42.5^{\circ}$E, i.e.\ a $160\times220$ grid with $d=35{,}200$ target coordinates, and one observation per day at 12\,UTC. The data are split by year into training (1990--2014), validation (2015--2019) and test (2020--2022). The input stacks the last two daily observations of five dynamic variables (the 10\,m wind components U10 and V10, temperature at 2\,m and 850\,hPa, and geopotential at 500\,hPa) with the land-sea mask and the surface geopotential, giving twelve channels. The target is the change of the standardized 2\,m temperature over a lead time of three days, so the predictive distribution is over a single channel on the full grid. As trunk we use the U-Net of U-Cast \citep{cachay2026ucast}, a recent model that matches state-of-the-art weather forecasters at a fraction of their size, reduced to 96 base channels because of the smaller number of input channels. The head for the mean and marginal variances has $0.17$\,M parameters; the low-rank models and \score{} add a second head with $0.5$\,M parameters. In the second stage of \score{}, the core head is trained at the reduced learning rate given in Table~\ref{tab:hyperparams}.

\paragraph{Monocular depth estimation.}
We predict a depth map from a single RGB image on NYU Depth~v2 \citep{Silberman:ECCV12}, following the protocol of \citet{liu2023single}. Training uses the $24{,}231$ synchronized image--depth pairs of the standard split, with a fixed $10\%$ held out for validation, and evaluation uses the $654$ densely annotated images of the test split of \citet{eigen2014depth}. All frames are restricted to the standard evaluation crop of the $480\times640$ images, i.e.\ $d=426\times560=238{,}560$ pixels. Depth is stored in millimetres, capped at $10$\,m and rescaled to $[0,1]$. The raw training depth contains projection holes (about $14\%$ of the pixels within the crop), which are filled once by nearest-neighbour interpolation at dataset construction; the test depth maps are the official densely in-painted ones. The trunk is the Depth Anything~V2 architecture \citep{depth_anything_v2}, i.e.\ a frozen DINOv2 ViT-S/14 encoder \citep{oquab2023dinov2} and a trainable DPT decoder \citep{Ranftl2021}, whose scalar output layer is replaced by two convolutional heads of $0.05$\,M parameters each. Inputs are padded to a multiple of the patch size and the prediction is cropped back to the original resolution.

\paragraph{Graph-based temperature post-processing.}
The final task shows how our approach extends to non-equidistant domains and the corresponding unitary transforms. We consider post-processing of two-meter temperature across central Europe: the model takes an ensemble of numerical weather prediction (NWP) forecasts of multiple variables as input and predicts the observed temperature at a set of stations at a lead time of 72\,h. We use the EUPPBench data \citep{essd-15-2635-2023}, which pairs ECMWF medium-range ensemble forecasts with station observations at 122 European stations from 1997 to 2018 and comprises 730 daily operational forecasts with 51 members as well as 4180 reforecasts (NWP runs for past dates) with 11 members, each with 31 variables. We follow the data configuration and the graph neural network of \citet{feik2024graphneuralnetworksspatial}, training on reforecasts from 1997--2013 and evaluating on reforecasts from 2014--2017 and on operational forecasts from 2017--2018, but replace their marginal model by a multivariate Gaussian with the covariance parameterizations under study. We use a reduced set of $d=115$ stations, as the remaining stations have more than 50\% missing values.

The task requires a different transform than the previous ones. Let $G=(V,E,w)$ be a graph on the $d$ output coordinates, e.g.\ a $k$-nearest-neighbour graph with Gaussian edge weights $w_{ij}=\exp(-\mathrm{dist}(i,j)^{2}/2\varepsilon^{2})$ built from the Euclidean distance between stations. With $\bW=(w_{ij})$ and degree matrix $\bDelta_w=\diag(\bW\bm1)$, the symmetric normalized Laplacian $\bL=\bI-\bDelta_w^{-1/2}\bW\bDelta_w^{-1/2}=\bU\bLambda\bU^{*}$ is Hermitian positive semi-definite, so $\bU$ is unitary and $\bB:=\bU^{*}$ defines the \emph{graph Fourier transform}. A diagonal core $\hat\bSigma=\diag\big(g_{\btheta}(\bLambda)\big)$ amounts to filtering the graph spectrum; we use the Mat\'ern filter
\[
  g_{\btheta}(\lambda)=\Big(\tfrac{2\nu}{\kappa^{2}}+\lambda\Big)^{-\nu},
  \qquad \btheta=(\nu,\kappa)\in\R_{>0}^{2}.
\]
Unlike the DFT, a general graph is not vertex-transitive and $|U_{ik}|^{2}\neq 1/d$, so the implied marginal variances
\[
  \Delta_i(\btheta)=\sum_{k}g_{\btheta}(\lambda_k)\,|U_{ik}|^{2}
\]
are non-constant and the normalization must be carried out explicitly. It is available in closed form, however, and merges with $\bD$ into the single diagonal $\bA:=\bD\,\bm\Delta(\btheta)^{-1/2}$, giving
\[
  \bSigma_P=\bA\,\bU\,g_{\btheta}(\bLambda)\,\bU^{*}\bA ,
\]
so the staged objective of Proposition~\ref{prop:two-stage-opt} applies verbatim: $\hat\bD$ carries the marginals exactly and $g_{\btheta}$ is fit to the correlation structure alone. There is no computational gain in this case, since the resulting matrices are dense and their evaluation has the same complexity as a Cholesky decomposition.

\section{Detailed results}
\label{app:results}

\subsection{Numerical results}
The full results are given in Tables~\ref{tab:etth1}--\ref{tab:euppbench}. The log score is reported per coordinate; lower is better for all scores, and we report the mean and standard deviation over five seeds. The best model within one standard deviation is highlighted in bold. The average rank is computed per seed over the scores available for all methods and then averaged over seeds. When two methods are equally good, the corresponding ranks are averaged. A dash marks a score that is not defined for a method.

\paragraph{Time series benchmark.}
On the time series benchmark (Tables~\ref{tab:etth1}--\ref{tab:traffic}), \score{} attains the best average rank on 20 of the 24 tasks, with a mean rank of $2.2$ against $3.1$ for the strongest baseline, LorD trained with the kernel score. At fixed covariance, replacing the NLL by the kernel score improves LorD on at least 19 of the 24 tasks on each of MSE, CRPS, ES, VS and KS, 105 of the 120 task-score pairs in total. Only the log score, which the NLL-trained model optimizes directly, goes the other way. On the three high-dimensional data sets, Weather, Electricity and Traffic, with $d$ between $2{,}016$ and $165{,}504$, LorD trained with the NLL deteriorates sharply on Weather and collapses on Electricity and Traffic, while trained with the kernel score it is on par with the diagonal model (Section~\ref{sec:nll}). With the kernel score as the loss function, \score{} achieves a lower CRPS than the diagonal and low-rank models on 23 of the 24 tasks and a lower variogram score on 22 and 21 of the 24 tasks. The CRPS gain comes from the first stage, which fits the marginals with the CRPS, and the variogram gain from the second stage, which adds the correlation along the horizon. ES and KS change by less than $1\%$ in the median, as they are less sensitive to the dependence structure. These results support the assumption of the circulant core, since the target is a window of $T$ consecutive values whose forecast errors persist and follow the cycles of the series, so their correlation depends on the lag between two lead times rather than on the position in the window. The generative sampler is outperformed by \score{} on CRPS and VS on every task, and \score{} runs at close to the cost of the diagonal model (Table~\ref{tab:runtime-ms-per-step}). On Weather and Electricity, the log score of every covariance trained with the kernel score, \score{} included, explodes, which the bounded kernel score does not penalize (Theorem~\ref{thm:ks-lipschitz}).

\begin{table*}[ht]
\centering
\caption{Test scores on ETTh1.}
\label{tab:etth1}
\small
\adjustbox{max width=\linewidth}{%
\begin{tabular}{llrrrrrrr}
\toprule
Method & Objective & MSE $\downarrow$ & CRPS $\downarrow$ & ES $\downarrow$ & VS $\downarrow$ & KS $\downarrow$ & NLL/$d$ $\downarrow$ & $\varnothing$ Rank $\downarrow$ \\
\midrule
\multicolumn{9}{l}{\textbf{Univariate, $T = 96$}}\\
\multicolumn{9}{l}{\quad\emph{Physical space}}\\
\qquad Det & MSE & $\mathbf{0.053 \pm 0.001}$ & $0.176 \pm 0.001$ & $2.041 \pm 0.010$ & $0.034 \pm 0.000$ & $0.050 \pm 0.001$ & - & $7.3$ \\
\qquad Diag & NLL & $0.053 \pm 0.000$ & $\mathbf{0.128 \pm 0.001}$ & $1.528 \pm 0.006$ & $0.031 \pm 0.001$ & $0.049 \pm 0.000$ & $+0.043 \pm 0.011$ & $5.2$ \\
\qquad Diag & KS & $\mathbf{0.052 \pm 0.000}$ & $\mathbf{0.127 \pm 0.000}$ & $1.502 \pm 0.006$ & $0.026 \pm 0.000$ & $\mathbf{0.048 \pm 0.000}$ & $+0.113 \pm 0.012$ & $3.8$ \\
\qquad LorD $r{=}1$ & NLL & $0.067 \pm 0.001$ & $0.148 \pm 0.002$ & $1.696 \pm 0.016$ & $0.025 \pm 0.000$ & $0.061 \pm 0.001$ & $-0.363 \pm 0.016$ & $6.2$ \\
\qquad LorD $r{=}1$ & KS & $\mathbf{0.052 \pm 0.000}$ & $0.128 \pm 0.000$ & $1.499 \pm 0.005$ & $0.024 \pm 0.000$ & $\mathbf{0.048 \pm 0.000}$ & $-0.262 \pm 0.015$ & $3.2$ \\
\qquad Chol & NLL & $0.071 \pm 0.000$ & $0.153 \pm 0.001$ & $1.742 \pm 0.009$ & $0.024 \pm 0.000$ & $0.064 \pm 0.000$ & $\mathbf{-1.146 \pm 0.013}$ & $6.3$ \\
\qquad Chol & KS & $\mathbf{0.053 \pm 0.001}$ & $0.129 \pm 0.001$ & $1.500 \pm 0.009$ & $0.024 \pm 0.000$ & $\mathbf{0.049 \pm 0.001}$ & - & $3.6$ \\
\qquad SB & KS & $0.055 \pm 0.002$ & $0.139 \pm 0.006$ & $1.654 \pm 0.053$ & $0.031 \pm 0.000$ & $0.051 \pm 0.002$ & - & $6.2$ \\
\multicolumn{9}{l}{\quad\emph{Spectral space}}\\
\qquad SCORE & CRPS+KS & $\mathbf{0.052 \pm 0.000}$ & $\mathbf{0.127 \pm 0.000}$ & $\mathbf{1.478 \pm 0.006}$ & $\mathbf{0.023 \pm 0.000}$ & $\mathbf{0.048 \pm 0.000}$ & $-1.009 \pm 0.006$ & $\mathbf{1.9}$ \\
\cmidrule(l){1-9}
\multicolumn{9}{l}{\textbf{Univariate, $T = 192$}}\\
\multicolumn{9}{l}{\quad\emph{Physical space}}\\
\qquad Det & MSE & $0.068 \pm 0.001$ & $0.202 \pm 0.001$ & $3.368 \pm 0.016$ & $0.050 \pm 0.001$ & $0.064 \pm 0.001$ & - & $7.7$ \\
\qquad Diag & NLL & $0.068 \pm 0.001$ & $0.147 \pm 0.001$ & $2.492 \pm 0.013$ & $0.035 \pm 0.001$ & $0.063 \pm 0.001$ & $+0.196 \pm 0.018$ & $5.1$ \\
\qquad Diag & KS & $\mathbf{0.065 \pm 0.000}$ & $\mathbf{0.144 \pm 0.001}$ & $2.436 \pm 0.008$ & $0.032 \pm 0.000$ & $\mathbf{0.060 \pm 0.000}$ & $+0.273 \pm 0.030$ & $3.0$ \\
\qquad LorD $r{=}1$ & NLL & $0.076 \pm 0.000$ & $0.160 \pm 0.000$ & $2.658 \pm 0.008$ & $0.034 \pm 0.000$ & $0.070 \pm 0.000$ & $-0.115 \pm 0.010$ & $6.5$ \\
\qquad LorD $r{=}1$ & KS & $\mathbf{0.065 \pm 0.000}$ & $0.146 \pm 0.000$ & $2.440 \pm 0.007$ & $0.032 \pm 0.000$ & $\mathbf{0.060 \pm 0.000}$ & $+0.022 \pm 0.037$ & $3.5$ \\
\qquad Chol & NLL & $0.078 \pm 0.000$ & $0.162 \pm 0.000$ & $2.676 \pm 0.006$ & $0.032 \pm 0.000$ & $0.071 \pm 0.000$ & $\mathbf{-1.100 \pm 0.010}$ & $6.2$ \\
\qquad Chol & KS & $\mathbf{0.065 \pm 0.001}$ & $\mathbf{0.145 \pm 0.001}$ & $\mathbf{2.419 \pm 0.013}$ & $0.033 \pm 0.000$ & $\mathbf{0.060 \pm 0.000}$ & - & $3.0$ \\
\qquad SB & KS & $0.069 \pm 0.001$ & $0.154 \pm 0.001$ & $2.664 \pm 0.018$ & $0.044 \pm 0.001$ & $0.063 \pm 0.001$ & - & $6.6$ \\
\multicolumn{9}{l}{\quad\emph{Spectral space}}\\
\qquad SCORE & CRPS+KS & $\mathbf{0.066 \pm 0.001}$ & $\mathbf{0.145 \pm 0.001}$ & $\mathbf{2.410 \pm 0.013}$ & $\mathbf{0.031 \pm 0.000}$ & $\mathbf{0.060 \pm 0.001}$ & $-1.035 \pm 0.037$ & $\mathbf{2.2}$ \\
\cmidrule(l){1-9}
\multicolumn{9}{l}{\textbf{Multivariate, $T = 96$}}\\
\multicolumn{9}{l}{\quad\emph{Physical space}}\\
\qquad Det & MSE & $0.377 \pm 0.002$ & $0.400 \pm 0.001$ & $15.577 \pm 0.043$ & $0.567 \pm 0.003$ & $0.305 \pm 0.001$ & - & $6.6$ \\
\qquad Diag & NLL & $\mathbf{0.370 \pm 0.003}$ & $0.286 \pm 0.001$ & $11.069 \pm 0.038$ & $0.507 \pm 0.004$ & $0.257 \pm 0.001$ & $+0.660 \pm 0.005$ & $3.0$ \\
\qquad Diag & KS & $\mathbf{0.369 \pm 0.002}$ & $0.298 \pm 0.001$ & $11.052 \pm 0.028$ & $0.529 \pm 0.002$ & $0.257 \pm 0.001$ & $+0.795 \pm 0.017$ & $3.8$ \\
\qquad LorD $r{=}1$ & NLL & $0.378 \pm 0.006$ & $0.287 \pm 0.002$ & $11.167 \pm 0.082$ & $0.514 \pm 0.007$ & $0.260 \pm 0.002$ & $+0.617 \pm 0.009$ & $3.5$ \\
\qquad LorD $r{=}1$ & KS & $0.369 \pm 0.001$ & $0.297 \pm 0.001$ & $11.041 \pm 0.026$ & $0.527 \pm 0.002$ & $\mathbf{0.256 \pm 0.001}$ & $+0.758 \pm 0.023$ & $3.7$ \\
\qquad SB & KS & $0.388 \pm 0.006$ & $0.332 \pm 0.018$ & $12.875 \pm 0.726$ & $0.567 \pm 0.006$ & $0.289 \pm 0.012$ & - & $6.2$ \\
\multicolumn{9}{l}{\quad\emph{Spectral space}}\\
\qquad SCORE & CRPS+KS & $\mathbf{0.366 \pm 0.001}$ & $\mathbf{0.280 \pm 0.001}$ & $\mathbf{10.971 \pm 0.019}$ & $\mathbf{0.495 \pm 0.002}$ & $\mathbf{0.254 \pm 0.001}$ & $\mathbf{+0.153 \pm 0.008}$ & $\mathbf{1.2}$ \\
\cmidrule(l){1-9}
\multicolumn{9}{l}{\textbf{Multivariate, $T = 192$}}\\
\multicolumn{9}{l}{\quad\emph{Physical space}}\\
\qquad Det & MSE & $0.415 \pm 0.001$ & $0.422 \pm 0.001$ & $23.251 \pm 0.024$ & $0.622 \pm 0.001$ & $0.332 \pm 0.001$ & - & $6.5$ \\
\qquad Diag & NLL & $0.415 \pm 0.003$ & $0.311 \pm 0.003$ & $16.662 \pm 0.063$ & $0.566 \pm 0.009$ & $0.281 \pm 0.001$ & $+0.790 \pm 0.016$ & $3.5$ \\
\qquad Diag & KS & $\mathbf{0.409 \pm 0.002}$ & $0.319 \pm 0.001$ & $16.544 \pm 0.052$ & $0.590 \pm 0.005$ & $0.278 \pm 0.001$ & $+0.876 \pm 0.005$ & $3.6$ \\
\qquad LorD $r{=}1$ & NLL & $0.427 \pm 0.002$ & $0.314 \pm 0.003$ & $16.847 \pm 0.024$ & $0.577 \pm 0.008$ & $0.284 \pm 0.001$ & $+0.749 \pm 0.024$ & $4.2$ \\
\qquad LorD $r{=}1$ & KS & $\mathbf{0.410 \pm 0.002}$ & $0.318 \pm 0.001$ & $16.538 \pm 0.044$ & $0.586 \pm 0.003$ & $0.278 \pm 0.001$ & $+0.855 \pm 0.008$ & $3.4$ \\
\qquad SB & KS & $\mathbf{0.455 \pm 0.062}$ & $0.344 \pm 0.008$ & $18.568 \pm 0.290$ & $\mathbf{0.794 \pm 0.308}$ & $0.306 \pm 0.004$ & - & $5.2$ \\
\multicolumn{9}{l}{\quad\emph{Spectral space}}\\
\qquad SCORE & CRPS+KS & $\mathbf{0.406 \pm 0.002}$ & $\mathbf{0.299 \pm 0.001}$ & $\mathbf{16.423 \pm 0.032}$ & $\mathbf{0.541 \pm 0.003}$ & $\mathbf{0.275 \pm 0.001}$ & $\mathbf{+0.290 \pm 0.027}$ & $\mathbf{1.3}$ \\
\bottomrule
\end{tabular}}
\end{table*}

\begin{table*}[ht]
\centering
\caption{Test scores on ETTh2.}
\label{tab:etth2}
\small
\adjustbox{max width=\linewidth}{%
\begin{tabular}{llrrrrrrr}
\toprule
Method & Objective & MSE $\downarrow$ & CRPS $\downarrow$ & ES $\downarrow$ & VS $\downarrow$ & KS $\downarrow$ & NLL/$d$ $\downarrow$ & $\varnothing$ Rank $\downarrow$ \\
\midrule
\multicolumn{9}{l}{\textbf{Univariate, $T = 96$}}\\
\multicolumn{9}{l}{\quad\emph{Physical space}}\\
\qquad Det & MSE & $0.125 \pm 0.002$ & $0.282 \pm 0.002$ & $3.295 \pm 0.021$ & $0.103 \pm 0.002$ & $0.115 \pm 0.001$ & - & $7.5$ \\
\qquad Diag & NLL & $\mathbf{0.120 \pm 0.000}$ & $\mathbf{0.194 \pm 0.000}$ & $2.354 \pm 0.003$ & $0.103 \pm 0.001$ & $\mathbf{0.105 \pm 0.000}$ & $+0.351 \pm 0.004$ & $3.7$ \\
\qquad Diag & KS & $0.122 \pm 0.001$ & $0.197 \pm 0.001$ & $2.362 \pm 0.010$ & $0.095 \pm 0.001$ & $\mathbf{0.106 \pm 0.001}$ & $+0.400 \pm 0.009$ & $4.3$ \\
\qquad LorD $r{=}1$ & NLL & $0.157 \pm 0.002$ & $0.227 \pm 0.003$ & $2.631 \pm 0.025$ & $0.091 \pm 0.000$ & $0.129 \pm 0.002$ & $+0.040 \pm 0.004$ & $6.1$ \\
\qquad LorD $r{=}1$ & KS & $0.124 \pm 0.002$ & $0.200 \pm 0.002$ & $2.352 \pm 0.018$ & $0.088 \pm 0.001$ & $\mathbf{0.106 \pm 0.002}$ & $+0.090 \pm 0.006$ & $3.7$ \\
\qquad Chol & NLL & $0.204 \pm 0.011$ & $0.268 \pm 0.009$ & $3.075 \pm 0.108$ & $0.106 \pm 0.003$ & $0.163 \pm 0.009$ & $\mathbf{-0.690 \pm 0.373}$ & $7.2$ \\
\qquad Chol & KS & $0.126 \pm 0.003$ & $0.200 \pm 0.003$ & $\mathbf{2.347 \pm 0.029}$ & $\mathbf{0.087 \pm 0.002}$ & $\mathbf{0.106 \pm 0.002}$ & $(+4.162 \pm 9.108) \times 10^{13}$ & $4.0$ \\
\qquad SB & KS & $0.127 \pm 0.003$ & $0.210 \pm 0.008$ & $2.541 \pm 0.084$ & $0.097 \pm 0.003$ & $\mathbf{0.108 \pm 0.003}$ & - & $5.1$ \\
\multicolumn{9}{l}{\quad\emph{Spectral space}}\\
\qquad SCORE & CRPS+KS & $0.123 \pm 0.001$ & $0.197 \pm 0.001$ & $\mathbf{2.321 \pm 0.010}$ & $\mathbf{0.085 \pm 0.001}$ & $\mathbf{0.104 \pm 0.001}$ & $\mathbf{-0.529 \pm 0.018}$ & $\mathbf{2.7}$ \\
\cmidrule(l){1-9}
\multicolumn{9}{l}{\textbf{Univariate, $T = 192$}}\\
\multicolumn{9}{l}{\quad\emph{Physical space}}\\
\qquad Det & MSE & $\mathbf{0.160 \pm 0.002}$ & $0.323 \pm 0.002$ & $5.356 \pm 0.031$ & $0.132 \pm 0.002$ & $0.145 \pm 0.002$ & - & $7.5$ \\
\qquad Diag & NLL & $\mathbf{0.157 \pm 0.002}$ & $\mathbf{0.224 \pm 0.002}$ & $3.840 \pm 0.022$ & $0.115 \pm 0.001$ & $\mathbf{0.133 \pm 0.001}$ & $+0.491 \pm 0.008$ & $3.8$ \\
\qquad Diag & KS & $\mathbf{0.159 \pm 0.002}$ & $\mathbf{0.228 \pm 0.002}$ & $3.859 \pm 0.029$ & $0.108 \pm 0.000$ & $\mathbf{0.134 \pm 0.002}$ & $+0.551 \pm 0.014$ & $4.0$ \\
\qquad LorD $r{=}1$ & NLL & $0.184 \pm 0.001$ & $0.251 \pm 0.001$ & $4.147 \pm 0.012$ & $0.105 \pm 0.000$ & $0.151 \pm 0.001$ & $+0.209 \pm 0.006$ & $5.9$ \\
\qquad LorD $r{=}1$ & KS & $0.161 \pm 0.001$ & $0.230 \pm 0.001$ & $3.840 \pm 0.017$ & $0.103 \pm 0.000$ & $0.134 \pm 0.001$ & $+0.225 \pm 0.009$ & $3.7$ \\
\qquad Chol & NLL & $0.241 \pm 0.001$ & $0.297 \pm 0.001$ & $4.875 \pm 0.013$ & $0.123 \pm 0.001$ & $0.192 \pm 0.001$ & $\mathbf{-0.428 \pm 0.015}$ & $7.1$ \\
\qquad Chol & KS & $\mathbf{0.161 \pm 0.002}$ & $0.229 \pm 0.002$ & $\mathbf{3.813 \pm 0.030}$ & $\mathbf{0.101 \pm 0.001}$ & $\mathbf{0.133 \pm 0.002}$ & $(+2.392 \pm 5.349) \times 10^{28}$ & $3.7$ \\
\qquad SB & KS & $\mathbf{0.160 \pm 0.004}$ & $0.240 \pm 0.007$ & $4.124 \pm 0.098$ & $0.126 \pm 0.004$ & $\mathbf{0.133 \pm 0.003}$ & - & $5.6$ \\
\multicolumn{9}{l}{\quad\emph{Spectral space}}\\
\qquad SCORE & CRPS+KS & $\mathbf{0.158 \pm 0.002}$ & $\mathbf{0.227 \pm 0.001}$ & $\mathbf{3.792 \pm 0.020}$ & $\mathbf{0.101 \pm 0.000}$ & $\mathbf{0.131 \pm 0.001}$ & $-0.099 \pm 0.387$ & $\mathbf{3.2}$ \\
\cmidrule(l){1-9}
\multicolumn{9}{l}{\textbf{Multivariate, $T = 96$}}\\
\multicolumn{9}{l}{\quad\emph{Physical space}}\\
\qquad Det & MSE & $0.284 \pm 0.001$ & $0.345 \pm 0.001$ & $12.645 \pm 0.023$ & $0.395 \pm 0.001$ & $0.226 \pm 0.001$ & - & $5.5$ \\
\qquad Diag & NLL & $0.416 \pm 0.035$ & $0.322 \pm 0.015$ & $11.411 \pm 0.558$ & $0.514 \pm 0.051$ & $0.265 \pm 0.017$ & $+0.732 \pm 0.017$ & $6.1$ \\
\qquad Diag & KS & $\mathbf{0.278 \pm 0.002}$ & $\mathbf{0.251 \pm 0.001}$ & $\mathbf{9.078 \pm 0.025}$ & $\mathbf{0.355 \pm 0.001}$ & $\mathbf{0.194 \pm 0.001}$ & $+0.712 \pm 0.031$ & $2.4$ \\
\qquad LorD $r{=}1$ & NLL & $0.361 \pm 0.005$ & $0.297 \pm 0.003$ & $10.520 \pm 0.091$ & $0.454 \pm 0.007$ & $0.237 \pm 0.003$ & $+0.578 \pm 0.013$ & $4.8$ \\
\qquad LorD $r{=}1$ & KS & $\mathbf{0.277 \pm 0.001}$ & $\mathbf{0.250 \pm 0.001}$ & $\mathbf{9.055 \pm 0.021}$ & $\mathbf{0.354 \pm 0.001}$ & $\mathbf{0.193 \pm 0.001}$ & $+0.642 \pm 0.043$ & $2.3$ \\
\qquad SB & KS & $\mathbf{0.284 \pm 0.006}$ & $0.309 \pm 0.019$ & $11.413 \pm 0.658$ & $0.388 \pm 0.008$ & $0.220 \pm 0.007$ & - & $4.5$ \\
\multicolumn{9}{l}{\quad\emph{Spectral space}}\\
\qquad SCORE & CRPS+KS & $\mathbf{0.280 \pm 0.002}$ & $\mathbf{0.250 \pm 0.000}$ & $\mathbf{9.084 \pm 0.024}$ & $\mathbf{0.356 \pm 0.001}$ & $\mathbf{0.194 \pm 0.001}$ & $\mathbf{+0.057 \pm 0.031}$ & $\mathbf{1.9}$ \\
\cmidrule(l){1-9}
\multicolumn{9}{l}{\textbf{Multivariate, $T = 192$}}\\
\multicolumn{9}{l}{\quad\emph{Physical space}}\\
\qquad Det & MSE & $\mathbf{0.349 \pm 0.010}$ & $0.386 \pm 0.004$ & $20.085 \pm 0.258$ & $0.483 \pm 0.010$ & $0.271 \pm 0.005$ & - & $5.3$ \\
\qquad Diag & NLL & $0.509 \pm 0.016$ & $0.360 \pm 0.006$ & $17.923 \pm 0.289$ & $0.580 \pm 0.018$ & $0.299 \pm 0.005$ & $+0.840 \pm 0.015$ & $6.1$ \\
\qquad Diag & KS & $\mathbf{0.344 \pm 0.002}$ & $\mathbf{0.284 \pm 0.001}$ & $\mathbf{14.526 \pm 0.049}$ & $\mathbf{0.435 \pm 0.001}$ & $\mathbf{0.231 \pm 0.001}$ & $+0.843 \pm 0.039$ & $2.6$ \\
\qquad LorD $r{=}1$ & NLL & $0.391 \pm 0.002$ & $0.314 \pm 0.002$ & $15.651 \pm 0.085$ & $0.495 \pm 0.009$ & $0.255 \pm 0.003$ & $+0.661 \pm 0.010$ & $4.3$ \\
\qquad LorD $r{=}1$ & KS & $\mathbf{0.343 \pm 0.002}$ & $\mathbf{0.283 \pm 0.001}$ & $\mathbf{14.523 \pm 0.056}$ & $\mathbf{0.435 \pm 0.002}$ & $\mathbf{0.231 \pm 0.001}$ & $+0.730 \pm 0.026$ & $2.3$ \\
\qquad SB & KS & $\mathbf{0.352 \pm 0.008}$ & $0.348 \pm 0.040$ & $18.315 \pm 2.077$ & $0.478 \pm 0.020$ & $0.264 \pm 0.016$ & - & $4.9$ \\
\multicolumn{9}{l}{\quad\emph{Spectral space}}\\
\qquad SCORE & CRPS+KS & $\mathbf{0.347 \pm 0.002}$ & $\mathbf{0.284 \pm 0.000}$ & $\mathbf{14.583 \pm 0.034}$ & $\mathbf{0.438 \pm 0.002}$ & $\mathbf{0.232 \pm 0.001}$ & $\mathbf{+0.107 \pm 0.018}$ & $\mathbf{2.0}$ \\
\bottomrule
\end{tabular}}
\end{table*}

\begin{table*}[ht]
\centering
\caption{Test scores on ETTm1.}
\label{tab:ettm1}
\small
\adjustbox{max width=\linewidth}{%
\begin{tabular}{llrrrrrrr}
\toprule
Method & Objective & MSE $\downarrow$ & CRPS $\downarrow$ & ES $\downarrow$ & VS $\downarrow$ & KS $\downarrow$ & NLL/$d$ $\downarrow$ & $\varnothing$ Rank $\downarrow$ \\
\midrule
\multicolumn{9}{l}{\textbf{Univariate, $T = 96$}}\\
\multicolumn{9}{l}{\quad\emph{Physical space}}\\
\qquad Det & MSE & $\mathbf{0.026 \pm 0.000}$ & $0.122 \pm 0.001$ & $1.409 \pm 0.004$ & $0.015 \pm 0.000$ & $\mathbf{0.025 \pm 0.000}$ & - & $5.1$ \\
\qquad Diag & NLL & $\mathbf{0.026 \pm 0.000}$ & $\mathbf{0.087 \pm 0.000}$ & $1.048 \pm 0.002$ & $0.017 \pm 0.001$ & $\mathbf{0.025 \pm 0.000}$ & $-0.390 \pm 0.007$ & $3.6$ \\
\qquad Diag & KS & $\mathbf{0.026 \pm 0.000}$ & $0.089 \pm 0.000$ & $1.060 \pm 0.003$ & $0.013 \pm 0.000$ & $\mathbf{0.025 \pm 0.000}$ & $-0.196 \pm 0.039$ & $3.9$ \\
\qquad LorD $r{=}1$ & NLL & $0.029 \pm 0.001$ & $0.093 \pm 0.002$ & $1.073 \pm 0.015$ & $0.012 \pm 0.000$ & $0.027 \pm 0.001$ & $-0.920 \pm 0.007$ & $4.8$ \\
\qquad LorD $r{=}1$ & KS & $\mathbf{0.026 \pm 0.000}$ & $0.090 \pm 0.000$ & $1.042 \pm 0.005$ & $\mathbf{0.011 \pm 0.000}$ & $\mathbf{0.025 \pm 0.000}$ & $-0.663 \pm 0.044$ & $3.0$ \\
\qquad SB & KS & $\mathbf{0.026 \pm 0.000}$ & $0.121 \pm 0.000$ & $1.408 \pm 0.004$ & $0.015 \pm 0.000$ & $\mathbf{0.025 \pm 0.000}$ & - & $5.1$ \\
\multicolumn{9}{l}{\quad\emph{Spectral space}}\\
\qquad SCORE & CRPS+KS & $\mathbf{0.026 \pm 0.000}$ & $0.088 \pm 0.000$ & $\mathbf{1.024 \pm 0.002}$ & $\mathbf{0.011 \pm 0.000}$ & $\mathbf{0.025 \pm 0.000}$ & $\mathbf{-1.587 \pm 0.031}$ & $\mathbf{2.1}$ \\
\cmidrule(l){1-9}
\multicolumn{9}{l}{\textbf{Univariate, $T = 192$}}\\
\multicolumn{9}{l}{\quad\emph{Physical space}}\\
\qquad Det & MSE & $\mathbf{0.039 \pm 0.000}$ & $0.151 \pm 0.001$ & $2.477 \pm 0.011$ & $0.023 \pm 0.000$ & $0.038 \pm 0.000$ & - & $5.7$ \\
\qquad Diag & NLL & $\mathbf{0.039 \pm 0.000}$ & $\mathbf{0.109 \pm 0.000}$ & $1.848 \pm 0.003$ & $0.026 \pm 0.000$ & $0.037 \pm 0.000$ & $-0.122 \pm 0.006$ & $3.8$ \\
\qquad Diag & KS & $\mathbf{0.039 \pm 0.000}$ & $0.111 \pm 0.000$ & $1.843 \pm 0.007$ & $0.019 \pm 0.000$ & $0.037 \pm 0.000$ & $+0.159 \pm 0.009$ & $3.8$ \\
\qquad LorD $r{=}1$ & NLL & $0.042 \pm 0.001$ & $0.115 \pm 0.003$ & $1.878 \pm 0.053$ & $\mathbf{0.017 \pm 0.002}$ & $0.040 \pm 0.002$ & $-0.652 \pm 0.027$ & $4.2$ \\
\qquad LorD $r{=}1$ & KS & $\mathbf{0.039 \pm 0.000}$ & $0.111 \pm 0.001$ & $1.819 \pm 0.013$ & $0.017 \pm 0.000$ & $0.037 \pm 0.000$ & $-0.419 \pm 0.057$ & $3.1$ \\
\qquad SB & KS & $\mathbf{0.039 \pm 0.000}$ & $0.150 \pm 0.001$ & $2.456 \pm 0.009$ & $0.023 \pm 0.000$ & $0.038 \pm 0.000$ & - & $5.5$ \\
\multicolumn{9}{l}{\quad\emph{Spectral space}}\\
\qquad SCORE & CRPS+KS & $\mathbf{0.039 \pm 0.000}$ & $\mathbf{0.109 \pm 0.000}$ & $\mathbf{1.793 \pm 0.006}$ & $\mathbf{0.016 \pm 0.000}$ & $\mathbf{0.036 \pm 0.000}$ & $\mathbf{-1.458 \pm 0.022}$ & $\mathbf{1.6}$ \\
\cmidrule(l){1-9}
\multicolumn{9}{l}{\textbf{Multivariate, $T = 96$}}\\
\multicolumn{9}{l}{\quad\emph{Physical space}}\\
\qquad Det & MSE & $0.290 \pm 0.001$ & $0.343 \pm 0.001$ & $13.276 \pm 0.018$ & $0.408 \pm 0.003$ & $0.238 \pm 0.001$ & - & $6.4$ \\
\qquad Diag & NLL & $\mathbf{0.281 \pm 0.007}$ & $\mathbf{0.243 \pm 0.004}$ & $\mathbf{9.490 \pm 0.130}$ & $\mathbf{0.364 \pm 0.007}$ & $\mathbf{0.206 \pm 0.004}$ & $+0.664 \pm 0.129$ & $2.8$ \\
\qquad Diag & KS & $\mathbf{0.289 \pm 0.004}$ & $0.254 \pm 0.002$ & $9.577 \pm 0.053$ & $0.390 \pm 0.004$ & $0.208 \pm 0.001$ & $+0.756 \pm 0.054$ & $4.4$ \\
\qquad LorD $r{=}1$ & NLL & $\mathbf{0.285 \pm 0.009}$ & $0.245 \pm 0.004$ & $\mathbf{9.572 \pm 0.131}$ & $\mathbf{0.372 \pm 0.005}$ & $0.210 \pm 0.004$ & $+0.515 \pm 0.096$ & $3.2$ \\
\qquad LorD $r{=}1$ & KS & $\mathbf{0.287 \pm 0.002}$ & $0.249 \pm 0.001$ & $\mathbf{9.465 \pm 0.020}$ & $0.378 \pm 0.002$ & $\mathbf{0.204 \pm 0.001}$ & $+0.805 \pm 0.055$ & $3.2$ \\
\qquad SB & KS & $0.311 \pm 0.013$ & $0.260 \pm 0.005$ & $9.963 \pm 0.231$ & $0.410 \pm 0.014$ & $0.214 \pm 0.005$ & - & $5.9$ \\
\multicolumn{9}{l}{\quad\emph{Spectral space}}\\
\qquad SCORE & CRPS+KS & $\mathbf{0.283 \pm 0.003}$ & $\mathbf{0.239 \pm 0.001}$ & $\mathbf{9.429 \pm 0.045}$ & $\mathbf{0.368 \pm 0.004}$ & $\mathbf{0.203 \pm 0.001}$ & $\mathbf{-0.231 \pm 0.080}$ & $\mathbf{2.0}$ \\
\cmidrule(l){1-9}
\multicolumn{9}{l}{\textbf{Multivariate, $T = 192$}}\\
\multicolumn{9}{l}{\quad\emph{Physical space}}\\
\qquad Det & MSE & $\mathbf{0.331 \pm 0.004}$ & $0.369 \pm 0.001$ & $20.393 \pm 0.106$ & $0.477 \pm 0.003$ & $0.271 \pm 0.002$ & - & $6.2$ \\
\qquad Diag & NLL & $0.335 \pm 0.002$ & $0.272 \pm 0.002$ & $14.923 \pm 0.066$ & $0.434 \pm 0.002$ & $0.241 \pm 0.001$ & $+0.964 \pm 0.069$ & $4.0$ \\
\qquad Diag & KS & $\mathbf{0.330 \pm 0.003}$ & $0.277 \pm 0.001$ & $14.676 \pm 0.061$ & $0.456 \pm 0.003$ & $0.235 \pm 0.001$ & $+0.872 \pm 0.041$ & $3.8$ \\
\qquad LorD $r{=}1$ & NLL & $\mathbf{0.329 \pm 0.009}$ & $0.264 \pm 0.003$ & $\mathbf{14.689 \pm 0.199}$ & $\mathbf{0.433 \pm 0.019}$ & $\mathbf{0.236 \pm 0.005}$ & $+0.673 \pm 0.124$ & $2.8$ \\
\qquad LorD $r{=}1$ & KS & $\mathbf{0.330 \pm 0.005}$ & $0.275 \pm 0.001$ & $\mathbf{14.641 \pm 0.084}$ & $0.455 \pm 0.005$ & $0.234 \pm 0.002$ & $+0.874 \pm 0.020$ & $3.5$ \\
\qquad SB & KS & $0.361 \pm 0.008$ & $0.284 \pm 0.003$ & $15.169 \pm 0.161$ & $0.516 \pm 0.038$ & $0.240 \pm 0.004$ & - & $5.9$ \\
\multicolumn{9}{l}{\quad\emph{Spectral space}}\\
\qquad SCORE & CRPS+KS & $\mathbf{0.325 \pm 0.003}$ & $\mathbf{0.259 \pm 0.001}$ & $\mathbf{14.503 \pm 0.062}$ & $\mathbf{0.425 \pm 0.006}$ & $\mathbf{0.230 \pm 0.001}$ & $\mathbf{-0.182 \pm 0.038}$ & $\mathbf{1.7}$ \\
\bottomrule
\end{tabular}}
\end{table*}

\begin{table*}[ht]
\centering
\caption{Test scores on ETTm2.}
\label{tab:ettm2}
\small
\adjustbox{max width=\linewidth}{%
\begin{tabular}{llrrrrrrr}
\toprule
Method & Objective & MSE $\downarrow$ & CRPS $\downarrow$ & ES $\downarrow$ & VS $\downarrow$ & KS $\downarrow$ & NLL/$d$ $\downarrow$ & $\varnothing$ Rank $\downarrow$ \\
\midrule
\multicolumn{9}{l}{\textbf{Univariate, $T = 96$}}\\
\multicolumn{9}{l}{\quad\emph{Physical space}}\\
\qquad Det & MSE & $0.065 \pm 0.000$ & $0.187 \pm 0.001$ & $2.194 \pm 0.008$ & $0.053 \pm 0.001$ & $0.061 \pm 0.000$ & - & $5.4$ \\
\qquad Diag & NLL & $\mathbf{0.064 \pm 0.001}$ & $\mathbf{0.133 \pm 0.002}$ & $1.676 \pm 0.022$ & $0.071 \pm 0.004$ & $\mathbf{0.058 \pm 0.001}$ & $-0.081 \pm 0.026$ & $3.6$ \\
\qquad Diag & KS & $\mathbf{0.064 \pm 0.000}$ & $0.138 \pm 0.000$ & $1.664 \pm 0.003$ & $0.059 \pm 0.000$ & $0.058 \pm 0.000$ & $+0.240 \pm 0.018$ & $3.9$ \\
\qquad LorD $r{=}1$ & NLL & $0.068 \pm 0.001$ & $0.147 \pm 0.002$ & $1.757 \pm 0.016$ & $0.058 \pm 0.001$ & $0.068 \pm 0.002$ & $-0.627 \pm 0.033$ & $5.2$ \\
\qquad LorD $r{=}1$ & KS & $\mathbf{0.064 \pm 0.001}$ & $0.137 \pm 0.001$ & $\mathbf{1.610 \pm 0.011}$ & $\mathbf{0.051 \pm 0.001}$ & $\mathbf{0.057 \pm 0.000}$ & $+0.119 \pm 0.105$ & $2.9$ \\
\qquad SB & KS & $\mathbf{0.065 \pm 0.001}$ & $0.184 \pm 0.001$ & $2.158 \pm 0.009$ & $\mathbf{0.052 \pm 0.001}$ & $0.060 \pm 0.001$ & - & $4.6$ \\
\multicolumn{9}{l}{\quad\emph{Spectral space}}\\
\qquad SCORE & CRPS+KS & $\mathbf{0.063 \pm 0.001}$ & $\mathbf{0.134 \pm 0.001}$ & $\mathbf{1.589 \pm 0.015}$ & $\mathbf{0.050 \pm 0.001}$ & $\mathbf{0.056 \pm 0.001}$ & $\mathbf{-1.526 \pm 0.066}$ & $\mathbf{1.8}$ \\
\cmidrule(l){1-9}
\multicolumn{9}{l}{\textbf{Univariate, $T = 192$}}\\
\multicolumn{9}{l}{\quad\emph{Physical space}}\\
\qquad Det & MSE & $0.095 \pm 0.001$ & $0.232 \pm 0.002$ & $3.866 \pm 0.015$ & $0.076 \pm 0.001$ & $0.087 \pm 0.001$ & - & $5.4$ \\
\qquad Diag & NLL & $\mathbf{0.094 \pm 0.001}$ & $\mathbf{0.169 \pm 0.003}$ & $2.951 \pm 0.047$ & $0.099 \pm 0.008$ & $\mathbf{0.084 \pm 0.002}$ & $+0.232 \pm 0.035$ & $4.2$ \\
\qquad Diag & KS & $\mathbf{0.093 \pm 0.001}$ & $0.171 \pm 0.002$ & $2.899 \pm 0.020$ & $0.080 \pm 0.002$ & $\mathbf{0.082 \pm 0.001}$ & $+0.460 \pm 0.036$ & $4.0$ \\
\qquad LorD $r{=}1$ & NLL & $0.097 \pm 0.003$ & $0.175 \pm 0.002$ & $2.957 \pm 0.037$ & $0.078 \pm 0.002$ & $0.088 \pm 0.002$ & $-0.127 \pm 0.047$ & $4.2$ \\
\qquad LorD $r{=}1$ & KS & $\mathbf{0.093 \pm 0.001}$ & $\mathbf{0.170 \pm 0.002}$ & $\mathbf{2.839 \pm 0.024}$ & $\mathbf{0.070 \pm 0.002}$ & $\mathbf{0.081 \pm 0.001}$ & $+0.226 \pm 0.099$ & $2.6$ \\
\qquad SB & KS & $\mathbf{0.094 \pm 0.001}$ & $0.230 \pm 0.003$ & $3.822 \pm 0.033$ & $0.075 \pm 0.001$ & $0.086 \pm 0.001$ & - & $5.3$ \\
\multicolumn{9}{l}{\quad\emph{Spectral space}}\\
\qquad SCORE & CRPS+KS & $\mathbf{0.093 \pm 0.000}$ & $\mathbf{0.167 \pm 0.001}$ & $\mathbf{2.812 \pm 0.018}$ & $\mathbf{0.067 \pm 0.001}$ & $\mathbf{0.081 \pm 0.001}$ & $\mathbf{-1.240 \pm 0.097}$ & $\mathbf{1.9}$ \\
\cmidrule(l){1-9}
\multicolumn{9}{l}{\textbf{Multivariate, $T = 96$}}\\
\multicolumn{9}{l}{\quad\emph{Physical space}}\\
\qquad Det & MSE & $0.168 \pm 0.003$ & $0.257 \pm 0.003$ & $9.571 \pm 0.092$ & $0.233 \pm 0.003$ & $0.142 \pm 0.002$ & - & $7.0$ \\
\qquad Diag & NLL & $\mathbf{0.160 \pm 0.001}$ & $0.190 \pm 0.001$ & $6.915 \pm 0.029$ & $0.223 \pm 0.002$ & $\mathbf{0.126 \pm 0.001}$ & $+0.576 \pm 0.043$ & $4.0$ \\
\qquad Diag & KS & $\mathbf{0.159 \pm 0.002}$ & $\mathbf{0.184 \pm 0.001}$ & $\mathbf{6.767 \pm 0.035}$ & $\mathbf{0.212 \pm 0.001}$ & $\mathbf{0.124 \pm 0.001}$ & $+43.003 \pm 0.014$ & $\mathbf{2.6}$ \\
\qquad LorD $r{=}1$ & NLL & $\mathbf{0.160 \pm 0.001}$ & $0.189 \pm 0.001$ & $6.839 \pm 0.013$ & $0.218 \pm 0.002$ & $\mathbf{0.125 \pm 0.000}$ & $\mathbf{+0.472 \pm 0.042}$ & $3.3$ \\
\qquad LorD $r{=}1$ & KS & $\mathbf{0.160 \pm 0.002}$ & $\mathbf{0.184 \pm 0.001}$ & $\mathbf{6.784 \pm 0.037}$ & $\mathbf{0.212 \pm 0.002}$ & $\mathbf{0.125 \pm 0.001}$ & $+42.871 \pm 0.123$ & $2.7$ \\
\qquad SB & KS & $0.163 \pm 0.001$ & $0.191 \pm 0.002$ & $7.417 \pm 0.028$ & $0.220 \pm 0.001$ & $0.128 \pm 0.001$ & - & $5.3$ \\
\multicolumn{9}{l}{\quad\emph{Spectral space}}\\
\qquad SCORE & CRPS+KS & $\mathbf{0.161 \pm 0.001}$ & $\mathbf{0.183 \pm 0.001}$ & $\mathbf{6.800 \pm 0.030}$ & $\mathbf{0.212 \pm 0.001}$ & $\mathbf{0.125 \pm 0.001}$ & $(+2.349 \pm 4.820) \times 10^{7}$ & $3.0$ \\
\cmidrule(l){1-9}
\multicolumn{9}{l}{\textbf{Multivariate, $T = 192$}}\\
\multicolumn{9}{l}{\quad\emph{Physical space}}\\
\qquad Det & MSE & $0.223 \pm 0.004$ & $0.295 \pm 0.003$ & $15.515 \pm 0.141$ & $0.305 \pm 0.004$ & $0.180 \pm 0.003$ & - & $6.6$ \\
\qquad Diag & NLL & $0.220 \pm 0.001$ & $0.223 \pm 0.001$ & $11.450 \pm 0.080$ & $0.299 \pm 0.007$ & $0.162 \pm 0.002$ & $+1.157 \pm 0.230$ & $4.8$ \\
\qquad Diag & KS & $0.214 \pm 0.002$ & $0.214 \pm 0.001$ & $11.129 \pm 0.078$ & $0.276 \pm 0.002$ & $0.157 \pm 0.002$ & $+67.366 \pm 0.028$ & $2.9$ \\
\qquad LorD $r{=}1$ & NLL & $0.216 \pm 0.002$ & $0.222 \pm 0.002$ & $11.347 \pm 0.109$ & $0.298 \pm 0.008$ & $0.161 \pm 0.002$ & $\mathbf{+0.717 \pm 0.035}$ & $4.1$ \\
\qquad LorD $r{=}1$ & KS & $\mathbf{0.210 \pm 0.000}$ & $\mathbf{0.212 \pm 0.000}$ & $\mathbf{10.969 \pm 0.013}$ & $\mathbf{0.272 \pm 0.001}$ & $\mathbf{0.153 \pm 0.000}$ & $+67.270 \pm 0.051$ & $\mathbf{1.5}$ \\
\qquad SB & KS & $0.221 \pm 0.003$ & $0.220 \pm 0.001$ & $12.021 \pm 0.037$ & $0.289 \pm 0.003$ & $0.161 \pm 0.001$ & - & $5.2$ \\
\multicolumn{9}{l}{\quad\emph{Spectral space}}\\
\qquad SCORE & CRPS+KS & $0.212 \pm 0.001$ & $\mathbf{0.212 \pm 0.001}$ & $11.037 \pm 0.038$ & $\mathbf{0.273 \pm 0.001}$ & $0.155 \pm 0.001$ & $(+8.006 \pm 13.424) \times 10^{6}$ & $2.8$ \\
\bottomrule
\end{tabular}}
\end{table*}

\begin{table*}[ht]
\centering
\caption{Test scores on National illness (multivariate, 7 series).}
\label{tab:illness}
\small
\adjustbox{max width=\linewidth}{%
\begin{tabular}{llrrrrrrr}
\toprule
Method & Objective & MSE $\downarrow$ & CRPS $\downarrow$ & ES $\downarrow$ & VS $\downarrow$ & KS $\downarrow$ & NLL/$d$ $\downarrow$ & $\varnothing$ Rank $\downarrow$ \\
\midrule
\multicolumn{9}{l}{\textbf{$T = 24$}}\\
\multicolumn{9}{l}{\quad\emph{Physical space}}\\
\qquad Det & MSE & $2.078 \pm 0.128$ & $0.932 \pm 0.051$ & $16.523 \pm 0.817$ & $2.430 \pm 0.236$ & $0.668 \pm 0.034$ & - & $6.3$ \\
\qquad Diag & NLL & $\mathbf{1.752 \pm 0.039}$ & $\mathbf{0.631 \pm 0.025}$ & $\mathbf{11.384 \pm 0.338}$ & $\mathbf{1.789 \pm 0.040}$ & $\mathbf{0.445 \pm 0.021}$ & $+3.526 \pm 0.604$ & $\mathbf{1.8}$ \\
\qquad Diag & KS & $2.177 \pm 0.155$ & $0.748 \pm 0.058$ & $12.846 \pm 0.762$ & $2.161 \pm 0.145$ & $\mathbf{0.461 \pm 0.021}$ & $\mathbf{+2.409 \pm 0.290}$ & $3.8$ \\
\qquad LorD $r{=}1$ & NLL & $2.136 \pm 0.103$ & $0.735 \pm 0.015$ & $12.828 \pm 0.182$ & $2.200 \pm 0.068$ & $\mathbf{0.461 \pm 0.005}$ & $\mathbf{+2.568 \pm 0.550}$ & $4.0$ \\
\qquad LorD $r{=}1$ & KS & $2.066 \pm 0.040$ & $0.721 \pm 0.017$ & $12.563 \pm 0.245$ & $2.055 \pm 0.078$ & $\mathbf{0.466 \pm 0.013}$ & $\mathbf{+2.498 \pm 0.248}$ & $3.7$ \\
\qquad SB & KS & $1.974 \pm 0.092$ & $0.711 \pm 0.037$ & $12.782 \pm 0.495$ & $2.024 \pm 0.089$ & $0.507 \pm 0.023$ & - & $4.3$ \\
\multicolumn{9}{l}{\quad\emph{Spectral space}}\\
\qquad SCORE & CRPS+KS & $1.962 \pm 0.081$ & $0.696 \pm 0.031$ & $12.369 \pm 0.340$ & $1.964 \pm 0.093$ & $\mathbf{0.475 \pm 0.022}$ & $\mathbf{+2.948 \pm 2.796}$ & $3.8$ \\
\cmidrule(l){1-9}
\multicolumn{9}{l}{\textbf{$T = 36$}}\\
\multicolumn{9}{l}{\quad\emph{Physical space}}\\
\qquad Det & MSE & $\mathbf{2.011 \pm 0.193}$ & $0.948 \pm 0.053$ & $20.728 \pm 1.291$ & $2.698 \pm 0.276$ & $0.723 \pm 0.041$ & - & $5.8$ \\
\qquad Diag & NLL & $\mathbf{1.701 \pm 0.135}$ & $\mathbf{0.633 \pm 0.031}$ & $\mathbf{14.232 \pm 0.569}$ & $\mathbf{2.071 \pm 0.251}$ & $\mathbf{0.477 \pm 0.015}$ & $+3.214 \pm 0.965$ & $\mathbf{2.7}$ \\
\qquad Diag & KS & $2.136 \pm 0.120$ & $0.766 \pm 0.035$ & $16.459 \pm 0.592$ & $2.468 \pm 0.183$ & $0.510 \pm 0.017$ & $+2.392 \pm 0.218$ & $4.7$ \\
\qquad LorD $r{=}1$ & NLL & $\mathbf{1.897 \pm 0.143}$ & $0.708 \pm 0.015$ & $15.267 \pm 0.345$ & $2.392 \pm 0.093$ & $\mathbf{0.469 \pm 0.006}$ & $+2.708 \pm 0.609$ & $3.4$ \\
\qquad LorD $r{=}1$ & KS & $1.954 \pm 0.035$ & $0.718 \pm 0.021$ & $15.603 \pm 0.324$ & $2.296 \pm 0.067$ & $0.502 \pm 0.019$ & $+2.344 \pm 0.331$ & $4.0$ \\
\qquad SB & KS & $\mathbf{1.985 \pm 0.332}$ & $\mathbf{0.839 \pm 0.235}$ & $\mathbf{18.276 \pm 4.376}$ & $\mathbf{2.410 \pm 0.440}$ & $0.636 \pm 0.151$ & - & $4.1$ \\
\multicolumn{9}{l}{\quad\emph{Spectral space}}\\
\qquad SCORE & CRPS+KS & $\mathbf{1.821 \pm 0.062}$ & $\mathbf{0.675 \pm 0.033}$ & $15.285 \pm 0.419$ & $\mathbf{2.170 \pm 0.057}$ & $0.515 \pm 0.019$ & $\mathbf{+1.072 \pm 0.188}$ & $2.8$ \\
\bottomrule
\end{tabular}}
\end{table*}

\begin{table*}[ht]
\centering
\caption{Test scores on Weather (multivariate, 21 series).}
\label{tab:weather}
\small
\adjustbox{max width=\linewidth}{%
\begin{tabular}{llrrrrrrr}
\toprule
Method & Objective & MSE $\downarrow$ & CRPS $\downarrow$ & ES $\downarrow$ & VS $\downarrow$ & KS $\downarrow$ & NLL/$d$ $\downarrow$ & $\varnothing$ Rank $\downarrow$ \\
\midrule
\multicolumn{9}{l}{\textbf{$T = 96$}}\\
\multicolumn{9}{l}{\quad\emph{Physical space}}\\
\qquad Det & MSE & $\mathbf{0.147 \pm 0.001}$ & $0.196 \pm 0.001$ & $15.990 \pm 0.036$ & $0.227 \pm 0.001$ & $0.130 \pm 0.000$ & - & $5.3$ \\
\qquad Diag & NLL & $0.189 \pm 0.014$ & $0.182 \pm 0.003$ & $13.428 \pm 0.276$ & $0.509 \pm 0.106$ & $0.148 \pm 0.004$ & $\mathbf{+3.731 \pm 1.901}$ & $5.2$ \\
\qquad Diag & KS & $\mathbf{0.146 \pm 0.001}$ & $0.158 \pm 0.001$ & $\mathbf{11.686 \pm 0.040}$ & $0.221 \pm 0.002$ & $\mathbf{0.121 \pm 0.001}$ & $+243.613 \pm 9.678$ & $3.0$ \\
\qquad LorD $r{=}1$ & NLL & $0.182 \pm 0.008$ & $0.183 \pm 0.005$ & $13.524 \pm 0.368$ & $0.527 \pm 0.100$ & $0.152 \pm 0.007$ & $\mathbf{+3.601 \pm 1.687}$ & $5.2$ \\
\qquad LorD $r{=}1$ & KS & $\mathbf{0.145 \pm 0.001}$ & $0.155 \pm 0.000$ & $\mathbf{11.701 \pm 0.028}$ & $0.220 \pm 0.001$ & $\mathbf{0.121 \pm 0.001}$ & $+242.577 \pm 10.717$ & $2.8$ \\
\qquad SB & KS & $\mathbf{0.179 \pm 0.050}$ & $0.156 \pm 0.005$ & $\mathbf{12.239 \pm 0.811}$ & $\mathbf{0.420 \pm 0.259}$ & $\mathbf{0.123 \pm 0.002}$ & - & $3.2$ \\
\multicolumn{9}{l}{\quad\emph{Spectral space}}\\
\qquad SCORE & CRPS+KS & $\mathbf{0.146 \pm 0.000}$ & $\mathbf{0.138 \pm 0.000}$ & $\mathbf{11.681 \pm 0.027}$ & $\mathbf{0.207 \pm 0.001}$ & $\mathbf{0.121 \pm 0.000}$ & $+311.086 \pm 20.259$ & $\mathbf{2.6}$ \\
\cmidrule(l){1-9}
\multicolumn{9}{l}{\textbf{$T = 192$}}\\
\multicolumn{9}{l}{\quad\emph{Physical space}}\\
\qquad Det & MSE & $0.192 \pm 0.001$ & $0.240 \pm 0.002$ & $26.367 \pm 0.074$ & $0.285 \pm 0.001$ & $0.168 \pm 0.001$ & - & $5.4$ \\
\qquad Diag & NLL & $0.238 \pm 0.030$ & $0.216 \pm 0.004$ & $21.435 \pm 0.828$ & $0.538 \pm 0.207$ & $0.178 \pm 0.006$ & $\mathbf{+4.318 \pm 0.434}$ & $5.2$ \\
\qquad Diag & KS & $0.192 \pm 0.000$ & $0.193 \pm 0.001$ & $\mathbf{19.239 \pm 0.039}$ & $0.274 \pm 0.002$ & $0.155 \pm 0.000$ & $+290.302 \pm 11.746$ & $3.3$ \\
\qquad LorD $r{=}1$ & NLL & $0.223 \pm 0.005$ & $0.214 \pm 0.001$ & $21.239 \pm 0.264$ & $0.524 \pm 0.078$ & $0.179 \pm 0.003$ & $\mathbf{+3.349 \pm 0.755}$ & $5.2$ \\
\qquad LorD $r{=}1$ & KS & $0.191 \pm 0.001$ & $0.190 \pm 0.001$ & $\mathbf{19.187 \pm 0.036}$ & $0.272 \pm 0.003$ & $\mathbf{0.154 \pm 0.000}$ & $+300.584 \pm 15.108$ & $2.8$ \\
\qquad SB & KS & $\mathbf{0.204 \pm 0.018}$ & $0.188 \pm 0.002$ & $\mathbf{19.439 \pm 0.249}$ & $\mathbf{0.369 \pm 0.169}$ & $\mathbf{0.155 \pm 0.001}$ & - & $3.1$ \\
\multicolumn{9}{l}{\quad\emph{Spectral space}}\\
\qquad SCORE & CRPS+KS & $\mathbf{0.189 \pm 0.000}$ & $\mathbf{0.171 \pm 0.000}$ & $\mathbf{19.237 \pm 0.038}$ & $\mathbf{0.257 \pm 0.001}$ & $\mathbf{0.154 \pm 0.000}$ & $+366.681 \pm 32.371$ & $\mathbf{2.2}$ \\
\bottomrule
\end{tabular}}
\end{table*}

\begin{table*}[ht]
\centering
\caption{Test scores on Electricity (multivariate, 321 series).}
\label{tab:electricity}
\small
\adjustbox{max width=\linewidth}{%
\begin{tabular}{llrrrrrrr}
\toprule
Method & Objective & MSE $\downarrow$ & CRPS $\downarrow$ & ES $\downarrow$ & VS $\downarrow$ & KS $\downarrow$ & NLL/$d$ $\downarrow$ & $\varnothing$ Rank $\downarrow$ \\
\midrule
\multicolumn{9}{l}{\textbf{$T = 96$}}\\
\multicolumn{9}{l}{\quad\emph{Physical space}}\\
\qquad Det & MSE & $0.130 \pm 0.000$ & $0.224 \pm 0.000$ & $61.609 \pm 0.064$ & $0.196 \pm 0.000$ & $0.119 \pm 0.000$ & - & $4.7$ \\
\qquad Diag & NLL & $0.133 \pm 0.001$ & $0.174 \pm 0.002$ & $45.995 \pm 1.015$ & $0.245 \pm 0.035$ & $0.120 \pm 0.005$ & $\mathbf{+0.587 \pm 0.039}$ & $4.1$ \\
\qquad Diag & KS & $0.130 \pm 0.001$ & $0.184 \pm 0.004$ & $\mathbf{44.000 \pm 0.127}$ & $0.203 \pm 0.005$ & $\mathbf{0.112 \pm 0.001}$ & $+228.388 \pm 156.785$ & $3.3$ \\
\qquad LorD $r{=}1$ & NLL & $0.363 \pm 0.070$ & $0.609 \pm 0.083$ & $138.623 \pm 17.551$ & $2.135 \pm 0.397$ & $0.386 \pm 0.019$ & $\mathbf{+0.568 \pm 0.021}$ & $6.1$ \\
\qquad LorD $r{=}1$ & KS & $0.130 \pm 0.001$ & $0.180 \pm 0.003$ & $44.049 \pm 0.093$ & $0.205 \pm 0.009$ & $0.112 \pm 0.000$ & $+248.586 \pm 168.724$ & $3.4$ \\
\qquad SB & KS & $0.133 \pm 0.001$ & $0.179 \pm 0.009$ & $\mathbf{46.126 \pm 3.845}$ & $0.192 \pm 0.005$ & $0.115 \pm 0.003$ & - & $3.7$ \\
\multicolumn{9}{l}{\quad\emph{Spectral space}}\\
\qquad SCORE & CRPS+KS & $\mathbf{0.128 \pm 0.000}$ & $\mathbf{0.162 \pm 0.000}$ & $\mathbf{43.827 \pm 0.058}$ & $\mathbf{0.178 \pm 0.000}$ & $\mathbf{0.111 \pm 0.000}$ & $(+2.609 \pm 3.831) \times 10^{4}$ & $\mathbf{1.9}$ \\
\cmidrule(l){1-9}
\multicolumn{9}{l}{\textbf{$T = 192$}}\\
\multicolumn{9}{l}{\quad\emph{Physical space}}\\
\qquad Det & MSE & $0.149 \pm 0.001$ & $0.241 \pm 0.000$ & $93.734 \pm 0.186$ & $0.221 \pm 0.001$ & $0.136 \pm 0.001$ & - & $5.1$ \\
\qquad Diag & NLL & $0.150 \pm 0.000$ & $0.186 \pm 0.001$ & $69.049 \pm 0.715$ & $0.265 \pm 0.016$ & $0.133 \pm 0.003$ & $\mathbf{+0.989 \pm 0.071}$ & $4.2$ \\
\qquad Diag & KS & $0.148 \pm 0.000$ & $0.193 \pm 0.001$ & $\mathbf{66.774 \pm 0.090}$ & $0.224 \pm 0.005$ & $0.126 \pm 0.000$ & $+121.481 \pm 29.306$ & $3.4$ \\
\qquad LorD $r{=}1$ & NLL & $0.367 \pm 0.023$ & $0.621 \pm 0.016$ & $197.512 \pm 4.880$ & $1.952 \pm 0.236$ & $0.391 \pm 0.003$ & $\mathbf{+0.972 \pm 0.086}$ & $6.1$ \\
\qquad LorD $r{=}1$ & KS & $0.148 \pm 0.001$ & $0.191 \pm 0.001$ & $\mathbf{66.831 \pm 0.193}$ & $0.224 \pm 0.004$ & $0.127 \pm 0.001$ & $+68.602 \pm 15.897$ & $3.4$ \\
\qquad SB & KS & $0.150 \pm 0.001$ & $0.187 \pm 0.001$ & $67.205 \pm 0.193$ & $0.212 \pm 0.001$ & $0.127 \pm 0.001$ & - & $3.1$ \\
\multicolumn{9}{l}{\quad\emph{Spectral space}}\\
\qquad SCORE & CRPS+KS & $\mathbf{0.145 \pm 0.001}$ & $\mathbf{0.174 \pm 0.000}$ & $\mathbf{66.626 \pm 0.111}$ & $\mathbf{0.200 \pm 0.001}$ & $\mathbf{0.125 \pm 0.000}$ & $(+4.339 \pm 6.268) \times 10^{4}$ & $\mathbf{1.8}$ \\
\bottomrule
\end{tabular}}
\end{table*}

\clearpage
\begin{table*}[ht]
\centering
\caption{Test scores on Traffic (multivariate, 862 series).}
\label{tab:traffic}
\small
\adjustbox{max width=\linewidth}{%
\begin{tabular}{llrrrrrrr}
\toprule
Method & Objective & MSE $\downarrow$ & CRPS $\downarrow$ & ES $\downarrow$ & VS $\downarrow$ & KS $\downarrow$ & NLL/$d$ $\downarrow$ & $\varnothing$ Rank $\downarrow$ \\
\midrule
\multicolumn{9}{l}{\textbf{$T = 96$}}\\
\multicolumn{9}{l}{\quad\emph{Physical space}}\\
\qquad Det & MSE & $0.400 \pm 0.001$ & $0.278 \pm 0.001$ & $179.013 \pm 0.136$ & $0.667 \pm 0.001$ & $0.322 \pm 0.000$ & - & $4.9$ \\
\qquad Diag & NLL & $0.405 \pm 0.001$ & $0.219 \pm 0.002$ & $128.521 \pm 0.326$ & $0.667 \pm 0.006$ & $0.273 \pm 0.001$ & $\mathbf{+0.321 \pm 0.012}$ & $3.6$ \\
\qquad Diag & KS & $0.401 \pm 0.001$ & $0.254 \pm 0.001$ & $\mathbf{127.611 \pm 0.221}$ & $0.698 \pm 0.002$ & $\mathbf{0.270 \pm 0.001}$ & $+0.838 \pm 0.001$ & $3.7$ \\
\qquad LorD $r{=}1$ & NLL & $0.429 \pm 0.012$ & $0.359 \pm 0.070$ & $179.878 \pm 32.056$ & $2.292 \pm 1.292$ & $0.377 \pm 0.036$ & $\mathbf{+0.290 \pm 0.029}$ & $6.0$ \\
\qquad LorD $r{=}1$ & KS & $0.402 \pm 0.001$ & $0.253 \pm 0.001$ & $\mathbf{127.676 \pm 0.234}$ & $0.698 \pm 0.002$ & $\mathbf{0.270 \pm 0.001}$ & $+0.797 \pm 0.003$ & $3.5$ \\
\qquad SB & KS & $0.399 \pm 0.005$ & $0.236 \pm 0.012$ & $\mathbf{127.214 \pm 0.404}$ & $0.683 \pm 0.014$ & $\mathbf{0.269 \pm 0.001}$ & - & $3.3$ \\
\multicolumn{9}{l}{\quad\emph{Spectral space}}\\
\qquad SCORE & CRPS+KS & $\mathbf{0.381 \pm 0.000}$ & $\mathbf{0.187 \pm 0.000}$ & $\mathbf{127.540 \pm 0.186}$ & $\mathbf{0.614 \pm 0.001}$ & $\mathbf{0.271 \pm 0.001}$ & $+1.332 \pm 0.089$ & $\mathbf{2.2}$ \\
\cmidrule(l){1-9}
\multicolumn{9}{l}{\textbf{$T = 192$}}\\
\multicolumn{9}{l}{\quad\emph{Physical space}}\\
\qquad Det & MSE & $0.415 \pm 0.001$ & $0.283 \pm 0.001$ & $259.284 \pm 0.396$ & $0.693 \pm 0.002$ & $0.334 \pm 0.001$ & - & $5.3$ \\
\qquad Diag & NLL & $0.418 \pm 0.003$ & $0.221 \pm 0.004$ & $185.288 \pm 0.640$ & $0.705 \pm 0.040$ & $0.281 \pm 0.001$ & $\mathbf{+0.406 \pm 0.017}$ & $3.2$ \\
\qquad Diag & KS & $0.415 \pm 0.000$ & $0.260 \pm 0.001$ & $\mathbf{184.419 \pm 0.083}$ & $0.725 \pm 0.003$ & $\mathbf{0.279 \pm 0.000}$ & $+0.874 \pm 0.002$ & $3.5$ \\
\qquad LorD $r{=}1$ & NLL & $0.499 \pm 0.040$ & $0.301 \pm 0.029$ & $212.771 \pm 8.583$ & $0.812 \pm 0.049$ & $0.340 \pm 0.013$ & $\mathbf{+0.390 \pm 0.016}$ & $5.8$ \\
\qquad LorD $r{=}1$ & KS & $0.415 \pm 0.000$ & $0.259 \pm 0.000$ & $\mathbf{184.372 \pm 0.105}$ & $0.723 \pm 0.001$ & $\mathbf{0.279 \pm 0.000}$ & $+0.851 \pm 0.007$ & $3.3$ \\
\qquad SB & KS & $0.417 \pm 0.002$ & $0.244 \pm 0.011$ & $\mathbf{184.294 \pm 0.384}$ & $0.716 \pm 0.007$ & $\mathbf{0.279 \pm 0.001}$ & - & $3.3$ \\
\multicolumn{9}{l}{\quad\emph{Spectral space}}\\
\qquad SCORE & CRPS+KS & $\mathbf{0.398 \pm 0.001}$ & $\mathbf{0.193 \pm 0.000}$ & $185.191 \pm 0.319$ & $\mathbf{0.639 \pm 0.001}$ & $0.283 \pm 0.001$ & $+1.424 \pm 0.085$ & $\mathbf{2.8}$ \\
\bottomrule
\end{tabular}}
\end{table*}

\paragraph{Surface temperature prediction.}
On ERA5 (Table~\ref{tab:era5}), \score{} attains the best or tied-best mean on every metric and the best average rank. On MSE and CRPS it coincides with the NLL-trained diagonal model. By Lemma~\ref{lem:corr-core}, these two scores are those of the first stage of \score{}, a CRPS-trained diagonal model on the same backbone that uses 20 of the 30 epochs, so the first stage matches the NLL-trained marginals at two thirds of the budget. The gain from the correlation core shows in the scores that are sensitive to the dependence structure: relative to the NLL-trained diagonal model, the variogram score improves by $7\%$ and the log score by $1.3$ nats per coordinate, whereas the ES and KS differences are within about one standard deviation, in line with their weaker sensitivity to dependence (Appendix~\ref{app:eval}). Among the baselines, the generative sampler improves CRPS and ES over the deterministic model but not MSE. LorD trained with the NLL collapses. Its mean remains accurate, but its CRPS deteriorates by a factor of four and its VS by more than a factor of sixty, while its log score is better than that of the diagonal model, which indicates a degenerate covariance rather than a poor mean. Trained with the kernel score, LorD is on par with the KS-trained diagonal model but behind the NLL-trained one on CRPS and VS. The assumptions of \score{} are well justified on this task, since temperature anomalies on a regular grid have a distance-dependent, approximately stationary correlation, which a circulant core captures.

\begin{table*}[ht]
\centering
\caption{Test scores on ERA5 surface temperature.}
\label{tab:era5}
\small
\adjustbox{max width=\linewidth}{%
\begin{tabular}{llrrrrrrr}
\toprule
Method & Objective & MSE $\downarrow$ & CRPS $\downarrow$ & ES $\downarrow$ & VS $\downarrow$ & KS $\downarrow$ & NLL/$d$ $\downarrow$ & $\varnothing$ Rank $\downarrow$ \\
\midrule
\multicolumn{9}{l}{\emph{Physical space}}\\
Det & MSE & $\mathbf{0.080 \pm 0.001}$ & $0.206 \pm 0.001$ & $51.881 \pm 0.197$ & $0.086 \pm 0.001$ & $0.076 \pm 0.001$ & - & $5.6$ \\
Diag & NLL & $\mathbf{0.079 \pm 0.001}$ & $\mathbf{0.144 \pm 0.001}$ & $\mathbf{36.701 \pm 0.155}$ & $0.072 \pm 0.001$ & $\mathbf{0.072 \pm 0.001}$ & $-0.052 \pm 0.004$ & $2.3$ \\
Diag & KS & $0.080 \pm 0.000$ & $0.156 \pm 0.002$ & $37.047 \pm 0.089$ & $0.081 \pm 0.002$ & $0.074 \pm 0.000$ & $+0.442 \pm 0.147$ & $4.1$ \\
LorD $r{=}1$ & NLL & $0.082 \pm 0.001$ & $0.631 \pm 0.037$ & $153.696 \pm 10.228$ & $4.605 \pm 0.579$ & $0.383 \pm 0.008$ & $-0.113 \pm 0.002$ & $5.9$ \\
LorD $r{=}1$ & KS & $\mathbf{0.080 \pm 0.001}$ & $0.155 \pm 0.003$ & $36.981 \pm 0.189$ & $0.079 \pm 0.002$ & $\mathbf{0.073 \pm 0.001}$ & $+0.487 \pm 0.315$ & $3.8$ \\
SB & KS & $0.081 \pm 0.000$ & $0.154 \pm 0.002$ & $36.993 \pm 0.079$ & $0.080 \pm 0.002$ & $0.073 \pm 0.000$ & - & $4.1$ \\
\midrule
\multicolumn{9}{l}{\emph{Spectral space}}\\
SCORE & CRPS+KS & $\mathbf{0.079 \pm 0.000}$ & $\mathbf{0.144 \pm 0.000}$ & $\mathbf{36.544 \pm 0.056}$ & $\mathbf{0.067 \pm 0.000}$ & $\mathbf{0.072 \pm 0.000}$ & $\mathbf{-1.334 \pm 0.046}$ & $\mathbf{1.6}$ \\
\bottomrule
\end{tabular}}
\end{table*}

\FloatBarrier

\paragraph{Monocular depth estimation.}
On NYU Depth~v2 (Table~\ref{tab:depth}), \score{} performs poorly across scores, ranking behind the diagonal model on every sample-based score. Part of the gap is already in the marginals. The first stage of \score{}, trained with the CRPS for 50 epochs, does not reach the diagonal model trained with the log score for 75 epochs, with CRPS $0.0204$ against $0.0192$, and the second stage cannot recover this by construction (Lemma~\ref{lem:corr-core}). The remaining gap on ES and VS means that the learned correlation is worse than no correlation at all, which points to the assumptions of \score{}. Depth maps are not stationary. Their correlation is created at object boundaries, so a single circulant kernel couples pixels across edges that are unrelated. Moreover, the log score of \score{} explodes. A plausible explanation for this is that a core trained with the bounded kernel score has no incentive to keep the predicted covariance well conditioned (Theorem~\ref{thm:ks-lipschitz}), while the log score penalizes directions of small predicted variance without bound (Section~\ref{sec:nll}). 

\begin{table*}[ht]
\centering
\caption{Test scores on NYU Depth v2. For readability MSE, CRPS, VS, KS in units
of $10^{-3}$.}
\label{tab:depth}
\small
\adjustbox{max width=\linewidth}{%
\begin{tabular}{llrrrrrrr}
\toprule
Method & Objective & MSE $\downarrow$ & CRPS $\downarrow$ & ES $\downarrow$ & VS $\downarrow$ & KS $\downarrow$ & NLL/$d$ $\downarrow$ & $\varnothing$ Rank $\downarrow$ \\
\midrule
\multicolumn{9}{l}{\emph{Physical space}}\\
Det & MSE & $\mathbf{1.847 \pm 0.018}$ & $24.983 \pm 0.170$ & $17.365 \pm 0.078$ & $2.299 \pm 0.018$ & $\mathbf{1.839 \pm 0.018}$ & - & $4.2$ \\
Diag & NLL & $\mathbf{1.798 \pm 0.041}$ & $\mathbf{19.241 \pm 0.334}$ & $\mathbf{12.746 \pm 0.167}$ & $\mathbf{2.151 \pm 0.053}$ & $\mathbf{1.787 \pm 0.041}$ & $+1.050 \pm 0.722$ & $\mathbf{2.2}$ \\
Diag & KS & $\mathbf{1.838 \pm 0.031}$ & $\mathbf{19.202 \pm 0.214}$ & $13.098 \pm 0.119$ & $2.290 \pm 0.045$ & $\mathbf{1.829 \pm 0.031}$ & $-1.686 \pm 0.061$ & $2.8$ \\
LorD $r{=}1$ & NLL & $63.015 \pm 6.329$ & $216.119 \pm 22.704$ & $112.163 \pm 11.859$ & $30.344 \pm 12.902$ & $130.097 \pm 23.341$ & $-0.058 \pm 1.080$ & $6.6$ \\
LorD $r{=}1$ & KS & $\mathbf{1.818 \pm 0.017}$ & $\mathbf{19.018 \pm 0.075}$ & $12.998 \pm 0.060$ & $2.264 \pm 0.022$ & $\mathbf{1.809 \pm 0.016}$ & $\mathbf{-1.849 \pm 0.045}$ & $2.6$ \\
SB & KS & $\mathbf{1.853 \pm 0.038}$ & $25.052 \pm 0.271$ & $17.335 \pm 0.143$ & $2.275 \pm 0.029$ & $\mathbf{1.844 \pm 0.038}$ & - & $4.4$ \\
\midrule
\multicolumn{9}{l}{\emph{Spectral space}}\\
SCORE & CRPS+KS & $1.907 \pm 0.021$ & $20.444 \pm 0.103$ & $13.735 \pm 0.079$ & $2.306 \pm 0.033$ & $1.897 \pm 0.022$ & $(+3.709 \pm 7.977) \times 10^{3}$ & $4.6$ \\
\bottomrule
\end{tabular}}
\end{table*}

\FloatBarrier

\paragraph{Graph-based temperature post-processing.}
This task is the only one where we consider a different transform, namely the Graph Fourier transform. While \score{} naturally extends, the resulting predictive distribution requires a Cholesky decomposition for loss optimization, thus offering no additional computational benefits. Therefore, the results validate our method against fully expressive alternatives that can recover the correct structure, i.e., Cholesky decomposition and the sample-based model. \autoref{tab:euppbench} shows that the latter takes the best rank and leads to the best performance overall, while the Cholesky decomposition obtains the best score in terms of the VS and NLL. Yet, our method remains competitive and ranks third overall, with the individual metrics being close to the sample-based method and better or comparable to the Cholesky method. However, here LorD trained with the kernel score outperforms \score{} and ranks second overall. However, as shown in \autoref{fig:main_prediction_example}, LorD estimates many correlations between stations that are far away, and thus are most likely not actually correlated for the specific prediction. \score{} on the other hand, provides more clustered and local dependencies.

\begin{table*}[ht]
\centering
\caption{Test scores on EUPPBench station post-processing.}
\label{tab:euppbench}
\small
\adjustbox{max width=\linewidth}{%
\begin{tabular}{llrrrrrrr}
\toprule
Method & Objective & MSE $\downarrow$ & CRPS $\downarrow$ & ES $\downarrow$ & VS $\downarrow$ & KS $\downarrow$ & NLL/$d$ $\downarrow$ & $\varnothing$ Rank $\downarrow$ \\
\midrule
\multicolumn{9}{l}{\emph{Physical space}}\\
Det & MSE & $2.144 \pm 0.019$ & $1.127 \pm 0.005$ & $15.078 \pm 0.070$ & $2.101 \pm 0.005$ & $0.811 \pm 0.003$ & - & $8.2$ \\
Diag & NLL & $2.147 \pm 0.011$ & $0.801 \pm 0.002$ & $10.800 \pm 0.026$ & $1.942 \pm 0.029$ & $0.479 \pm 0.001$ & $+1.736 \pm 0.004$ & $6.0$ \\
Diag & KS & $\mathbf{2.100 \pm 0.020}$ & $0.793 \pm 0.003$ & $10.653 \pm 0.045$ & $1.867 \pm 0.027$ & $0.476 \pm 0.001$ & $+1.750 \pm 0.005$ & $4.5$ \\
LorD $r{=}1$ & NLL & $2.267 \pm 0.049$ & $0.830 \pm 0.017$ & $11.073 \pm 0.202$ & $1.757 \pm 0.009$ & $0.478 \pm 0.002$ & $+1.582 \pm 0.003$ & $5.8$ \\
LorD $r{=}1$ & KS & $\mathbf{2.106 \pm 0.026}$ & $\mathbf{0.795 \pm 0.006}$ & $\mathbf{10.614 \pm 0.061}$ & $1.796 \pm 0.016$ & $\mathbf{0.473 \pm 0.001}$ & $+1.669 \pm 0.003$ & $3.4$ \\
Chol & NLL & $2.335 \pm 0.099$ & $0.836 \pm 0.019$ & $11.129 \pm 0.225$ & $\mathbf{1.714 \pm 0.020}$ & $0.476 \pm 0.002$ & $\mathbf{+1.339 \pm 0.009}$ & $5.1$ \\
Chol & KS & $\mathbf{2.109 \pm 0.030}$ & $0.797 \pm 0.005$ & $\mathbf{10.636 \pm 0.077}$ & $1.783 \pm 0.012$ & $\mathbf{0.474 \pm 0.001}$ & $+331.553 \pm 412.970$ & $4.3$ \\
SB & KS & $\mathbf{2.095 \pm 0.007}$ & $\mathbf{0.788 \pm 0.001}$ & $\mathbf{10.553 \pm 0.018}$ & $1.788 \pm 0.005$ & $\mathbf{0.473 \pm 0.000}$ & - & $\mathbf{2.6}$ \\
\midrule
\multicolumn{9}{l}{\emph{Spectral space}}\\
SCORE & CRPS+KS & $2.136 \pm 0.009$ & $0.799 \pm 0.002$ & $10.708 \pm 0.021$ & $1.775 \pm 0.006$ & $0.475 \pm 0.000$ & $+1.495 \pm 0.005$ & $4.0$ \\
\bottomrule
\end{tabular}}
\end{table*}

\FloatBarrier
\subsection{Runtime analysis}
Here, we further analyze the runtime of the different methods and show that our method outperforms the baselines in terms of the compute-performance tradeoff. 
Similar to \autoref{fig:performance_runtime}, \autoref{fig:runtime_ts} and \autoref{fig:runtime_other} show the increase in computational cost versus the performance improvement against the diagonal baseline, per dataset. In addition, \autoref{tab:runtime-ms-per-step} shows the average runtime per task in ms per gradient step. Here, all methods use the kernel score as an objective to ensure a fair runtime comparison. Since this objective outperforms the log score significantly (compare \autoref{tab:performance_significance}), it does not favor the performance of our method.
The figure shows that our method generally obtains the best performance improvement while having the smallest computational increase against the diagonal baseline. These are particularly strong for VS and NLL, but consistent across all metrics. While for some datasets, namely Depth, EUPPBench, and Weather, the gains are not across all metrics and are even negative for some, our method is still the fastest among the baselines that model dependence. This is also highlighted in \autoref{tab:runtime-ms-per-step}. While our method has a compute advantage due to the second stage with frozen weights, the table also shows that using only the first stage, \score{} is comparable to the deterministic or diagonal method. Further, the ablations in \ref{app:ablations} highlight that the second stage improves performance even under decreasing epoch budgets.

%
%
\begin{figure}[ht]
    \centering
    \includegraphics[width=\linewidth]{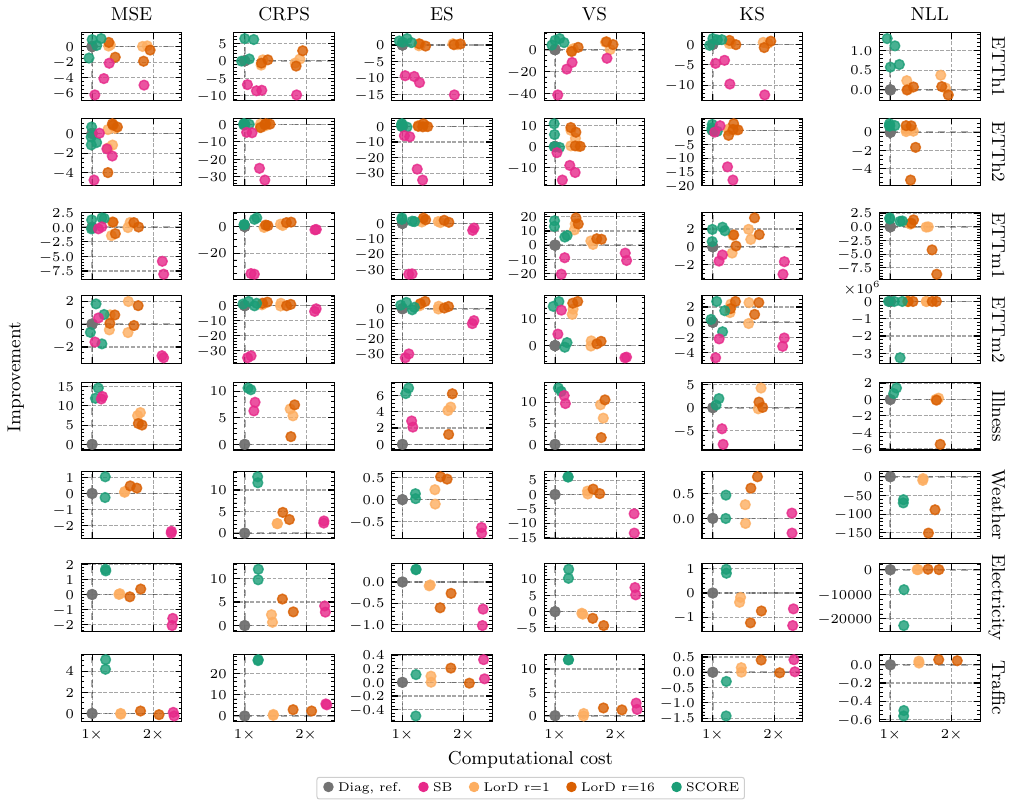}
    \caption{Comparison of relative improvement and computational cost against the diagonal Gaussian baseline for the time series datasets. All methods are trained on the Gaussian kernel score. Each point represents the median performance over five seeds for one task, i.e., uni-/multivariate and prediction horizon $T$.}
    \label{fig:runtime_ts}
\end{figure}

\begin{figure}[ht]
    \centering
    \includegraphics[width=\linewidth]{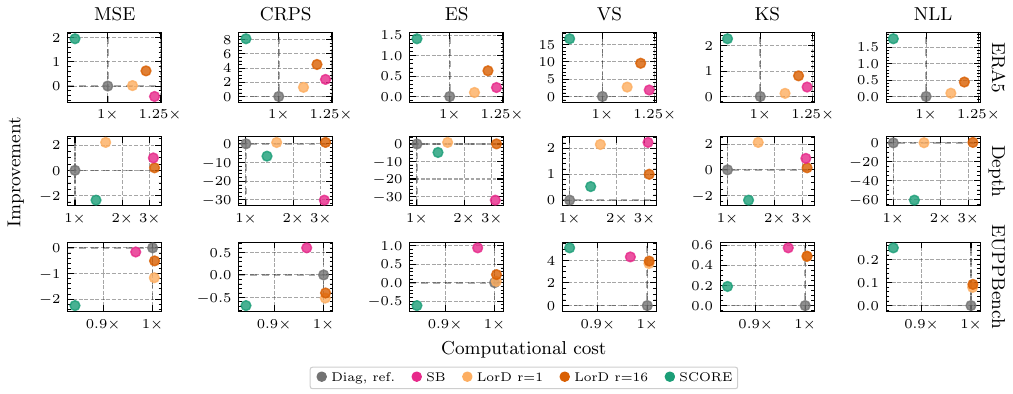}
    \caption{Comparison of relative improvement and computational cost against the diagonal Gaussian baseline for the remaining datasets. All methods are trained on the Gaussian kernel score. Each point represents the median performance over five seeds.}    \label{fig:runtime_other}
\end{figure}

\begin{table*}
  \centering
  \caption{Training cost of every predictive family on every benchmark task, in milliseconds per training step. Mean $\pm$ standard deviation over 5 seeds on the exact same compute requirements, epoch budget and equal batch size within a task. For our SCORE method, the stages are averaged by the number of epochs used, respectively. The fastest method overall is highlighted in italics, while the fastest method that models a correlation structure is highlighted in bold.}
  \label{tab:runtime-ms-per-step}
\adjustbox{max width=\linewidth}{%
  \begin{tabular}{l cccccccc}
    \toprule
    Task & Det & Diag & LorD $r{=}1$ & LorD $r{=}16$ & Chol & SB & SCORE & \begin{tabular}[b]{@{}c@{}}SCORE\\(stage 1)\end{tabular} \\
    \midrule
    ETTh1 (multiv.), $T{=}96$ & $\mathit{12 \pm 0.088}$ & $20 \pm 0.989$ & $36 \pm 0.208$ & $37 \pm 0.162$ & -- & $35 \pm 0.206$ & $\mathbf{21 \pm 0.189}$ & $21 \pm 0.227$ \\
    ETTh1 (multiv.), $T{=}192$ & $\mathit{24 \pm 0.422}$ & $33 \pm 0.217$ & $41 \pm 0.461$ & $43 \pm 0.898$ & -- & $40 \pm 0.087$ & $\mathbf{33 \pm 0.336}$ & $33 \pm 0.013$ \\
    ETTh1 (univ.), $T{=}96$ & $\mathit{11 \pm 0.125}$ & $19 \pm 0.372$ & $34 \pm 0.278$ & $35 \pm 0.247$ & $33 \pm 0.089$ & $22 \pm 0.308$ & $\mathbf{21 \pm 0.334}$ & $20 \pm 0.362$ \\
    ETTh1 (univ.), $T{=}192$ & $\mathit{24 \pm 0.242}$ & $34 \pm 0.972$ & $42 \pm 0.098$ & $41 \pm 1.012$ & $51 \pm 0.474$ & $35 \pm 0.202$ & $\mathbf{34 \pm 1.099}$ & $33 \pm 1.645$ \\
    ETTh2 (multiv.), $T{=}96$ & $\mathit{24 \pm 0.066}$ & $34 \pm 0.992$ & $41 \pm 0.512$ & $42 \pm 0.772$ & -- & $40 \pm 0.444$ & $\mathbf{33 \pm 0.049}$ & $33 \pm 0.265$ \\
    ETTh2 (multiv.), $T{=}192$ & $\mathit{24 \pm 0.367}$ & $32 \pm 1.256$ & $41 \pm 0.852$ & $42 \pm 0.376$ & -- & $40 \pm 0.110$ & $\mathbf{34 \pm 0.402}$ & $33 \pm 0.542$ \\
    ETTh2 (univ.), $T{=}96$ & $\mathit{24 \pm 0.323}$ & $34 \pm 1.389$ & $42 \pm 1.108$ & $42 \pm 0.332$ & $40 \pm 0.610$ & $35 \pm 0.915$ & $\mathbf{34 \pm 1.540}$ & $33 \pm 1.712$ \\
    ETTh2 (univ.), $T{=}192$ & $\mathit{23 \pm 0.028}$ & $33 \pm 0.049$ & $42 \pm 1.473$ & $41 \pm 0.714$ & $52 \pm 0.485$ & $36 \pm 0.444$ & $\mathbf{33 \pm 1.099}$ & $32 \pm 1.004$ \\
    ETTm1 (multiv.), $T{=}96$ & $\mathit{43 \pm 0.240}$ & $52 \pm 0.156$ & $78 \pm 0.202$ & $84 \pm 0.351$ & -- & $115 \pm 0.829$ & $\mathbf{58 \pm 0.079}$ & $62 \pm 0.095$ \\
    ETTm1 (multiv.), $T{=}192$ & $\mathit{44 \pm 0.132}$ & $52 \pm 0.094$ & $79 \pm 0.751$ & $88 \pm 0.217$ & -- & $116 \pm 0.438$ & $\mathbf{59 \pm 0.523}$ & $63 \pm 0.604$ \\
    ETTm1 (univ.), $T{=}96$ & $\mathit{24 \pm 0.050}$ & $33 \pm 0.098$ & $42 \pm 1.182$ & $42 \pm 1.591$ & -- & $37 \pm 0.348$ & $\mathbf{33 \pm 1.145}$ & $33 \pm 1.025$ \\
    ETTm1 (univ.), $T{=}192$ & $\mathit{24 \pm 0.356}$ & $33 \pm 0.169$ & $41 \pm 0.070$ & $43 \pm 0.805$ & -- & $36 \pm 0.346$ & $\mathbf{33 \pm 0.512}$ & $33 \pm 0.499$ \\
    ETTm2 (multiv.), $T{=}96$ & $\mathit{44 \pm 0.189}$ & $52 \pm 0.309$ & $78 \pm 0.167$ & $84 \pm 0.693$ & -- & $115 \pm 0.745$ & $\mathbf{58 \pm 0.157}$ & $62 \pm 0.157$ \\
    ETTm2 (multiv.), $T{=}192$ & $\mathit{44 \pm 0.205}$ & $52 \pm 0.196$ & $79 \pm 0.800$ & $88 \pm 0.250$ & -- & $116 \pm 0.583$ & $\mathbf{59 \pm 0.621}$ & $63 \pm 0.659$ \\
    ETTm2 (univ.), $T{=}96$ & $\mathit{25 \pm 0.385}$ & $33 \pm 0.053$ & $41 \pm 0.846$ & $42 \pm 0.265$ & -- & $36 \pm 0.557$ & $\mathbf{34 \pm 1.091}$ & $33 \pm 0.873$ \\
    ETTm2 (univ.), $T{=}192$ & $\mathit{24 \pm 0.388}$ & $34 \pm 1.623$ & $42 \pm 0.457$ & $42 \pm 0.552$ & -- & $36 \pm 0.889$ & $\mathbf{34 \pm 0.395}$ & $34 \pm 0.226$ \\
    Illness, $T{=}24$ & $\mathit{18 \pm 0.446}$ & $27 \pm 2.737$ & $48 \pm 1.243$ & $50 \pm 1.018$ & -- & $32 \pm 0.968$ & $\mathbf{30 \pm 1.086}$ & $29 \pm 1.352$ \\
    Illness, $T{=}36$ & $\mathit{11 \pm 1.039}$ & $26 \pm 4.785$ & $48 \pm 2.059$ & $47 \pm 1.360$ & -- & $30 \pm 2.105$ & $\mathbf{29 \pm 1.575}$ & $28 \pm 2.417$ \\
    Weather, $T{=}96$ & $\mathit{125 \pm 0.775}$ & $132 \pm 0.950$ & $191 \pm 1.115$ & $202 \pm 2.475$ & -- & $323 \pm 1.299$ & $\mathbf{153 \pm 0.741}$ & $169 \pm 0.972$ \\
    Weather, $T{=}192$ & $\mathit{124 \pm 0.884}$ & $133 \pm 1.001$ & $192 \pm 1.523$ & $220 \pm 1.185$ & -- & $325 \pm 0.677$ & $\mathbf{154 \pm 0.736}$ & $171 \pm 0.933$ \\
    Electricity, $T{=}96$ & $\mathit{478 \pm 2.673}$ & $486 \pm 2.578$ & $658 \pm 3.645$ & $746 \pm 2.115$ & -- & $1202 \pm 2.311$ & $\mathbf{568 \pm 1.521}$ & $642 \pm 1.653$ \\
    Electricity, $T{=}192$ & $\mathit{474 \pm 2.831}$ & $485 \pm 0.582$ & $664 \pm 1.499$ & $840 \pm 0.764$ & -- & $1205 \pm 2.273$ & $\mathbf{565 \pm 0.915}$ & $636 \pm 0.936$ \\
    Traffic, $T{=}96$ & $\mathit{314 \pm 1.657}$ & $322 \pm 0.575$ & $445 \pm 0.812$ & $557 \pm 0.526$ & -- & $807 \pm 1.332$ & $\mathbf{376 \pm 0.971}$ & $422 \pm 1.136$ \\
    Traffic, $T{=}192$ & $\mathit{315 \pm 1.116}$ & $326 \pm 1.316$ & $451 \pm 0.557$ & $694 \pm 0.817$ & -- & $824 \pm 1.250$ & $\mathbf{379 \pm 0.497}$ & $426 \pm 0.653$ \\
    ERA5 $t_{2\mathrm{m}}$ & $637 \pm 5.115$ & $641 \pm 3.880$ & $713 \pm 4.156$ & $755 \pm 4.311$ & -- & $783 \pm 2.407$ & {\boldmath$\mathit{558 \pm 1.041}$} & $697 \pm 1.657$ \\
    NYU depth (BTS) & $\mathit{187 \pm 0.301}$ & $194 \pm 0.232$ & $304 \pm 0.154$ & $622 \pm 0.394$ & -- & $611 \pm 1.174$ & $\mathbf{264 \pm 0.313}$ & $285 \pm 0.411$ \\
    EUPPBench & $272 \pm 0.911$ & $287 \pm 1.133$ & $289 \pm 0.478$ & $289 \pm 1.659$ & $288 \pm 1.548$ & $277 \pm 0.808$ & {\boldmath$\mathit{243 \pm 0.789}$} & $274 \pm 0.917$ \\
    \bottomrule
  \end{tabular}
  }
\end{table*}

\subsection{Calibration results}
Here, we provide additional results regarding the calibration of our method and the baselines, as described in \ref{app:calibration}. To ensure that the results mainly reflect the covariance parameterization, we compare all methods using the kernel score as an objective. Since the kernel score beats the NLL in performance significantly, this is to no additional advantage for our method. In general, we only evaluate the calibration for predictive tasks with a single channel, i.e., we omit the multivariate time series task, as one would need to analyze each channel separately.

\autoref{fig:calibration_mean_scale} shows diagnostics of the marginal, mean, and scale calibration for the different methods and datasets. In general, almost all methods admit a good marginal calibration. However, \score{} overall shows the best mean and scale calibration, similar to the Cholesky approximation, which however is only available for the low-dimensional tasks. The sampling-based method collapses to a point prediction on multiple datasets and thus admits only poor calibration in these cases.
Similarly, \autoref{fig:calibration_dependence} shows the calibration diagnostics for the dependence structure across multiple lags $h$. Here, none of the methods are well calibrated, especially for very small lags. Still, \score{} admits better-calibrated predictions compared to the other methods, especially for larger lags and for the ETTh and EUPPBench datasets. This generally highlights the difficulty of learning a calibrated approximate dependence structure across complex domains, as no method is able to truly recover it.

\begin{figure}
    \centering
    \includegraphics[width=\linewidth]{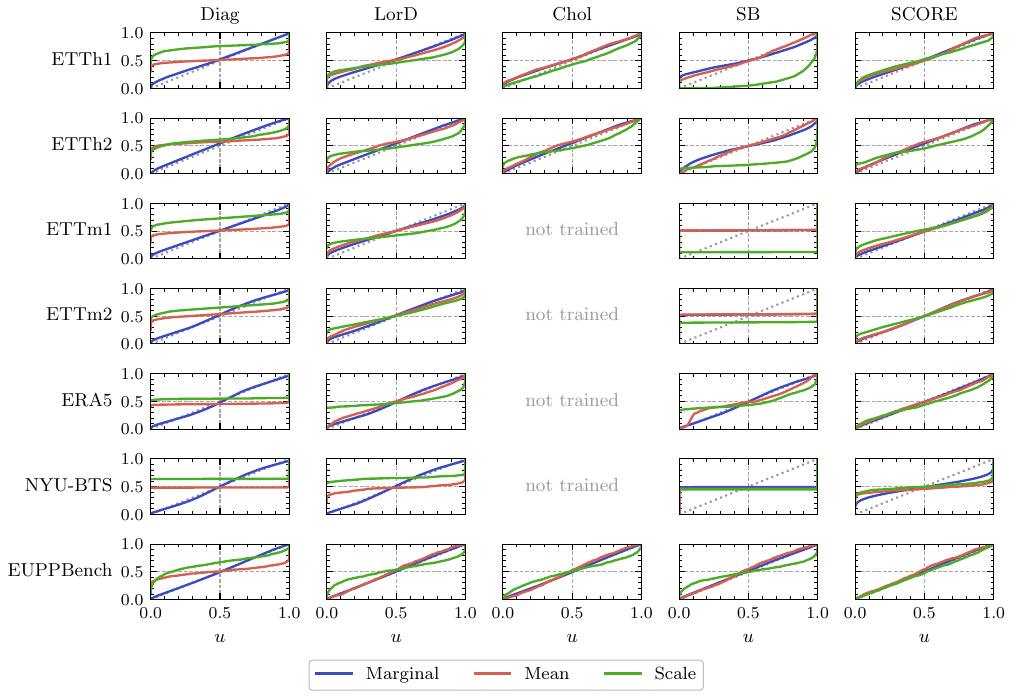}
    \caption{Empirical CDFs of the PIT values of the marginal, location and scale pre-rank functions for the different predictive methods. A calibrated prediction lies on the diagonal. Results are aggregated over five seeds.}
    \label{fig:calibration_mean_scale}
\end{figure}

\begin{figure}
    \centering
    \includegraphics[width=\linewidth]{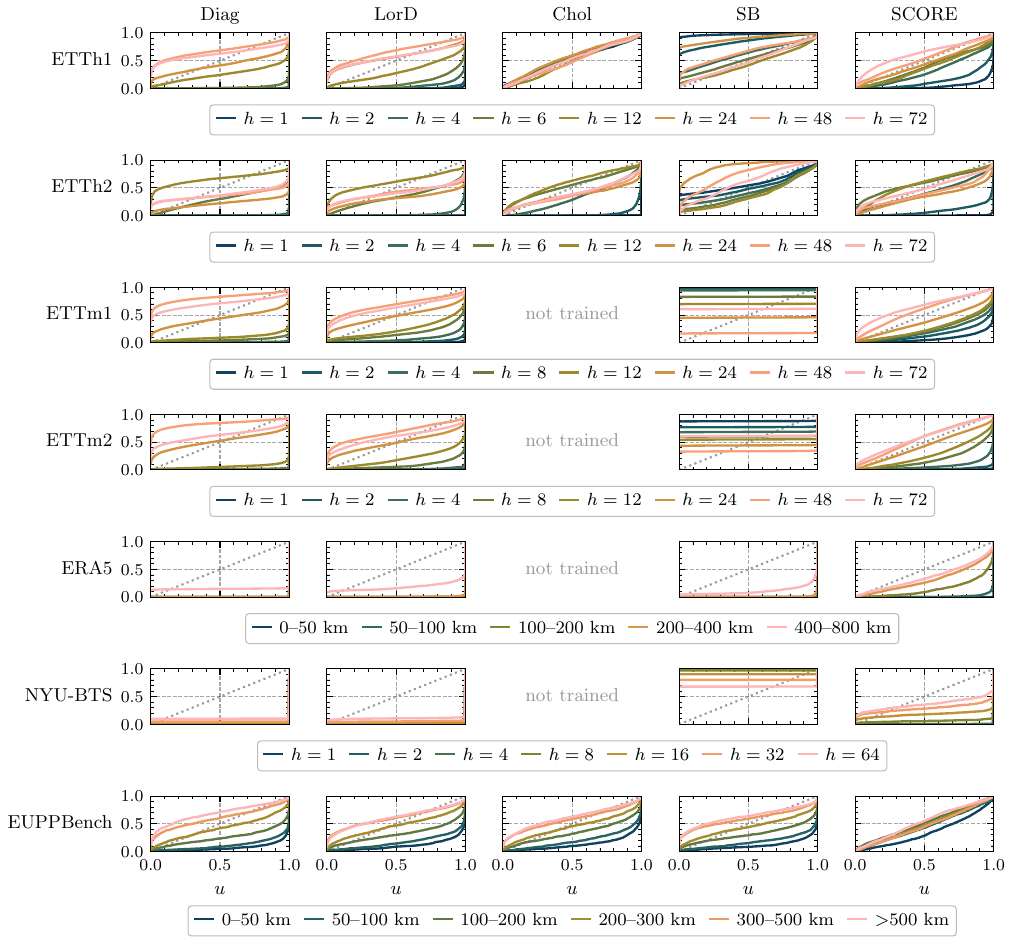}
    \caption{Empirical CDFs of the PIT values of the variogram pre-rank function with respect to different spatial (or temporal) lags $h$ for the different predictive methods. A calibrated prediction lies on the diagonal. Results are aggregated over five seeds.}
    \label{fig:calibration_dependence}
\end{figure}

\FloatBarrier
\subsection{Ablation studies}
\label{app:ablations}

\paragraph{Log score vs.\ kernel score.}
Here we provide details on the results in \autoref{tab:performance_significance}, which shows that training with the kernel score instead of the log score yields better performance in the large majority of comparisons, on four evaluation metrics.
For each covariance parameterization $a$ and task $t$, we compare the seed-averaged scores $\bar s^{\mathrm{KS}}_m(a,t)$ and $\bar s^{\mathrm{NLL}}_m(a,t)$ of the two arms, which differ only in the objective.
We do this for each metric $m$ other than the two training losses, giving $167$ comparisons per metric (all $(a,t)$ pairs for which both arms were trained).
Since metrics are not on a common scale across tasks, we aggregate signs rather than magnitudes, which is scale-free and conservative.
Since the tasks of one data set share data and hyperparameters, we treat the $C = 11$ data sets as clusters and assume independence between data sets.
With $\mathcal{T}_c$ the comparisons of data set $c$, let
\[
  S_c = \sum_{(a,t) \in \mathcal{T}_c} \operatorname{sign}\bigl(\bar s^{\mathrm{NLL}}_m(a,t) - \bar s^{\mathrm{KS}}_m(a,t)\bigr),
  \qquad S = \sum_{c} S_c ,
\]
where ties contribute zero (none occurred).
Under the sharp null hypothesis that the objective has no effect, the labels KS and NLL are exchangeable within each data set.
Swapping them maps $S_c \mapsto -S_c$ and keeps any within-cluster dependence intact, so $(S_c)_c \overset{d}{=} (\varepsilon_c S_c)_c$ for all $\varepsilon \in \{-1,1\}^C$.
This is a cluster-level sign-flip randomization test \citep[Ch.~15]{lehmann2005testing}, the sign-test analogue of the clustered signed-rank tests of \citet{rosner2006wilcoxon, datta2008signed}; see also \citet{hemerik2018exact}.
The exact two-sided $p$-value is
\[
  p = 2^{-C} \, \Bigl|\Bigl\{\varepsilon : \Bigl|\sum_{c} \varepsilon_c S_c\Bigr| \geq |S| \Bigr\}\Bigr| \;\geq\; 2/2^{C} \approx 0.001 .
\]
This floor is reached on MSE, CRPS and ES, where every data set has a kernel-score majority, and remains significant after a Holm correction over the four metrics.
On VS, EUPPBench favors the log score ($S_c = -5$), which gives $p = 10/2048 \approx 0.005$.

\paragraph{Loss and low-rank comparison.}
This ablation aims to disentangle the performance gains of the different loss functions from our proposed covariance approximation. Here we focus on the analysis of the low-rank method and simultaneously assess the impact of the size of the rank $r$.
To achieve this, we train the low-rank method with increasing ranks $r \in \{1,2,4,8,16\}$ for both the NLL and the kernel score, again across five different seeds. We then analyze, for each metric, the relative performance to our method, as shown in \autoref{fig:rank_ablation}. Using the kernel score as a loss function leads to significantly better performance for the LorD approximation across all metrics. However, the performance does not seem to depend on increasing rank $r$. Using the NLL as an objective leads to a much higher relative loss, but it decreases with larger $r$, although not for the other metrics.
\begin{figure}
    \centering
    \includegraphics[width=\linewidth]{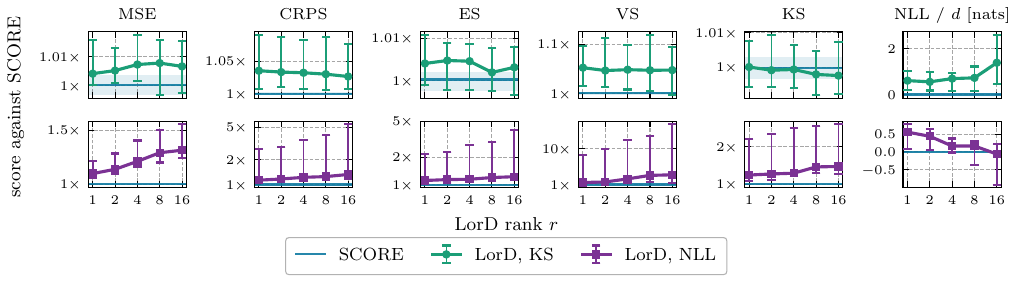}
    \caption{Relative performance of the low-rank method against ours with increasing rank $r$.}
    \label{fig:rank_ablation}
\end{figure}

\paragraph{Impact of stage 2 training and epoch budget.}
Here, we provide an additional ablation to verify that our staged training approach brings additional benefits compared to simply training the first stage. At the same time, we vary the total training budget to make sure that the additional performance gains do not depend on the chosen number of epochs. For computational reasons, we only provide this ablation for the univariate time series and the EUPPBench task, which nevertheless present a clear picture, as can be seen in \autoref{fig:stage1_stage2_comparison}.
\begin{figure}
    \centering
    \includegraphics[width=\linewidth]{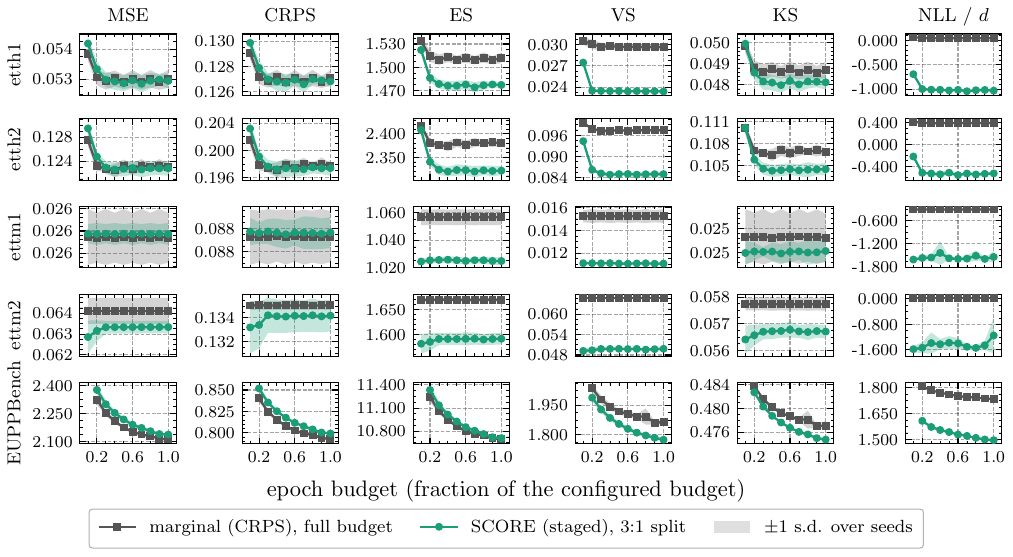}
    \caption{Performance of staged training (3:1 split) vs. first-stage only using the same computational budget.}
    \label{fig:stage1_stage2_comparison}
\end{figure}
These results highlight that, especially for the multivariate metrics, the second-stage training significantly increases performance beyond the marginal first stage. Furthermore, this is irrespective of the training budget, since even for very few epochs, where our model leads to poor marginal performance, the corresponding multivariate metrics are significantly lower compared to training only the first stage.
\end{document}